\def\supp{\mathop{\text{supp}}}
\long\def\comment#1{}
\def\cS{{\mathcal{S}}}
\newcommand{\bel}{\begin{eqnarray}\label}
\newcommand{\eel}{\end{eqnarray}}
\newcommand{\bes}{\begin{eqnarray*}}
\newcommand{\ees}{\end{eqnarray*}}
\let\tilde\widetilde
\def\mid{\,|\,}
\def\supp{\mathop{\text{supp}\kern.2ex}}
\def\given{{\,|\,}}
\def\supp{\mathop{\text{supp}}}
\def\##1\#{\begin{align}#1\end{align}}
\def\$#1\${\begin{align*}#1\end{align*}}
\title{Maximize to Explore: One Objective Function Fusing Estimation, Planning, and Exploration}
\author{Zhihan Liu\thanks{Equal contribution.} \thanks{Northwestern University. \texttt{\{zhihanliu2027,shenaozhang2028,siruizheng2025\}@u.northwestern.edu,zhaoranwang@gmail.com}} \qquad
    Miao Lu\footnotemark[1] \thanks{Stanford University. \texttt{miaolu@stanford.edu}} \qquad
    Wei Xiong\footnotemark[1] \thanks{University of Illinois Urbana-Champaign. \texttt{wx13@illinois.edu}} \qquad
    Han Zhong\thanks{Peking University.  \texttt{hanzhong@stu.pku.edu.cn}} \qquad
    Hao Hu\thanks{Tsinghua University. \texttt{huh22@mails.tsinghua.edu.cn}}\\
    Shenao Zhang\footnotemark[2]\qquad
    Sirui Zheng\footnotemark[2]\qquad
    Zhuoran Yang\thanks{Yale University. \texttt{zhuoran.yang@yale.edu}}\qquad 
    Zhaoran Wang\footnotemark[2]}
\date{\today}
\begin{document}

\maketitle

\begin{abstract}
    In online reinforcement learning (online RL), balancing exploration and exploitation is crucial for finding an optimal policy in a sample-efficient way. 
    To achieve this, existing sample-efficient online RL algorithms typically consist of three components: estimation, planning, and exploration. 
    However, in order to cope with general function approximators, most of them involve impractical algorithmic components to incentivize exploration, such as optimization within data-dependent level-sets or complicated sampling procedures. 
    To address this challenge, we propose an easy-to-implement RL framework called \textit{Maximize to Explore} (\texttt{MEX}), which only needs to optimize \emph{unconstrainedly} a single objective that integrates the estimation and planning components while balancing exploration and exploitation automatically. 
    Theoretically, we prove that \texttt{MEX} achieves a sublinear regret with general function approximations for Markov decision processes (MDP) and is further extendable to two-player zero-sum Markov games (MG).  
    Meanwhile, we adapt deep RL baselines to design practical versions of \texttt{MEX}, in both model-free and model-based manners, which can outperform baselines by a stable margin in various MuJoCo environments with sparse rewards.
    Compared with existing sample-efficient online RL algorithms with general function approximations, \texttt{MEX} achieves similar sample efficiency while enjoying a lower computational cost and is more compatible with modern deep RL methods. Our codes are available at \url{https://github.com/agentification/MEX}.
\end{abstract}

\tableofcontents

\newpage

\section{Introduction}

The crux of online reinforcement learning (online RL) lies in maintaining a balance between i) exploiting the current knowledge of the agent about the environment and ii) exploring unfamiliar areas \citep{sutton2018reinforcement}. 
To fulfill this, agents in existing sample-efficient RL algorithms  predominantly undertake three tasks: 
i) \emph{estimate} a hypothesis using historical data to encapsulate their understanding of the environment; 
ii) perform \emph{planning} based on the estimated hypothesis to exploit their current knowledge; 
iii) further \emph{explore} the unknown environment via carefully designed exploration strategies.

There exists a long line of research on integrating the aforementioned three components harmoniously to find the optimal policy in a sample-efficient manner. 
From theoretical perspectives, existing theories aim to minimize the notion of \emph{online external regret} which measures the cumulative suboptimality gap of the policies learned during online learning. 
It is well studied that one can design both \emph{statistically} and \emph{computationally} efficient algorithms (e.g., upper confidence bound (UCB), \cite{azar2017minimax, jin2020provably, cai2020provably, zhou2021nearly}) with sublinear online regret for tabular and linear Markov decision processes (MDPs). 
But when it comes to MDPs with general function approximations, most of them involve impractical algorithmic components to incentivize exploration. 
Usually, to cope with general function approximations, agents need to solve constrained optimization problems within data-dependent level-sets \citep{jin2021bellman, du2021bilinear}, or sample from complicated 
posterior distributions over the space of hypotheses 
\citep{dann2021provably, agarwal2022model, zhong2022posterior}, both of which pose considerable  challenges for implementation. 
From a practical perspective, a prevalent approach in deep RL for  balancing exploration and exploitation is to use an ensemble of   neural networks 
\citep{wiering2008ensemble,osband2016deep,chen2017ucb,lu2017ensemble,kurutach2018model,chua2018deep,lee2021sunrise}, which serves as an empirical approximation of the UCB method. 
However, such an ensemble method suffers from high computational cost and lacks a theoretical guarantee when the underlying MDP is neither linear nor tabular. 
As for other deep RL algorithms for exploration \citep{haarnoja2018off,aubret2019survey,burda2018exploration,bellemare2016unifying,choi2018contingency}, such as the curiosity-driven method \citep{pathak2017curiosity}, it also remains unknown in theory whether they are provably sample-efficient in the context of general function approximations.

Hence, in this paper, we are aimed at tackling these issues and
answering the following question: 
\begin{center}
    \emph{Under general function approximation, can we design a sample-efficient and \\ easy-to-implement RL framework to 
    trade off between exploration and exploitation? }
\end{center}
Towards this goal, we propose an easy-to-implement RL framework, \textit{\underline{M}aximize to \underline{Ex}plore} (\texttt{MEX}), as an affirmative answer to above question. 
In order to strike a balance between exploration and exploitation, \texttt{MEX} propose to maximize a weighted sum of two objectives: 
\textcolor{blue}{\bf (a)} the optimal expected total return associated with a given hypothesis, 
and \textcolor{blue}{\bf (b)} the negative estimation error of that hypothesis.
Consequently, \texttt{MEX} naturally combines planning and estimation components in just one single objective.
By choosing the hypothesis that maximizes the weighted sum and executing the optimal policy with respect to the chosen hypothesis, \texttt{MEX} automatically balances between exploration and exploitation.

We highlight that the objective of \texttt{MEX} is \emph{not} obtained through the Lagrangian duality of the constrained optimization objective within data-dependent level-sets \citep{jin2021bellman, du2021bilinear,chen2022general}.
This is because the coefficient of the weighted sum, which remains fixed, is data-independent and predetermined for all episodes. 
Contrary to the Lagrangian duality, \texttt{MEX} does not necessitate an inner loop of optimization for dual variables, thereby circumventing the complications associated with minimax optimization.
As a maximization-only framework, \texttt{MEX} is friendly to implementations with neural networks and does not rely on sampling or ensemble.

In the theory part, we prove that \texttt{MEX} achieves a sublinear  $\tilde{\cO}(\texttt{Poly}(H)d_{\text{GEC}}^{1/2}(1/\sqrt{HK})K^{1/2})$ regret under mild structural assumptions and is thus sample-efficient. 
Here $K$ is the number of episodes, $H$ is the horizon length, and $d_{\text{GEC}}(\cdot)$ is the \textbf{\underline{G}eneralized \underline{E}luder \underline{C}oefficient} (GEC) \citep{zhong2022posterior} that characterizes the complexity of learning the underlying MDP using general function approximations in the online setting. 
Because the class of low-GEC MDPs includes almost all known theoretically tractable MDP instances, our result can be tailored to a multitude of specific settings with either a model-free or  a model-based hypothesis, such as MDPs with low Bellman eluder dimension \citep{jin2021bellman}, MDPs of bilinear class \citep{du2021bilinear}, and MDPs with low witness rank \citep{sun2019model}.
Thanks to the flexibility of the \texttt{MEX} framework, we further extend it to online RL in two-player zero-sum Markov games (MGs), for which we also generalize the definition of GEC to two-player zero-sum MGs and establish the sample efficiency with general function approximations. 
Finally, as the low-GEC class also contains many tractable Partially Observable MDP (POMDP) classes \citep{zhong2022posterior}, \texttt{MEX} can also be applied to these POMDPs.

Moving beyond theory and into practice, we adapt famous RL baselines \texttt{TD3} \citep{fujimoto2018addressing} and \texttt{MBPO} \citep{janner2019trust} to design practical versions of \texttt{MEX} in model-free and model-based fashion, respectively. 
On various MuJoCo environments \citep{todorov2012mujoco} with sparse rewards, experimental results show that \texttt{MEX} outperforms baselines steadily and significantly. Compared with other deep RL algorithms, \texttt{MEX} has low computational overhead and easy implementation while maintaining a theoretical guarantee.

\subsection{Main Contributions} 

We conclude our main contributions from the following three perspectives.
\vspace{1mm}
\begin{enumerate}
    \item We propose an easy-to-implement RL algorithm framework \texttt{MEX} that \emph{unconstrainedly} maximizes a single objective to fuse estimation and planning, automatically trading off between exploration and exploitation. 
        Under mild structural assumptions, we prove that \texttt{MEX} achieves a sublinear regret 
        \begin{align*}
            \tilde{\cO}\Big(\texttt{Poly}(H)\cdot d_{\text{GEC}}(1/\sqrt{HK})^{\frac{1}{2}}\cdot K^{\frac{1}{2}}\Big)
        \end{align*}
        with general function approximators, and thus is sample-efficient. 
        Here $K$ denotes the number of episodes, $\texttt{Poly}(H)$ is a polynomial term in horizon length $H$ which is specified in Section \ref{sec: examples mdp}, $d_{\text{GEC}}(\cdot)$ is the Generalized Eluder Coefficient (GEC) \citep{zhong2022posterior} of the underlying MDP.
    \item We instantiate the generic \texttt{MEX} framework to solve several model-free and model-based MDP instances and establish corresponding theoretical results. 
        Beyond MDPs, we further extend the \texttt{MEX} framework to two-player zero-sum MGs and also prove the sample efficiency with an extended definition of GEC.
    \item We design deep RL implementations of \texttt{MEX} in both model-free and model-based styles. 
        Experiments on various MuJoCo environments with sparse rewards demonstrate the effectiveness of \texttt{MEX} framework. 
\end{enumerate}

\subsection{Related Works}

\paragraph{Sample-efficient RL with function approximation.} 
The success of DRL methods has motivated a line of works focused on function approximation scenarios. 
This line of works is originated in the linear function approximation case \citep{wang2019optimism, yang2019sample, cai2020provably, jin2020provably, zanette2020frequentist, ayoub2020model, yang2020provably, modi2020sample, zhou2021nearly, zhong2023theoretical} and is later extended to general function approximations. 
\citet{wang2020reinforcement} first study the general function approximation using the notion of eluder dimension \citep{russo2013eluder}, which takes the linear MDP \citep{jin2020provably} as a special case but with inferior results. 
\citet{zanette2020learning} consider a different type of framework based on Bellman completeness, which assumes that the class used for approximating the optimal Q-functions is closed in terms of the Bellman operator and improves the results for linear MDP. 
After this, \citet{jin2021bellman} consider the eluder dimension of the class of Bellman residual associated with the RL problems, which captures more solvable problems (low Bellman eluder (BE) dimension). 
Another line of works focuses on the low-rank structures of the problems, where \citet{jiang@2017} propose the Bellman rank for model-free RL and \citet{sun2019model} propose the witness rank for model-based RL. 
Following these two works, \citet{du2021bilinear} propose the bilinear class, which contains more MDP models with low-rank structures \citep{azar2017minimax,sun2019model,jin2020provably,modi2020sample,cai2020provably,zhou2021nearly} by allowing a flexible choice of discrepancy function class.
However, it is known that neither BE nor bilinear class captures each other. 
\citet{dann2021provably} first consider eluder-coefficient-type complexity measure on the Q-type model-free RL. 
It was later extended by \citet{zhong2022posterior} to cover all the above-known solvable problems in both model-free and model-based manners. 
\citet{foster2021statistical, foster2023tight} study another notion of complexity measure, the decision-estimation coefficient (DEC), which also unifies the BE dimension and bilinear class and is appealing due to the matching lower bound in some decision-making problems but may not be applied to the classical optimism-based or sampling-based methods due to the presence of a minimax subroutine in the definition. 
\citet{chen2022unified, foster2022note} extend the vanilla DEC by incorporating an optimistic modification. 
\citet{chen2022general} study Admissible Bellman Characterization
(ABC) class to generalize BE. They also extend the GOLF algorithm \citep{jin2021bellman} and the Bellman completeness in model-free RL by considering more general (vector-form) discrepancy loss functions and obtaining sharper bounds in some problems. 
\citet{xie2022role} connect the online RL with the coverage condition in the offline RL, and also study the GOLF algorithm proposed in \citet{jin2021bellman}. 

\paragraph{Algorithmic design in sample-efficient RL with function approximation.} 
The most prominent approach in this area is based on the principle of ``Optimism in the Face of Uncertainty'' (OFU), which dates back to \citet{auer2002finite}. 
For instance, for linear function approximation, \citet{jin2020provably} propose an optimistic variant of Least-Squares Value Iteration (LSVI), which achieves optimism by adding a bonus at each step. 
For the general case, \citet{jiang2017contextual} first propose an elimination-based algorithm with optimism in model-free RL and is extended to model-based RL by \cite{sun2019model}. 
After these, \citet{du2021bilinear, jin2021bellman} propose two OFU-based algorithms, which are more similar to the lin-UCB algorithm \citep{abbasi2011improved} studied in the linear contextual bandit literature.
The model-based counterpart (Optimistic Maximum Likelihood Estimation (OMLE)) is studied in \citet{liu2022partially, chen2022unified}. 
Specifically, these algorithms explicitly maintain a confidence set that contains the ground truth with high probability and conducts a constrained optimization step to select the most optimistic hypothesis in the confidence set. 
The other line of works studies another powerful algorithmic framework based on posterior sampling. 
For instance, \citet{zanette2020frequentist} study randomized LSVI (RLSVI), which can be interpreted as a sampling-based algorithm and achieves an order-optimal result for linear MDPs. 
For general function approximations, the works mainly follow the idea of the ``feel-good'' modification of the Thompson sampling algorithm \citep{thompson1933likelihood} proposed in \citet{zhang2022feel}. 
These algorithms start from some prior distribution over the hypothesis space and update the posterior distribution according to the collected samples but with certain optimistic modifications in either the prior or the loglikelihood function.
Then the hypothesis for each iteration is sampled from the posterior and guides data collection. 
In particular, \citet{dann2021provably} study the model-free Q-type problem, and \citet{agarwal2022model} study the model-based problems, but under different notions of complexity measures.
\citet{zhong2022posterior} further utilize the idea in \citet{zhang2022feel} and extend the posterior sampling algorithm in \citet{dann2021provably} to be a unified sampling-based framework to solve both model-free and model-based RL problems, which is also shown to apply to the more challenging partially observable setting. 
In addition to the OFU-based algorithm and the sampling-based framework, \citet{foster2021statistical} propose the Estimation-to-Decisions (E2D) algorithm, which can solve problems with low Decision-Estimation Coefficient (DEC) but requires solving a complicated minimax subroutine to fit in the framework of DEC.

\paragraph{Relationship with reward-biased maximum likelihood estimation.} Our work is also related to a line of work in reward-biased maximum likelihood estimation. 
While \cite{r1_kumar1982new} firstly proposed an estimation criterion that biases maximum likelihood estimation (RBMLE) with the cost or the value, their algorithm is actually different from ours, by their Equation (6) and (8) in Section 3, their algorithm performs the estimation of model 
 and policy optimization separately, for which they only obtained asymptotic convergence guarantees. Also, how well their decision rule explores remains unknown in theory. In contrast, MEX adopts a single optimization objective that combines estimation with policy optimization, which also ensures sample-efficient online exploration.
\cite{r2_liu2020exploration,r3_hung2021reward,r4_mete2021reward,r5_mete2022rbmle,r6_mete2022augmented} study RBMLE in Multi-arm bandit \citep{r2_liu2020exploration}, Linear Stochastic Bandits \citep{r3_hung2021reward}, tabular RL \citep{r4_mete2021reward}, and Linear Quadratic Regulator settings (linear parameterized models of MDPs, \citep{r5_mete2022rbmle,r6_mete2022augmented}) and also obtain the theoretical guarantees. While these settings are special cases for our proposed algorithms,  our proven theoretical guarantee can also be generalized to these concrete cases. As we claim in this paper, our main contribution is to address the exploration-exploitation trade-off issue under general function approximation, which makes our work differ from these papers. 
\cite{wu2022bayesian} consider an algorithm similar to MEX, but our theory differs from theirs in both techniques and results. Our theory is based upon a unified framework of online RL with general function approximations, which covers their setup for the model-based hypothesis with kernel function approximation (RKHS). More importantly, they derived asymptotic regret of their algorithm based upon certain uniform boundedness and asymptotic normality assumptions, which are relatively strong conditions. In contrast, we derive finite sample regret upper bound for MEX, and the only fundamental assumption needed is a lower Generalized Eluder Coefficient (GEC) MDP, which contains almost all known theoretically tractable MDP classes (therefore covers their RKHS model). Finally, our paper further extends MEX to two-player zero-sum Markov games where similar algorithms and theories are previously unknown to the best of our knowledge. Moreover, the works mentioned above do not implement experiments in deep RL environments, while we propose deep RL implementations and demonstrate their effectiveness in several MuJoco tasks.
\paragraph{Exploration in deep RL.} 
There has also been a long line of works that studies the exploration-exploitation trade-off from a practical perspective, where a prominent approach is referred to as the curiosity-driven method \citep{pathak2017curiosity}. 
Curiosity-driven method focuses on the intrinsic rewards \citep{pathak2017curiosity} (to handle the sparse extrinsic reward case) when making decisions, whose formulation can be largely grouped into either encouraging the algorithm to explore ``novel'' states \citep{bellemare2016unifying,lopes2012exploration} or encouraging the algorithm to pick actions that reduce the uncertainty in its knowledge of the environment \citep{houthooft2016vime, mohamed2015variational, stadie2015incentivizing}. 
These methods share the same theoretical motivation as the OFU principle. 
In particular, one popular approach in this area is to use ensemble methods, which combine multiple neural networks of the value function and (or) policy (see \citep{wiering2008ensemble,osband2016deep,chen2017ucb,lu2017ensemble,kurutach2018model,chua2018deep,lee2021sunrise} and reference therein). 
For instance, \citet{chen2017ucb} leverage the idea of upper confidence bound by estimating the uncertainty via ensembles to improve the sample efficiency. 
However, the uncertainty estimation via ensembles is more computationally inefficient as compared to the vanilla algorithm. 
Meanwhile, these methods lack theoretical guarantees beyond tabular and linear settings. It remains unknown in theory whether they are provably sample-efficient in the context of general function approximations. There is a rich body of literature, and we refer interested readers to Section 4 of \citet{zha2021rank} for a comprehensive review.

\paragraph{Two-player zero-sum Markov game.} 
There have been numerous works on designing provably efficient algorithms for zero-sum Markov games (MGs). 
In the tabular case, \citet{bai2020near,bai2020provable,liu2020sharp} propose algorithms with regret guarantees polynomial in the number of states and actions. 
\citet{xie2020learning, chen2021almost} then study the MGs in the linear function approximation case and design algorithms with a $\tilde{\cO}(\text{poly}(d, H) \sqrt{K})$ regret, where $d$ is the dimension of the linear features. 
These approaches are later extended to general function approximations by \citet{jin2021power, huang2021towards, xiong22b}, where the former two works studied OFU-based algorithms and the last one studied posterior sampling.

\subsection{Notations and Outlines}
For a measurable space $\mathcal{X}$, we use $\Delta(\mathcal{X})$ to denote the set of probability measure on $\mathcal{X}$. 
For an integer  $n\in\mathbb{N}$, we use $[n]$ to denote the set $\{1,\cdots,n\}$.
For a random variable $X$, we use $\mathbb{E}[X]$ and $\mathbb{V}[X]$ to denote its expectation and variance respectively.
For two probability densities on $\mathcal{X}$, we denote their Hellinger distance $D_{\mathrm{H}}$ as
\begin{align*}
    D_{\mathrm{H}}(p\|q) = \frac{1}{2}\int_{\mathcal{X}}\big(\sqrt{p(x)} - \sqrt{q(x)}\big)^2\mathrm{d}x.
\end{align*}
For two functions $f(x)$ and $g(x)$, we denote $f\lesssim g$ if there is a constant $C$ such that $f(x)\leq C\cdot g(x)$ for any $x$.

The paper is organized as follows. 
In Section \ref{sec: pre}, we introduce the basics of online RL in MDPs, where we also define the settings for general function approximations. 
In Section \ref{sec: alg}, we propose the \texttt{MEX} framework, and we provide generic theoretical guarantees for \texttt{MEX} in Section \ref{sec: theory}.
In Section \ref{sec: examples mdp}, we instantiate \texttt{MEX} to solve several model-free and model-based MDP instances, with some details referred to Appendix \ref{sec: proof examples mdp}.
We further extend the algorithm and the theory of \texttt{MEX} to zero-sum two-player MGs in Section \ref{sec: mg}.
In Section \ref{sec: experiments}, we conduct deep RL experiments to demonstrate the effectiveness of \texttt{MEX} in various MuJoCo environments.

\section{Preliminaries}\label{sec: pre}

\subsection{Episodic Markov Decision Process and Online Reinforcement Learning}

We consider an episodic MDP defined by a tuple $(\cS, \cA, H, \mathbb{P}, r)$, where $\cS$ and $\cA$ are the state and action spaces, $H\in\mathbb{N}_+$ is a finite horizon, $\mathbb{P} = \{\mathbb{P}_h\}_{h\in[H]}$ with $\mathbb P_h:\mathcal{S}\times\mathcal{A}\mapsto\Delta(\mathcal{S})$ the transition kernel at the $h$-th timestep, and $r = \{r_h\}_{h\in[H]}$ with $r_h\colon \cS\times \cA\to [0,1]$ the reward function at the $h$-th timestep. 
Without loss of generality, we assume that the reward function $r$ is both deterministic and known by the learner.

We consider \emph{online} reinforcement learning in the episodic MDP, where the agent interacts with the MDP for $K\in \mathbb{N}_+$ episodes through the following protocol.  
At the beginning of the $k$-th episode, the agent selects a policy $\pi^k = \{\pi^k_h:\mathcal{S}\mapsto\Delta(\mathcal{A})\}_{h\in[H]}$.
Then at the $h$-th timestep of this episode, the agent is at some state $x_h^k$ and it takes an action $a_h^k \sim \pi_h^k(\cdot\given x_h^k)$.
After receiving the reward $r_h^k = r_h(x_h^k, a_h^k)$, it transits to the next state $x_{h+1}^k \sim \mathbb{P}_h(\cdot\given x_h^k, a_h^k)$.
When it reaches the state $x_{H+1}^k$, it ends the $k$-th episode. 
Without loss of generality, we assume that the initial state $x_1^k = \underline{x}$ is fixed all $k\in[K]$.
Our algorithm and analysis can be directly generalized to the setting where $x_1$ is sampled from a distribution on $\cS$.  

\paragraph{Policy and value functions.} 
For any given policy $\pi = \{\pi_h:\mathcal{S}\mapsto\Delta(\mathcal{A})\}_{h\in[H]}$, we denote by $V_h^\pi: \mathcal{S} \mapsto\mathbb{R}_+$ and $Q_h^\pi: \mathcal{S}\times\mathcal{A} \mapsto\mathbb{R}_+$ its state-value function and its state-action value function at the $h$-th timestep, which characterize the expected total rewards received by executing the policy $\pi$ starting from some $x_h=x\in\cS$ (or $x_h = x\in\cS, a_h = a\in\cA$, resp.), till the end of the episode. 
Specifically, for any $(x,a)\in\cS\times\cA$,
\begin{align}
    V_h^\pi(x):=\mathbb{E}_{\mathbb{P},\pi}\left[\sum_{h^{\prime}=h}^H r_{h^{\prime}}(x_{h^{\prime}}, a_{h^{\prime}}) \middle|\, x_h=x\right], \quad Q_h^\pi(x, a):=\mathbb{E}_{\mathbb{P},\pi}\left[\sum_{h^{\prime}=h}^H r_{h^{\prime}}(x_{h^{\prime}}, a_{h^{\prime}}) \middle|\, x_h=x, a_h=a\right].
\end{align}
It is known that there exists an optimal policy, denoted by $\pi^{\ast}$, which has the optimal state-value function for all initial states \citep{puterman2014markov}.
That is, $V_h^{\pi^{\ast}}(x)=\sup _\pi V_h^\pi(x)$ for all $h \in[H]$ and $x \in \mathcal{S}$. For simplicity, we abbreviate $V^{\pi^{\ast}}$ as $V^{\ast}$ and the optimal state-action value function $Q^{\pi^{\ast}}$ as $Q^{\ast}$. 
Moreover, the optimal value functions $Q^{\ast}$ and $V^\ast$ satisfy the following Bellman optimality equation \citep{puterman2014markov},
\begin{align}
    V_{h}^\ast (x)= \max_{a\in\mathcal{A}} Q_{h}^\ast(x,a),\quad Q_h^{\ast}(x, a)=(\mathcal{T}_h Q_{h+1}^{\ast})(x, a):=r_h(x, a)+\mathbb{E}_{x^{\prime} \sim \mathbb{P}_h(\cdot \mid x, a)} \Big[\max_{a'\in\mathcal{A}}Q_{h+1}^{\ast}\left(x^{\prime},a'\right)\Big] ,
\end{align}
with $Q_{H+1}^{\ast}(\cdot,\cdot) = 0$ for all $(x, a, h) \in \mathcal{S} \times \mathcal{A} \times[H]$. 
We call $\mathcal{T}_h$ the Bellman optimality operator at timestep $h$. 
Also, for any two functions $Q_h$ and $Q_{h+1}$ on $\cS\times\cA$, we define
\begin{align}
    \mathcal{E}_h(Q_h,Q_{h+1};x,a):=Q_h(x,a) - \mathcal{T}_hQ_{h+1}(x,a),\quad\forall (x,a)\in\cS\times\cA,
\end{align}
as the Bellman residual at timestep $h$ of $(Q_h,Q_{h+1})$.

\paragraph{Performance metric.} 
We measure the performance of an online RL algorithm after $K$ episodes by its \emph{regret}.
We assume that the learner predicts the optimal policy $\pi^{\ast}$ via $\pi^k$ in the $k$-th episode for each $k\in[K]$. 
Then the regret after $K$ episodes is defined as the cumulative suboptimality gap of $\{\pi^k\}_{k\in[K]}$\footnote{We allow the agent to predict the optimal policy via $\pi^k$ while executing some other exploration policy $\pi^k_{\exp}$ to interact with the environment and collect data, as is considered in the related literature \citep{sun2019model,du2021bilinear, zhong2022posterior}}, defined as
\begin{align}\label{eq: regret}
    \text{Regret}(K) =  \sum_{k=1}^K  V^\ast_1(x_1) - V^{\pi^k}_1(x_1).
\end{align}
The target of sample-efficient online RL is to achieve sublinear regret \eqref{eq: regret} with respect to $K$.

\subsection{Function Approximation: Model-Free and Model-Based Hypothesis} 
To deal with MDPs with large or even infinite state space $\mathcal{S}$, we introduce a class of function approximators. 
In specific, we consider an abstract hypothesis class $\mathcal{H} = \mathcal{H}_1\times\cdots\times\mathcal{H}_H$, which can be specified to model-based and model-free settings, respectively.
Also, we denote $\Pi = \Pi_1\times\cdots\times\Pi_H$ as the space of all Markovian policies.

The following two examples show how to specify $\mathcal{H}$ for model-free and model-based settings.

\begin{example}[Model-free hypothesis class]\label{exp: model free}
    For model-free setting, $\mathcal{H}$ contains approximators of the optimal state-action value function of the MDP, i.e., $\mathcal{H}_h\subseteq\{f_h:\mathcal{S}\times\mathcal{A}\mapsto\mathbb{R}\}$.
    For any $f = (f_1,\cdots,f_H)\in\mathcal{H}$:
    \begin{enumerate}
        \item we denote corresponding state-action value function $Q_{f} = \{Q_{h,f}\}_{h\in[H]}$ with $Q_{h,f} = f_h$;
        \item we denote corresponding state-value function $V_f = \{V_{h,f}\}_{h\in[H]}$ with $V_{h,f}(\cdot) = \max_{a\in\mathcal{A}}Q_{h,f}(\cdot,a)$, and we denote the corresponding optimal policy by $\pi_f = \{\pi_{h,f}\}_{h\in[H]}$ with $\pi_{h,f}(\cdot) = \arg\max_{a\in\mathcal{A}}Q_{h,f}(\cdot,a)$.
        \item we denote the optimal state-action value function under the true model, i.e.,  $Q^{\ast}$, by $f^{\ast}$.
    \end{enumerate}
\end{example}

\begin{example}[Model-based hypothesis class]\label{exp: model based}
    For model-based setting, $\mathcal{H}$ contains approximators of the transition kernel of the MDP, for which we denote $f=\mathbb{P}_f=(\mathbb{P}_{1,f},\cdots,\mathbb{P}_{H,f}) \in \mathcal{H}$.
    For any $(f,\pi)\in\mathcal{H}\times\Pi$:
    \begin{enumerate}
        \item we denote $V_{f}^{\pi} = \{V_{h,f}^{\pi}\}_{h\in[H]}$ as the state-value function induced by model $\mathbb{P}_f$ and policy $\pi$.
        \item we denote $V_{f} = \{V_{h,f}\}_{h\in[H]}$ as the optimal state-value function under model $\mathbb{P}_f$, i.e., $V_{h, f} = \sup_{\pi\in\Pi}V_{h, f}^{\pi}$.
        The corresponding optimal policy is denoted by $\pi_{f} = \{\pi_{h,f}\}_{h\in[H]}$, where $\pi_{h,f} = \arg\sup_{\pi\in\Pi}V_{h, f}^{\pi}$.
        \item we denote the true model $\mathbb{P}$ of the MDP as $f^{\ast}$.
    \end{enumerate} 
\end{example}

We remark that the main difference between the model-based hypothesis (Example \ref{exp: model based}) and the model-free hypothesis (Example \ref{exp: model free}) is that model-based RL directly learns the transition kernel of the underlying MDP, while model-free RL  learns the optimal state-action value function. 
Since we do not add any specific structural form to the hypothesis class, e.g., linear function or kernel function, we are in the context of \emph{general function approximations} \citep{sun2019model, jin2021bellman, du2021bilinear, zhong2022posterior, chen2022general}.

\section{Algorithm Framework: Maximize to Explore (MEX)}\label{sec: alg}

In this section, we propose an algorithm framework, named \textit{\underline{M}aximize to \underline{Ex}plore} (\texttt{MEX}, Algorithm \ref{alg: implicit optimism mdp}), for online RL in MDPs with general function approximations. 
With a novel single objective, \texttt{MEX} automatically balances the goal of exploration and exploitation in online RL. 
Since \texttt{MEX} only requires an \emph{unconstrained} maximization procedure, it is friendly to implement in practice.

We first give a generic algorithm framework and then instantiate it to model-free (Example \ref{exp: model free}) and model-based (Example \ref{exp: model based}) hypotheses respectively.

\paragraph{Generic algorithm.} 
In each episode $k\in[K]$, the agent first estimates a hypothesis $f^k\in\mathcal{H}$ using historical data $\{\mathcal{D}^{s}\}_{s=1}^{k-1}$ by maximizing a composite objective \eqref{eq: implicit optimism mdp}. 
Specifically, in order to achieve exploiting history knowledge while encouraging exploration, the agent considers a single objective that sums:
\textbf{(a)} the negative loss $ - L_h^{k-1}(f)$ induced by the hypothesis $f$, which represents the exploitation of the agent’s current knowledge;
\textbf{(b)} the expected total return of the optimal policy associated with this hypothesis, i.e., $V_{1,f}$, which represents exploration for a higher return. 
With a tuning parameter $\eta>0$, the agent balances the weight put on the tasks of exploitation and exploration.

Then the agent predicts $\pi^{\ast}$ via the optimal policy associated with the hypothesis $f^k$, i.e., $\pi_{f^k}$.
Also, the agent executes some exploration policy $\pi_{\exp}(f^k)$ to collect data $\mathcal{D}^k = \{(x_h^k,a_h^k,r_h^k, x_{h+1}^k)\}_{h=1}^H$ and updates the loss function $L_h^k(\cdot)$. 
The choice of the loss function $L(\cdot)$ varies between model-free and model-based hypotheses, which we specify in the following.
The choice of the exploration policy $\pi_{\exp}(f^k)$ depends on the specific MDP structure, and we refer to examples in Section \ref{sec: examples mdp} and Appendix \ref{sec: proof examples mdp} for detailed discussions.

We need to highlight that \texttt{MEX} is not a Lagrangian duality of the constrained optimization objectives within data-dependent level-sets proposed by previous works \citep{jin2021bellman, du2021bilinear,chen2022general}.
In fact, \texttt{MEX} only needs to fix the parameter $\eta$ across each episode $k$. 
Thus $\eta$ is independent of data and predetermined,  which contrasts Lagrangian methods that involve an inner loop of optimization for the dual variables. We also remark that we can rewrite \eqref{eq: implicit optimism mdp} as a joint optimization 
$
(f,\pi) = \operatornamewithlimits{argsup}_{f\in\mathcal{H},\pi\in\Pi} V_{1,f}^{\pi}(x_1) -\eta \sum_{h=1}^H L_h^{k-1}(f).
$ 
When $\eta$ tends to infinity, \texttt{MEX} conincides with the vanilla actor-critic framework \citep{NIPS1999_6449f44a}, where the critic $f$ minimizes the estimation error and the actor $\pi$ conducts greedy policy associated with the critic $f$.
In the following two parts, we instantiate Algorithm~\ref{alg: implicit optimism mdp} to model-based and mode-free hypotheses respectively by specifying the loss function $L_h^k(f)$. 

\begin{algorithm}[t]
	\caption{Maximize to Explore 
 (\texttt{MEX})}
	\label{alg: implicit optimism mdp}
	\begin{algorithmic}[1]
	\STATE \textbf{Input}: Hypothesis class $\mathcal{H}$, parameter $\eta>0$.
    \FOR{$k=1,\cdots,K$}
        \STATE Solve $f^k\in\mathcal{H}$ via
        \begin{align}\label{eq: implicit optimism mdp}
            f^k = \operatornamewithlimits{argsup}_{f\in\mathcal{H}}\left\{V_{1, f}(x_1) - \eta \cdot  \sum_{h=1}^HL_h^{k-1} (f)\right\}.
        \end{align}
        \STATE Execute $\pi_{\exp}(f^k)$ to collect data $\mathcal{D}^k = \{\mathcal{D}_h^k\}_{h\in[H]}$ with $\mathcal{D}_h^k = (x_h^k,a_h^k,r_h^k,x_{h+1}^k)$.
        \STATE Calculate the loss function $L_h^k(\cdot)$ for each $h\in[H]$ based on historical data $\{\mathcal{D}^s\}_{s\in[k]}$.
        \STATE Predict the optimal policy via $\pi_{f^k}$.
    \ENDFOR
	\end{algorithmic}
\end{algorithm}

\paragraph{Model-free algorithm.} 
For model-free hypothesis (Example \ref{exp: model free}), the composite objective \eqref{eq: implicit optimism mdp} becomes
\begin{align}
\label{model_free}
    f^k = \operatornamewithlimits{argsup}_{f\in\mathcal{H}}\left\{\max_{a_1\in\mathcal{A}}Q_{1, f}(x_1,a_1) - \eta \cdot \sum_{h=1}^H   L_h^{k-1}(f)\right\}.
\end{align}
Regarding the choice of the loss function, for seek of theoretical analysis, to deal with MDPs with low Bellman eluder dimension \citep{jin2021bellman} and MDPs of bilinear class \citep{du2021bilinear}, we  assume the existence of certain function $l$, which generalizes the notion of Bellman residual.

\begin{assumption}\label{ass: l function}
    The function $l:\mathcal{H}\times\mathcal{H}_h\times\mathcal{H}_{h+1}\times(\mathcal{S}\times\mathcal{A}\times\mathbb{R}\times\mathcal{S})\mapsto\mathbb{R}$ satisfies\footnote{For simplicity we drop the dependence of $l$ on the index $h$ since this makes no confusion. Similar simplications are used later.}: 
    \begin{enumerate}
        \item $\mathrm{(Generalized\,\,Bellman\,\,completeness)}$ \citep{zhong2022posterior,chen2022general}. There exists a functional operator $\mathcal{P}_{h}:\mathcal{H}_{h+1}\mapsto\mathcal{H}_{h}$ such that for any $(f',f_h,f_{h+1})\in\mathcal{H}\times\mathcal{H}_h\times\mathcal{H}_{h+1}$ and $\mathcal{D}_h= (x_h,a_h,r_h,x_{h+1})\in\mathcal{S}\times\mathcal{A}\times\mathbb{R}\times\mathcal{S}$,
    \begin{align*}
        l_{f'}\big((f_{h},f_{h+1});\mathcal{D}_h\big) - l_{f'}\big((\mathcal{P}_hf_{h+1},f_{h+1});\mathcal{D}_h\big) = \mathbb{E}_{x_{h+1}\sim\mathbb{P}_h(\cdot|x_h,a_h)}\big[l_{f'}\big((f_{h},f_{h+1});\mathcal{D}_h\big)\big],
    \end{align*}
    where we require that $\mathcal{P}_hf^{\ast}_{h+1} = f^{\ast}_h$ and that $\mathcal{P}_hf_{h+1}\in\mathcal{H}_h$ for any $f_{h+1}\in\mathcal{H}_{h+1}$ and $h\in[H]$;
    \item $\mathrm{(Boundedness)}$. It holds that $|l_{f'}((f_h,f_{h+1});\mathcal{D}_h)|\leq B_l$ for some $B_l>0$ and any  $(f',f_h,f_{h+1})\in\mathcal{H}\times\mathcal{H}_h\times\mathcal{H}_{h+1}$ and $\mathcal{D}_h= (x_h,a_h,r_h,x_{h+1})\in\mathcal{S}\times\mathcal{A}\times\mathbb{R}\times\mathcal{S}$.
    \end{enumerate}
\end{assumption}

Intuitively, the operator $\mathcal{P}_h$ can be considered as a generalization of the Bellman optimality operator.
We set the choice of $l$ and $\mathcal{P}$ for concrete model-free examples in Section \ref{sec: examples mdp}.
We then set the loss function $L_h^{k}$ as an empirical estimation of the generalized squared Bellman error $|\mathbb{E}_{x_{h+1}\sim \mathbb{P}_h(\cdot|x_h,a_h)}[l_{f^s}((f_{h},f_{h+1}),\mathcal{D}_h^s )]|^2$, given by 
\begin{align}\label{eq: implicit optimism mdp free based L}
    L_h^k(f) = \sum_{s=1}^kl_{f^s}\bigl ((f_{h},f_{h+1} );\mathcal{D}_h^s\bigr )^2 - \inf_{f_h'\in\mathcal{H}_h}\sum_{s=1}^kl_{f^s} \bigl ((f_{h}',f_{h+1} );\mathcal{D}_h^s \bigr )^2.
\end{align}
We remark that the subtracted infimum term in \eqref{eq: implicit optimism mdp free based L} is for handling the variance terms in the estimation to achieve a fast theoretical rate.
Similar essential ideas are also adopted by \cite{jin2021bellman, xie2021bellman, dann2021provably, jin2022power, lu2022pessimism, agarwal2022model, zhong2022posterior}.

\paragraph{Model-based algorithm.} 
For model-based hypothesis (Example \ref{exp: model based}), the composite objective \eqref{eq: implicit optimism mdp} becomes
\begin{align}\label{eq: implicit optimism mdp model based}
    f^k = \operatornamewithlimits{argsup}_{f\in\mathcal{H}}\left\{\sup_{\pi\in\Pi}V_{1, \mathbb{P}_f}^\pi(x_1) - \eta \cdot \sum_{h=1}^HL_h^{k-1}(f)\right\},
\end{align}
which gives a joint optimization over the model $\mathbb{P}_f$ and the policy $\pi$.
In the model-based algorithm, we choose the loss function $L_h^k$ as the negative log-likelihood loss, defined as 
\begin{align}\label{eq: implicit optimism mdp model based L}
    L_h^k(f) = - \sum_{s=1}^k\log\mathbb{P}_{h,f}(x_{h+1}^s|x_h^s,a_h^s).
\end{align}

\section{Regret Analysis for MEX Framework}\label{sec: theory}

In this section, we analyze the regret of the \texttt{MEX} framework (Algorithm \ref{alg: implicit optimism mdp}).
Specifically, we give an upper bound of its regret which holds for both model-free (Example \ref{exp: model free}) and model-based (Example \ref{exp: model based}) settings. 
To derive the theorem, we first present three key assumptions needed. 
In Section \ref{sec: examples mdp}, we specify the generic upper bound to specific examples of MDPs and hypothesis classes that satisfy these assumptions.

We first assume that the hypothesis class $\mathcal{H}$ is well-specified, containing the true hypothesis $f^{\ast}$.

\begin{assumption}[Realizablity]\label{ass: realizability}
    We assume that the true hypothesis $f^{\ast} \in \mathcal{H}$.
\end{assumption}

Moreover, we make a structural assumption on the underlying MDP to ensure sample-efficient online RL.
Inspired by \citet{zhong2022posterior}, we require the MDP to have low \textbf{\underline{G}eneralized \underline{E}luder \underline{C}oefficient} (GEC). 
In MDPs with low GEC, the agent can effectively mitigate out-of-sample prediction error by minimizing in-sample prediction error based on the historical data.  
Therefore, the GEC can be used to measure the difficulty inherent in  generalization from the observation to the unobserved trajectory, thus further quantifying  the hardness of learning the MDP.
We refer the readers to \citet{zhong2022posterior} for a detailed discussion of GEC.

To define GEC, we introduce a discrepancy function
\begin{align*}
    \ell_{f'}(f;\xi_h):\mathcal{H}\times\mathcal{H}\times(\mathcal{S}\times\mathcal{A}\times\mathbb{R}\times\mathcal{S})\mapsto\mathbb{R},
\end{align*}
which characterizes the error incurred by hypothesis $f\in\mathcal{H}$ on data $\xi_h = (x_h,a_h,r_h,x_{h+1})$. 
Specific choices of $\ell$ are given in Section \ref{sec: examples mdp} for concrete  model-free and model-based examples.

\begin{assumption}[Low generalized eluder coefficient \citep{zhong2022posterior}]\label{ass: gec}
 We assume that given an $\epsilon>0$,
 there exists $d(\epsilon)\in\mathbb{R}_+$, such that for any sequence of $\{f^k\}_{k\in[K]}\subseteq\mathcal{H}$, $\{\pi_{\mathrm{exp}}(f^k)\}_{k\in[K]}\subseteq\Pi$,
 \begin{align}
    \sum_{k=1}^KV_{1, f^k} - V_{1}^{\pi_{f^k}}\le \inf_{\mu > 0} \left\{\frac{\mu}{2} \sum_{h=1}^H\sum_{k=1}^K\sum_{s = 1}^{k-1} \mathbb{E}_{\xi_h\sim \pi_{\exp}(f^{s})}[
        \ell_{f^s}(f^k;\xi_h)]+ \frac{d(\epsilon)}{2\mu} + \sqrt{d(\epsilon)HK} + \epsilon HK \right\}. \notag
 \end{align}
We denote the smallest number $d(\epsilon)\in\mathbb{R}_+$ satisfying this condition as $d_{\mathrm{GEC}}(\epsilon)$.
\end{assumption}

As is shown by \citet{zhong2022posterior}, the low-GEC MDP class covers almost all known theoretically tractable MDP instances, such as linear MDP \citep{yang2019sample,jin2020provably}, linear mixture MDP \citep{ayoub2020model,modi2020sample,cai2020provably}, MDPs of low witness rank \citep{sun2019model}, MDPs of low Bellman eluder dimension \citep{jin2021bellman}, and MDPs of bilinear class \citep{du2021bilinear}. 

Finally, we make a concentration-style assumption which characterizes how the loss function $L_h^k$ is related to the expectation of the discrepancy function $\mathbb{E}[\ell]$ appearing in the definition of GEC.
For ease of presentation, we assume that $\cH$ is finite, i.e., $|\mathcal{H}|<\infty$, but our result can be directly extended to an infinite $\mathcal{H}$ using covering number arguments \citep{wainwright2019high,jin2021bellman, liu2022welfare, jin2022power}.

\begin{assumption}[Generalization]\label{ass: supervised learning}
We assume that $\cH$ is finite, i.e., $|\cH|<+\infty$, and that with probability at least $1-\delta$, for any episode $k\in[K]$ and hypothesis $f\in\mathcal{H}$, it holds that
\begin{align}
    \sum_{h=1}^HL_h^{k-1}(f^{\ast}) - L_h^{k-1}(f) \lesssim -\sum_{h=1}^H\sum_{s=1}^{k-1}\mathbb{E}_{\xi_h\sim \pi_{\mathrm{exp}}(f^s)}[\ell_{f^s}(f;\xi_h)] + B\cdot\big(H\log(HK/\delta) + \log(|\mathcal{H}|)\big), \notag
\end{align}
where $B = B_l^2$ for model-free hypothesis (see Assumption \ref{ass: l function}) and $B = 1$ for model-based hypothesis.
\end{assumption}

As we will show in Proposition \ref{prop: supervised learning model free rl} and Proposition \ref{prop: supervised learning model based rl}, Assumption \ref{ass: supervised learning} holds for both the model-free and model-based settings.
Such a concentration style inequality is well known in the literature and similar analysis is also adopted by \cite{jin2021bellman, chen2022general}.
With Assumptions \ref{ass: realizability}, \ref{ass: gec}, and \ref{ass: supervised learning}, we can present our main theoretical result. 

\begin{theorem}[Online regret of \texttt{MEX} (Algorithm \ref{alg: implicit optimism mdp})]\label{thm:reg}
    Under Assumptions \ref{ass: realizability}, \ref{ass: gec}, and \ref{ass: supervised learning}, by setting 
    $$
    \eta = \sqrt{\frac{d_{\mathrm{GEC}}(1/\sqrt{HK})}{(H\log(HK/\delta) + \log(|\mathcal{H}|))\cdot B\cdot K}},
    $$ 
    then the regret of Algorithm \ref{alg: implicit optimism mdp} after $K$ episodes is upper bounded by
    \begin{align*}
        \mathrm{Regret}(K)\lesssim \sqrt{d_{\mathrm{GEC}}(1/\sqrt{HK})\cdot (H\log(HK/\delta) + \log(|\mathcal{H}|)) \cdot B\cdot K},
    \end{align*}
    with probability at least $1-\delta$. 
    Here $d_{\mathrm{GEC}}(\cdot)$ is defined in Assumption \ref{ass: gec}.
\end{theorem}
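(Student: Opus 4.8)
The plan is to bound the regret by splitting each per-episode suboptimality gap into an \emph{estimation} term and a \emph{generalization} term, and then to let the low-GEC structure (Assumption \ref{ass: gec}) and the concentration bound (Assumption \ref{ass: supervised learning}) cancel against each other. Concretely, since $\pi^k = \pi_{f^k}$ and, by realizability (Assumption \ref{ass: realizability}), $V_{1,f^{\ast}}(x_1) = V_1^{\ast}(x_1)$ in both the model-free and model-based settings, I would write for each $k$
\begin{align*}
    V_1^{\ast}(x_1) - V_1^{\pi^k}(x_1) = \underbrace{\big(V_1^{\ast}(x_1) - V_{1,f^k}(x_1)\big)}_{\textbf{(I)}} + \underbrace{\big(V_{1,f^k}(x_1) - V_1^{\pi_{f^k}}(x_1)\big)}_{\textbf{(II)}}.
\end{align*}
Term \textbf{(II)} summed over $k$ is exactly the left-hand side of the GEC inequality, so Assumption \ref{ass: gec} controls it by $\inf_{\mu>0}\{\tfrac{\mu}{2} S + \tfrac{d_{\mathrm{GEC}}(\epsilon)}{2\mu} + \sqrt{d_{\mathrm{GEC}}(\epsilon) HK} + \epsilon HK\}$, where $S := \sum_{h,k}\sum_{s<k}\mathbb{E}_{\xi_h\sim\pi_{\exp}(f^s)}[\ell_{f^s}(f^k;\xi_h)]$ denotes the cumulative in-sample discrepancy.

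For term \textbf{(I)}, the key is the \emph{unconstrained optimality} of $f^k$ in the \texttt{MEX} objective \eqref{eq: implicit optimism mdp}. Comparing the maximizer $f^k$ against the feasible point $f^{\ast}$ gives $V_{1,f^k}(x_1) - \eta\sum_h L_h^{k-1}(f^k) \ge V_1^{\ast}(x_1) - \eta\sum_h L_h^{k-1}(f^{\ast})$, hence $\textbf{(I)} \le \eta\sum_h\big(L_h^{k-1}(f^{\ast}) - L_h^{k-1}(f^k)\big)$. I would then invoke the generalization bound (Assumption \ref{ass: supervised learning}), whose \emph{uniformity over $f\in\mathcal{H}$} — the place where finiteness of $\mathcal{H}$ and the union bound enter — is what licenses substituting the data-dependent choice $f=f^k$. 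This turns the loss gap into $-\sum_{h,s<k}\mathbb{E}[\ell_{f^s}(f^k;\xi_h)]$ plus the concentration residual $B(H\log(HK/\delta)+\log|\mathcal{H}|)$. Summing over $k$ yields $\sum_k \textbf{(I)} \lesssim -\eta S + \eta K B(H\log(HK/\delta)+\log|\mathcal{H}|)$, where the negative sign on $S$ is precisely what I need to match against term \textbf{(II)}.

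Adding the two bounds, the in-sample term appears as $(\tfrac{\mu}{2} - c\eta)S$ for the absolute constant $c$ hidden in the $\lesssim$ of Assumption \ref{ass: supervised learning}; choosing $\mu = 2c\eta$ inside the GEC infimum makes this coefficient vanish (using $\ell \ge 0$), leaving $\mathrm{Regret}(K)\lesssim \eta KB(H\log(HK/\delta)+\log|\mathcal{H}|) + d_{\mathrm{GEC}}(\epsilon)/\eta + \sqrt{d_{\mathrm{GEC}}(\epsilon)HK} + \epsilon HK$. Balancing the first two terms over $\eta$ recovers the stated choice $\eta = \sqrt{d_{\mathrm{GEC}}(1/\sqrt{HK})/((H\log(HK/\delta)+\log|\mathcal{H}|)BK)}$, and taking $\epsilon = 1/\sqrt{HK}$ renders the last two terms lower-order, giving the claimed $\sqrt{d_{\mathrm{GEC}}(1/\sqrt{HK})(H\log(HK/\delta)+\log|\mathcal{H}|)BK}$ rate.

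I expect the main obstacle to be the exact cancellation in the final step: it hinges on the coefficient matching between the $-\eta S$ produced by optimality-plus-generalization and the $+\tfrac{\mu}{2}S$ permitted by the GEC infimum, which requires both that $\ell$ be nonnegative and, crucially, that the \emph{same} discrepancy $\ell_{f^s}(f^k;\cdot)$ appearing in Assumption \ref{ass: gec} also appears in Assumption \ref{ass: supervised learning}. Verifying that this shared-$\ell$ compatibility genuinely holds for the concrete loss choices \eqref{eq: implicit optimism mdp free based L} and \eqref{eq: implicit optimism mdp model based L} — i.e., that the concentration inequality is satisfiable with the discrepancy used to define GEC, with $B=B_l^2$ in the model-free case and $B=1$ in the model-based case — is the substantive point, deferred to Propositions \ref{prop: supervised learning model free rl} and \ref{prop: supervised learning model based rl}.
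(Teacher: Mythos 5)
Your proposal is correct and follows essentially the same route as the paper's proof: the identical two-term decomposition, the comparison of the maximizer $f^k$ against the feasible $f^{\ast}$ combined with Assumption \ref{ass: supervised learning} to bound the first term, the GEC bound with $\mu/2$ matched to $c\eta$ so that the in-sample discrepancy sums cancel (in the paper this cancellation is exact, so the nonnegativity of $\ell$ is not even needed), and the same choices $\epsilon = 1/\sqrt{HK}$ and $\eta$ to balance. Your closing observation that the shared-$\ell$ compatibility is the substantive point deferred to Propositions \ref{prop: supervised learning model free rl} and \ref{prop: supervised learning model based rl} also matches how the paper organizes the argument.
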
 

\begin{proof}[Proof of Theorem \ref{thm:reg}]
    See Appendix \ref{subsec: proof regret mdp} for a detailed proof.
\end{proof}

By Theorem \ref{thm:reg}, the regret of Algorithm \ref{alg: implicit optimism mdp} scales with the square root of the number of episodes $K$ and the polynomials of the horizon $H$, the GEC $d_{\text{GEC}}(1/\sqrt{K})$, and the log of the hypothesis class cardinality $\log|\mathcal{H}|$. 
When the number of episodes $K$ tends to infinity, the average regret $\mathrm{Regret}(K)/K$ vanishes, meaning that the output policy of Algorithm \ref{alg: implicit optimism mdp} is approximately optimal. 
Thus Algorithm \ref{alg: implicit optimism mdp} is provably sample-efficient.

Besides, as we can see in Theorem \ref{thm:reg} and its specifications in Section \ref{sec: examples mdp}, \texttt{MEX} matches existing theoretical results in the literature of online RL under general function approximations \citep{jiang2017contextual,sun2019model, du2021bilinear, jin2021bellman, dann2021provably, agarwal2022model, zhong2022posterior}.
But meanwhile, \texttt{MEX} does not require explicitly solving a constrained optimization problem within  data-dependent level-sets or performing a complex sampling procedure, as is required by previous theoretical algorithms.
This advantage makes \texttt{MEX} a principled approach with much easier practical implementations. 
We conduct deep RL experiments for \texttt{MEX} in Section~\ref{sec: experiments} to demonstrate its power in complicated online tasks.

Finally, thanks to the simple and flexible form of \texttt{MEX}, in Section \ref{sec: mg}, we further extend this framework and its analysis to two-player zero-sum Markov games (MGs), for which we also extend the definition of generalized eluder coefficient (GEC) to two-player zero-sum MGs.  
Moreover, a vast variety of tractable partially observable problems also enjoy low GEC \citep{zhong2022posterior}, including regular PSR \citep{zhan2022pac}, weakly revealing POMDPs \citep{jin2020sample}, low rank POMDPs \citep{wang2022embed}, and PO-bilinear class POMDPs \citep{uehara2022provably}. 
We believe that our proposed \texttt{MEX} framework can also be applied to solve these POMDPs.

\section{Examples of MEX Framework}\label{sec: examples mdp}

In this section, we specify Algorithm \ref{alg: implicit optimism mdp} to model-based and model-free hypothesis classes for various examples of MDPs of low GEC (Assumption \ref{ass: gec}), including MDPs with low witness rank \citep{sun2019model}, MDPs with low Bellman eluder dimension \citep{jin2021bellman}, and MDPs of bilinear class \citep{du2021bilinear}.
Meanwhile, we show that Assumption \ref{ass: supervised learning} (generalization) holds for both model-free and model-based settings.
It is worth highlighting that for both model-free and model-based hypotheses, we provide generalization guarantees in a neat and unified manner, independent of specific MDP examples.

\subsection{Model-free online RL in Markov Decision Processes}\label{subsec: model free MDP}

In this subsection, we specify Algorithm \ref{alg: implicit optimism mdp} for model-free hypothesis (Example \ref{exp: model free}).
For a model-free hypothesis class, we choose the discrepancy function $\ell$ as, given $\mathcal{D}_h = (x_h,a_h,r_h,x_{h+1})$, 
\begin{align}\label{eq: example model free ell}
    \ell_{f'}(f;\mathcal{D}_h) = \left(\mathbb{E}_{x_{h+1}\sim\mathbb{P}_h(\cdot|x_h,a_h)}[l_{f'}((f_h,f_{h+1});\mathcal{D}_h)]\right)^2.
\end{align}
where the function $l:\mathcal{H}\times\mathcal{H}_h\times\mathcal{H}_{h+1}\times(\mathcal{S}\times\mathcal{A}\times\mathbb{R}\times\mathcal{S})\mapsto\mathbb{R}$ satisfies Assumption \ref{ass: l function}.
We specify the choice of $l$ in concrete examples of MDPs later.

In the following, we check and specify Assumptions \ref{ass: gec} and \ref{ass: supervised learning} for model-free hypothesis classes.

\begin{prop}[Generalization: model-free RL]\label{prop: supervised learning model free rl}
    We assume that $\mathcal{H}$ is finite, i.e., $|\mathcal{H}|<+\infty$. Then under Assumption \ref{ass: l function}, with probability at least $1-\delta$, for any $k\in[K]$ and $f\in\mathcal{H}$, it holds that 
    \begin{align*}
        \sum_{h=1}^HL_h^{k-1}(f^{\ast}) - L_h^{k-1}(f) \lesssim -\sum_{h=1}^H\sum_{s=1}^{k-1}\mathbb{E}_{\xi_h\sim \pi_{\exp}(f^s)}[ \ell_{f^s}(f;\xi_h)] +  HB_l^2\log(HK/\delta) + B_l^2\log(|\mathcal{H}|),
    \end{align*}
    where $L$ and $\ell$ are defined in \eqref{eq: implicit optimism mdp free based L} and \eqref{eq: example model free ell} respectively. Here $B_l$ is specified in Assumption \ref{ass: l function}.
\end{prop}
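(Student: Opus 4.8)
The plan is to reduce the loss difference $\sum_{h=1}^H L_h^{k-1}(f^{\ast}) - L_h^{k-1}(f)$ to the population discrepancy $\sum_{h}\sum_{s}\mathbb{E}_{\xi_h\sim\pi_{\exp}(f^s)}[\ell_{f^s}(f;\xi_h)]$ via a single martingale concentration, exploiting the variance-controlled-by-mean structure that the subtracted infimum in \eqref{eq: implicit optimism mdp free based L} is designed to create. The starting point is the generalized Bellman completeness of Assumption \ref{ass: l function}: for a fixed second argument $f_{h+1}$ and any $f_h'$, write $\zeta_s(f_h') := l_{f^s}((f_h',f_{h+1});\mathcal{D}_h^s)$ and $W_s := l_{f^s}((\mathcal{P}_h f_{h+1},f_{h+1});\mathcal{D}_h^s)$. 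The completeness identity yields the decomposition $\zeta_s(f_h') = W_s + E_s(f_h')$, where $E_s(f_h') := \mathbb{E}_{x_{h+1}\sim\mathbb{P}_h(\cdot|x_h^s,a_h^s)}[\zeta_s(f_h')]$ depends only on $(x_h^s,a_h^s)$ and, crucially, $W_s$ is conditionally mean-zero (setting $f_h=\mathcal{P}_h f_{h+1}$ in the identity gives $\mathbb{E}_{x_{h+1}}[W_s]=0$). By \eqref{eq: example model free ell} we have $E_s(f_h)^2 = \ell_{f^s}(f;\mathcal{D}_h^s)$.

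First I would form the per-layer centered increment $M_s^{(h)}(f) := \zeta_s(f_h)^2 - W_s^2 = 2W_s E_s(f_h) + E_s(f_h)^2$ and compute, against the episode-level filtration $\mathcal{F}_{s-1}$, that $\mathbb{E}[M_s^{(h)}(f)\mid\mathcal{F}_{s-1}] = \mathbb{E}_{\xi_h\sim\pi_{\exp}(f^s)}[\ell_{f^s}(f;\xi_h)]$, the cross term vanishing because $\mathbb{E}_{x_{h+1}}[W_s]=0$. Using $|l|\le B_l$ from Assumption \ref{ass: l function}, one has $|M_s^{(h)}(f)|\le 3B_l^2$ and $\mathbb{E}[(M_s^{(h)}(f))^2\mid\mathcal{F}_{s-1}] \lesssim B_l^2\cdot\mathbb{E}_{\xi_h\sim\pi_{\exp}(f^s)}[\ell_{f^s}(f;\xi_h)]$, i.e. the conditional variance is dominated by the conditional mean. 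Applying a Freedman-type inequality to the horizon-aggregated sum $\sum_{h=1}^H\sum_{s=1}^{k-1}M_s^{(h)}(f)$, union-bounding over $f\in\mathcal{H}$ and $k\in[K]$, and absorbing the resulting $\sqrt{B_l^2(\sum\mathbb{E}[\ell])\log(\cdot)}$ factor into $\tfrac12\sum\mathbb{E}[\ell]$ by AM--GM, yields with probability $1-\delta$, uniformly in $f$ and $k$,
\[
\sum_{h=1}^H\sum_{s=1}^{k-1}M_s^{(h)}(f) \ge \tfrac12\sum_{h=1}^H\sum_{s=1}^{k-1}\mathbb{E}_{\xi_h\sim\pi_{\exp}(f^s)}[\ell_{f^s}(f;\xi_h)] - CB_l^2\big(\log|\mathcal{H}| + \log(HK/\delta)\big).
\]
Aggregating over $h$ before concentrating is precisely what makes $\log|\mathcal{H}|$ appear once rather than $H$ times.

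Next I would control the subtracted infimum. Since $\mathcal{P}_h f_{h+1}\in\mathcal{H}_h$ and $E_s(\mathcal{P}_h f_{h+1})=0$, substituting $f_h'=\mathcal{P}_h f_{h+1}$ gives $\inf_{f_h'}\sum_s\zeta_s(f_h')^2\le\sum_s W_s^2$. For the matching lower bound, the increment $M_s^{(h)}$ at an arbitrary $f_h'$ has nonnegative conditional mean $\mathbb{E}[\ell]\ge0$, so a one-sided Freedman bound gives $\sum_s\zeta_s(f_h')^2 - \sum_s W_s^2 \ge -CB_l^2\log(\cdot)$; union-bounding over the pair $(f_h',f_{h+1})\in\mathcal{H}_h\times\mathcal{H}_{h+1}$ at each layer and summing over $h$ — where the product structure $\mathcal{H}=\prod_h\mathcal{H}_h$ gives $\sum_h\log(|\mathcal{H}_h||\mathcal{H}_{h+1}|)\lesssim\log|\mathcal{H}|$ while the failure-probability union contributes $H\log(HK/\delta)$ — shows $\sum_h\big(\sum_s W_s^2 - \inf_{f_h'}\sum_s\zeta_s(f_h')^2\big)\lesssim B_l^2(\log|\mathcal{H}| + H\log(HK/\delta))$.

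Finally I would assemble the pieces. For the true hypothesis, $f^{\ast}_h=\mathcal{P}_h f^{\ast}_{h+1}$ forces $W_s=\zeta_s(f^{\ast}_h)$, so $L_h^{k-1}(f^{\ast}) = \sum_s W_s^2 - \inf_{f_h'}\sum_s\zeta_s(f_h')^2$, which the infimum lower bound caps at $\lesssim B_l^2\log(\cdot)$ per layer. For general $f$, the bound $\inf_{f_h'}\sum_s\zeta_s(f_h')^2\le\sum_s W_s^2$ gives $L_h^{k-1}(f)\ge\sum_s M_s^{(h)}(f)$, and the aggregated concentration lower-bounds $\sum_h L_h^{k-1}(f)$ by $\tfrac12\sum_h\sum_s\mathbb{E}[\ell] - CB_l^2(\log|\mathcal{H}| + \log(HK/\delta))$. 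Subtracting the two gives the claimed inequality. I expect the main obstacle to be the bookkeeping that isolates $B_l^2\log|\mathcal{H}|$ (paid once) from $HB_l^2\log(HK/\delta)$: this hinges on aggregating the principal martingale over the horizon before invoking Freedman, on exploiting the product structure of $\mathcal{H}$ when union-bounding the per-layer infima, and on verifying that the conditional variance is genuinely dominated by the conditional mean so that the fast rate survives without a stray $\sqrt{K}$ term.
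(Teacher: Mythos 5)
Your proposal is, in substance, the paper's own proof: your $M_s^{(h)}(f)$ is exactly the paper's increment $X_{h,f}^s$, the conditionally mean-zero $W_s$ obtained by setting $f_h=\mathcal{P}_hf_{h+1}$ in the completeness identity, the computation $\mathbb{E}[M_s^{(h)}(f)\mid\mathcal{F}_{s-1}]=\mathbb{E}_{\xi_h\sim\pi_{\exp}(f^s)}[\ell_{f^s}(f;\xi_h)]$, the variance-dominated-by-mean bound, the control $L_h^{k-1}(f)\ge\sum_s M_s^{(h)}(f)$ via $\mathcal{P}_hf_{h+1}\in\mathcal{H}_h$, and the treatment of $L_h^{k-1}(f^\ast)$ through $\mathcal{P}_hf^\ast_{h+1}=f^\ast_h$ with a one-sided Freedman bound (the paper's Lemma \ref{lem: f star model free mdp}, which does the same absorption via $-x^2+ax\le a^2/4$) all match. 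The one place you deviate is a genuine gap: the claim that applying Freedman to the \emph{horizon-aggregated} sum $\sum_h\sum_s M_s^{(h)}(f)$, union-bounded over $f\in\mathcal{H}$, yields an additive error of order $B_l^2\bigl(\log|\mathcal{H}|+\log(HK/\delta)\bigr)$. It does not. Against the episode filtration the aggregated increment $Y_s=\sum_{h=1}^H M_s^{(h)}(f)$ has range $\Theta(HB_l^2)$, so the constraint $\eta<1/R$ in Lemma \ref{lem: freedman} forces $1/\eta\gtrsim HB_l^2$; moreover the within-episode cross terms $\mathbb{E}[M_s^{(h)}M_s^{(h')}\mid\mathcal{F}_{s-1}]$ do not vanish (the step-$h'$ data is correlated with the step-$h$ data along the same trajectory), so one only gets $\mathbb{E}[Y_s^2\mid\mathcal{F}_{s-1}]\lesssim HB_l^2\sum_h\mathbb{E}_{\xi_h\sim\pi_{\exp}(f^s)}[\ell_{f^s}(f;\xi_h)]$, which again forces $\eta\lesssim 1/(HB_l^2)$ for the AM--GM absorption. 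Either way the additive term is $\Omega\bigl(HB_l^2\log(|\mathcal{H}|K/\delta)\bigr)$, so your route proves the proposition only with $HB_l^2\log|\mathcal{H}|$ in place of $B_l^2\log|\mathcal{H}|$ --- weaker by a factor of $H$ and not dominated by the $HB_l^2\log(HK/\delta)$ term when $|\mathcal{H}|$ is large. Your stated crux, ``aggregating over $h$ before concentrating is precisely what makes $\log|\mathcal{H}|$ appear once,'' has the mechanism exactly backwards.

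The correct mechanism --- the one the paper uses, and the one you yourself deploy for the infimum term in your second paragraph --- is to concentrate \emph{per layer}: apply Lemma \ref{lem: freedman} to $\{X_{h,f}^s\}_s$ for each fixed $h$, union-bounding only over $(f_h,f_{h+1})\in\mathcal{H}_h\times\mathcal{H}_{h+1}$, which is legitimate because $L_h^{k-1}(f)$ and $\ell_{f^s}(f;\xi_h)$ depend on $f$ only through these two components (the paper footnotes this). Summing the per-layer bounds gives $\sum_{h=1}^H B_l^2\log\bigl(HK|\mathcal{H}_h||\mathcal{H}_{h+1}|/\delta\bigr)\le HB_l^2\log(HK/\delta)+2B_l^2\log|\mathcal{H}|$ by the product structure $\mathcal{H}=\mathcal{H}_1\times\cdots\times\mathcal{H}_H$, which is precisely the claimed allocation. (An aggregated argument \emph{can} be salvaged, but only with a finer per-step filtration of increments of range $O(B_l^2)$ plus a second concentration relating the realized values $\ell_{f^s}(f;\mathcal{D}_h^s)$ to their policy expectations; that is more machinery than the per-layer route.) With this substitution, the rest of your argument goes through verbatim and coincides with the paper's proof.
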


\begin{proof}[Proof of Proposition \ref{prop: supervised learning model free rl}]
    See Appendix \ref{subsec: proof prop supervised learning model free rl} for detailed proof.
\end{proof}

Proposition \ref{prop: supervised learning model free rl} specifies Assumption \ref{ass: supervised learning}.
For Assumption~\ref{ass: gec}, we need structural assumptions on the MDP.
Given an MDP with GEC $d_{\mathrm{GEC}}$, we have the following corollary of Theorem \ref{thm:reg}.

\begin{corollary}[Online regret of \texttt{MEX}: model-free hypothesis]\label{cor: regret model free mdp}
    Given an MDP with generalized eluder coefficient $d_{\mathrm{GEC}}(\cdot)$ and a finite model-free hypothesis class $\mathcal{H}$ with $f^{\ast}\in\mathcal{H}$, under Assumption~\ref{ass: l function}, setting 
    \begin{align}
        \eta = \sqrt{\frac{d_{\mathrm{GEC}}(1/\sqrt{HK})}{ (H\log(HK/\delta) + \log(|\mathcal{H}|))\cdot B_l^2\cdot K}},
    \end{align}
    then the regret of Algorithm \ref{alg: implicit optimism mdp} after $K$ episodes is upper bounded by
    \begin{align}
        \mathrm{Regret}(T)&\lesssim  B_l\cdot \sqrt{d_{\mathrm{GEC}}(1/\sqrt{HK})\cdot (H\log(HK/\delta)+\log(|\mathcal{H}|))\cdot K},\label{eq: regret model free 1}
    \end{align}
    with probability at least $1-\delta$. Here $B_l$ is specified in Assumption \ref{ass: l function}.
\end{corollary}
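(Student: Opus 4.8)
The plan is to recognize Corollary~\ref{cor: regret model free mdp} as a direct instantiation of the master regret bound in Theorem~\ref{thm:reg}, so the entire task reduces to verifying that the three hypotheses of that theorem hold for the model-free setting and then substituting the correct value of the constant $B$. First I would invoke Assumption~\ref{ass: realizability} (realizability), which is granted in the corollary's statement via $f^{\ast}\in\mathcal{H}$. Next I would note that Assumption~\ref{ass: gec} (low GEC) is exactly the hypothesis that the given MDP has generalized eluder coefficient $d_{\mathrm{GEC}}(\cdot)$, using the model-free discrepancy function $\ell$ specified in \eqref{eq: example model free ell}. The only assumption of Theorem~\ref{thm:reg} that genuinely requires work is Assumption~\ref{ass: supervised learning} (generalization), and this is supplied by Proposition~\ref{prop: supervised learning model free rl}, which establishes precisely the required concentration inequality with $B = B_l^2$ under Assumption~\ref{ass: l function}.

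Having checked all three preconditions, I would then simply apply Theorem~\ref{thm:reg} with the model-free specialization $B = B_l^2$. Substituting this into the theorem's prescribed choice of $\eta$ gives
\begin{align*}
    \eta = \sqrt{\frac{d_{\mathrm{GEC}}(1/\sqrt{HK})}{(H\log(HK/\delta) + \log(|\mathcal{H}|))\cdot B_l^2\cdot K}},
\end{align*}
which matches the corollary exactly. The regret bound from Theorem~\ref{thm:reg} then reads
\begin{align*}
    \mathrm{Regret}(K)\lesssim \sqrt{d_{\mathrm{GEC}}(1/\sqrt{HK})\cdot (H\log(HK/\delta) + \log(|\mathcal{H}|)) \cdot B_l^2\cdot K},
\end{align*}
and pulling the factor $B_l$ out of the square root recovers the stated bound \eqref{eq: regret model free 1}. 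This is purely algebraic and requires no further estimation.

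The main (and really only) obstacle here is conceptual bookkeeping rather than analytical difficulty: one must confirm that the discrepancy function $\ell$ from \eqref{eq: example model free ell} used in Proposition~\ref{prop: supervised learning model free rl} is identical to the $\ell$ entering the definition of GEC in Assumption~\ref{ass: gec}, so that the $\mathbb{E}_{\xi_h\sim\pi_{\exp}(f^s)}[\ell_{f^s}(f;\xi_h)]$ terms in the generalization bound and in the GEC definition refer to the same quantity and can be telescoped in the proof of Theorem~\ref{thm:reg}. Since both are fixed to the generalized squared Bellman error \eqref{eq: example model free ell}, this consistency holds and the corollary follows immediately. I therefore expect the proof to consist of one or two sentences verifying the assumptions and invoking Theorem~\ref{thm:reg}, with all heavy lifting already discharged in the proofs of Theorem~\ref{thm:reg} and Proposition~\ref{prop: supervised learning model free rl}.
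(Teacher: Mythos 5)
Your proposal is correct and matches the paper's (implicit) derivation exactly: the paper presents Corollary \ref{cor: regret model free mdp} as an immediate consequence of Theorem \ref{thm:reg}, with Proposition \ref{prop: supervised learning model free rl} discharging Assumption \ref{ass: supervised learning} with $B = B_l^2$, the corollary's hypotheses supplying Assumptions \ref{ass: realizability} and \ref{ass: gec}, and the factor $B_l$ pulled out of the square root. Your consistency check that the discrepancy function $\ell$ in \eqref{eq: example model free ell} is the same one appearing in both the GEC definition and the generalization bound is exactly the right bookkeeping point.
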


Corollary \ref{cor: regret model free mdp} can be directly specified to MDPs with low GEC, including MDPs with low Bellman eluder dimension \citep{jin2021bellman} and MDPs of bilinear class \citep{du2021bilinear}. 
We refer the readers to Appendix~\ref{subsec: example model free mdp} for a detailed discussion of these two examples.

\subsection{Model-based online RL in Markov Decision Processes}\label{subsec: model based MDP}

In this part, we specify Algorithm \ref{alg: implicit optimism mdp} to model-based hypothesis (Example \ref{exp: model based}).
For a model-based hypothesis class, we choose the discrepancy function $\ell$ as the \emph{Hellinger distance}.
Given $\mathcal{D}_h = (x_h,a_h,r_h,x_{h+1})$, we let
\begin{align}\label{eq: example model based ell}
    \ell_{f'}(f;\mathcal{D}_h) = D_{\mathrm{H}}(\mathbb{P}_{h,f}(\cdot|x_h,a_h)\|\mathbb{P}_{h,f^{\ast}}(\cdot|x_h,a_h)),
\end{align}
where $D_{\mathrm{H}}(\cdot\|\cdot)$ denotes the Hellinger distance.
According to \eqref{eq: example model based ell}, the discrepancy function $\ell$ does not depend on the input $f'\in\mathcal{H}$.
In the following, we check and specify Assumptions \ref{ass: gec} and \ref{ass: supervised learning}.

\begin{prop}[Generalization: model-based ]\label{prop: supervised learning model based rl}
    We assume that $\mathcal{H}$ is finite, i.e., $|\mathcal{H}|<+\infty$. Then with probability at least $1-\delta$, for any $k\in[K]$, $f\in\mathcal{H}$, it holds that 
    \begin{align}
        \sum_{h=1}^HL_h^{k-1}(f^{\ast}) - L_h^{k-1}(f) \lesssim -\sum_{h=1}^H\sum_{s=1}^{k-1}\mathbb{E}_{\xi_h\sim \pi_{\mathrm{exp}}(f^s)}[\ell_{f^s}(f;\xi_h)] + H\log(H/\delta) + \log(|\mathcal{H}|),  \notag
    \end{align}
    where $L$ and $\ell$ are defined in \eqref{eq: implicit optimism mdp model based L} and \eqref{eq: example model based ell} respectively.
\end{prop}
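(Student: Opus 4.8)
The plan is to prove this as a standard maximum-likelihood generalization bound in two stages: first pass from the log-likelihood loss to an \emph{empirical} Hellinger distance via an exponential supermartingale, and then from the empirical to the \emph{population} Hellinger distance appearing in the statement via a Freedman-type inequality. First I would rewrite the left-hand side exactly. By the definition of $L_h^{k-1}$ in \eqref{eq: implicit optimism mdp model based L},
\[
\sum_{h=1}^H L_h^{k-1}(f^{\ast}) - L_h^{k-1}(f) = \sum_{h=1}^H\sum_{s=1}^{k-1}\log\frac{\mathbb{P}_{h,f}(x_{h+1}^s\mid x_h^s,a_h^s)}{\mathbb{P}_{h,f^{\ast}}(x_{h+1}^s\mid x_h^s,a_h^s)},
\]
so the target is a uniform (over $f\in\mathcal{H}$ and $k\in[K]$) upper bound on a sum of log-likelihood ratios in terms of the negative expected error $-\sum_{h,s}\mathbb{E}_{\xi_h\sim\pi_{\exp}(f^s)}[\ell_{f^s}(f;\xi_h)]$.

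For Step 1, I would introduce the within-episode filtration $\{\mathcal{F}_{s,h}\}$, where $\mathcal{F}_{s,h}$ collects all data through $(x_h^s,a_h^s)$, so that conditionally $x_{h+1}^s\sim\mathbb{P}_{h,f^{\ast}}(\cdot\mid x_h^s,a_h^s)$. The key identity is the Hellinger affinity computation: since $D_{\mathrm{H}}(p\|q)=1-\int\sqrt{pq}$,
\[
\mathbb{E}\!\left[\exp\!\Big(\tfrac12\log\tfrac{\mathbb{P}_{h,f}(x_{h+1}^s\mid x_h^s,a_h^s)}{\mathbb{P}_{h,f^{\ast}}(x_{h+1}^s\mid x_h^s,a_h^s)}\Big)\,\Big|\,\mathcal{F}_{s,h}\right] = 1 - D_{\mathrm{H}}\big(\mathbb{P}_{h,f}(\cdot\mid x_h^s,a_h^s)\|\mathbb{P}_{h,f^{\ast}}(\cdot\mid x_h^s,a_h^s)\big).
\]
Using $1-u\le e^{-u}$, this shows that $M_k^f:=\prod_{s=1}^{k-1}\prod_{h=1}^H\exp\big(\tfrac12\log(\mathbb{P}_{h,f}/\mathbb{P}_{h,f^{\ast}}) + D_{\mathrm{H}}^{s,h}\big)$, where $D_{\mathrm{H}}^{s,h}$ is the \emph{realized} Hellinger distance at $(x_h^s,a_h^s)$, is a nonnegative supermartingale (indexed finely by $(s,h)$) with $\mathbb{E}[M_k^f]\le 1$. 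Applying Ville's maximal inequality (which handles all $k$ simultaneously) and a union bound over the finite class $\mathcal{H}$ then yields, with probability at least $1-\delta/2$, for all $k$ and $f$,
\[
\tfrac12\sum_{h=1}^H\big(L_h^{k-1}(f^{\ast})-L_h^{k-1}(f)\big) + \sum_{h=1}^H\sum_{s=1}^{k-1} D_{\mathrm{H}}^{s,h} \;\le\; \log(2|\mathcal{H}|/\delta).
\]

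For Step 2, I would convert the empirical Hellinger sum $\sum_{s}D_{\mathrm{H}}^{s,h}$ into the population quantity $\sum_s\mathbb{E}_{\xi_h\sim\pi_{\exp}(f^s)}[\ell_{f^s}(f;\xi_h)]$. Since $(x_h^s,a_h^s)$ is drawn from the occupancy of $\pi_{\exp}(f^s)$ and, by \eqref{eq: example model based ell}, $\ell_{f^s}(f;\cdot)$ depends on the data only through $(x_h,a_h)$, the conditional mean of $D_{\mathrm{H}}^{s,h}$ given the episode-level history is exactly $\mathbb{E}_{\xi_h\sim\pi_{\exp}(f^s)}[\ell_{f^s}(f;\xi_h)]$; hence $\mathbb{E}_{\xi_h\sim\pi_{\exp}(f^s)}[\ell_{f^s}(f;\xi_h)] - D_{\mathrm{H}}^{s,h}$ is a $[-1,1]$-bounded martingale-difference sequence in $s$. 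I would apply Freedman's inequality to each timestep $h$ separately, union bounding over $h\in[H]$ and over the per-step component class $\mathcal{H}_h$, exploiting that for a $[0,1]$-valued variable the conditional variance is dominated by its conditional mean. Choosing the Freedman parameter to absorb half of the population error gives, after summing over $h$, that with probability at least $1-\delta/2$, $\sum_{h,s}\mathbb{E}_{\xi_h\sim\pi_{\exp}(f^s)}[\ell_{f^s}(f;\xi_h)] \lesssim \sum_{h,s} D_{\mathrm{H}}^{s,h} + \sum_h\log|\mathcal{H}_h| + H\log(H/\delta)$. Combining with Step 1 and using $\sum_h\log|\mathcal{H}_h|=\log|\mathcal{H}|$ cancels the empirical Hellinger terms and yields the claim. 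The main obstacle is precisely this empirical-to-population step: a naive union bound over the full class $\mathcal{H}$ with the crude range bound $H$ would produce a spurious $H\log|\mathcal{H}|$ term, so the delicate point is to run the concentration per timestep against $\mathcal{H}_h$ and use the variance-versus-mean structure of the Hellinger distance, so that the cardinality cost collapses to the clean $\log|\mathcal{H}|=\sum_h\log|\mathcal{H}_h|$ while the horizon only multiplies the $\log(H/\delta)$ term.
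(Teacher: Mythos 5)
Your proposal is correct in outline, but it takes a genuinely different, two-stage route from the paper's one-shot argument, and the difference is instructive. The paper applies the martingale exponential inequality (its Lemma \ref{lem: concentration}) to $-\tfrac12 X_{h,f}^s$ with the \emph{coarse, pre-episode} filtration $\mathcal{F}_{s-1}$: the conditional expectation $\mathbb{E}[\exp(-\tfrac12 X_{h,f}^s)\mid\mathcal{F}_{s-1}]$ then integrates over \emph{both} the occupancy $(x_h^s,a_h^s)\sim\pi_{\exp}(f^s)$ and the transition $x_{h+1}^s$, so the Hellinger-affinity identity directly evaluates it to $1-\mathbb{E}_{(x_h,a_h)\sim\pi_{\exp}(f^s)}[D_{\mathrm{H}}(\mathbb{P}_{h,f^{\ast}}\|\mathbb{P}_{h,f})]$, i.e.\ the \emph{population} quantity; a single application of $\log x\le x-1$ per $h$, with a union bound over $h$ and $\mathcal{H}_h$, finishes the proof. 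In other words, the tower property performs your empirical-to-population conversion for free inside the exponential moment, so the ``main obstacle'' you identify never arises, and there is no Freedman step at all. Your version conditions \emph{finely} on $(x_h^s,a_h^s)$, which correctly yields a Ville supermartingale bound against the realized Hellinger distances $D_{\mathrm{H}}^{s,h}$ (your factor computation $e^{D_{\mathrm{H}}^{s,h}}(1-D_{\mathrm{H}}^{s,h})\le 1$ is fine, as is the time-ordered product over $(s,h)$), and your Step 2 with the mean-dominates-variance bound $\mathbb{V}\le\mathbb{E}[(D_{\mathrm{H}}^{s,h})^2]\le\mathbb{E}[D_{\mathrm{H}}^{s,h}]$ and per-$h$ union over $\mathcal{H}_h$ is sound, exploiting correctly that $\ell_{f^s}(f;\xi_h)$ depends on $f$ only through $f_h$ so that $\sum_h\log|\mathcal{H}_h|=\log|\mathcal{H}|$. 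What your detour buys is an intermediate high-probability control of the empirical Hellinger sum, which can be useful elsewhere but is strictly unnecessary here; what it costs is one extra concentration step and one loose end you should tighten: the proposition requires uniformity over all $k\in[K]$, and Freedman's inequality as stated (the paper's Lemma \ref{lem: freedman}) is for a fixed horizon. You must either invoke the anytime form (apply Ville to the exponential supermartingale underlying Freedman with a fixed $\eta$, which is valid and preserves the claimed $H\log(H/\delta)$) or union over $k$, which degrades your bound to $H\log(HK/\delta)$ --- still sufficient for Assumption \ref{ass: supervised learning} and Theorem \ref{thm:reg}, but weaker than the proposition as stated, whereas the paper's coarse-filtration argument is uniform in $k$ automatically.
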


\begin{proof}[Proof of Proposition \ref{prop: supervised learning model based rl}]
    See Appendix \ref{subsec: proof prop supervised learning model based rl} for detailed proof.
\end{proof}

Proposition \ref{prop: supervised learning model based rl} specifies Assumption \ref{ass: supervised learning}.
For Assumption \ref{ass: gec}, we also need structural assumptions on the MDP.
Given an MDP with GEC $d_{\mathrm{GEC}}$, we have the following corollary of Theorem~\ref{thm:reg}.

\begin{corollary}[Online regret of \texttt{MEX}: model-based hypothesis]\label{cor: regret model based mdp}
    Given an MDP with generalized eluder coefficient $d_{\mathrm{GEC}}(\cdot)$ and a finite model-based hypothesis class $\mathcal{H}$ with $f^{\ast}\in\mathcal{H}$, by setting 
    \begin{align*}
        \eta = \sqrt{\frac{d_{\mathrm{GEC}}(1/\sqrt{HK})}{(H\log(H/\delta)+\log(|\mathcal{H}|))\cdot K}},
    \end{align*}
    then the regret of Algorithm \ref{alg: implicit optimism mdp} after $K$ episodes is upper bounded by, with probability at least $1-\delta$,
    \begin{align}
        \mathrm{Regret}(K)\lesssim \sqrt{d_{\mathrm{GEC}}(1/\sqrt{HK})\cdot (H\log(H/\delta)+\log(|\mathcal{H}|))\cdot K},\label{eq: regret model based 1}
    \end{align}
\end{corollary}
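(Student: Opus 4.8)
The plan is to obtain Corollary \ref{cor: regret model based mdp} as a specialization of the generic bound in Theorem \ref{thm:reg} to the model-based hypothesis class of Example \ref{exp: model based}. First I would check that all three structural hypotheses of Theorem \ref{thm:reg} are in force. Realizability (Assumption \ref{ass: realizability}) is assumed in the corollary via $f^{\ast}\in\mathcal{H}$; the low-GEC condition (Assumption \ref{ass: gec}) is granted by the hypothesis that the MDP has generalized eluder coefficient $d_{\mathrm{GEC}}(\cdot)$, with the discrepancy $\ell$ taken to be the Hellinger distance in \eqref{eq: example model based ell}; and the generalization condition (Assumption \ref{ass: supervised learning}) is supplied by Proposition \ref{prop: supervised learning model based rl}. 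The one point demanding care is that Proposition \ref{prop: supervised learning model based rl} furnishes a strictly sharper constant than the generic Assumption \ref{ass: supervised learning}: it holds with $B=1$ and with additive term $H\log(H/\delta)+\log(|\mathcal{H}|)$ in place of $H\log(HK/\delta)+\log(|\mathcal{H}|)$. Hence the model-based bound should be read off not by black-box substitution into the statement of Theorem \ref{thm:reg}, but by re-running its proof with this tighter generalization estimate.

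I then recall the skeleton of that proof. Writing $\pi^k=\pi_{f^k}$, I would split
$$\mathrm{Regret}(K)=\sum_{k=1}^K\big(V_1^{\ast}(x_1)-V_{1,f^k}(x_1)\big)+\sum_{k=1}^K\big(V_{1,f^k}(x_1)-V_1^{\pi_{f^k}}(x_1)\big),$$
where the second sum is exactly the quantity controlled by the GEC in Assumption \ref{ass: gec}. For the first (estimation/optimism) sum, the defining optimality of $f^k$ in the \texttt{MEX} objective \eqref{eq: implicit optimism mdp model based}, together with $V_{1,f^{\ast}}=V_1^{\ast}$ from realizability, yields $V_1^{\ast}(x_1)-V_{1,f^k}(x_1)\le \eta\sum_{h=1}^H\big(L_h^{k-1}(f^{\ast})-L_h^{k-1}(f^k)\big)$. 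Applying Proposition \ref{prop: supervised learning model based rl} to the right-hand side converts the loss gap into $-\sum_{h,s<k}\mathbb{E}_{\xi_h\sim\pi_{\exp}(f^s)}[\ell_{f^s}(f^k;\xi_h)]$ plus the additive term $H\log(H/\delta)+\log(|\mathcal{H}|)$, uniformly over $k$ on the $1-\delta$ event.

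Finally I would sum over $k$ and combine the two pieces. The negative in-sample error accumulated from the first sum, scaled by $\eta$, is designed to absorb the positive $\tfrac{\mu}{2}$-weighted in-sample error appearing in the GEC bound; choosing $\mu$ proportional to $\eta$ neutralizes these matched terms, leaving $\eta K\big(H\log(H/\delta)+\log(|\mathcal{H}|)\big)+d_{\mathrm{GEC}}(\epsilon)/(2\mu)+\sqrt{d_{\mathrm{GEC}}(\epsilon)HK}+\epsilon HK$. Setting $\epsilon=1/\sqrt{HK}$ makes the last two terms of order $\sqrt{d_{\mathrm{GEC}}(1/\sqrt{HK})HK}$, and optimizing the first two terms in $\eta$ is an elementary $a\eta+b/\eta$ balance whose minimizer is precisely the stated $\eta=\sqrt{d_{\mathrm{GEC}}(1/\sqrt{HK})/((H\log(H/\delta)+\log(|\mathcal{H}|))K)}$, delivering the advertised $\sqrt{d_{\mathrm{GEC}}(1/\sqrt{HK})\cdot(H\log(H/\delta)+\log(|\mathcal{H}|))\cdot K}$. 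The main (and mild) obstacle is bookkeeping: ensuring the $\mu$–$\eta$ coupling exactly cancels the in-sample discrepancy sums and that the residual $\sqrt{d_{\mathrm{GEC}}HK}$ term is dominated by the leading term, so that threading the sharpened model-based constant through the argument produces the improved $H\log(H/\delta)$ dependence rather than $H\log(HK/\delta)$.
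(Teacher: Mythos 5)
Your proposal is correct and follows essentially the same route as the paper: Corollary \ref{cor: regret model based mdp} is obtained by instantiating Theorem \ref{thm:reg} with the model-based hypothesis ($B=1$), with Proposition \ref{prop: supervised learning model based rl} supplying Assumption \ref{ass: supervised learning} and the decomposition into Terms (i) and (ii), the $\mu$--$\eta$ cancellation, and the choices $\epsilon = 1/\sqrt{HK}$ and $\eta$ matching the proof in Appendix \ref{subsec: proof regret mdp} exactly. Your observation that the sharper $H\log(H/\delta)$ term (rather than $H\log(HK/\delta)$) requires re-threading the proof of Theorem \ref{thm:reg} instead of black-box substitution into its statement is a correct and careful reading of the bookkeeping the paper leaves implicit.
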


Corollary \ref{cor: regret model based mdp} can be directly specified to MDPs having low GEC, including MDPs with low witness rank \citep{sun2019model}.
We refer the readers to Appendix \ref{subsec: example model based mdp} for a detailed discussion of this example.

\section{Extensions to Two-player Zero-sum Markov Games}\label{sec: mg}
In this section, we extend the definition of GEC to the two-player zero-sum MG setting and adapt \texttt{MEX} to this setting in both model-free and model-based styles. Then we provide the theoretical guarantee for our proposed algorithms and specify the results in concrete examples such as linear two-player zero-sum MG.
\subsection{Online Reinforcement Learning in Two-player Zero-sum Markov Games}

Markov games (MGs) generalize the standard Markov decision process to the multi-agent setting. 
We consider the episodic two-player zero-sum MG, which is denoted as $(H, \mathcal{S}, \mathcal{A}, \mathcal{B}, \mathbb{P}, r)$. 
Here $\mathcal{S}$ is the state space shared by both players, $\mathcal{A}$ and $\mathcal{B}$ are the action spaces of the two players (referred to as the max-player and the min-player) respectively, $H\in\mathbb{N}_+$ denotes the length of each episode,
$\mathbb{P} = \{\mathbb{P}_h\}_{h\in[H]}$ with $\mathbb{P}_h:\cS\times\cA\times\cB\mapsto\Delta(\cS)$ the transition kernel of the next state given the current state and two actions from the two players at timestep $h$, and $r = \{r_h\}_{h\in[H]}$ with $r_h:\mathcal{S}\times\cA\times\cB\mapsto[0,1]$ the reward function at timestep $h$.

We consider \emph{online} reinforcement learning in the episodic two-player zero-sum MG, where the two players interact with the MG for $K\in\mathbb{N}_+$ episodes through the following protocal.
Each episode $k$ starts from an initial state $x_1^k$. 
At each timestep $h$, two players observe the current state $x_h^k$, take joint actions $(a_h^k, b_h^k)$ individually, and observe the next state $x_{h+1}^k \sim \mathbb{P}_h(\cdot \mid x_h^k, a_h^k, b_h^k)$. 
The $k$-th episode ends after step $H$ and then a new episode starts. 
Without loss of generality, we assume each episode has a common fixed initial state $x_1^k=\underline{x}_1$, which can be easily generalized to having $x_1$ sampled from a fixed but unknown distribution.

\paragraph{Policies and value functions.} 
We consider Markovian policies for both the max-player and the min-player.
A Markovian policy of the max-player is denoted by $\mu=\{\mu_h: \cS \mapsto \Delta(\mathcal{A})\}_{h \in[H]}$. 
Similarly, a Markovian policy of the min-player is denoted by $\nu=\{\nu_h: \mathcal{X} \mapsto \Delta(\mathcal{B})\}_{h \in[H]}$.
Given a joint policy $\boldsymbol{\pi} = (\mu, \nu)$, its state-value function $V_h^{\mu, \nu}: \mathcal{S} \mapsto\mathbb{R}_+$ and state-action value function $Q_h^{\mu,\nu}:\cS\times\cA\times\cB\mapsto\mathbb{R}_+$ at timestep $h$ are defined as
\begin{align}
    V_h^{\mu, \nu}(x)&:=\mathbb{E}_{\mathbb{P},(\mu, \nu)}\left[\sum_{h^{\prime}=h}^H r_{h^{\prime}}(x_{h^{\prime}}, a_{h^{\prime}}, b_{h^{\prime}}) \middle|\, x_h=x\right],\\
    Q_h^{\mu, \nu}(x, a, b)&:=\mathbb{E}_{\mathbb{P},(\mu, \nu)} \left[\sum_{h=h}^H r_{h^{\prime}}(x_{h^{\prime}}, a_{h^{\prime}}, b_{h^{\prime}}) \middle|\, (x_h, a_h, b_h)=(x, a, b)\right],
\end{align}
where the expectations are taken over the randomness of the transition kernel and the policies.
In the game, the max-player wants to maximize the value functions, while the min-layer aims at minimizing the value functions.

\paragraph{Best response, Nash equilibrium, and Bellman equations.} 
Given a max-player's policy $\mu$, the \emph{best response policy} of the min-player, denoted by $\nu^{\dagger}(\mu)$, is the policy that minimizes the total rewards given that the max-player uses $\mu$.
According to this definition, and for notational simplicity, we denote 
\begin{align}
    V_h^{\mu,\dagger}(x)&:=V_h^{\mu, \nu^{\dagger}(\mu)}(x)=\inf_{\nu} V_h^{\mu, \nu}(x),\notag \\
    Q_h^{\mu,\dagger}(x,a,b)&:=Q_h^{\mu, \nu^{\dagger}(\mu)}(x,a,b)=\inf_{\nu} Q_h^{\mu, \nu}(x,a,b),\label{eq: best response max value function}
\end{align}
for any $(x,a,b,h)\in\cS\times\cA\times\cB\times[H]$.
Similarly, given a min-player's policy $\nu$, there is a \emph{best response policy} $\mu^{\dagger}(\nu)$ for the max-player that maximizes the total rewards given $\nu$.
According to the definition, we denote  
\begin{align}
    V_h^{\dagger,\nu}(x)&:=V_h^{\mu^{\dagger}(\nu), \nu}(x)=\sup_{\mu} V_h^{\mu, \nu}(x),\notag \\ Q_h^{\dagger,\nu}(x,a,b)&:=Q_h^{\mu^{\dagger}(\nu),\nu}(x,a,b)=\sup_{\mu} Q_h^{\mu, \nu}(x,a,b),
\end{align}
for any $(x,a,b,h)\in\cS\times\cA\times\cB\times[H]$.
Furthermore, there exists a \emph{Nash equilibrium} (NE) joint policy $(\mu^*, \nu^*)$ \citep{filar2012competitive} such that both players are optimal against their best responses.
That is,
\begin{align}
    V_h^{\mu^*, \dagger}(x)=\sup _\mu V_h^{\mu, \dagger}(x), \quad V_h^{\dagger, \nu^*}(x)=\inf _\nu V_h^{\dagger, \nu}(x),
\end{align}
for any $(x, h) \in \mathcal{S} \times[H]$. 
For the NE joint policy, we have the following minimax equation,
\begin{align}
    \sup_\mu \inf_\nu V_h^{\mu, \nu}(x)=V_h^{\mu^*, \nu^*}(x)=\inf_\nu \sup_\mu V_h^{\mu, \nu}(x).
\end{align}\label{eq: NE value function}
for any $(x, h) \in \mathcal{S} \times[H]$.
This shows that: i) the for two-player zero-sum MG, the sup and the inf exchanges; ii) the NE policy has a unique state-value (state-action value) function, which we denote as $V^{\ast}$ and $Q^{\ast}$ respectively.
Finally, we introduce two sets of Bellman equations for best response value functions and NE value functions. 
In specific, for the min-player's best response value functions given max-player policy $\mu$, i.e., \eqref{eq: best response max value function}, we have the following Bellman equation,\footnote{For simplicity, we define $\mathbb{D}_{(\mu_h,\nu_h)}:=\mathbb{E}_{a\sim\mu_h(\cdot|x),b\sim\nu_h(\cdot|x)}[Q(x, a, b)]$ for any $\mu_h$, $\nu_h$, and function $Q$.} 
\begin{align}
    Q_h^{\mu,\dagger}(x,a,b) = (\mathcal{T}^{\mu}_hQ_{h+1}^{\mu,\dagger})(x,a,b):= r_h(x,a,b) + \mathbb{E}_{x'\sim \mathbb{P}_h(\cdot|x,a,b)}\bigg[\inf_{\nu_{h+1}}\mathbb{D}_{(\mu_{h+1},\nu_{h+1})}Q_{h+1}^{\mu,\dagger}(x')\bigg],
\end{align} 
for any $(x,a,b,h)\in\cS\times\cA\times\cB\times[H]$.
We name $\mathcal{T}^{\mu}_h$ as the \emph{min-player best response Bellman operator} given max-player policy $\mu$, and we define 
\begin{align}\label{eq: be bellman}
\mathcal{E}^{\mu}_h(Q_h,Q_{h+1};x,a,b):=Q_h(x,a,b) - \mathcal{T}^{\mu}_hQ_{h+1}(x,a,b),
\end{align}
as the \emph{min-player best response Bellman residual} given max-player policy $\mu$ at timestep $h$ of any functions $(Q_h, Q_{h+1})$.
Also, for the NE value functions, i.e., \eqref{eq: NE value function}, we also have the following NE Bellman equation,
\begin{align}
    Q_h^{\ast}(x,a,b) = (\mathcal{T}^{\mathrm{NE}}_hQ_{h+1}^{\ast})(x,a,b):= r_h(x,a,b) + \mathbb{E}_{x'\sim \mathbb{P}_h(\cdot|x,a,b)}\bigg[\sup_{\mu_{h+1}}\inf_{\nu_{h+1}}\mathbb{D}_{(\mu_{h+1},\nu_{h+1})}Q_{h+1}^{\ast}(x')\bigg],
\end{align} 
for any $(x,a,b,h)\in\cS\times\cA\times\cB\times[H]$.
We call $\mathcal{T}^{\mathrm{NE}}_h$ the NE Bellman operator, and we define 
\begin{align}\label{eq: ne bellman}
\mathcal{E}^{\mathrm{NE}}_h(Q_h,Q_{h+1};x,a,b):=Q_h(x,a,b) - \mathcal{T}^{\mathrm{NE}}_hQ_{h+1}(x,a,b),
\end{align}
as the \emph{NE Bellman residual} at timestep $h$ of any functions $(Q_h, Q_{h+1})$.

\paragraph{Performance metric.} 
We say a max-player's policy $\mu$ is $\epsilon$-close to Nash equilibrium if $V^*(x_1)-V^{\mu, \dagger}(x_1)<\epsilon$. 
The goal of this section is to find such a max-player policy. 
The corresponding regret after $K$ episodes is,
\begin{align}
    \mathrm{Regret}_{\mathrm{MG}}(K)=\sum_{k=1}^KV_1^*(x_1)-V_1^{\mu^k, \dagger}(x_1),
\end{align}
where $\mu^k$ is the policy used by the max-player for the $k$-th episode.
Such a problem setting is also considered by \citet{jin2022power, huang2021towards, xiong22b}.
Actually, the roles of two players can be exchanged, so that the goal turns to learning a min-player policy $\nu$ which is $\epsilon$-close to the Nash equilibrium.

\subsection{Function Approximation: Model-Free and Model-Based Hypothesis}

Parallel to the MDP setting, we study two-player zero-sum MGs in the context of general function approximations.
In specific, we assume access to an abstract hypothesis class $\mathcal{H} = \mathcal{H}_1\times\cdots\times\mathcal{H}_H$, which can be specified to model-based and model-free settings, respectively.
Also, we denote $\boldsymbol{\Pi} = \mathbf{M}\times\mathbf{N}$ with $\mathbf{M} = \mathbf{M}_1\times\cdots\times\mathbf{M}_H$ and $\mathbf{N} = \mathbf{N}_1\times\cdots\times\mathbf{N}_H$ as the space of Markovian joint policies.

The following two examples show how to specify $\mathcal{H}$ for model-free and model-based settings.

\begin{example}[Model-free hypothesis class: two-player zero-sum Markov game]\label{exp: model free mg}
    For the model-free setting, $\mathcal{H}$ contains approximators of the state-action value functions of the MG, i.e., $\mathcal{H}_h\subseteq\{f_h:\mathcal{S}\times\mathcal{A}\times\mathcal{B}\mapsto\mathbb{R}\}$.
    Specifically, for any $f = (f_1,\cdots,f_H)\in\mathcal{H}$:
    \begin{enumerate}
        \item we denote the corresponding state-action value function $Q_{f} = \{Q_{h,f}\}_{h\in[H]}$ with $Q_{h,f} = f_h$;
        \item we denote the corresponding NE state-value function $V_f = \{V_{h,f}\}_{h\in[H]}$ with 
        $$V_{h,f}(\cdot) = \sup_{\mu_h\in\mathbf{M}_h}\inf_{\nu_h\in\mathbf{N}_h}\mathbb{D}_{(\mu_h,\nu_h)}Q_{h,f}(\cdot),$$ and we denote the corresponding NE max-player policy by $\mu_f = \{\mu_{h,f}\}_{h\in[H]}$ with 
        $$\mu_{h,f}(\cdot) = \argsup_{\mu_h\in\mathbf{M}_h}\inf_{\nu_h\in\mathbf{N}_h}\mathbb{D}_{(\mu_h,\nu_h)}Q_{h,f}(\cdot).
        $$
        \item given a policy of the max-player $\mu\in\mathbf{M}$, we define $V_{f}^{\mu,\dagger} = \{V_{h, f}^{\mu,\dagger}\}_{h\in[H]}$ as the state-value function induced by $Q_f$, $\mu$ and its best response, i.e., $V_{h, f}^{\mu,\dagger}(\cdot) = \inf_{\nu_h\in\mathbf{N}_h}\mathbb{D}_{(\mu_h,\nu_h)}Q_{h,f}(\cdot)$, and we denote the corresponding best response min-player policy as $\nu_{f,\mu} = \{\nu_{h,f,\mu}\}_{h\in[H]}$, i.e., $\nu_{h,f} = \arginf_{\nu_h\in\mathbf{N}_h}\mathbb{D}_{(\mu_h,\nu_h)}Q_{h,f}(\cdot)$.
        \item we denote the NE state-action value function under the true model, i.e., $Q^{\ast}$, by $f^{\ast}$.
    \end{enumerate}
  \end{example}

\begin{example}[Model-based hypothesis class: two-player zero-sum Markov game]\label{exp: model based mg}
    For the model-based setting, $\mathcal{H}$ contains approximators of the transition kernel of the MG, for which we denote $f=\mathbb{P}_f=(\mathbb{P}_{1,f},\cdots,\mathbb{P}_{H,f}) \in \mathcal{H}$. 
    For any $(f,\boldsymbol{\pi})\in\cH\times\boldsymbol{\Pi}$ with $\boldsymbol{\pi} = (\mu,\nu)$:
    \begin{enumerate}
        \item we denote $V_{f}^{\mu,\nu} = \{V_{h,f}^{\mu,\nu}\}_{h\in[H]}$ as the state-value function induced by model $\mathbb{P}_f$ and joint policy $(\mu,\nu)$.
        \item we denote $V_{f} = \{V_{h,f}\}_{h=\in[H]}$ as the NE state-value function induced by model $\mathbb{P}_f$, and we denote the corresponding NE max-player policy as $\mu_f = \{\mu_{h,f}\}_{h\in[H]}$.
        \item given a policy of the max-player $\mu\in\mathbf{M}$, we define $V_{f}^{\mu,\dagger} = \{V_{h, f}^{\mu,\dagger}\}_{h\in[H]}$ as the state-value function induced by model $\mathbb{P}_f$, $\mu$ and its best response, i.e., $V_{h,f}^{\mu,\dagger}(\cdot) = \inf_{\nu\in\mathbf{N}}V_{h,f}^{\mu,\nu}(\cdot)$, and we denote the corresponding best response min-player policy as $\nu_{f,\mu} = \{\nu_{h,f,\mu}\}_{h\in[H]}$, i.e., $\nu_{f,\mu} = \arginf_{\nu\in\mathbf{N}}V_{h,f}^{\mu,\nu}(\cdot)$.
        \item we denote the true model $\mathbb{P}$ of the two-player zero-sum MG as $f^{\ast}$.
    \end{enumerate}
\end{example}

\subsection{Algorithm Framework: Maximize to Explore (MEX-MG)}\label{subsec: algorithm mg}

In this section, we extend the \emph{Maximize to Explore} framework (\texttt{MEX}, Algorithm \ref{alg: implicit optimism mdp}) proposed in Section \ref{sec: alg} to the two-player zero-sum MG setting, resulting in \texttt{MEX-MG} (Algorithm \ref{alg: implicit optimism mg}).
\texttt{MEX-MG} controls the max-player and the min-player in a centralized manner.
The min-player is aimed at assisting the max-player to achieve low regret.
This kind of \emph{self-play} algorithm framework has received considerable attention recently in theoretical study of two-player zero-sum MGs \citep{jin2022power, huang2021towards, xiong22b}.

We first give a generic algorithm framework and then instantiate it to model-free (Example \ref{exp: model free mg}) and model- based (Example \ref{exp: model based mg}) hypotheses respectively.

\begin{algorithm}[t]
	\caption{Maximize to Explore for two-player zero-sum Markov Game (\texttt{MEX-MG})}
	\label{alg: implicit optimism mg}
	\begin{algorithmic}[1]
	\STATE \textbf{Input}: Hypothesis class $\mathcal{H}$, parameter $\eta>0$.
    \FOR{$k=1,\cdots,K$}
        \STATE Solve $f^k\in\mathcal{H}$ via 
        \begin{align}
            f^k = \operatornamewithlimits{argsup}_{f\in\mathcal{H}}\left\{V_{1, f}(x_1) - \eta \cdot\sum_{h=1}^HL_h^{k-1}(f)\right\}.\label{eq: implicit optimism mg max}
      \end{align}
        \STATE Set the max-player policy as $\mu^k = \mu_{f^k}$.
        \STATE Solve $g^k\in\mathcal{H}$ via 
        \begin{align}
            g^k = \operatornamewithlimits{argsup}_{g\in\mathcal{H}}\left\{-V_{1, g}^{\mu^k,\dagger}(x_1) - \eta \cdot\sum_{h=1}^HL_{h,\mu^k}^{k-1}(g)\right\}.\label{eq: implicit optimism mg min}
        \end{align}
        \STATE Set the min-player policy as $\nu^k = \nu_{g^k,\mu^k}$.
        \STATE Execute $\boldsymbol{\pi}^k = (\mu^k, \nu^k)$ to collect data $\mathcal{D}^k = \{\mathcal{D}_h^k\}_{h\in[H]}$ with $\mathcal{D}_h^k = (x_h^k,a_h^k,b_h^k,r_h^k,x_{h+1}^k)$.
    \ENDFOR
	\end{algorithmic}
\end{algorithm}

\subsubsection{Generic algorithm}

\texttt{MEX-MG} leverages the asymmetric structure between the max-player and min-player to achieve sample-efficient learning. 
In specific, it picks two different hypotheses for the two players respectively, so that the max-player is aimed at approximating the NE max-player policy and the min-player is aimed at approximating the best response of the max-player, assisting its regret minimization.

\paragraph{Max-player.} At each episode $k\in[K]$, \texttt{MEX-MG} first estimates a hypothesis $f^k\in\mathcal{H}$ for the max-player using historical data $\{\cD^s\}_{s=1}^{k-1}$ by maximizing objective \eqref{eq: implicit optimism mg max}.
Parallel to \texttt{MEX}, to achieve the goal of exploiting history knowledge
while encouraging exploration, the composite objective \eqref{eq: implicit optimism mg max} sums: \textbf{(a)} the negative loss $-L_h^{k-1}(f)$ induced by
the hypothesis $f$; \textbf{(b)}  the Nash equilibrium value associated with the current hypothesis, i.e., $V_{1,f}$.
\texttt{MEX-MG} balances exploration and exploitation via a tuning parameter $\eta>0$. 
With the hypothesis $f^k$, \texttt{MEX-MG} sets the max-player's policy $\mu^k$ as the NE max-player policy with respect to $f^k$, i.e., $\mu_{f^k}$.

\paragraph{Min-player.}
After obtaining the max-player policy $\mu^k$, \texttt{MEX-MG} goes to estimate another hypothesis for the min-player in order to approximate the best response of the max-player.
In specific, \texttt{MEX-MG} estimates $g^k\in\cH$ using historical data $\{\cD^s\}_{s=1}^{k-1}$ by maximizing objective \eqref{eq: implicit optimism mg min}, which also sums two objectives: \textbf{(a)} the negative loss $-L_{h,\mu^k}^{k-1}(g)$ induced by
the hypothesis $g$. 
Here the loss function depends on $\mu^k$ since we aim to approximate the best response of $\mu^k$;
\textbf{(b)}  the negative best response min-player value associated with the current hypothesis $g$ and $\mu^k$, i.e., $-V_{1,g}^{\mu^k,\dagger}$.
The negative sign is due to the goal of min-player, i.e., minimization of the total rewards.
With $g^k$, \texttt{MEX-MG} sets the min-player's policy $\nu^k$ as the best response policy of $\mu^k$ under $g^k$, i.e., $\nu_{g^k,\mu^k}$.

\paragraph{Data collection.}
Finally, the two agents execute the joint policy $\boldsymbol{\pi}^k = (\mu^k,\nu^k)$ to collect new data $\mathcal{D}^k = \{(x_h^k,a_h^k,b_h^k,r_h^k,x_{h+1}^k)\}_{h=1}^H$ and update their loss functions $L(\cdot)$.
The choice of the loss functions varies between model-free and model-based hypotheses, which we specify in the following.

\subsubsection{Model-free algorithm}
For model-free hypothesis (Example \ref{exp: model free mg}), the composite objectives \eqref{eq: implicit optimism mg max} and \eqref{eq: implicit optimism mg min} becomes
\begin{align}
    f^k &= \operatornamewithlimits{argsup}_{f\in\mathcal{H}}\left\{\sup_{\mu_1\in\mathbf{M}_1}\inf_{\nu_1\in\mathbf{N}_1}\mathbb{D}_{(\mu_1,\nu_1)}Q_{1, f}(x_1) - \eta \cdot\sum_{h=1}^HL_h^{k-1}(f)\right\},\\
    g^k &= \operatornamewithlimits{argsup}_{g\in\mathcal{H}}\left\{-\inf_{\nu_1\in\mathbf{N}_1}\mathbb{D}_{(\mu_1^k,\nu_1)}Q_{1, g}(x_1) - \eta\cdot\sum_{h=1}^HL_{h,\mu^k}^{k-1}(g)\right\}.\label{eq: implicit optimism mg model free L 1}
\end{align}
In the model-free algorithm, we choose the loss functions as empirical estimates of squared Bellman residuals.
For the max-player who wants to approximate the NE max-player policy, 
we choose the loss function $L_h^k(f)$ as an estimation of the squared NE Bellman residual, given by
\begin{align}
    L_h^k(f) &= \sum_{s=1}^k \Big(Q_{h,f}(x_h^s,a_h^s,b_h^s) - r_h^s - V_{h+1, f}(x_{h+1}^s)\Big)^2 \notag\\
    &\qquad - \inf_{f_h'\in\mathcal{H}_h}\sum_{s=1}^k\Big(Q_{h,f'}(x_h^s,a_h^s,b_h^s) - r_h^s - V_{h+1, f}(x_{h+1}^s)\Big)^2.\label{eq: implicit optimism mg model free L 2}
\end{align}
For the min-player who aims at approximating the best response policy of $\mu^k$, we set the loss function $L_{h,\mu}^k(g)$ as an estimation of the squared best-response Bellman residual given max-player policy $\mu$,
\begin{align}
    L_{h,\mu}^k(g) &= \sum_{s=1}^k \left(Q_{h,g}(x_h^s,a_h^s,b_h^s) - r_h^s - V_{h+1, g}^{\mu,\dagger}(x_{h+1}^s)\right)^2 \notag \\
    &\qquad - \inf_{g_h'\in\mathcal{H}_h}\sum_{s=1}^k\left(Q_{h,g'}(x_h^s,a_h^s,b_h^s) - r_h^s - V_{h+1, g}^{\mu,\dagger}(x_{h+1}^s)\right)^2.\label{eq: implicit optimism mg model free L 3}
\end{align}
We remark that the subtracted infimum term in both \eqref{eq: implicit optimism mg model free L 2} and \eqref{eq: implicit optimism mg model free L 3} is for handling the variance terms in the estimation to achieve a fast theoretical rate, as we do for \texttt{MEX} with model-free hypothesis in Section~\ref{sec: alg}.

\subsubsection{Model-based algorithm.} 
For model-based hypothesis (Example \ref{exp: model based mg}),  the composite objectives \eqref{eq: implicit optimism mg max} and \eqref{eq: implicit optimism mg min} becomes
\begin{align}
    f^k &= \operatornamewithlimits{argsup}_{f\in\mathcal{H}}\left\{\sup_{\mu\in\mathbf{M}}\inf_{\nu\in\mathbf{N}}V_{1, \mathbb{P}_f}^{\mu,\nu}(x_1) - \eta\cdot\sum_{h=1}^HL_h^{k-1}(f)\right\},\label{eq: implicit optimism mg model based 1} \\
    g^k &= \operatornamewithlimits{argsup}_{g\in\mathcal{H}}\left\{-\inf_{\nu\in\mathbf{N}}V_{1, \mathbb{P}_g}^{\mu^k,\nu}(x_1) - \eta\cdot\sum_{h=1}^HL_{h,\mu^k}^{k-1}(g)\right\},\label{eq: implicit optimism mg model based 2}
\end{align}
which can be understood as a joint optimization over model $\mathbb{P}_f$ and the joint policy policy $\boldsymbol{\pi} = (\mu,\nu)$.
In the model-based algorithm, we choose the loss function $L_h^k(f)$ as the negative log-likelihood loss, 
\begin{align}\label{eq: implicit optimism mg model based L}
    L_h^{k}(f) = - \sum_{s=1}^{k}\log\mathbb{P}_{h,f}(x_{h+1}^s|x_h^s,a_h^s, b_h^s).
\end{align}
Meanwhile, we choose the loss function $L_{h, \mu}^k(g) = L^k_h(g)$, i.e., \eqref{eq: implicit optimism mg model based L}, regardless of the max-player policy $\mu$. 
But we remark that despite $L_h^k = L_{h,\mu}^k$, $f^k$ and $g^k$ are still different since the exploitation component in \eqref{eq: implicit optimism mg model based 1} and \eqref{eq: implicit optimism mg model based 2} are not the same due to the different targets of the max-player and the min-player.

\subsection{Regret Analysis for MEX-MG Framework}\label{subsec: theory mg}

In this section, we establish the regret of the \texttt{MEX-MG} framework (Algorithm \ref{alg: implicit optimism mg}). 
Specifically, we give an upper bound of its regret which holds for both model-free (Example \ref{exp: model free mg}) and model-based (Example \ref{exp: model based mg}) settings. 
We first present several key assumptions needed for the main result.

We first assume that the hypothesis class $\cH$ is well-specified, containing certain true hypotheses. 

\begin{assumption}[Realizablity]\label{ass: realizability mg}
    We make the following realizability assumptions for the model-free and model-based hypotheses respectively:
    \begin{itemize}
        \item For model-free hypothesis (Example \ref{exp: model free mg}), we assume that the true Nash equilibrium value $f^{\ast}\in\mathcal{H}$. Moreover, for any $f\in\mathcal{F}$, it holds that $Q^{\mu_f,\dagger}\in\mathcal{H}$.
        \item For model-based hypothesis (Example \ref{exp: model based mg}), we assume that the true transition $f^{\ast}\in\mathcal{H}$.
    \end{itemize}
\end{assumption}

Also, we make the following completeness and boundedness assumption on $\cH$.

\begin{assumption}[Completeness and Boundedness]\label{ass: completeness mg}
    For model-free hypothesis (Example \ref{exp: model free mg}), we assume that for any $f,g\in\mathcal{H}$, it holds that $\mathcal{T}_h^{\mu_f}g_h\in\mathcal{H}_h$, for any timestep $h\in[H]$. 
    Also, we assume that there exists $B_f\geq 1$ such that for any $f_h\in\mathcal{H}_h$, it holds that $f_h(x,a,b)\in[0, B_f]$ for any $(x,a,b,h)\in\cS\times\cA\times\cB\times[H]$.
\end{assumption}

Assumptions \ref{ass: realizability mg} and \ref{ass: completeness mg} are standard assumptions in studying two-player zero-sum MGs \citep{jin2022power, huang2021towards, xiong22b}.
Moreover, we make a structural assumption on the underlying MG to ensure sample-efficient online RL.
Inspired by the single-agent analysis, we require that the MG has a low \textbf{\underline{T}wo-player \underline{G}eneralized \underline{E}luder \underline{C}oefficient} (TGEC), which generalizes the GEC defined in Section \ref{sec: theory}.
We provide specific examples of MGs with low TGEC, both model-free and model-based, in Section \ref{subsec: example mg}.

To define TGEC, we introduce two discrepancy functions $\ell$ and $\ell_\mu$, 
\begin{align*}
    \ell_{f'}(f;\xi_h)&:\mathcal{H}\times\mathcal{H}\times(\mathcal{S}\times\mathcal{A}\times\mathbb{R}\times\mathcal{S})\mapsto\mathbb{R},\\
    \ell_{f',\mu}(f;\xi_h)&:\mathcal{H}\times\mathbf{N}\times\mathcal{H}\times(\mathcal{S}\times\mathcal{A}\times\mathbb{R}\times\mathcal{S})\mapsto\mathbb{R},
\end{align*}
which characterizes the error incurred by a hypothesis $f\in\mathcal{H}$ on data $\xi_h = (x_h,a_h,b_h,r_h,x_{h+1})$.
Intuitively, $\ell$ aims at characterizing the NE Bellman residual \eqref{eq: ne bellman}, while $\ell_{\mu}$ aims at characterizing the min-player best response Bellman residual given max-player policy $\mu$ \eqref{eq: be bellman}.
Specific choices of $\ell$ are given in Section \ref{subsec: example mg} for concrete  model-free and model-based examples.

\begin{assumption}[Low Two-Player Generalized Eluder Coefficient (TGEC)]\label{ass: gec mg}
 We assume that given an $\epsilon>0$, there exists a finite $d(\epsilon)\in\mathbb{R}_+$, such that for any sequence of hypotheses $\{(f^k,g^k)\}_{k\in[K]}\subset\mathcal{H}$ and policies $\{\boldsymbol{\pi}^k = (\mu_{f^k}, \nu_{g^k,\mu_{f^k}})\}_{k\in[K]}\subset\boldsymbol{\Pi}$, it holds that
 \begin{align}
    \sum_{k=1}^KV_{1, f^k}(x_1) - V_{1}^{\boldsymbol{\pi}^k}(x_1) \le \inf_{\zeta > 0} \left\{\frac{\zeta}{2} \sum_{h=1}^H\sum_{k=1}^K\sum_{s = 1}^{k-1} \mathbb{E}_{\xi_h\sim \boldsymbol{\pi}^k}[
        \ell_{f^s}(f^k;\xi_h)] + \frac{d(\epsilon)}{2\zeta} + \sqrt{d(\epsilon)HK} + \epsilon HK \right\}, \notag
 \end{align}
 and it also holds that
  \begin{align}
    \sum_{k=1}^K V_{1}^{\boldsymbol{\pi}^k} (x_1)- V_{1, g^k}^{\mu^k,\dagger} (x_1) \le \inf_{\zeta > 0} \left\{\frac{\zeta}{2} \sum_{h=1}^H\sum_{k=1}^K\sum_{s = 1}^{k-1} \mathbb{E}_{\xi_h\sim \boldsymbol{\pi}^k}[
        \ell_{g^s,\mu^k}(g^k;\xi_h)] + \frac{d(\epsilon)}{2\zeta} + \sqrt{d(\epsilon)HK} + \epsilon HK \right\}, \notag
 \end{align}
 where $\mu_k = \mu_{f^k}$.
We denote the smallest $d(\epsilon)\in\mathbb{R}_+$ satisfying this condition as $d_{\mathrm{TGEC}}(\epsilon)$.
\end{assumption}

Finally, we make a concentration-style assumption on loss functions, parallel to Assumption \ref{ass: supervised learning} for MDPs.
For ease of presentation, we also assume that the hypothesis class $\cH$ is finite.

\begin{assumption}[Generalization]\label{ass: supervised learning mg}
    We assume that $\cH$ is finite, i.e., $|\cH|<+\infty$, and that with probability at least $1-\delta$, for any episode $k\in[K]$ and hypotheses $f, g\in\mathcal{H}$, it holds that
    \begin{align}
        \sum_{h=1}^HL_h^{k-1}(f^{\ast}) - L_h^{k-1}(f) \lesssim -\sum_{h=1}^H\sum_{s=1}^{k-1}\mathbb{E}_{\xi_h\sim \boldsymbol{\pi}^k}[\ell_{f^s}(f;\xi_h)] + B\cdot\big(H\log(HK/\delta) + \log(|\mathcal{H}|)\big). \notag
    \end{align}
    and it also holds that, with $\star = Q^{\mu^k,\dagger}$ for model-free hypothesis and $\star = f^{\ast}$ for model-based hypothesis,
    \begin{align}
        \sum_{h=1}^HL_{h,\mu^k}^{k-1}(\star) - L_{h,\mu^k}^{k-1}(g) \lesssim -\sum_{h=1}^H\sum_{s=1}^{k-1}\mathbb{E}_{\xi_h\sim \boldsymbol{\pi}^k}[\ell_{g^s,\mu^k}(g;\xi_h)] + B\cdot\big(H\log(HK/\delta) + \log(|\mathcal{H}|)\big), \notag
    \end{align}
    Here $B = B_f^2$ for model-free hypothesis (see Assumption \ref{ass: completeness mg}) and $B = 1$ for model-based hypothesis.
\end{assumption}

As we show in Proposition \ref{prop: supervised learning model free rl mg} and Proposition \ref{prop: supervised learning model based rl mg}, Assumption \ref{ass: supervised learning mg}  holds for both model-free and model-based settings. 
With Assumptions \ref{ass: realizability mg}, \ref{ass: completeness mg} (model-free only), \ref{ass: gec mg}, and \ref{ass: supervised learning mg}, we can present our main theoretical result.

\begin{theorem}[Online regret of \texttt{MEX-MG} (Algorithm \ref{alg: implicit optimism mg})]\label{thm: reg mg}
Under Assumptions \ref{ass: realizability mg}, \ref{ass: completeness mg} (model-free only), \ref{ass: gec mg}, and \ref{ass: supervised learning mg}, by setting 
\begin{align*}
     \eta = \sqrt{\frac{d_{\mathrm{TGEC}}(1/\sqrt{HK})}{(H\log(HK/\delta) + \log(|\mathcal{H}|))\cdot B\cdot K}},
\end{align*} 
the regret of Algorithm \ref{alg: implicit optimism mg} after $K$ episodes is upper bounded by
    \begin{align*}
        \mathrm{Regret}(K) \lesssim  \sqrt{d_{\mathrm{TGEC}}(1/\sqrt{HK})\cdot(H\log(HK/\delta) + \log(|\mathcal{H}|))\cdot  B\cdot K},
    \end{align*}
    with probability at least $1-\delta$. 
    Here $d_{\mathrm{TGEC}}(\cdot)$ is given by Assumption \ref{ass: gec mg}.
\end{theorem}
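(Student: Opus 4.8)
The plan is to mirror the single-agent argument behind Theorem~\ref{thm:reg}, but to run the ``optimism $+$ generalization'' step \emph{twice} --- once for the max-player hypothesis $f^k$ and once for the min-player hypothesis $g^k$ --- and then to stitch the two halves together through the two inequalities defining the TGEC (Assumption~\ref{ass: gec mg}). First I would decompose the per-episode regret by inserting the four natural anchor values. Writing $\boldsymbol{\pi}^k = (\mu^k,\nu^k)$ with $\mu^k = \mu_{f^k}$ and $\nu^k = \nu_{g^k,\mu^k}$, I split
\begin{align*}
    V_1^{\ast}(x_1) - V_1^{\mu^k,\dagger}(x_1)
    &= \underbrace{\big(V_1^{\ast} - V_{1,f^k}\big)(x_1)}_{\mathrm{(I)}}
    + \underbrace{\big(V_{1,f^k} - V_1^{\boldsymbol{\pi}^k}\big)(x_1)}_{\mathrm{(II)}} \\
    &\quad + \underbrace{\big(V_1^{\boldsymbol{\pi}^k} - V_{1,g^k}^{\mu^k,\dagger}\big)(x_1)}_{\mathrm{(III)}}
    + \underbrace{\big(V_{1,g^k}^{\mu^k,\dagger} - V_1^{\mu^k,\dagger}\big)(x_1)}_{\mathrm{(IV)}}.
\end{align*}
Terms $\mathrm{(I)}$ and $\mathrm{(IV)}$ are optimism gaps that I will control using the exact maximality of $f^k$ and $g^k$ in \eqref{eq: implicit optimism mg max} and \eqref{eq: implicit optimism mg min}, whereas $\mathrm{(II)}$ and $\mathrm{(III)}$ are precisely the two cumulative quantities that the TGEC is built to bound.

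Next I would handle the optimism terms. For $\mathrm{(I)}$, since $f^k$ maximizes \eqref{eq: implicit optimism mg max} and $f^{\ast}\in\mathcal{H}$ is realizable with $V_{1,f^{\ast}} = V_1^{\ast}$ (Assumption~\ref{ass: realizability mg}), comparing the objective value at $f^k$ with that at $f^{\ast}$ gives $\sum_{k}\mathrm{(I)} \le \eta\sum_{k,h}\big(L_h^{k-1}(f^{\ast}) - L_h^{k-1}(f^k)\big)$, and the first inequality of Assumption~\ref{ass: supervised learning mg} turns the right-hand side into $-\eta\sum_{k,h,s}\mathbb{E}_{\xi_h\sim\boldsymbol{\pi}^k}[\ell_{f^s}(f^k;\xi_h)] + \eta K B\,(H\log(HK/\delta)+\log|\mathcal{H}|)$. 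For $\mathrm{(IV)}$ I would repeat this with the min-player objective \eqref{eq: implicit optimism mg min}, comparing $g^k$ against the anchor $\star = Q^{\mu^k,\dagger}$ in the model-free case (realizable by the second part of Assumption~\ref{ass: realizability mg}) or $\star = f^{\ast}$ in the model-based case, for which $V_{1,\star}^{\mu^k,\dagger} = V_1^{\mu^k,\dagger}$; the second inequality of Assumption~\ref{ass: supervised learning mg} then produces $-\eta\sum_{k,h,s}\mathbb{E}[\ell_{g^s,\mu^k}(g^k)] + \eta K B\,(H\log(HK/\delta)+\log|\mathcal{H}|)$. The point of both steps is to extract \emph{negative} multiples of exactly the expected-discrepancy sums appearing in Assumption~\ref{ass: gec mg}.

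Finally I would apply the two TGEC inequalities to $\sum_k\mathrm{(II)}$ and $\sum_k\mathrm{(III)}$, each of which contributes $+\frac{\zeta}{2}$ times the same expected-discrepancy sum together with $\frac{d(\epsilon)}{2\zeta} + \sqrt{d(\epsilon)HK} + \epsilon HK$. Choosing the free parameter $\zeta = 2\eta$ makes the coefficients of the two discrepancy sums vanish, leaving
\begin{align*}
    \mathrm{Regret}(K) \lesssim \eta K B\,(H\log(HK/\delta)+\log|\mathcal{H}|) + \frac{d(\epsilon)}{\eta} + \sqrt{d(\epsilon)HK} + \epsilon HK.
\end{align*}
Setting $\epsilon = 1/\sqrt{HK}$ (so that $\sqrt{d(\epsilon)HK}$ and $\epsilon HK$ are lower order) and balancing the first two terms with the stated choice of $\eta$ yields the claimed rate $\sqrt{d_{\mathrm{TGEC}}(1/\sqrt{HK})\cdot(H\log(HK/\delta)+\log|\mathcal{H}|)\cdot B\cdot K}$. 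The hard part will be the min-player half: unlike the single-agent setting, both the loss $L_{h,\mu^k}$ and the anchor hypothesis $Q^{\mu^k,\dagger}$ depend on the \emph{data-dependent} policy $\mu^k = \mu_{f^k}$, so the generalization inequality must hold uniformly over $g\in\mathcal{H}$ and over all induced $\mu^k$, and one must verify that the anchor $Q^{\mu_f,\dagger}$ is both realizable and reproduces the true best-response value $V_1^{\mu^k,\dagger}$ --- this is exactly where Assumptions~\ref{ass: realizability mg} and~\ref{ass: completeness mg} enter.
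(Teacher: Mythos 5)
Your proposal matches the paper's proof essentially step for step: your terms (I)--(IV) are exactly the paper's Terms (Max.i), (Max.ii), (Min.ii), and (Min.i), handled in the same way --- optimality of $f^k$ and $g^k$ against the anchors $f^{\ast}$ and $\star\in\{Q^{\mu^k,\dagger},f^{\ast}\}$ plus Assumption~\ref{ass: supervised learning mg} for the optimism gaps, the two TGEC inequalities with the free parameter tied to $\eta$ (the paper takes $\zeta/2 = c\cdot\eta$ to absorb the implicit constant in $\lesssim$, which is the rigorous version of your $\zeta=2\eta$) for the on-policy gaps, and the same choices $\epsilon = 1/\sqrt{HK}$ and $\eta$ to balance. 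Your closing remark about the data-dependent anchor $Q^{\mu^k,\dagger}$ is exactly where the paper invokes the uniformity of Assumption~\ref{ass: supervised learning mg} and the realizability condition $Q^{\mu_f,\dagger}\in\mathcal{H}$, so the plan is correct as stated.
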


\begin{proof}[Proof of Theorem \ref{thm: reg mg}]
    See Appendix \ref{subsec: proof regret mg} for detailed proof.
\end{proof}

\subsection{Examples of MEX-MG Framework}\label{subsec: example mg}

\subsubsection{Model-free online RL in Two-player Zero-sum Markov Games}

In this subsection, we specify \texttt{MEX-MG} (Algorithm \ref{alg: implicit optimism mg}) for model-free hypothesis class (Example~\ref{exp: model free mg}).
In specific, we choose the discrepancy functions $\ell$ and $\ell_\mu$ as, given $\xi_h = (x_h,a_h,b_h,r_h,x_{h+1})$, 
\begin{align}
    \ell_{f'}(f;\xi_h) &= \Big(Q_{h,f}(x_h,a_h,b_h) - r_h - \mathbb{E}_{x_{h+1}\sim\mathbb{P}_h(\cdot|x_h,a_h,b_h)}[V_{h+1,f}(x_{h+1})]\Big)^2, \label{eq: example model free ell mg 1} \\
    \ell_{f',\mu}(g;\xi_h) &= \Big(Q_{h,g}(x_h,a_h,b_h) - r_h - \mathbb{E}_{x_{h+1}\sim\mathbb{P}_h(\cdot|x_h,a_h,b_h)}[V_{h+1,g}^{\mu,\dagger}(x_{h+1})]\Big)^2.\label{eq: example model free ell mg 2} 
\end{align}
By \eqref{eq: example model free ell mg 1} and \eqref{eq: example model free ell mg 2}, both $\ell_{f'}$ and $\ell_{f',\mu}$ do not depend on the input $f'$.
In the following, we check and specify Assumptions \ref{ass: gec mg} and \ref{ass: supervised learning mg} in Section \ref{subsec: theory mg} for model-free hypothesis class.

\begin{prop}[Generalization: model-free RL]\label{prop: supervised learning model free rl mg}
    We assume that $\mathcal{H}$ is finite, i.e., $|\mathcal{H}|<+\infty$. Then with probability at least $1-\delta$, for any $k\in[K]$ and $f, g\in\mathcal{H}$, it holds simultaneously that 
    \begin{align}
        \sum_{h=1}^HL_h^{k-1}(f^{\ast}) - L_h^{k-1}(f) &\lesssim -\sum_{h=1}^H\sum_{s=1}^{k-1}\mathbb{E}_{\xi_h\sim \boldsymbol{\pi}^k}[\ell_{f^s}(f;\xi_h)] + HB_f^2\log(HK/\delta) + B_f^2\log(|\mathcal{H}|), \notag\\
        \sum_{h=1}^HL_{h,\mu^k}^{k-1}(Q^{\mu^k,\dagger}) - L_{h,\mu^k}^{k-1}(g) &\lesssim -\sum_{h=1}^H\sum_{s=1}^{k-1}\mathbb{E}_{\xi_h\sim \boldsymbol{\pi}^k}[\ell_{g^s,\mu^k}(g;\xi_h)] + HB_f^2\log(HK/\delta) + B_f^2\log(|\mathcal{H}|), \notag
    \end{align}
    where $L$, $L_{\mu}$, $\ell$, and $\ell_{\mu}$ are defined in \eqref{eq: implicit optimism mg model free L 1}, \eqref{eq: implicit optimism mg model free L 2}, \eqref{eq: example model free ell mg 1}, and \eqref{eq: example model free ell mg 2}, respectively.
\end{prop}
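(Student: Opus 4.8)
The plan is to prove the two inequalities by a single martingale-concentration template, handling the max-player bound and the min-player bound in parallel; I describe the max-player bound in detail and then indicate the symmetric changes. Throughout, write $z_h^s = (x_h^s,a_h^s,b_h^s)$ and $y_{h,f}^s = r_h^s + V_{h+1,f}(x_{h+1}^s)$, so that each squared term in \eqref{eq: implicit optimism mg model free L 2} is of the form $(Q_{h,f}(z_h^s)-y_{h,f}^s)^2$. The first step is the structural fact that the NE backup of any hypothesis lies in the class. For a fixed $f$, the NE max-player policy $\mu_f$ realizes the outer supremum defining $V_{h+1,f}$, so $V_{h+1,f}(\cdot)=\inf_{\nu}\mathbb{D}_{(\mu_f,\nu)}Q_{h+1,f}(\cdot)$ and hence $\mathcal{T}^{\mathrm{NE}}_hQ_{h+1,f}=\mathcal{T}^{\mu_f}_hQ_{h+1,f}$. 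Applying Assumption \ref{ass: completeness mg} with $g=f$ then gives $\mathcal{T}^{\mathrm{NE}}_hQ_{h+1,f}\in\mathcal{H}_h$. This identity is the conceptual point that lets the variance-reduction (infimum) term in \eqref{eq: implicit optimism mg model free L 2} do its job for the max-player, and it is the main difference from the single-agent argument of Proposition \ref{prop: supervised learning model free rl}.

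Next I would split the target difference into a lower bound on $L_h^{k-1}(f)$ and an upper bound on $L_h^{k-1}(f^{\ast})$. For the lower bound, since $\mathcal{T}^{\mathrm{NE}}_hQ_{h+1,f}\in\mathcal{H}_h$ is an admissible choice of $f_h'$, the infimum in \eqref{eq: implicit optimism mg model free L 2} is at most the value it attains there, so $L_h^{k-1}(f)\ge\sum_s X_s^h(f)$ with $X_s^h(f)=(Q_{h,f}(z_h^s)-y_{h,f}^s)^2-(\mathcal{T}^{\mathrm{NE}}_hQ_{h+1,f}(z_h^s)-y_{h,f}^s)^2$. A direct computation shows $\mathbb{E}[X_s^h(f)\mid\mathcal{F}_{s-1},z_h^s]=(Q_{h,f}(z_h^s)-\mathcal{T}^{\mathrm{NE}}_hQ_{h+1,f}(z_h^s))^2$, whose further average over $z_h^s\sim\boldsymbol{\pi}^s$ is exactly $\mathbb{E}_{\xi_h\sim\boldsymbol{\pi}^s}[\ell_{f^s}(f;\xi_h)]$ from \eqref{eq: example model free ell mg 1}; moreover, since all quantities are bounded by $O(B_f)$, the second moment obeys the \emph{self-bounding} relation $\mathbb{E}[(X_s^h)^2\mid\cdot]\lesssim B_f^2\,\mathbb{E}[X_s^h\mid\cdot]$. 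For the upper bound on $L_h^{k-1}(f^{\ast})$, realizability makes $Q_{h,f^{\ast}}=\mathcal{T}^{\mathrm{NE}}_hQ_{h+1,f^{\ast}}$ the conditional mean of $y_{h,f^{\ast}}^s$, and I would rewrite $L_h^{k-1}(f^{\ast})=\sup_{f_h'\in\mathcal{H}_h}\sum_s[(Q_{h,f^{\ast}}(z_h^s)-y_{h,f^{\ast}}^s)^2-(Q_{h,f'}(z_h^s)-y_{h,f^{\ast}}^s)^2]$; for each fixed $f_h'$ the summand has nonpositive conditional mean $-(Q_{h,f'}(z_h^s)-Q_{h,f^{\ast}}(z_h^s))^2$ and obeys the same self-bounding variance relation.

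Finally I would apply a Freedman-type (martingale Bernstein) inequality to the centered sums and union bound over $f\in\mathcal{H}$, over $f_h'\in\mathcal{H}_h$, over $h\in[H]$, and over $k\in[K]$, producing the additive cost $B_f^2\log(|\mathcal{H}|)$ together with $HB_f^2\log(HK/\delta)$. The self-bounding property converts the Bernstein $\sqrt{\mathrm{variance}}$ term, via AM-GM, into a small constant multiple of $\sum_s\mathbb{E}_{\xi_h\sim\boldsymbol{\pi}^s}[\ell_{f^s}(f;\xi_h)]$ plus those logarithmic terms; this yields $\sum_h L_h^{k-1}(f)\gtrsim\sum_h\sum_s\mathbb{E}[\ell_{f^s}(f;\xi_h)]-HB_f^2\log(HK/\delta)-B_f^2\log|\mathcal{H}|$ and $\sum_h L_h^{k-1}(f^{\ast})\lesssim HB_f^2\log(HK/\delta)+B_f^2\log|\mathcal{H}|$, and subtracting gives the first claim. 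The min-player bound is identical after replacing $\mathcal{T}^{\mathrm{NE}}_h$ by the best-response operator $\mathcal{T}^{\mu^k}_h$, the reference $f^{\ast}$ by $Q^{\mu^k,\dagger}$ (realizable by Assumption \ref{ass: realizability mg}, with completeness $\mathcal{T}^{\mu^k}_hg_h\in\mathcal{H}_h$ supplied directly by Assumption \ref{ass: completeness mg}), and $V_{h+1,f}$ by $V_{h+1,g}^{\mu^k,\dagger}$; both inequalities then hold simultaneously under one union bound.

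The main obstacle is the upper bound on the reference loss $L_h^{k-1}(f^{\ast})$: the subtracted infimum is attained at a \emph{data-dependent} $f_h'$, so I must control the fluctuations uniformly over all of $\mathcal{H}_h$, and it is precisely the self-bounding variance property that is needed to turn this uniform control into the fast $B_f^2$-type rate rather than a slow $\sqrt{K}$ deviation. A secondary point requiring care is the max-player completeness identity $\mathcal{T}^{\mathrm{NE}}_hQ_{h+1,f}=\mathcal{T}^{\mu_f}_hQ_{h+1,f}$, without which Assumption \ref{ass: completeness mg} would not by itself certify that the NE backup lies in $\mathcal{H}_h$.
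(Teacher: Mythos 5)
Your proposal follows essentially the same route as the paper's proof: the same per-sample squared-loss differences against the Bellman backup, the same conditional-mean identification with $\ell$ and self-bounding variance bound $\mathbb{E}[X^2\mid\cdot]\lesssim B_f^2\,\mathbb{E}[X\mid\cdot]$ feeding Freedman's inequality, and the same separate uniform treatment of the reference losses $L_h^{k-1}(f^{\ast})$ and $L_{h,\mu^k}^{k-1}(Q^{\mu^k,\dagger})$ via a supremum over the class with the $-x^2+ax\le a^2/4$ trick (the paper's Lemmas \ref{lem: f star model free mg} and \ref{lem: f star model free mg min}), including the completeness identity $\mathcal{T}^{\mathrm{NE}}_hQ_{h+1,f}=\mathcal{T}^{\mu_f}_hQ_{h+1,f}$ that the paper uses only implicitly. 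One detail to make explicit when executing the min-player half: $\mu^k=\mu_{f^k}$ is not measurable with respect to the filtration at episodes $s<k$, so the martingale concentration must be established uniformly over all inducing hypotheses rather than for the fixed $\mu^k$ — the paper does this by union bounding additionally over $f_{h+1}\in\mathcal{H}_{h+1}$ (and over $f_h$ in the reference lemma), observing that $V^{\mu_f,\dagger}_{h+1,g}$ depends on $f$ only through $f_{h+1}$, and then specializing $f=f^k$.
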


\begin{proof}[Proof of Proposition \ref{prop: supervised learning model free rl mg}]
    See Appendix \ref{subsec: proof prop supervised learning model free rl mg} for a detailed proof.
\end{proof}

Proposition \ref{prop: supervised learning model free rl mg} specifies Assumption \ref{ass: supervised learning mg} for abstract model-free hypothesis.
Now given a two-player zero-sum MG with TGEC $d_{\mathrm{TGEC}}$, we have the following corollary of Theorem \ref{thm: reg mg}.

\begin{corollary}[Online regret of \texttt{MEX-MG}: model-free hypothesis]\label{cor: regret model free mg}
    Given a two-player zero-sum MG with two-player generalized eluder coefficient $d_{\mathrm{TGEC}}(\cdot)$ and a finite model-free hypothesis class $\mathcal{H}$ satisfying Assumptions \ref{ass: realizability mg} and \ref{ass: completeness mg}, by setting 
    \begin{align}
        \eta = \sqrt{\frac{d_{\mathrm{TGEC}}(1/\sqrt{HK})}{( H\log(HK/\delta)+ \log(|\mathcal{H}|))\cdot B_f^2\cdot K}},
    \end{align}
    then the regret of Algorithm \ref{alg: implicit optimism mg} after $K$ episodes is upper bounded by
    \begin{align}
        \mathrm{Regret}(T)&\lesssim  B_f\cdot \sqrt{d_{\mathrm{TGEC}}(1/\sqrt{HK})\cdot (H\log(HK/\delta)+ \log(|\mathcal{H}|))\cdot K},\label{eq: regret model free mg 1}
    \end{align}
    with probability at least $1-\delta$. Here $B$ is specified in Assumption \ref{ass: completeness mg}.
\end{corollary}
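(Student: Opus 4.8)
The plan is to obtain Corollary~\ref{cor: regret model free mg} as a direct specialization of the generic bound in Theorem~\ref{thm: reg mg} to the model-free hypothesis class (Example~\ref{exp: model free mg}). Theorem~\ref{thm: reg mg} is stated under four hypotheses---realizability (Assumption~\ref{ass: realizability mg}), completeness and boundedness (Assumption~\ref{ass: completeness mg}), low TGEC (Assumption~\ref{ass: gec mg}), and generalization (Assumption~\ref{ass: supervised learning mg})---so the task reduces to checking that each holds in the present setting and then substituting the model-free value of the constant $B$. First I would observe that realizability and completeness/boundedness are imposed verbatim in the hypothesis of the corollary, while the low-TGEC condition is exactly the premise that the game has coefficient $d_{\mathrm{TGEC}}(\cdot)$, understood with respect to the model-free discrepancy functions $\ell$ and $\ell_\mu$ of \eqref{eq: example model free ell mg 1} and \eqref{eq: example model free ell mg 2}. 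These three require no further argument.

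The one assumption that genuinely needs verification is the generalization condition (Assumption~\ref{ass: supervised learning mg}). Here I would invoke Proposition~\ref{prop: supervised learning model free rl mg}, which establishes precisely the two concentration inequalities demanded by Assumption~\ref{ass: supervised learning mg}, once $\ell$ and $\ell_\mu$ are taken to be the squared NE and best-response Bellman residuals of \eqref{eq: example model free ell mg 1}--\eqref{eq: example model free ell mg 2} and the losses $L$, $L_\mu$ are the empirical squared-residual losses of \eqref{eq: implicit optimism mg model free L 2}--\eqref{eq: implicit optimism mg model free L 3}. The proposition delivers the remainder term $HB_f^2\log(HK/\delta) + B_f^2\log(|\mathcal{H}|)$, which matches the form $B\cdot(H\log(HK/\delta)+\log(|\mathcal{H}|))$ of Assumption~\ref{ass: supervised learning mg} under the identification $B = B_f^2$.

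With all four assumptions in force and $B=B_f^2$, I would then apply Theorem~\ref{thm: reg mg} directly: its prescribed tuning $\eta = \sqrt{d_{\mathrm{TGEC}}(1/\sqrt{HK})/((H\log(HK/\delta)+\log(|\mathcal{H}|))B K)}$ becomes the stated choice after substituting $B=B_f^2$, and its regret bound becomes $\sqrt{d_{\mathrm{TGEC}}(1/\sqrt{HK})\cdot(H\log(HK/\delta)+\log(|\mathcal{H}|))\cdot B_f^2\cdot K} = B_f\cdot\sqrt{d_{\mathrm{TGEC}}(1/\sqrt{HK})\cdot(H\log(HK/\delta)+\log(|\mathcal{H}|))\cdot K}$, which is exactly \eqref{eq: regret model free mg 1}. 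The argument is thus a mechanical instantiation; the only real content is the generalization bound, which is entirely absorbed into Proposition~\ref{prop: supervised learning model free rl mg}. Consequently the main obstacle lies not in the corollary itself but upstream, in proving that proposition---which would require a uniform-concentration (Freedman/martingale-type) argument over the finite class $\mathcal{H}$ controlling the empirical squared Bellman residuals, leveraging the variance-reducing role of the subtracted infimum term in \eqref{eq: implicit optimism mg model free L 2}--\eqref{eq: implicit optimism mg model free L 3}---but given that proposition the corollary follows immediately.
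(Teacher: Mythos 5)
Your proposal is correct and matches the paper's own treatment: the corollary is stated immediately after Proposition \ref{prop: supervised learning model free rl mg} precisely as a specialization of Theorem \ref{thm: reg mg}, with that proposition verifying Assumption \ref{ass: supervised learning mg} for the discrepancies \eqref{eq: example model free ell mg 1}--\eqref{eq: example model free ell mg 2} and losses \eqref{eq: implicit optimism mg model free L 2}--\eqref{eq: implicit optimism mg model free L 3}, and the remaining assumptions holding by hypothesis or by the definition of $d_{\mathrm{TGEC}}$. Substituting $B = B_f^2$ into the theorem's choice of $\eta$ and regret bound yields \eqref{eq: regret model free mg 1} exactly as you describe, so your argument is the paper's argument.
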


\paragraph*{Linear two-player zero-sum Markov game.}
Next, we introduce the linear two-player zero-sum MG \citep{xie2020learning} as a concrete model-free example, for which we can explicitly specify its TGEC. 
Linear MG is a natural extension of linear MDPs \citep{jin2020provably} to the two-player zero-sum MG setting, whose reward and transition kernels are modeled by linear functions. 

\begin{definition}[Linear two-player zero-sum Markov game]\label{def: linear MG}
    A d-dimensional two-player zero-sum linear Markov game satisfies that $r_h(x,a,b) = \phi_h(x,a,b)^\top\alpha_h$ and $\mathbb{P}_h(x^{\prime} \mid x, a, b)=\phi_h(x, a, b)^{\top} \psi_h^\star(x^{\prime})$ for some known feature mapping $\phi_h(x, a, b) \in \mathbb{R}^d$  and some unknown vector $\alpha_h\in\mathbb{R}^d$ and some unknown function $\psi_h(x') \in\mathbb{R}^d$ satisfying $\|\phi_h(x, a, b)\|_2 \leq 1$ and $\max\{\|\alpha_h\|_2,\|\psi^{\star}_h(x')\|_2\}\leq\sqrt{d}$ for any $(x, a, b, x', h) \in \mathcal{S} \times \mathcal{A} \times \mathcal{B}\times\cS\times[H]$.  
\end{definition}

Linear two-player zero-sum MG covers the tabular two-player zero-sum MG as a special case.
For a linear two-player zero-sum MG, we choose the model-free hypothesis class as, for each $h\in[H]$, 
 \begin{align}\label{eq: hypothesis linear mg}
     \mathcal{H}_h=\Big\{\phi_h(\cdot, \cdot, \cdot)^{\top} \theta_h:\left\|\theta_h\right\|_2 \leq(H+1-h) \sqrt{d}\Big\}.
 \end{align}
The following proposition gives the TGEC of a linear two-player zero-sum MG with hypothesis class \eqref{eq: hypothesis linear mg}.

\begin{prop}[TGEC of linear two-player zero-sum MG]\label{prop: tgec linear mg}
 For a linear two-player zero-sum MG, with model-free hypothesis \eqref{eq: hypothesis linear mg}, it holds that
\begin{align} \label{prop: linear MG1}
    d_{\mathrm{TGEC}}(1/\sqrt{HK})\lesssim d\log(HK),\quad \log \big(\mathcal{N}(\mathcal{H},1/K, \|\cdot\|_{\infty})\big) \lesssim  dH\log(dK),
\end{align}
where $\mathcal{N}(\mathcal{H},1/K, \|\cdot\|_{\infty})$ denotes the $1/K$-covering number of $\cH$ under $\|\cdot\|_{\infty}$-norm.
\end{prop}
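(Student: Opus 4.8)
The proposition bundles two claims — a covering-number estimate and a two-player generalized eluder coefficient (TGEC) estimate — which I would establish separately. The covering-number bound is routine. For each $h$ the class $\mathcal{H}_h$ is parametrized by $\theta_h$ ranging over the Euclidean ball of radius $(H+1-h)\sqrt{d}\le H\sqrt{d}$, and since $\|\phi_h(x,a,b)\|_2\le 1$ we have $|\phi_h^\top\theta_h-\phi_h^\top\theta_h'|\le\|\theta_h-\theta_h'\|_2$, so any $\epsilon$-net of the parameter ball induces an $\epsilon$-net of $\mathcal{H}_h$ in $\|\cdot\|_\infty$. The standard volumetric bound gives $\log\mathcal{N}(\mathcal{H}_h,\epsilon,\|\cdot\|_\infty)\le d\log(1+2H\sqrt{d}/\epsilon)$; summing over the $H$ coordinates and setting $\epsilon=1/K$ yields $\log\mathcal{N}(\mathcal{H},1/K,\|\cdot\|_\infty)\lesssim dH\log(dK)$.

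The substance is the TGEC bound, and here I would exploit the linear structure of both discrepancy functions. Under Definition \ref{def: linear MG}, for any model-free hypothesis $f$ (so $Q_{h,f}=\phi_h^\top\theta_h$) the identity $\mathbb{E}_{x'\sim\mathbb{P}_h(\cdot|x,a,b)}[g(x')]=\phi_h(x,a,b)^\top\!\int\psi_h^\star(x')g(x')\,\mathrm{d}x'$ shows that the NE Bellman residual $(x,a,b)\mapsto\mathcal{E}^{\mathrm{NE}}_h(Q_{h,f},Q_{h+1,f};x,a,b)$ is a linear form $\phi_h(x,a,b)^\top w_h^{f}$ with $w_h^f=\theta_h-\alpha_h-\int\psi_h^\star(x')V_{h+1,f}(x')\,\mathrm{d}x'$, and likewise the min-player best-response residual $\mathcal{E}^{\mu}_h(Q_{h,g},Q_{h+1,g};\cdot)$ equals $\phi_h(\cdot)^\top w_{h,\mu}^{g}$; the parameter norms are $\lesssim H\sqrt{d}$ by the usual linear-model normalization bounds on $\alpha_h$, $\psi_h^\star$, and the value functions. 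Consequently the two discrepancy functions \eqref{eq: example model free ell mg 1} and \eqref{eq: example model free ell mg 2} are exactly squared linear forms, $\mathbb{E}_{\xi_h}[\ell_{f'}(f;\xi_h)]=\mathbb{E}[(\phi_h^\top w_h^{f})^2]$ and analogously for $\ell_{\mu}$.

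Given this, I would reduce both TGEC inequalities to a single elliptical-potential computation, exactly as in the linear-MDP instantiation of the GEC in \citet{zhong2022posterior}. First a value-difference decomposition rewrites each left-hand side as a sum over $h$ and $k$ of the expected Bellman residual of the current hypothesis along the trajectory of $\boldsymbol{\pi}^k$, i.e.\ terms $\mathbb{E}_{\boldsymbol{\pi}^k}[\phi_h^\top w_h^{f^k}]$ (respectively $\mathbb{E}_{\boldsymbol{\pi}^k}[\phi_h^\top w_{h,\mu^k}^{g^k}]$). Writing $\Sigma_h^k=\lambda I+\sum_{s<k}\mathbb{E}_{\boldsymbol{\pi}^s}[\phi_h\phi_h^\top]$ and applying Cauchy–Schwarz gives $|\mathbb{E}_{\boldsymbol{\pi}^k}[\phi_h^\top w_h^{f^k}]|\le\|w_h^{f^k}\|_{\Sigma_h^k}\,\|\mathbb{E}_{\boldsymbol{\pi}^k}[\phi_h]\|_{(\Sigma_h^k)^{-1}}$, where $\|w_h^{f^k}\|^2_{\Sigma_h^k}$ is controlled, up to the $\lambda$-regularization term $\lambda\|w_h^{f^k}\|_2^2\lesssim\lambda H^2 d$, by the in-sample training error $\sum_{s<k}\mathbb{E}_{\boldsymbol{\pi}^s}[\ell_{f^s}(f^k;\xi_h)]$ appearing on the right-hand side. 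Summing over $k$, the log-determinant (elliptical potential) lemma bounds $\sum_k\|\mathbb{E}_{\boldsymbol{\pi}^k}[\phi_h]\|^2_{(\Sigma_h^k)^{-1}}\lesssim d\log(HK)$; the $\inf_\zeta$ form of the definition absorbs the AM–GM split between the training-error term and the dimension term, while truncating the potential at scale one accounts for the $\sqrt{d(\epsilon)HK}$ burn-in and the regularization/approximation scale $\epsilon$ (taken $\asymp 1/\sqrt{HK}$) accounts for the $\epsilon HK$ slack. Since the max-player residual $\ell$ and the min-player residual $\ell_\mu$ carry identical linear structure, the same bound $d_{\mathrm{TGEC}}(1/\sqrt{HK})\lesssim d\log(HK)$ governs both inequalities.

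The main obstacle I anticipate is not the elliptical-potential step but the value-difference decomposition in the asymmetric min–max setting: one must verify that $V_{1,f^k}(x_1)-V_1^{\boldsymbol{\pi}^k}(x_1)$ telescopes cleanly into expected NE Bellman residuals under $\boldsymbol{\pi}^k=(\mu_{f^k},\nu_{g^k,\mu^k})$, while $V_1^{\boldsymbol{\pi}^k}(x_1)-V_{1,g^k}^{\mu^k,\dagger}(x_1)$ telescopes into best-response residuals, despite the sup–inf in the definition of $V_{h,f}$ and the fact that the two players commit to different hypotheses $f^k$ and $g^k$. Getting the directions right — so that the executed policy $\nu^k=\nu_{g^k,\mu^k}$ interacts correctly with $\mathcal{T}^{\mathrm{NE}}_h$ for the first inequality and with $\mathcal{T}^{\mu^k}_h$ for the second — is precisely where the zero-sum structure and the completeness in Assumption \ref{ass: completeness mg} are genuinely needed; once both decompositions are in place, the remainder is the routine linear-algebraic computation sketched above.
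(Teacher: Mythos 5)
Your proposal is correct and follows essentially the same route as the paper's proof: the identical volumetric covering bound, the same reduction of both TGEC inequalities to linear (bilinear-class) Bellman-residual decompositions via the value-difference lemmas of \citet{xiong22b} for the max- and min-players, and the same Cauchy--Schwarz-plus-elliptical-potential computation (the paper builds $\Sigma_{h,k}$ from rank-one outer products of $\mathbb{E}_{\boldsymbol{\pi}^s}[\phi_h]$ and invokes Jensen to lower-bound the discrepancy, where you use the second-moment matrix $\mathbb{E}_{\boldsymbol{\pi}^s}[\phi_h\phi_h^\top]$ --- an immaterial variant). One small correction to your closing remark: the two value decompositions follow from the definitions of $V_{1,f}$, $V_{1,g}^{\mu,\dagger}$, the Bellman operators, and the policies $\mu_{f}$, $\nu_{g,\mu}$ alone, and do not require the completeness in Assumption \ref{ass: completeness mg}, which is needed only for the generalization bound (Proposition \ref{prop: supervised learning model free rl mg}), not for the TGEC estimate.
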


\begin{proof}[Proof of Proposition \ref{prop: tgec linear mg}]
  See Appendix \ref{app:linear MG1} for a detailed proof. 
\end{proof}

As proved by \citet{huang2021towards}, a linear two-player zero-sum MG with model-free hypothesis class \eqref{eq: hypothesis linear mg} also satisfies the realizability and completeness assumptions (Assumptions \ref{ass: realizability mg} and \ref{ass: completeness mg}, with $B_f=H$). 
Thus we can specify Theorem \ref{thm: reg mg} for linear two-player zero-sum MGs as follows.

\begin{corollary}[Online regret of \texttt{MEX-MG}: linear two-player zero-sum MG] \label{cor: linear MG}
By setting $\eta = \tilde{\Theta}(\sqrt{1/H^3K})$, the regret of Algorithm \ref{alg: implicit optimism mg} for linear two-player zero-sum MG after $K$ episodes is upper bounded by
\begin{align}
     \mathrm{Regret}_{\mathrm{MG}}(K)\lesssim dH^{3/2}K^{1/2}\log(HKd/\delta),\notag
\end{align}
with probability at least $1-\delta$, where $d$ is the dimension of the  linear two-player zero-sum MG. \end{corollary}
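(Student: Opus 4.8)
The plan is to derive Corollary \ref{cor: linear MG} as a direct specialization of the generic regret bound in Theorem \ref{thm: reg mg} to the linear two-player zero-sum MG of Definition \ref{def: linear MG}, after (i) checking that all structural hypotheses of that theorem hold for the linear model-free hypothesis class \eqref{eq: hypothesis linear mg}, and (ii) replacing the cardinality term $\log(|\mathcal{H}|)$ by a covering-number term, since \eqref{eq: hypothesis linear mg} is a continuum rather than a finite class.

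First I would verify the assumptions of Theorem \ref{thm: reg mg}. Realizability and completeness (Assumptions \ref{ass: realizability mg} and \ref{ass: completeness mg}) hold for \eqref{eq: hypothesis linear mg} by the argument of \citet{huang2021towards}; in particular the boundedness constant is $B_f = H$, so $B = B_f^2 = H^2$. The generalization condition (Assumption \ref{ass: supervised learning mg}) is supplied by Proposition \ref{prop: supervised learning model free rl mg}, and the low-TGEC condition (Assumption \ref{ass: gec mg}) together with the covering number are quantified by Proposition \ref{prop: tgec linear mg}, namely $d_{\mathrm{TGEC}}(1/\sqrt{HK}) \lesssim d\log(HK)$ and $\log \mathcal{N}(\mathcal{H}, 1/K, \|\cdot\|_\infty) \lesssim dH\log(dK)$.

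Next I would handle the passage from a finite to a continuum hypothesis class and then substitute. Applying Theorem \ref{thm: reg mg} to a minimal $1/K$-net $\mathcal{H}_\epsilon$ of $\mathcal{H}$ in $\|\cdot\|_\infty$ replaces $\log(|\mathcal{H}|)$ by $\log\mathcal{N}(\mathcal{H}, 1/K, \|\cdot\|_\infty) \lesssim dH\log(dK)$; the discretization error perturbs the value functions, Bellman residuals, and loss terms by at most $\mathrm{Poly}(H)/K$ per episode, contributing only a lower-order $\mathrm{Poly}(H)$ term to the regret. With $B = H^2$, $d_{\mathrm{TGEC}} \lesssim d\log(HK)$, and $H\log(HK/\delta) + \log|\mathcal{H}_\epsilon| \lesssim dH\log(HKd/\delta)$, the bound of Theorem \ref{thm: reg mg} becomes
\[
\sqrt{d\log(HK)\cdot dH\log(HKd/\delta)\cdot H^2\cdot K} \lesssim dH^{3/2}K^{1/2}\log(HKd/\delta),
\]
as claimed; the same substitution into the prescribed $\eta$ yields $\eta = \sqrt{d_{\mathrm{TGEC}}/((H\log(HK/\delta)+\log|\mathcal{H}_\epsilon|)\cdot B\cdot K)} = \tilde\Theta(\sqrt{1/H^3K})$, matching the statement.

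The main obstacle is not the arithmetic but the discretization step. The TGEC inequality of Assumption \ref{ass: gec mg} and the generalization inequality of Assumption \ref{ass: supervised learning mg} are stated for the selected hypotheses $f^k, g^k$ and the induced policies $\mu_{f^k}$ and $\nu_{g^k,\mu^k}$, so I must check that passing to the net disturbs these inequalities by at most lower-order terms. Since the max-player objective \eqref{eq: implicit optimism mg max}, the min-player objective \eqref{eq: implicit optimism mg min}, and the induced greedy and best-response policies all depend on the chosen hypotheses, the delicate estimate is the stability of these induced policies together with their value and Bellman-residual functions under $1/K$ sup-norm perturbations, which is where I would concentrate the technical work. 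I would treat Proposition \ref{prop: tgec linear mg} — the genuine linear-algebraic content bounding the TGEC and the covering number — as given.
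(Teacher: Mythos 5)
Your proposal is correct and is exactly the paper's own route: the paper proves Corollary \ref{cor: linear MG} by combining Corollary \ref{cor: regret model free mg} (itself Theorem \ref{thm: reg mg} plus Proposition \ref{prop: supervised learning model free rl mg}) with Proposition \ref{prop: tgec linear mg} and a covering-number argument, with $B_f = H$ from \citet{huang2021towards}, precisely as you do. Your substitution arithmetic for both the regret bound and $\eta = \tilde{\Theta}(\sqrt{1/H^3K})$ checks out, and your flagging of the net-discretization stability step is a fair account of the detail the paper's one-line proof leaves implicit.
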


\begin{proof}[Proof of Corollary \ref{cor: linear MG}]
    Using Corollary \ref{cor: regret model free mg}, Proposition \ref{prop: tgec linear mg}, and a covering number argument.
\end{proof}

\subsubsection{Model-based online RL in Two-player Zero-sum Markov Games}

In this subsection, we specify Algorithm \ref{alg: implicit optimism mg} for model-based hypothesis class $\mathcal{H}$ (Example \ref{exp: model based mg}).
In specific, we choose the discrepancy function $\ell$ as the Hellinger distance. 
Given data $\xi_h = (x_h,a_h, b_h ,x_{h+1})$, we let
\begin{align}\label{eq: example model based ell mg}
    \ell_{f'}(f;\xi_h) = \ell_{f',\mu}(f;\xi_h) =  D_{\mathrm{H}}(\mathbb{P}_{h,f}(\cdot|x_h,a_h, b_h)\|\mathbb{P}_{h,f^{\ast}}(\cdot|x_h,a_h, b_h)),
\end{align}
where $D_{\mathrm{H}}(\cdot\|\cdot)$ denotes the Hellinger distance.
We note that due to \eqref{eq: example model based ell mg}, the discrepancy function $\ell$ does not depend on the input $f'\in\mathcal{H}$ and the max-player policy $\mu$.
In the following, we check and specify Assumptions \ref{ass: gec mg} and \ref{ass: supervised learning mg} in Section \ref{subsec: theory mg} for model-based hypothesis classes.

\begin{prop}[Generalization: model-based RL]\label{prop: supervised learning model based rl mg}
    We assume that $\mathcal{H}$ is finite, i.e., $|\mathcal{H}|<+\infty$. Then with probability at least $1-\delta$, for any $k\in[K]$, $f\in\mathcal{H}$, it holds that 
    \begin{align}
        \sum_{h=1}^HL_h^{k-1}(f^{\ast}) - L_h^{k-1}(f) \lesssim -\sum_{h=1}^H\sum_{s=1}^{k-1}\mathbb{E}_{\xi_h\sim \boldsymbol{\pi}^k}[\ell_{f^s}(f;\xi_h)] + H\log(H/\delta) + \log(|\mathcal{H}|),  \notag
    \end{align}
    where $L$ and $\ell$ are defined in \eqref{eq: implicit optimism mg model based L} and \eqref{eq: example model based ell mg} respectively.
\end{prop}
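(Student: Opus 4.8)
The plan is to establish the model-based generalization bound (Proposition~\ref{prop: supervised learning model based rl mg}) by showing that the negative log-likelihood loss concentrates around the population Hellinger distance, uniformly over the finite hypothesis class $\mathcal{H}$ and over all episodes $k\in[K]$. This is the exact two-player analogue of Proposition~\ref{prop: supervised learning model based rl} for MDPs, the only structural change being that transition kernels are now conditioned on the joint action $(a_h,b_h)$ rather than a single action $a_h$, and data is collected under the joint policy $\boldsymbol{\pi}^k=(\mu^k,\nu^k)$. Consequently I expect the proof to go through essentially verbatim by inspecting the MDP argument with $(x_h,a_h)$ replaced by $(x_h,a_h,b_h)$, so the bulk of the work is to verify that no step of the single-agent analysis used anything specific to a single action.

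First I would recall that $L_h^{k-1}(f) = -\sum_{s=1}^{k-1}\log\mathbb{P}_{h,f}(x_{h+1}^s\mid x_h^s,a_h^s,b_h^s)$ by \eqref{eq: implicit optimism mg model based L}, so that the left-hand quantity $\sum_{h=1}^H L_h^{k-1}(f^{\ast})-L_h^{k-1}(f)$ is a sum over timesteps of log-likelihood ratios $\sum_{s=1}^{k-1}\log\bigl(\mathbb{P}_{h,f}(x_{h+1}^s\mid x_h^s,a_h^s,b_h^s)/\mathbb{P}_{h,f^{\ast}}(x_{h+1}^s\mid x_h^s,a_h^s,b_h^s)\bigr)$. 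The standard device is to exponentiate: for each fixed $f$ and $h$, define the martingale-type quantity built from the moment generating function of the half log-likelihood ratio, and use the elementary identity that $\log\mathbb{E}[\exp(\tfrac12\log(\mathbb{P}_{h,f}/\mathbb{P}_{h,f^{\ast}}))]$ relates to the Hellinger distance. Concretely, conditioning on the filtration generated by the history up to the sampling of $x_{h+1}^s$, one has $\mathbb{E}_s\bigl[\sqrt{\mathbb{P}_{h,f}(x_{h+1}^s\mid\cdot)/\mathbb{P}_{h,f^{\ast}}(x_{h+1}^s\mid\cdot)}\bigr] = 1 - D_{\mathrm{H}}\bigl(\mathbb{P}_{h,f}(\cdot\mid x_h^s,a_h^s,b_h^s)\,\|\,\mathbb{P}_{h,f^{\ast}}(\cdot\mid x_h^s,a_h^s,b_h^s)\bigr)$, which is exactly the discrepancy $\ell_{f^s}(f;\xi_h)$ from \eqref{eq: example model based ell mg} evaluated at the sampled transition.

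Next I would apply a standard supermartingale / Freedman-type concentration argument (e.g., the Chernoff method combined with $\log(1-x)\le -x$ and a union bound over the finite class $\mathcal{H}$ and the $H$ timesteps, see \cite{agarwal2022model, zhong2022posterior}). This yields that with probability at least $1-\delta$, simultaneously for all $k\in[K]$, $h\in[H]$, and $f\in\mathcal{H}$,
\begin{align*}
    \sum_{s=1}^{k-1}\log\frac{\mathbb{P}_{h,f}(x_{h+1}^s\mid x_h^s,a_h^s,b_h^s)}{\mathbb{P}_{h,f^{\ast}}(x_{h+1}^s\mid x_h^s,a_h^s,b_h^s)}
    \lesssim -\sum_{s=1}^{k-1}\mathbb{E}_{\xi_h\sim\boldsymbol{\pi}^s}\bigl[\ell_{f^s}(f;\xi_h)\bigr] + \log(H|\mathcal{H}|/\delta).
\end{align*}
Summing over $h\in[H]$ gives the additive $H\log(H/\delta)+\log(|\mathcal{H}|)$ term in the stated bound, where the factor $H$ inside the logarithm absorbs the union bound over timesteps. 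Since the population Hellinger distance is precisely $\ell_{f^s}(f;\xi_h)$ and does not depend on $f'$ or $\mu$ by \eqref{eq: example model based ell mg}, the expectation terms match those appearing in the proposition; the in-sample sum controls the out-of-sample Hellinger error, which is the form Assumption~\ref{ass: supervised learning mg} requires.

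The main obstacle, though mild, is \emph{bookkeeping of the data-generating distribution}: the concentration must be uniform over all $k$ while the expectations are taken under $\boldsymbol{\pi}^s$ (the policy actually used at episode $s$), not under $\boldsymbol{\pi}^k$. This is handled exactly as in the single-agent proof by the adaptivity of the martingale construction — the log-likelihood ratio at step $s$ is conditioned on $(x_h^s,a_h^s,b_h^s)$ drawn from $\boldsymbol{\pi}^s$, so the telescoping in-sample error is automatically with respect to the correct per-episode distribution, and matching it to the $\boldsymbol{\pi}^k$-indexed form in the statement is a notational convention consistent with Assumption~\ref{ass: gec mg}. Because the hypothesis $f$ in the log-likelihood loss is the \emph{transition model} and does not interact with the joint policy structure, no completeness-type assumption (Assumption~\ref{ass: completeness mg}) is needed here — this is why the model-based bound is cleaner than its model-free counterpart. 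I would therefore conclude by remarking that the proof is identical to that of Proposition~\ref{prop: supervised learning model based rl} after the substitution $(x_h,a_h)\mapsto(x_h,a_h,b_h)$, and defer the full details to the appendix.
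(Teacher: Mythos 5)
Your proposal is correct and follows essentially the same route as the paper: the paper proves Proposition \ref{prop: supervised learning model based rl mg} by noting it is identical to the proof of Proposition \ref{prop: supervised learning model based rl} with $(x_h,a_h)$ replaced by $(x_h,a_h,b_h)$, and that proof is exactly your argument — apply the martingale exponential inequality (Lemma \ref{lem: concentration}) to the half log-likelihood ratio, compute the conditional moment generating function as $1-\mathbb{E}[D_{\mathrm{H}}]$, use $\log x\le x-1$, and union bound over $\mathcal{H}$ and $h\in[H]$. Your observation that the expectation is naturally indexed by $\boldsymbol{\pi}^s$ (with the statement's $\boldsymbol{\pi}^k$ being a notational artifact) and that no completeness assumption is needed is likewise consistent with the paper.
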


\begin{proof}[Proof of Proposition \ref{prop: supervised learning model based rl mg}]
    This proposition follows from the same proof of Proposition \ref{prop: supervised learning model based rl}. 
\end{proof}

Since $L_h^k = L_{h,\mu}^k$ and $\ell_f = \ell_{f,\mu}$, Proposition \ref{prop: supervised learning model based rl mg} means that Assumption \ref{ass: supervised learning mg} holds.
Now given a two-player zero-sum MG with TGEC $d_{\mathrm{TGEC}}$, we have the following corollary of Theorem~\ref{thm: reg mg}.

\begin{corollary}[Online regret of \texttt{MEX-MG}: model-based hypothesis]\label{cor: regret model based mg}
    Given a two-player zero-sum MG with two-player generalized eluder coefficient $d_{\mathrm{TGEC}}(\cdot)$ and a finite model-based hypothesis class $\mathcal{H}$ with $f^{\ast}\in\cH$, by setting 
    \begin{align}
        \eta = \sqrt{\frac{d_{\mathrm{TGEC}}(1/\sqrt{HK})}{ (H\log(HK/\delta)+\log(|\mathcal{H}|))\cdot K}},
    \end{align}
    then the regret of Algorithm \ref{alg: implicit optimism mg} after $K$ episodes is upper bounded by
    \begin{align}
        \mathrm{Regret}(T)&\lesssim  \sqrt{d_{\mathrm{TGEC}}(1/\sqrt{HK})\cdot (H\log(HK/\delta)+\log(|\mathcal{H}|))\cdot K},\label{eq: regret model based mg 1}
    \end{align}
    with probability at least $1-\delta$. 
\end{corollary}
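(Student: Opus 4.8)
The plan is to obtain this corollary as a direct specialization of the generic regret bound of Theorem \ref{thm: reg mg} to the model-based hypothesis class (Example \ref{exp: model based mg}), with the discrepancy function taken to be the Hellinger distance \eqref{eq: example model based ell mg} and the loss taken to be the negative log-likelihood \eqref{eq: implicit optimism mg model based L}. Since Theorem \ref{thm: reg mg} already yields $\mathrm{Regret}(K) \lesssim \sqrt{d_{\mathrm{TGEC}}(1/\sqrt{HK}) \cdot (H\log(HK/\delta)+\log|\mathcal{H}|) \cdot B \cdot K}$ under Assumptions \ref{ass: realizability mg}, \ref{ass: completeness mg} (model-free only), \ref{ass: gec mg}, and \ref{ass: supervised learning mg}, the entire task reduces to checking that these assumptions hold in the model-based setting and that one may take $B = 1$.

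First I would dispose of the assumptions that require no work. Realizability is exactly the model-based clause of Assumption \ref{ass: realizability mg}, supplied by the hypothesis $f^{\ast}\in\mathcal{H}$; the completeness-and-boundedness condition of Assumption \ref{ass: completeness mg} is invoked only in the model-free case and is therefore vacuous here; and the low-TGEC condition of Assumption \ref{ass: gec mg} is precisely the standing hypothesis that the MG has two-player generalized eluder coefficient $d_{\mathrm{TGEC}}(\cdot)$. Thus the only substantive assumption left to verify is the generalization condition, Assumption \ref{ass: supervised learning mg}.

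Next I would verify Assumption \ref{ass: supervised learning mg} through Proposition \ref{prop: supervised learning model based rl mg}. The key structural observation is that, in the model-based setting, both the loss and the discrepancy function are independent of the max-player policy: $L_{h,\mu}^{k}=L_h^{k}$ by the definition accompanying \eqref{eq: implicit optimism mg model based L}, and $\ell_{f',\mu}=\ell_{f'}$ by \eqref{eq: example model based ell mg}. Consequently the two inequalities of Assumption \ref{ass: supervised learning mg} collapse to a single concentration bound: applying Proposition \ref{prop: supervised learning model based rl mg} to the max-player hypothesis gives the first inequality for $f$, and applying the very same bound to the min-player hypothesis, with $\star=f^{\ast}$ as prescribed for the model-based case, gives the second inequality for $g$. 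The additive term $H\log(H/\delta)+\log|\mathcal{H}|$ produced by the proposition is dominated by $H\log(HK/\delta)+\log|\mathcal{H}|$, so Assumption \ref{ass: supervised learning mg} holds with $B=1$.

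Finally, I would substitute $B=1$ into Theorem \ref{thm: reg mg}. The prescribed choice $\eta=\sqrt{d_{\mathrm{TGEC}}(1/\sqrt{HK})/((H\log(HK/\delta)+\log|\mathcal{H}|)\cdot K)}$ is exactly the theorem's tuning at $B=1$, and the resulting bound specializes to \eqref{eq: regret model based mg 1}. I do not anticipate a genuine obstacle, since the analytic content already resides in Theorem \ref{thm: reg mg} and in the Hellinger-based maximum-likelihood estimate of Proposition \ref{prop: supervised learning model based rl mg}; the only point meriting care is the bookkeeping that reduces both generalization inequalities to one bound by exploiting the $\mu$-independence of the model-based loss, together with the routine domination of the logarithmic terms.
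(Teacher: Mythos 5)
Your proposal is correct and follows essentially the same route as the paper: Corollary \ref{cor: regret model based mg} is obtained by plugging $B=1$ into Theorem \ref{thm: reg mg}, with Assumption \ref{ass: supervised learning mg} verified via Proposition \ref{prop: supervised learning model based rl mg}, using exactly the observation that $L_{h,\mu}^k = L_h^k$ and $\ell_{f',\mu} = \ell_{f'}$ in the model-based setting so that both generalization inequalities (with $\star = f^{\ast}$) reduce to a single bound. Your handling of the remaining assumptions (realizability given, completeness vacuous for model-based, TGEC as the standing hypothesis) and the domination of $H\log(H/\delta)$ by $H\log(HK/\delta)$ matches the paper's reasoning.
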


\paragraph*{Linear mixture two-player zero-sum Markov game.}
Next, we introduce the linear mixture two-player zero-sum MG as a concrete model-based example, for which we can explicitly specify its TGEC. 
Linear mixture MG is a natural extension of linear mixture MDPs \citep{ayoub2020model,modi2020sample,cai2020provably} to the two-player zero-sum MG setting, whose transition kernels are modeled by linear kernels.
But just as the single-agent setting, the linear mixture MG and the linear MG (Definition \ref{def: linear MG}) do not cover each other as special cases \citep{cai2020provably}.

\begin{definition}[Linear mixture two-player zero-sum Markov game]\label{def: linear mixture MG}
    A d-dimensional two-player zero-sum linear mixture Markov game satisfies that $\mathbb{P}_h(x^{\prime} \mid x, a, b)=\phi_h(x, a, b, x')^{\top} \theta_h^{\star}$ for some known feature mapping $\phi_h(x, a, b,x') \in \mathbb{R}^d$  and  some unknown vector $\theta_h^{\star}\in\mathbb{R}^d$ satisfying $\|\phi_h(x, a, b,x')\|_2 \leq 1$ and $\|\theta_h\|_2\leq\sqrt{d}$ for any $(x, a, b, x', h) \in \mathcal{S} \times \mathcal{A} \times \mathcal{B}\times\cS\times[H]$.  
\end{definition}

Linear mixture two-player zero-sum MG also covers the tabular two-player zero-sum MG as a special case.
For a linear mixture two-player zero-sum MG, we choose the model-based hypothesis class as, for each $h$,
 \begin{align}\label{eq: hypothesis linear mixture mg}
     \mathcal{H}_h=\Big\{\phi_h(\cdot, \cdot, \cdot,\cdot)^{\top} \theta_h:\left\|\theta_h\right\|_2 \leq \sqrt{d}\Big\}.
 \end{align}

The following proposition gives the TGEC of a linear mixture two-player zero-sum MG.

\begin{prop}[TGEC of linear mixture two-player zero-sum MG]\label{prop: tgec linear mixture mg}
 For a linear mixture two-player zero-sum MG, with model-free hypothesis \eqref{eq: hypothesis linear mg}, it holds that
\begin{align} \label{prop: linear MG1}
    d_{\mathrm{TGEC}}(1/\sqrt{HK})\lesssim dH^2\log(HK),\quad \log \big(\mathcal{N}(\mathcal{H},1/K, \|\cdot\|_{\infty})\big) \lesssim dH\log(dK).
\end{align}
where $\mathcal{N}(\mathcal{H},1/K, \|\cdot\|_{\infty})$ denotes the $1/K$-covering number of $\cH$ under $\|\cdot\|_{\infty}$-norm.
\end{prop}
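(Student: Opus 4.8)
The plan is to prove the two claimed bounds separately. The covering-number estimate is routine given the finite-dimensional parametrization, while the $d_{\mathrm{TGEC}}$ bound follows the single-agent linear-mixture analysis of \citet{zhong2022posterior}, applied to each of the two inequalities in Assumption~\ref{ass: gec mg}. (Note that the relevant hypothesis class here is the \emph{model-based} class \eqref{eq: hypothesis linear mixture mg}.)

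\textbf{Covering number.} Each coordinate class $\mathcal{H}_h$ in \eqref{eq: hypothesis linear mixture mg} is the image under $\theta_h \mapsto \phi_h(\cdot)^\top\theta_h$ of the Euclidean ball $\{\|\theta_h\|_2 \le \sqrt d\}$. Since $\|\phi_h(x,a,b,x')\|_2 \le 1$ by Definition~\ref{def: linear mixture MG}, this map is $1$-Lipschitz from $(\mathbb{R}^d,\|\cdot\|_2)$ into $(\mathcal{H}_h,\|\cdot\|_\infty)$, because $|\phi_h^\top(\theta-\theta')| \le \|\theta-\theta'\|_2$. Hence a Euclidean $\epsilon$-net of the radius-$\sqrt d$ ball induces an $\epsilon$-net of $\mathcal{H}_h$ in $\|\cdot\|_\infty$, and the standard volumetric bound gives $\log\mathcal{N}(\mathcal{H}_h,\epsilon,\|\cdot\|_\infty) \le d\log(1+2\sqrt d/\epsilon)$. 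Taking a product over $h\in[H]$ and setting $\epsilon=1/K$ yields $\log\mathcal{N}(\mathcal{H},1/K,\|\cdot\|_\infty) \le Hd\log(1+2\sqrt d K) \lesssim dH\log(dK)$.

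\textbf{TGEC.} Because the model-based discrepancy \eqref{eq: example model based ell mg} is the Hellinger distance to the true model and does not depend on $f'$ or $\mu$, the two inequalities of Assumption~\ref{ass: gec mg} have structurally identical right-hand sides, differing only in the value function that drives the left-hand side ($V_{h+1,f^k}$ for the max-player versus $V_{h+1,g^k}^{\mu^k,\dagger}$ for the min-player, both in $[0,H]$); it therefore suffices to bound one and reuse the argument. First I would apply the model-based value-difference (simulation) lemma to rewrite $\sum_k V_{1,f^k}(x_1) - V_1^{\boldsymbol{\pi}^k}(x_1)$ as a telescoping sum of on-policy model errors $\sum_{h,k}\E_{\xi_h\sim\boldsymbol{\pi}^k}[(\mathbb{P}_{h,f^k}-\mathbb{P}_h^\star)V_{h+1,f^k}(x_h,a_h,b_h)]$. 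Using the linear-mixture form $\mathbb{P}_h(x'|x,a,b)=\phi_h(x,a,b,x')^\top\theta_h^\star$, each summand equals $\langle X_h^k(x,a,b),\theta_{h,f^k}-\theta_h^\star\rangle$ with the value-integrated feature $X_h^k(x,a,b)=\int\phi_h(x,a,b,x')V_{h+1,f^k}(x')\,dx'$, whose Euclidean norm is of order $H$ by the boundedness of $V_{h+1,f^k}$ and the normalization of $\phi_h$. The crucial link is the value-weighted total-variation bound $|\langle X_h^s,\theta_{h,f^k}-\theta_h^\star\rangle| \le H\,D_{\mathrm{TV}}(\mathbb{P}_{h,f^k}\|\mathbb{P}_h^\star)$ together with $D_{\mathrm{TV}}^2 \lesssim D_{\mathrm{H}}$, giving $\E_{\xi_h\sim\boldsymbol{\pi}^s}[\langle X_h^s,\theta_{h,f^k}-\theta_h^\star\rangle^2] \lesssim H^2\,\E_{\xi_h\sim\boldsymbol{\pi}^s}[\ell_{f^s}(f^k;\xi_h)]$; this is the origin of the $H^2$ factor in the final bound.

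\textbf{Covariance and elliptical potential.} I would then define the regularized empirical covariance $\Lambda_{h,k}=\lambda\mathbf{I}+\sum_{s<k}\E_{\xi_h\sim\boldsymbol{\pi}^s}[X_h^s(X_h^s)^\top]$ and split, via Cauchy--Schwarz, $\E_{\boldsymbol{\pi}^k}[\langle X_h^k,\Delta_h^k\rangle] \le \|\E_{\boldsymbol{\pi}^k}X_h^k\|_{\Lambda_{h,k}^{-1}}\,\|\Delta_h^k\|_{\Lambda_{h,k}}$ with $\Delta_h^k=\theta_{h,f^k}-\theta_h^\star$, where $\|\Delta_h^k\|_{\Lambda_{h,k}}^2 \le \lambda d + \sum_{s<k}\E_{\boldsymbol{\pi}^s}[\langle X_h^s,\Delta_h^k\rangle^2]$ is controlled by the in-sample Hellinger error through the previous step. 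Summing over $k$ and $h$, applying the AM--GM split with the free parameter $\zeta$ that is built into Assumption~\ref{ass: gec mg}, and invoking the elliptical-potential (log-determinant) lemma to bound $\sum_k\|\E_{\boldsymbol{\pi}^k}X_h^k\|_{\Lambda_{h,k}^{-1}}^2 \lesssim d\log(1+KH^2/\lambda)$, then optimizing $\lambda$, reproduces $d_{\mathrm{TGEC}}(1/\sqrt{HK}) \lesssim dH^2\log(HK)$. The same chain applied with $V_{h+1,g^k}^{\mu^k,\dagger}$ handles the min-player inequality.

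\textbf{Main obstacle.} The hardest part is the two-player bookkeeping that makes the decomposition fit the single-agent template: one must verify that the min-player inequality, driven by the best-response value $V_{h+1,g^k}^{\mu^k,\dagger}$, yields an integrated feature of the same form and the same order-$H$ norm bound as the max-player one, so that the identical Hellinger-to-quadratic comparison and elliptical-potential machinery apply verbatim. A secondary technical point is ensuring the value-weighted total-variation-to-Hellinger comparison holds with constants independent of $k$ and of the (changing) policies, which is exactly where the boundedness $0\le V\le H$ and $\|\phi_h\|_2\le1$ are used.
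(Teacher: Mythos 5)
Your proposal is correct and matches the paper's proof in essentially every substantive step: the paper likewise identifies the bilinear structure $W_h(g,\mu)=\theta_{h,g}-\theta_h^\star$ and $X_h(g,\boldsymbol{\pi},\mu)=\mathbb{E}_{\boldsymbol{\pi}}\bigl[\int_{\mathcal{S}}\phi_h(x,a,b,x')V_{h+1,g}^{\mu,\dagger}(x')\,\mathrm{d}x'\bigr]$ via the value-decomposition lemmas, uses the same value-weighted comparison $D_{\mathrm{TV}}^2\lesssim D_{\mathrm{H}}$ to relate the squared bilinear error to the Hellinger discrepancy (the source of the $2H^2$ inflation, absorbed by rescaling the free parameter $\zeta$), and runs the same Cauchy--Schwarz plus elliptical-potential argument --- which the paper merely packages as the abstract Proposition~\ref{prop:mg_bilinear} rather than inlining as you do --- together with the identical volumetric covering-number bound. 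The only cosmetic difference is that the paper verifies the two conditions of that abstract proposition once for general admissible $\mu$ and then specializes $g^k=f^k$, $\mu=\mu_{f^k}$ for the max-player inequality, exactly the reuse you describe.
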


\begin{proof}[Proof of Proposition \ref{prop: tgec linear mixture mg}]
  See Appendix \ref{app: linear mixture two-player zero-sum MG} for a detailed proof. 
\end{proof}

Then we can specify Theorem \ref{thm: reg mg} for linear mixture two-player zero-sum MGs as follows.

\begin{corollary}[Online regret of \texttt{MEX-MG}: linear mixture two-player zero-sum MG] \label{cor: linear mixture MG}
By setting $\eta= \tilde{\Theta}(\sqrt{H/K})$, the regret of Algorithm \ref{alg: implicit optimism mg} for linear mixture two-player zero-sum MG after $K$ episodes is upper bounded by
\begin{align}
     \mathrm{Regret}_{\mathrm{MG}}(K)\lesssim dH^{3/2}K^{1/2}\log(HKd/\delta),\notag
\end{align}
with probability at least $1-\delta$, where $d$ is the dimension of the linear mixture two-player zero-sum MG. 
\end{corollary}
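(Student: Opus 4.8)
The plan is to obtain Corollary~\ref{cor: linear mixture MG} as a direct specialization of the generic model-based guarantee in Corollary~\ref{cor: regret model based mg}, feeding in the two complexity estimates supplied by Proposition~\ref{prop: tgec linear mixture mg}. Since Corollary~\ref{cor: regret model based mg} is stated for a finite hypothesis class, whereas the linear mixture class \eqref{eq: hypothesis linear mixture mg} is a continuum parameterized by $\{\theta_h\}_{h\in[H]}$ with $\|\theta_h\|_2\le\sqrt{d}$, the first task is to bridge this gap via a covering argument; after that, the remainder is an arithmetic substitution and a consistency check on $\eta$.

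First I would replace $\mathcal{H}$ by a minimal $1/K$-cover $\mathcal{H}_0$ of $\mathcal{H}$ in the $\|\cdot\|_\infty$-norm, whose log-cardinality satisfies $\log|\mathcal{H}_0| = \log\mathcal{N}(\mathcal{H}, 1/K, \|\cdot\|_\infty) \lesssim dH\log(dK)$ by Proposition~\ref{prop: tgec linear mixture mg}. The key point is that both the exploitation term $V_{1,f}(x_1)$ in \eqref{eq: implicit optimism mg model based 1} and the negative log-likelihood loss $L_h^{k-1}(f)$ in \eqref{eq: implicit optimism mg model based L} are Lipschitz in the transition parameters, so running the \texttt{MEX-MG} analysis on $\mathcal{H}_0$ rather than $\mathcal{H}$ perturbs the objective by at most an $\tilde{O}(1/K)$ term per episode; summed over $K$ episodes this contributes only an $\tilde{O}(1)$ additive slack, absorbed into the stated bound. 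With this reduction, the generalization guarantee (Proposition~\ref{prop: supervised learning model based rl mg}, instantiating Assumption~\ref{ass: supervised learning mg} with $B=1$) and the regret bound \eqref{eq: regret model based mg 1} both apply with $\log|\mathcal{H}|$ replaced by $\log|\mathcal{H}_0|\lesssim dH\log(dK)$.

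It then remains to substitute $d_{\mathrm{TGEC}}(1/\sqrt{HK}) \lesssim dH^2\log(HK)$ and $\log|\mathcal{H}_0| \lesssim dH\log(dK)$ into \eqref{eq: regret model based mg 1}. The factor $H\log(HK/\delta) + \log|\mathcal{H}_0|$ is dominated by $dH\log(dKH/\delta)$, giving
$$\mathrm{Regret}_{\mathrm{MG}}(K) \lesssim \sqrt{dH^2\log(HK)\cdot dH\log(dKH/\delta)\cdot K} = dH^{3/2}K^{1/2}\sqrt{\log(HK)\log(dKH/\delta)},$$
and collapsing the two logarithmic factors into a single $\log(HKd/\delta)$ yields the claimed bound. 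I would also verify that the prescribed $\eta = \tilde{\Theta}(\sqrt{H/K})$ matches the choice $\eta = \sqrt{d_{\mathrm{TGEC}}/((H\log(HK/\delta)+\log|\mathcal{H}_0|)K)}$ of Corollary~\ref{cor: regret model based mg}: plugging in the two estimates gives $\eta \asymp \sqrt{dH^2\log(HK)/(dHK\log(dK))} = \tilde{\Theta}(\sqrt{H/K})$, confirming the stated tuning.

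I expect the main obstacle to be the covering step rather than the arithmetic. The negative log-likelihood loss is only well-behaved when the modelled densities stay bounded away from zero, so establishing the per-episode $\tilde{O}(1/K)$ perturbation bound requires controlling the log-likelihood ratio across the cover — most cleanly by passing through the Hellinger discrepancy $\ell$ in \eqref{eq: example model based ell mg} and using that the Hellinger distance is Lipschitz in $\theta_h$ for the linear mixture parameterization. Once this Lipschitz-in-$\theta$ property is in hand, the covering argument is standard and mirrors the one invoked for the model-free linear MG case in Corollary~\ref{cor: linear MG}.
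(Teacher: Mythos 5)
Your proposal is correct and follows exactly the paper's route: the paper proves Corollary~\ref{cor: linear mixture MG} by combining Corollary~\ref{cor: regret model based mg}, Proposition~\ref{prop: tgec linear mixture mg}, and a covering number argument, which is precisely your specialization, and your arithmetic for both the regret bound and the tuning $\eta = \tilde{\Theta}(\sqrt{H/K})$ checks out. The extra detail you supply on the covering step (Lipschitz control of the log-likelihood via the Hellinger discrepancy) simply fills in what the paper leaves implicit rather than constituting a different argument.
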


\begin{proof}[Proof of Corollary \ref{cor: linear mixture MG}]
    Using Corollary \ref{cor: regret model based mg}, Proposition \ref{prop: tgec linear mixture mg}, and a covering number argument.
\end{proof}
\section{Experiments}\label{sec: experiments}
In this section, we propose practical versions of \texttt{MEX} in both model-free and model-based fashion. 

We aim to answer the following two questions: 
\begin{enumerate}
    \item What are the practical approaches to implementing \texttt{MEX} in both model-based (\texttt{MEX-MB}) and model-free (\texttt{MEX-MF}) settings via deep RL methods?
    \item Can \texttt{MEX} handle  challenging exploration tasks, especially those that involve sparse reward scenarios?
\end{enumerate}

\subsection{Experiment Setups}
We evaluate the effectiveness of \texttt{MEX} by assessing its performance in both standard gym locomotion tasks and sparse reward locomotion and navigation tasks within the MuJoCo \citep{todorov2012mujoco} environment. 
For sparse reward tasks, we select \texttt{cheetah-vel}, \texttt{walker-vel}, \texttt{hopper-vel},  \texttt{ant-vel}, and \texttt{ant-goal} adapted from \citet{yu2020meta}, where the agent receives a reward \emph{only} when it successfully attains the desired velocity or goal.
To adapt to deep RL settings, we consider infinite-horizon $\gamma$-discounted MDPs and corresponding \texttt{MEX} variants. 
We report the results averaged over five random seeds. 
In the sparse-reward tasks, the agent only receives a reward when it achieves the desired velocity or position. Regarding the model-based sparse-reward experiments, we assign a target value of $1$ to the \texttt{vel} parameter for the \texttt{walker-vel} task and $1.5$ for the \texttt{hopper-vel}, \texttt{cheetah-vel}, \texttt{ant-vel} tasks. For the model-free sparse-reward experiments, we set the target \texttt{vel} to $3$ for the  \texttt{hopper-vel}, \texttt{walker-vel}, \texttt{cheetah-vel} tasks, and the target \texttt{goal} to $(2,0)$ for  \texttt{ant-goal} task. 
\subsection{Implementation Details}

\paragraph*{Model-free algorithm.}
For the model-free variant \texttt{MEX-MF}, we observe from \eqref{model_free} that adding a maximization bias term 
to the standard TD error is sufficient for provably efficient exploration. 
However, this may lead to instabilities as the bias term only involves the state-action value function of the current policy, and thus the policy may be ever-changing. 
To address this issue, we adopt a similar treatment as in \texttt{CQL} \citep{kumar2020conservative} by subtracting a baseline state-action value from random policy $\mu=\textrm{Unif}(\cA)$ and obtain the following objective,
\begin{equation}\label{eq_mf_org_obj}
\min_\theta\max_\pi\,\EE_{\mathcal{D}}\left[\big(r+\gamma Q_\theta(x',a') - Q_\theta(x,a)\big)^2\right] - \eta' \cdot\EE_{\mathcal{D}} \big[\EE_{a\sim \pi} Q_\theta(x,a) - \EE_{a\sim \mu} Q_\theta(x,a) \big].
\end{equation}
We update $\theta$ and $\pi$ in objective \eqref{eq_mf_org_obj} iteratively in an actor-critic fashion. To stabilize training, 
 we adopt a similar entropy regularization $\cH(\mu)$ over $\mu$ as in CQL~\cite{kumar2020conservative}. By incorporating such a regularization, we obtain the following soft constrained variant of \texttt{MEX-MF}, i.e.
\begin{equation*}
\min_\theta\max_\pi\EE_{\beta}\left[\big(r+\gamma Q_\theta(x',a') - Q_\theta(x,a)\big)^2\right] -\eta' \cdot\EE_{\beta} \bigg[\EE_{a\sim \pi} Q_\theta(x,a) - \log {\sum_{a\in\cA} \exp\left(Q_\theta(x,a)\right)} \bigg].
\end{equation*}

\paragraph*{Model-based algorithm.}
For the model-based variant \texttt{MEX-MB}, we use the following objective:
\begin{align}\label{eq_mb_obj}
\max_\phi\max_\pi \,\EE_{(x, a, r, x')\sim\mathcal{D}}\left[\log\mathbb{P}_{\phi}(x', r\mid x, a)\right] + \eta'\cdot \EE_{x\sim \sigma}\big[V^\pi_{\mathbb{P}_\phi}(x)\big],
\end{align}
where we denote by $\sigma(\cdot)$ the initial state distribution, $\mathcal{D}$ the replay buffer, and $\eta'$ corresponds to $1/\eta$ in the previous theory sections. We leverage the \emph{score function} to obtain the model value gradient $\nabla_\phi V^\pi_{\mathbb{P}_\phi}$ in a similar way to likelihood ratio policy gradient \citep{sutton1999policy}, with the gradient of action log-likelihood replaced by the gradient of state and reward log-likelihood in the model. Specifically,
\begin{equation}
\label{eq_model_grad}
    \nabla_\phi\,\EE_{x\sim \sigma}\big[V^\pi_{\mathbb{P}_\phi}(x)\big] = \EE_{\tau^\pi_\phi}\Big[\big(r + \gamma V^\pi_{\mathbb{P}_\phi}(x') - Q^\pi_{\mathbb{P}_\phi}(x, a)\big)\cdot\nabla_\phi\log\mathbb{P}_\phi(x', r \mid x, a)\Big],
\end{equation}
where $\tau^\pi_\phi$ is the trajectory under policy $\pi$ and transition $\mathbb{P}_\phi$, starting from $\sigma$. 
We refer the readers to previous works \citep{rigter2022rambo,wu2022bayesian} for a derivation of \eqref{eq_model_grad}. The model $\phi$ and policy $\pi$ in \eqref{eq_mb_obj} are updated iteratively in a \texttt{Dyna} \citep{sutton1990integrated} style, where model-free policy updates are performed on model-generated data. 
Particularly, we adopt \texttt{SAC} \citep{haarnoja2018soft} to update the policy $\pi$ and estimate the value $Q^\pi_{\mathbb{P}_\phi}$ using the model data generated by the model $\mathbb{P}_\phi$. 
We also follow \cite{rigter2022rambo} to update the model using mini-batches from $\mathcal{D}$ and normalize the advantage $r + \gamma V^\pi_{\mathbb{P}_\phi} - Q^\pi_{\mathbb{P}_\phi}$ within each mini-batch.
We refer the readers to Appendix~\ref{subsec: details mex mb} for more implementation details of \texttt{MEX-MB}.

\subsection{Experimental Results}
We report the performance of \texttt{MEX-MF} and \texttt{MEX-MB} in Figures \ref{fig_overall_mf} and \ref{fig_overall_mb}, respectively. 

\paragraph*{Results for MEX-MF.} We compare \texttt{MEX-MF} with the model-free baseline \texttt{TD3} \citep{fujimoto2018addressing}. 
We observe that \texttt{TD3} fails in many sparse reward tasks, while \texttt{MEX-MF} can significantly boost the performance. 
In standard MuJoCo gym tasks, \texttt{MEX-MF} also steadily outperforms \texttt{TD3} with faster convergence and higher returns.

\begin{figure}[H]
\centering
\subfigure{
    \begin{minipage}[t]{0.3\linewidth}
        \centering
        \includegraphics[width=1\textwidth]{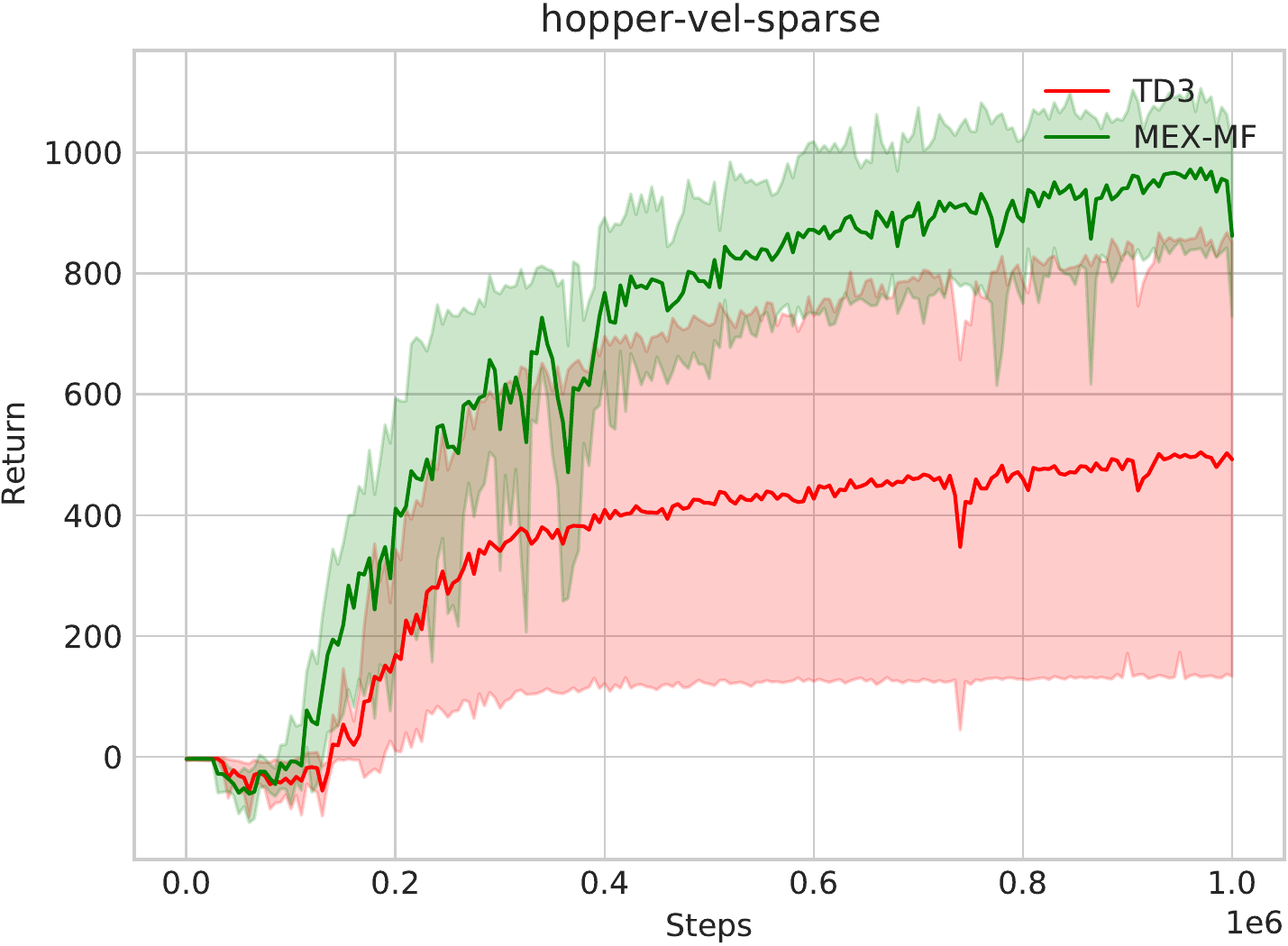}\\
    \end{minipage}
}
\subfigure{
    \begin{minipage}[t]{0.3\linewidth}
        \centering
        \includegraphics[width=1\textwidth]{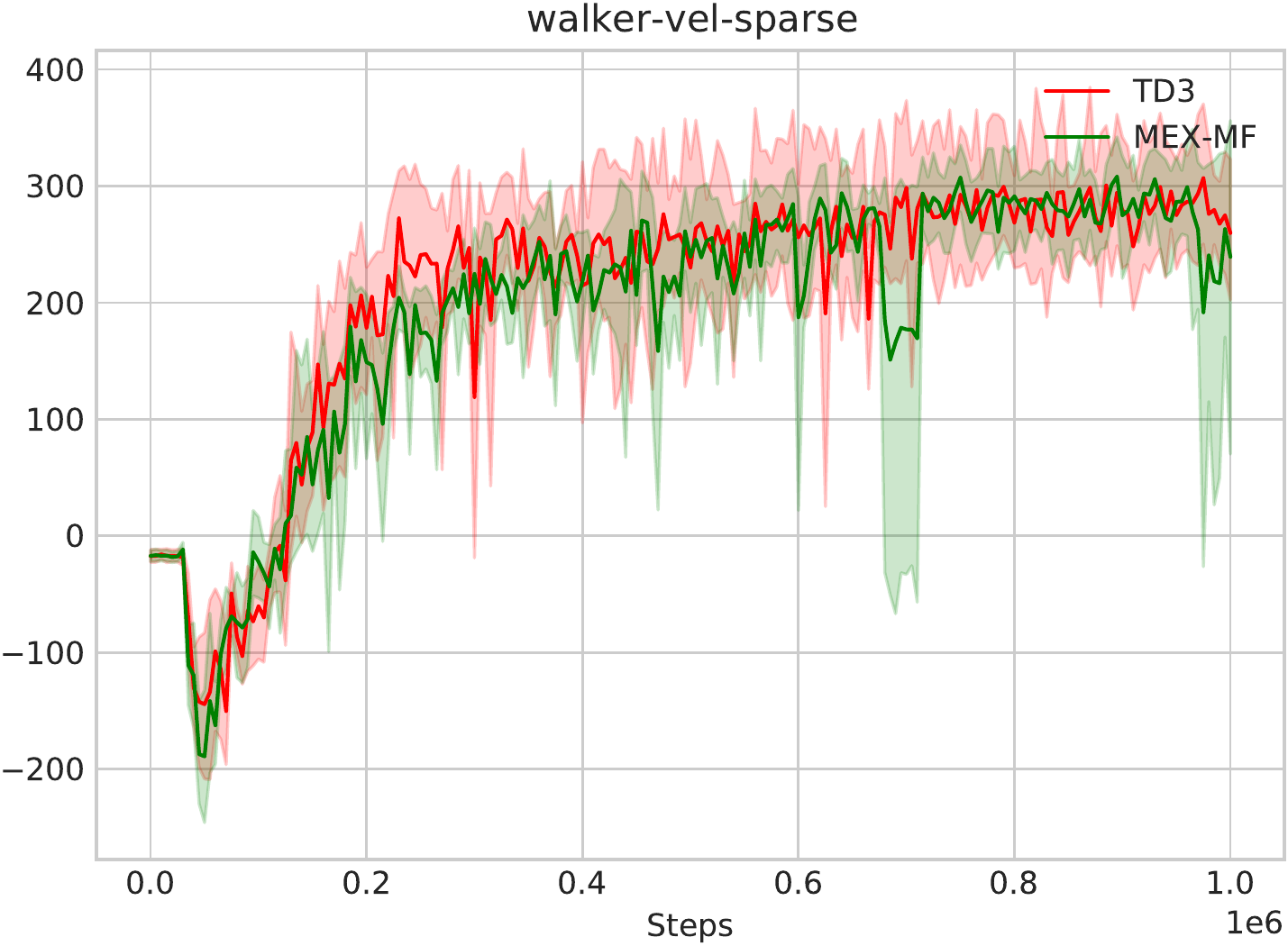}\\
    \end{minipage}
}
\subfigure{
    \begin{minipage}[t]{0.3\linewidth}
        \centering
        \includegraphics[width=1\textwidth]{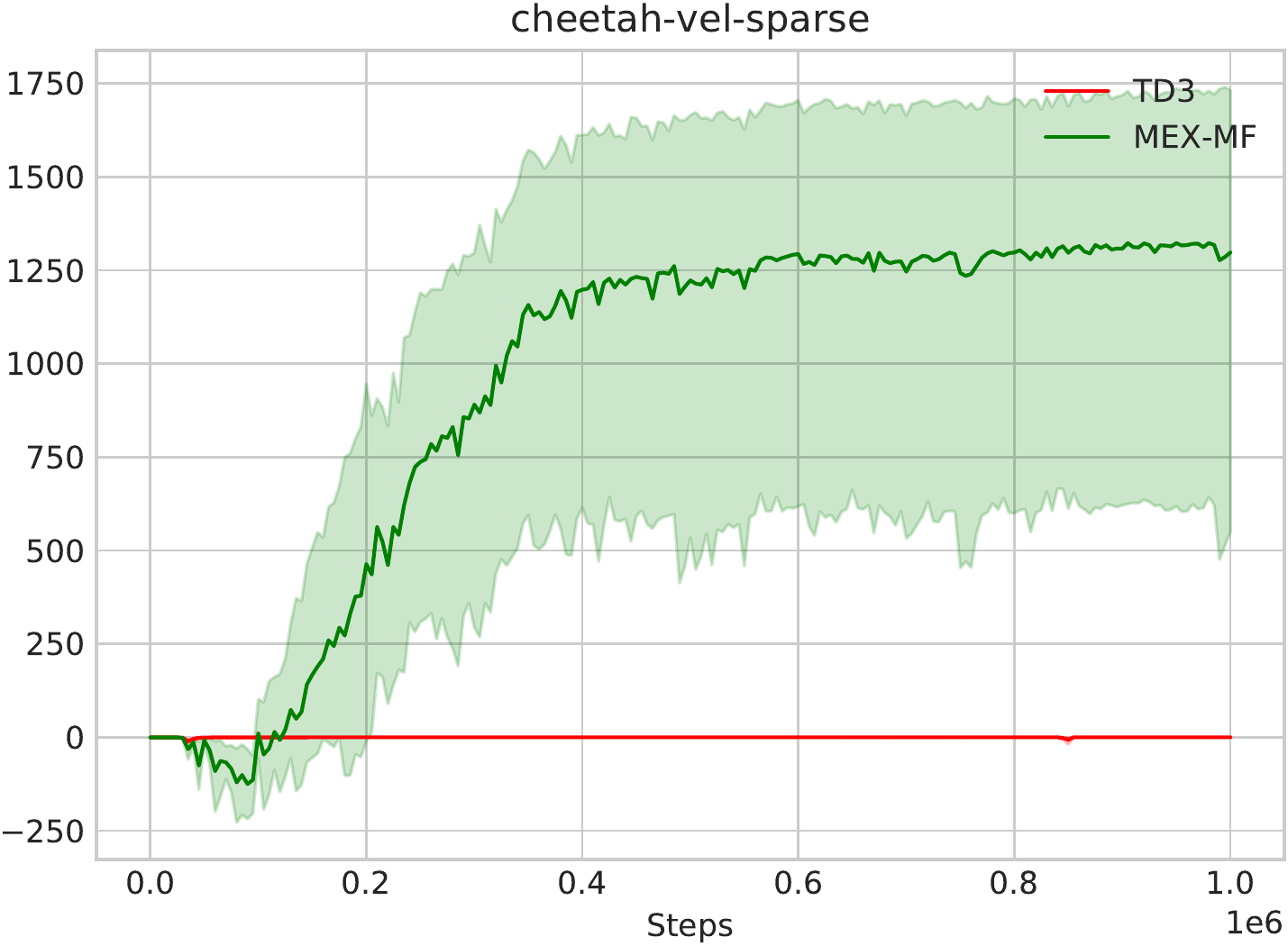}\\
    \end{minipage}
}
\subfigure{
    \begin{minipage}[t]{0.3\linewidth}
        \centering
        \includegraphics[width=1\textwidth]{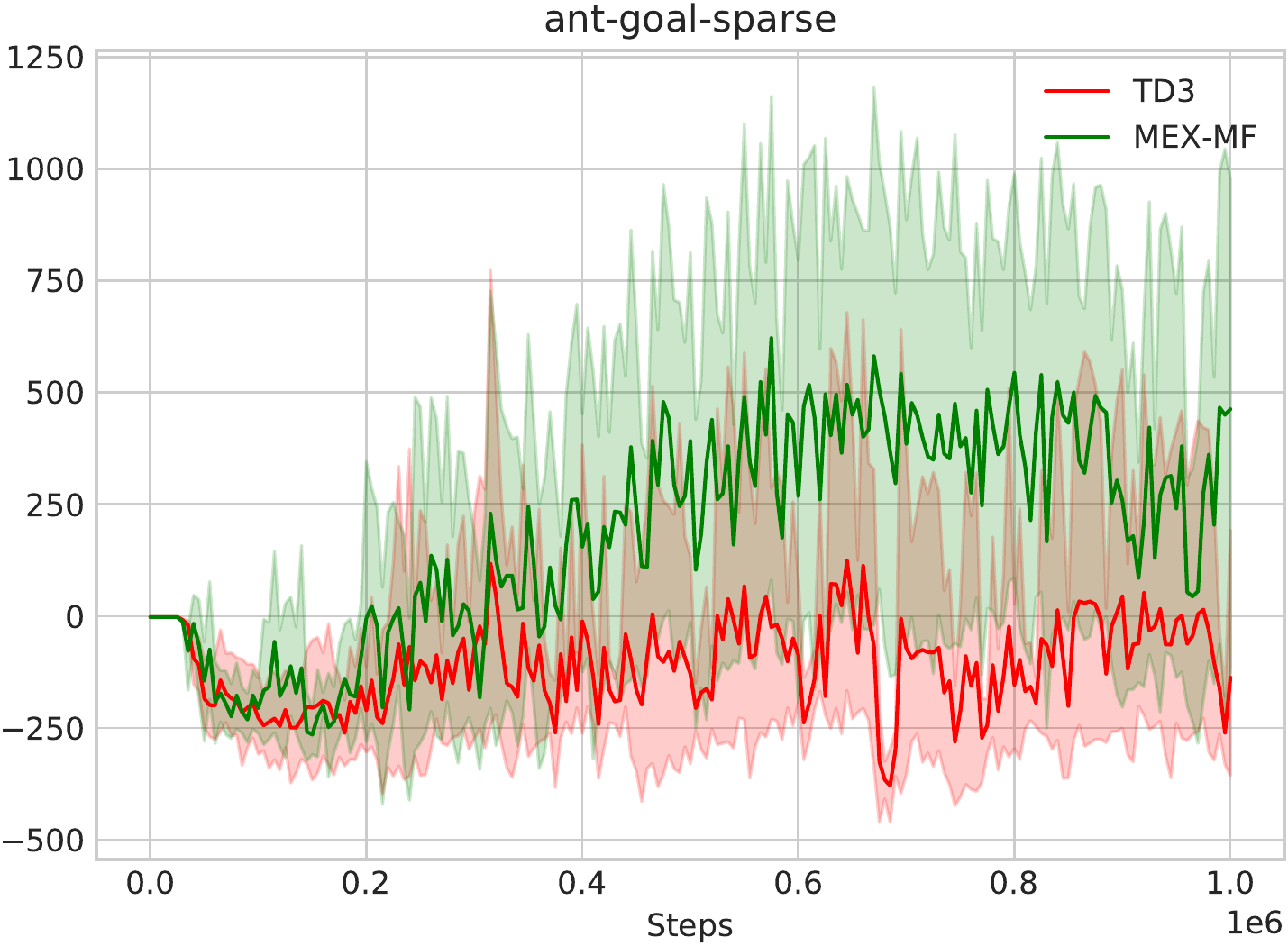}\\
    \end{minipage}%
}
\subfigure{
    \begin{minipage}[t]{0.3\linewidth}
        \centering
        \includegraphics[width=1\textwidth]{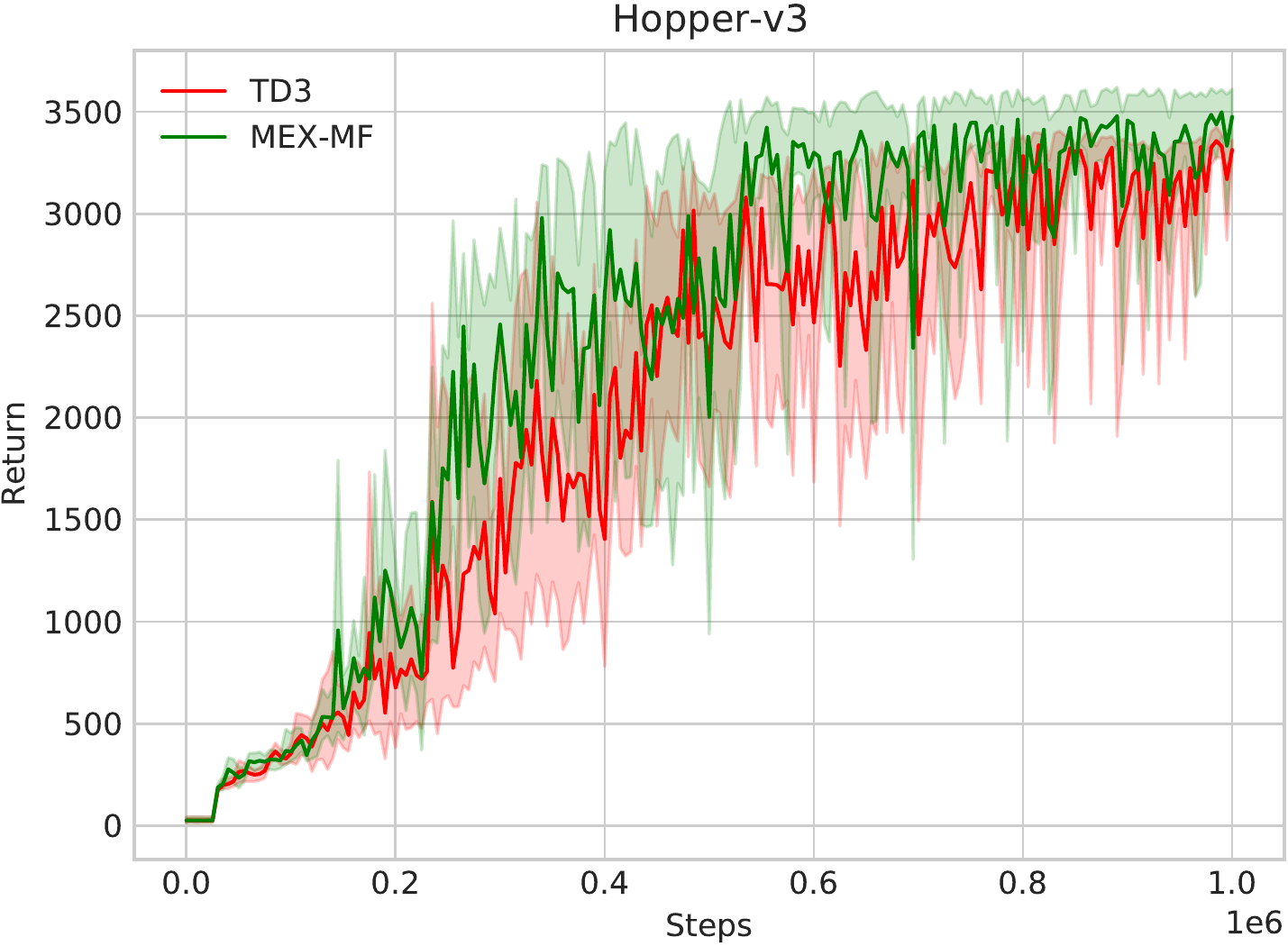}\\
    \end{minipage}
}
\subfigure{
    \begin{minipage}[t]{0.3\linewidth}
        \centering
        \includegraphics[width=1\textwidth]{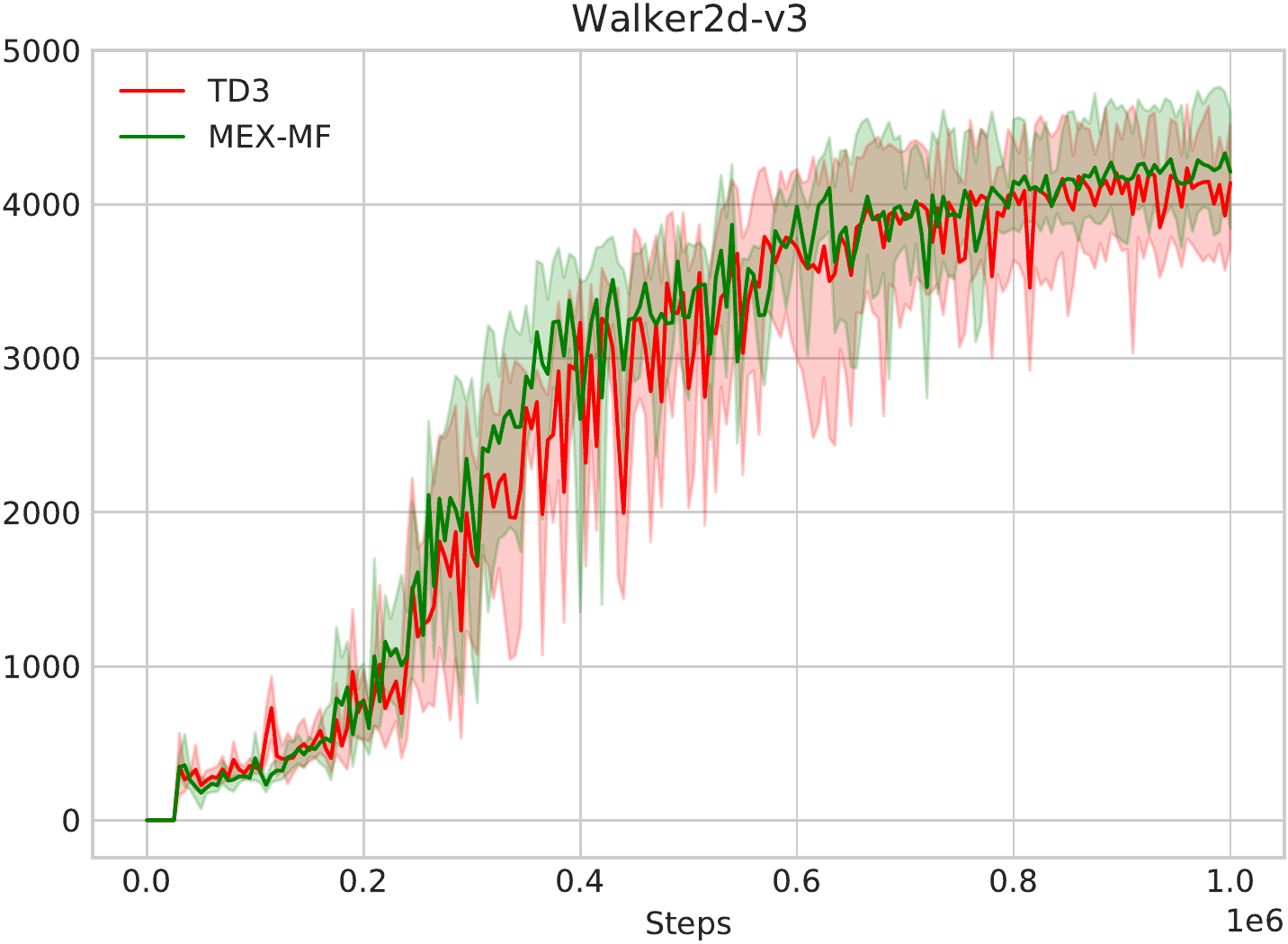}\\
    \end{minipage}
}
\subfigure{
    \begin{minipage}[t]{0.3\linewidth}
        \centering
        \includegraphics[width=1\textwidth]{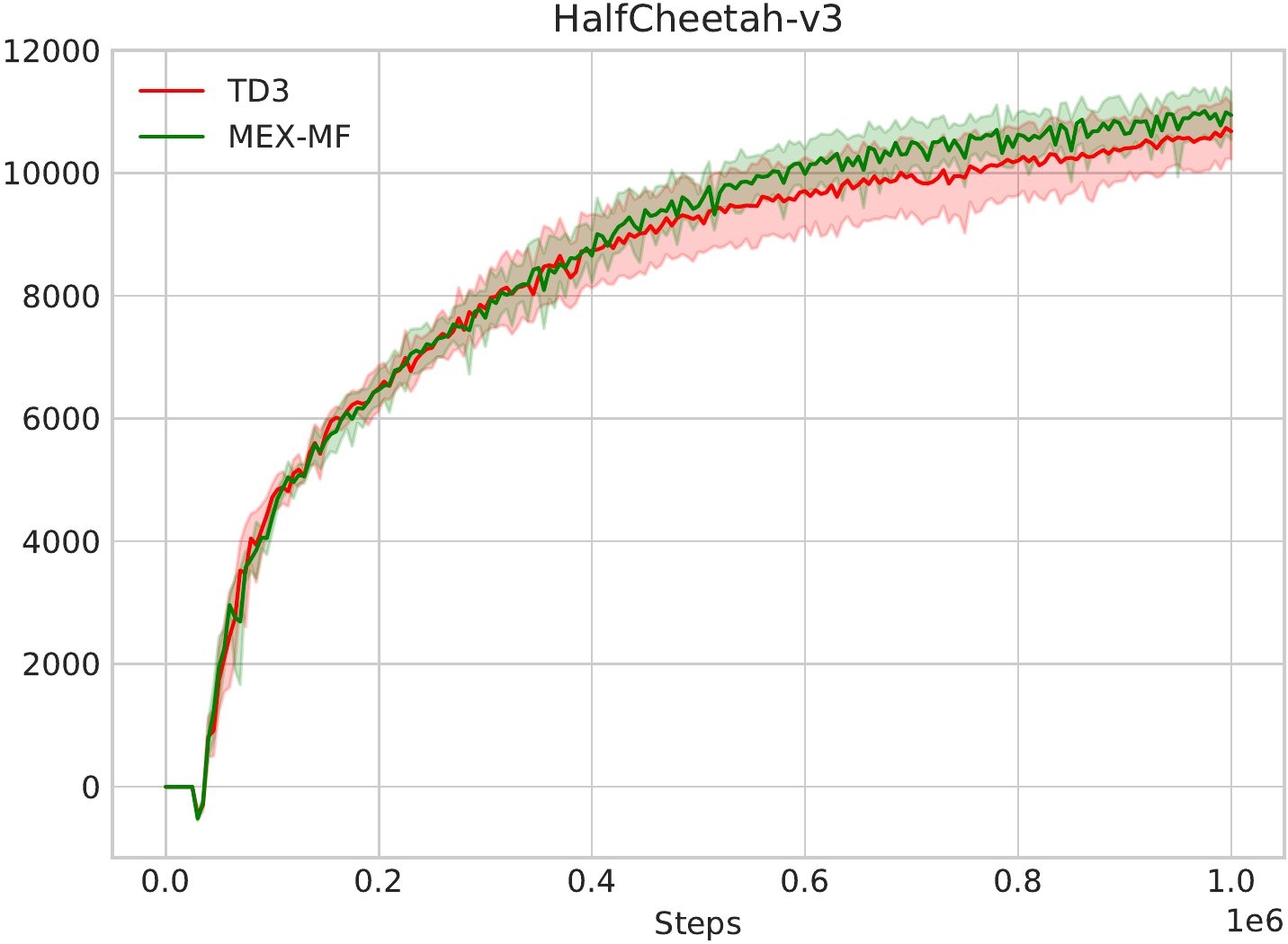}\\
    \end{minipage}
}
\subfigure{
    \begin{minipage}[t]{0.3\linewidth}
        \centering
        \includegraphics[width=1\textwidth]{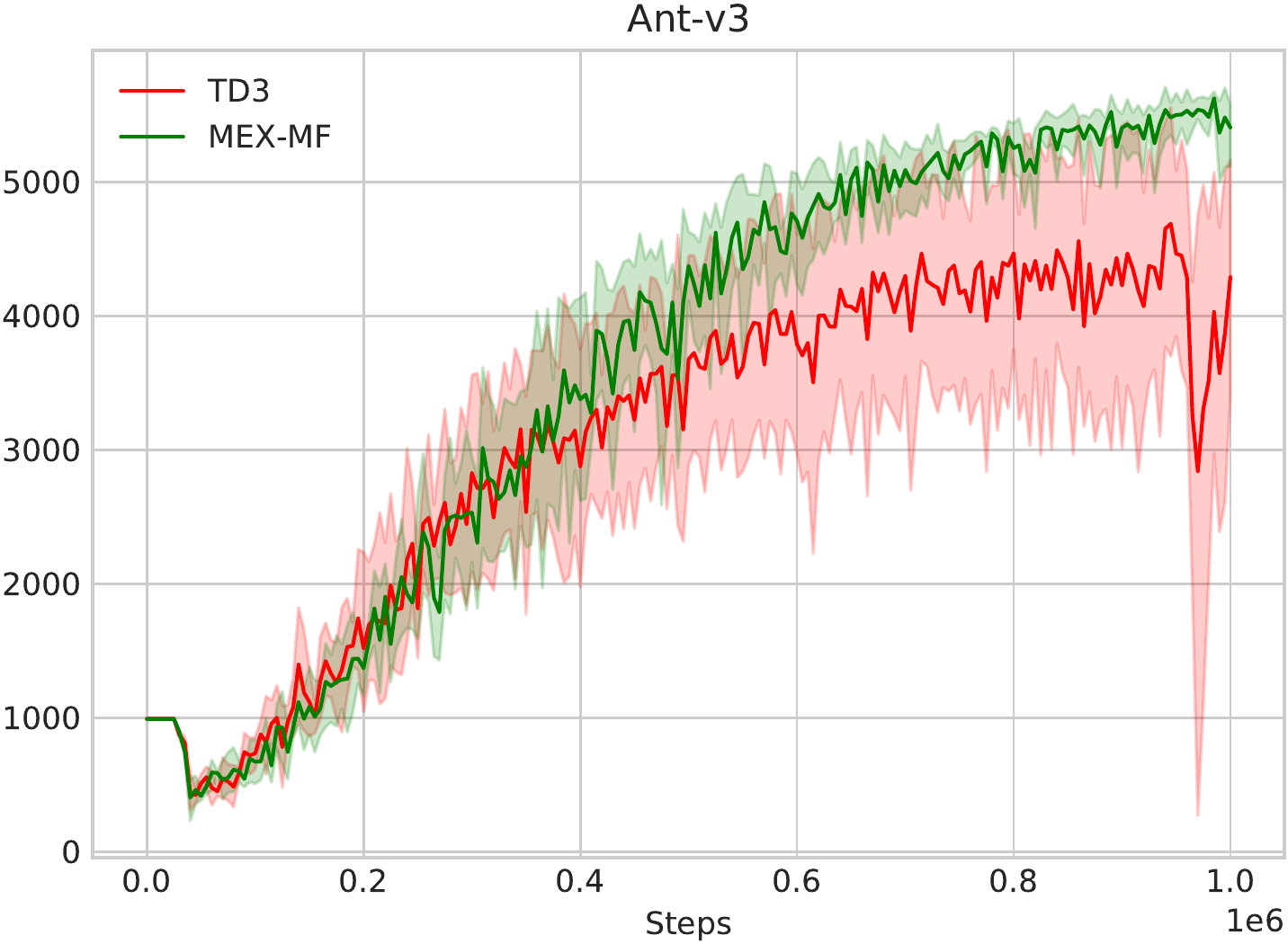}\\
    \end{minipage}%
}
\setlength{\belowcaptionskip}{-10pt}
\caption{Model-free \texttt{MEX-MF} in sparse and standard MuJoCo locomotion tasks.}
\label{fig_overall_mf}
\end{figure}

\paragraph*{Results for MEX-MB.} We compare \texttt{MEX-MB} with \texttt{MBPO} \citep{janner2019trust}, where our method differs from \texttt{MBPO} \textit{only} in the inclusion of the value gradient in \eqref{eq_model_grad} during model updates. 
We find that \texttt{MEX-MB} offers an easy implementation with minimal computational overhead and yet remains highly effective across sparse and standard MuJoCo tasks.
Notably, in the sparse reward settings, \texttt{MEX-MB} excels at achieving the goal velocity and outperforms \texttt{MBPO} by a stable margin. In standard gym tasks, \texttt{MEX-MB} showcases greater sample efficiency in challenging high-dimensional tasks with higher asymptotic returns.

\begin{figure}[t!]
    \centering
    \subfigure{
        \begin{minipage}[t]{0.3\linewidth}
            \centering
            \includegraphics[width=1\textwidth]{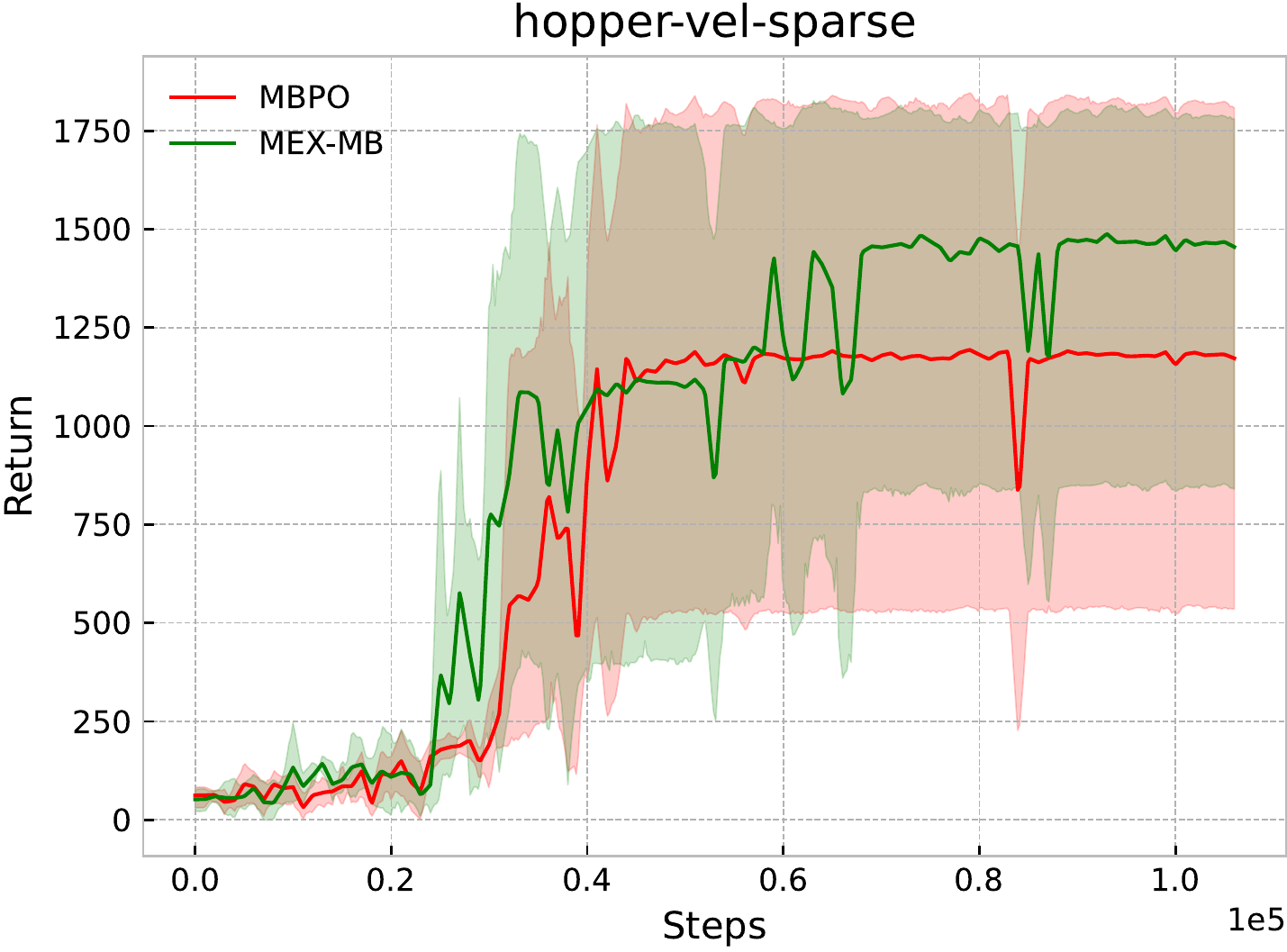}\\
        \end{minipage}%
    }
    \subfigure{
        \begin{minipage}[t]{0.3\linewidth}
            \centering
            \includegraphics[width=1\textwidth]{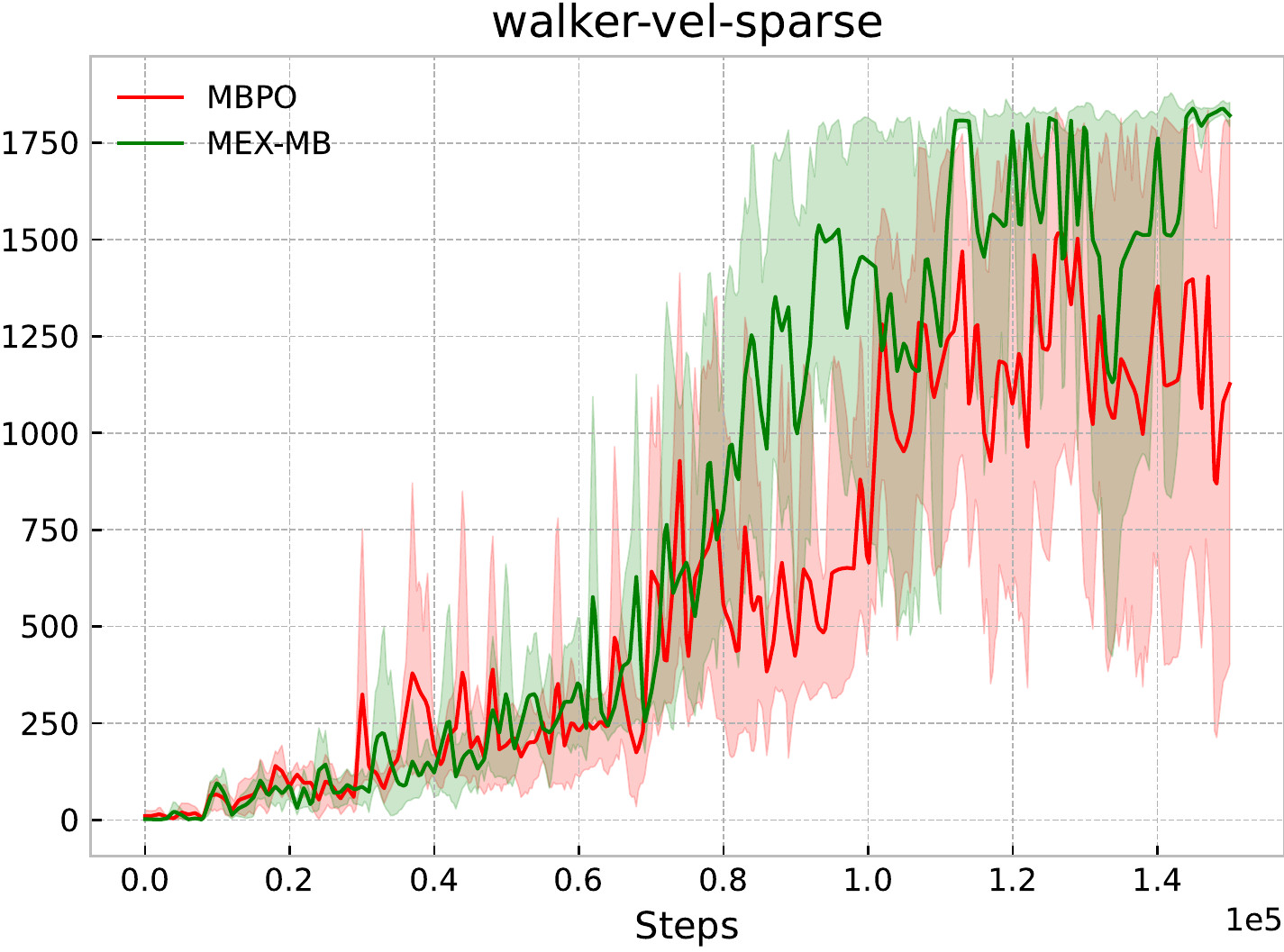}\\
        \end{minipage}
    }
    \subfigure{
        \begin{minipage}[t]{0.3\linewidth}
            \centering
            \includegraphics[width=1\textwidth]{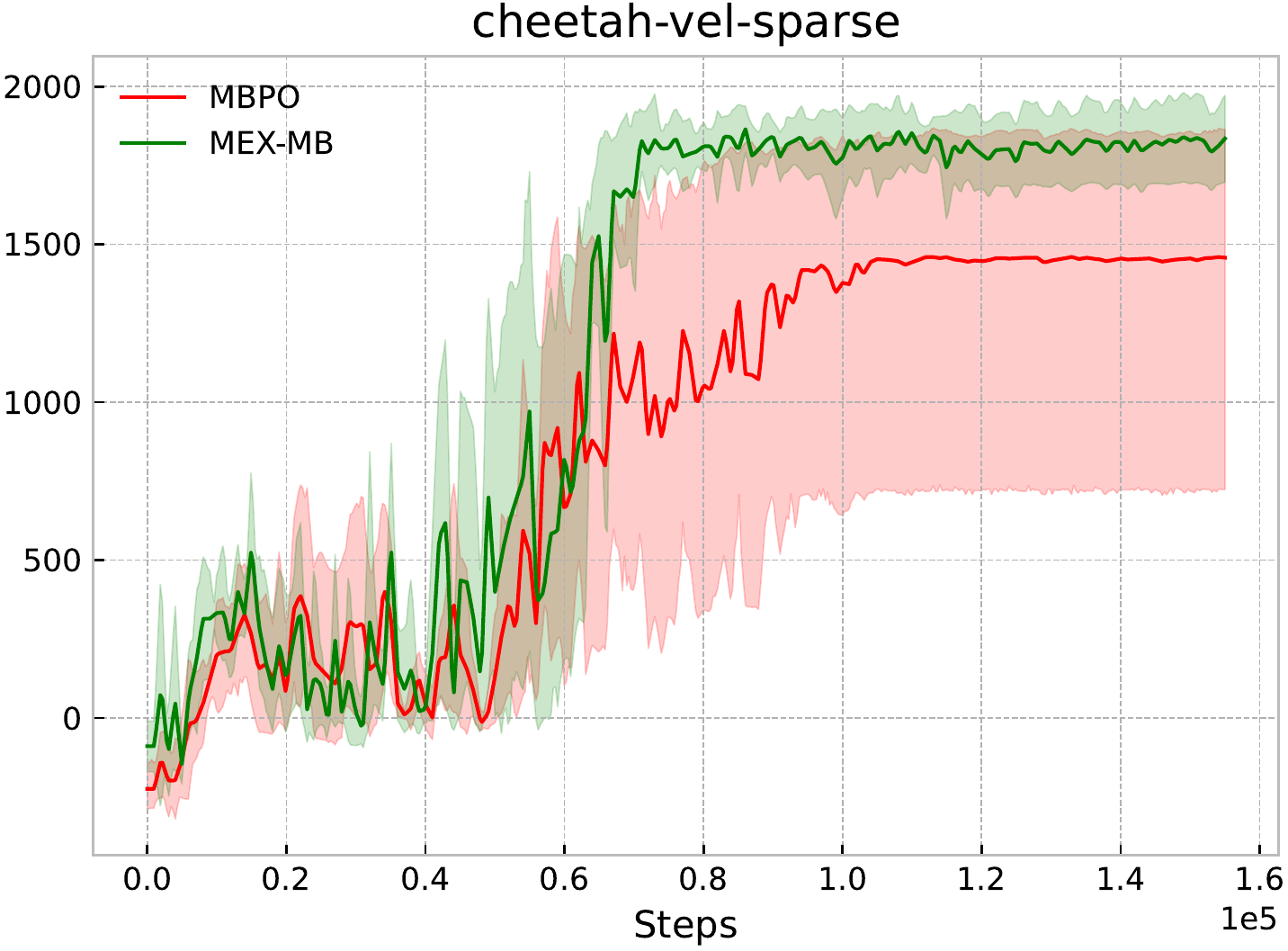}\\
        \end{minipage}
    }
    \subfigure{
        \begin{minipage}[t]{0.3\linewidth}
            \centering
            \includegraphics[width=1\textwidth]{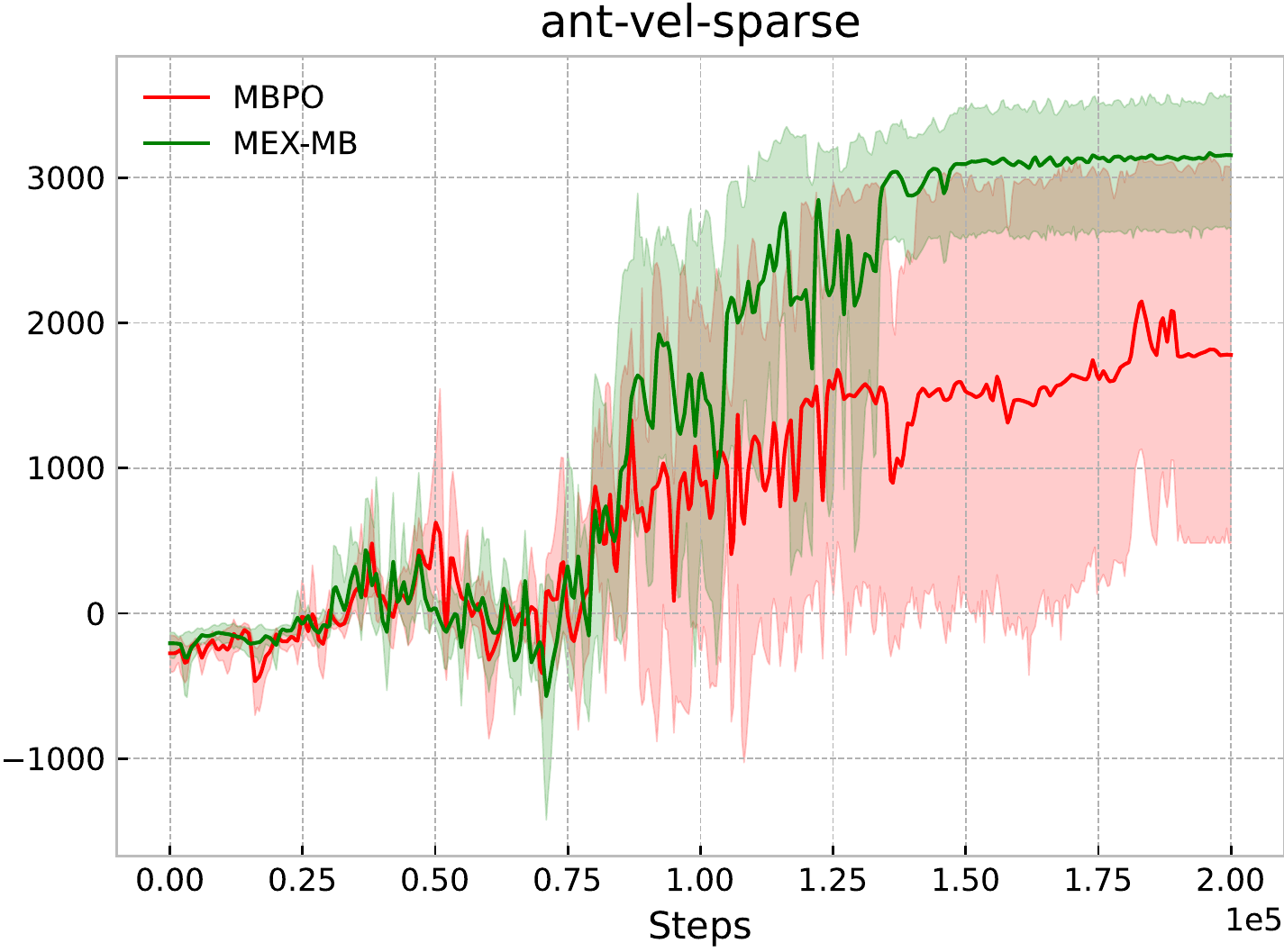}\\
        \end{minipage}
    }
    \subfigure{
        \begin{minipage}[t]{0.3\linewidth}
            \centering
            \includegraphics[width=1\textwidth]{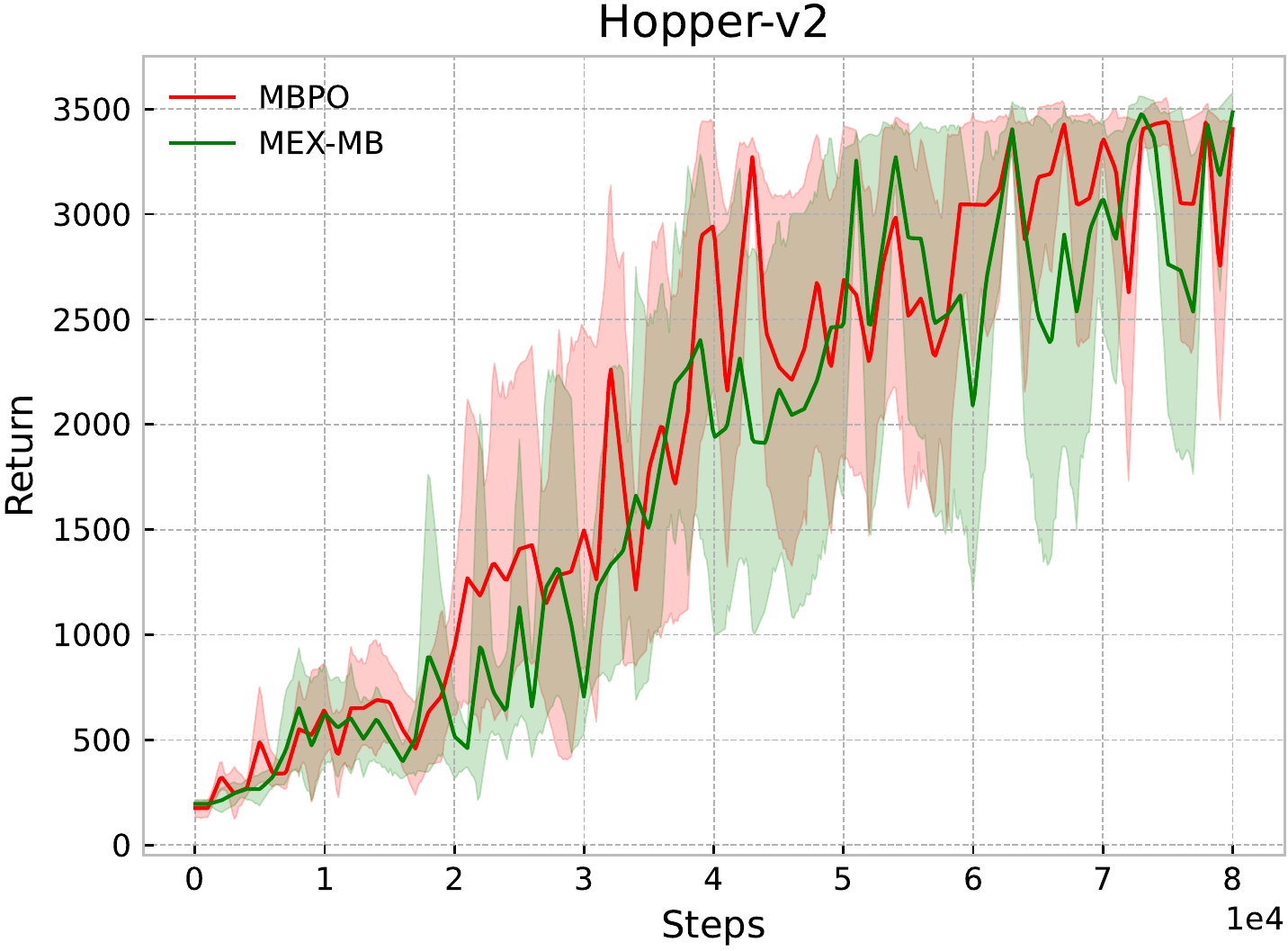}\\
        \end{minipage}%
    }
    \subfigure{
        \begin{minipage}[t]{0.3\linewidth}
            \centering
            \includegraphics[width=1\textwidth]{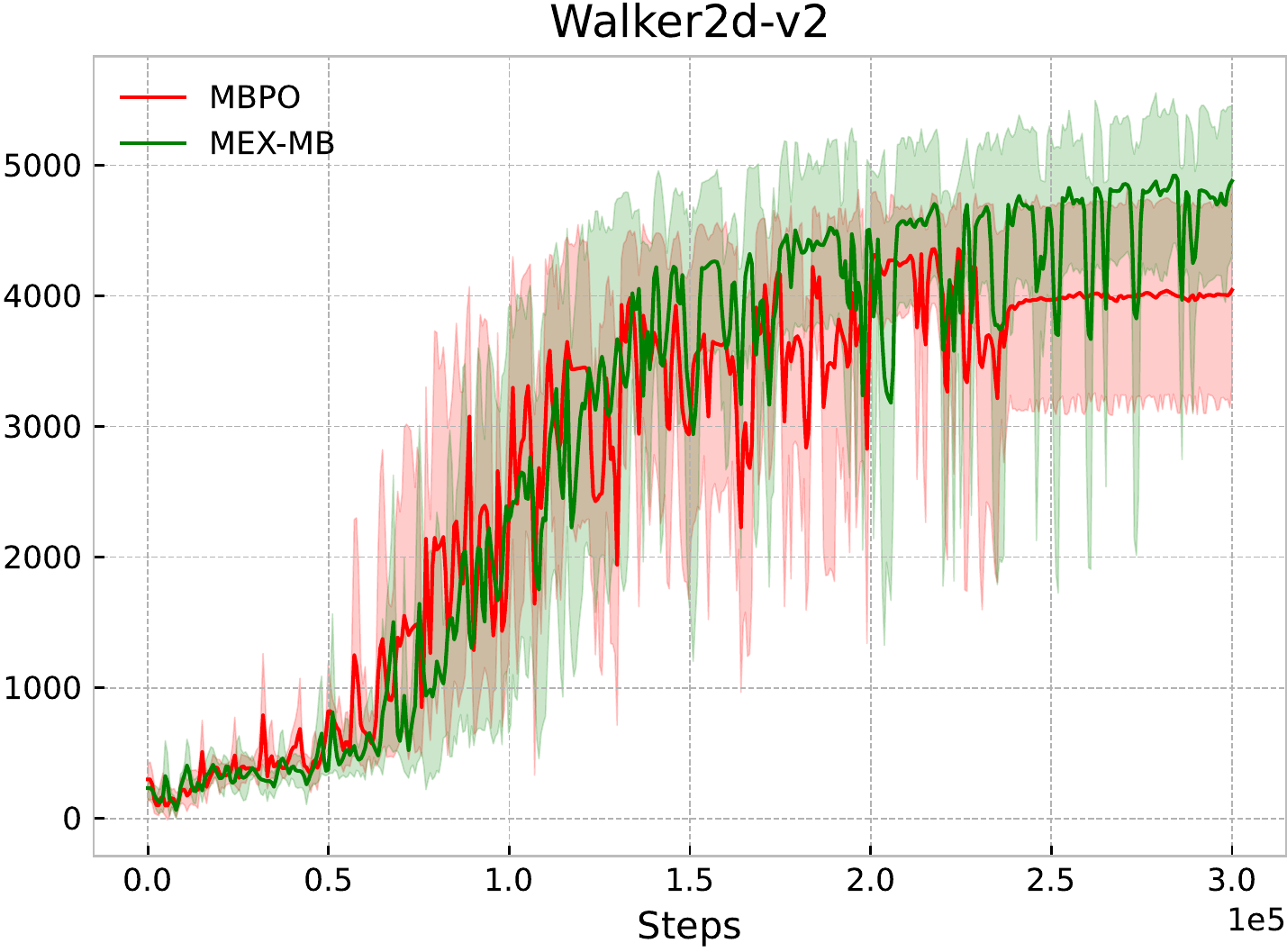}\\
        \end{minipage}
    }
    \subfigure{
        \begin{minipage}[t]{0.3\linewidth}
            \centering
            \includegraphics[width=1\textwidth]{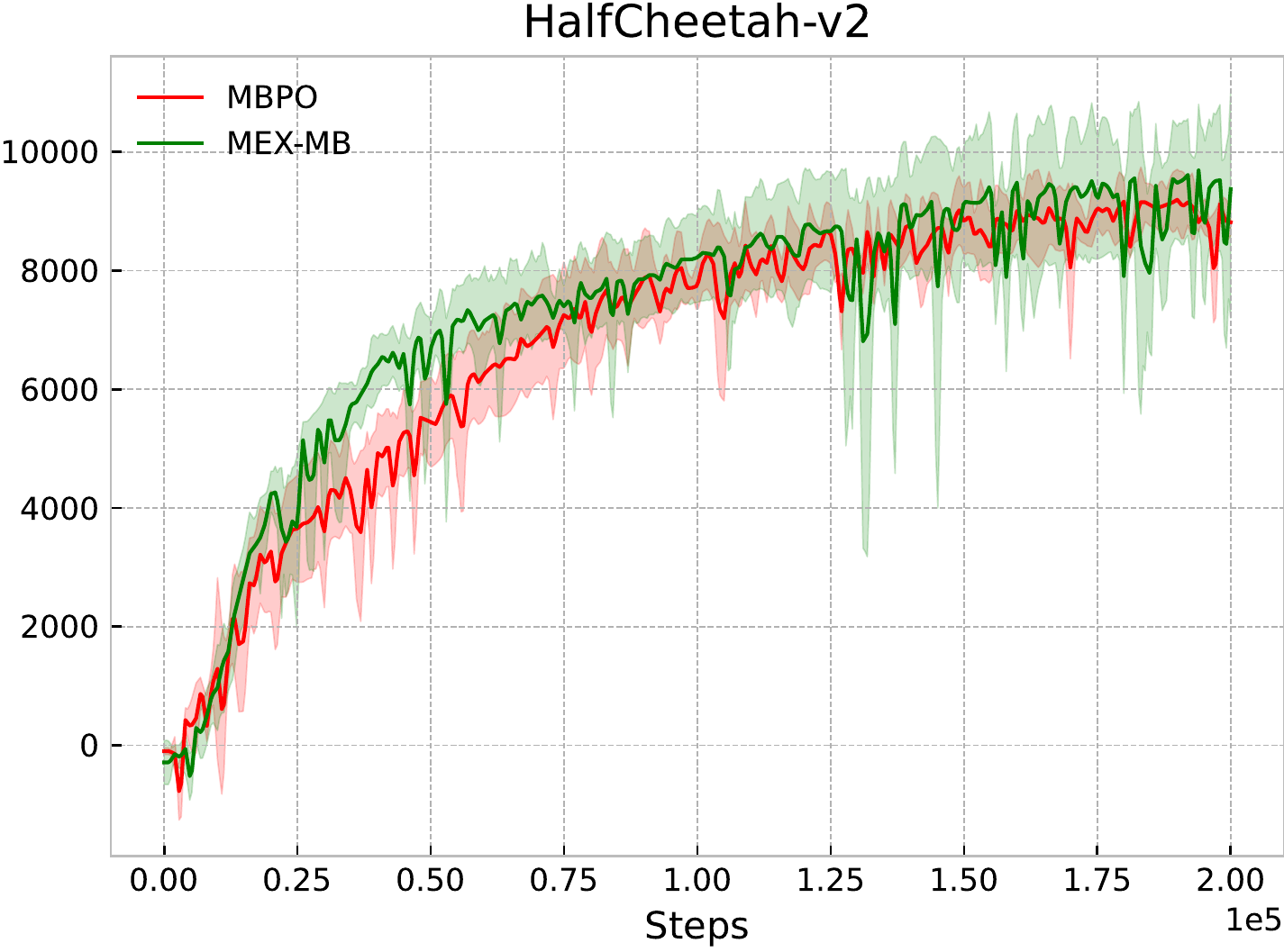}\\
        \end{minipage}
    }
    \subfigure{
        \begin{minipage}[t]{0.3\linewidth}
            \centering
            \includegraphics[width=1\textwidth]{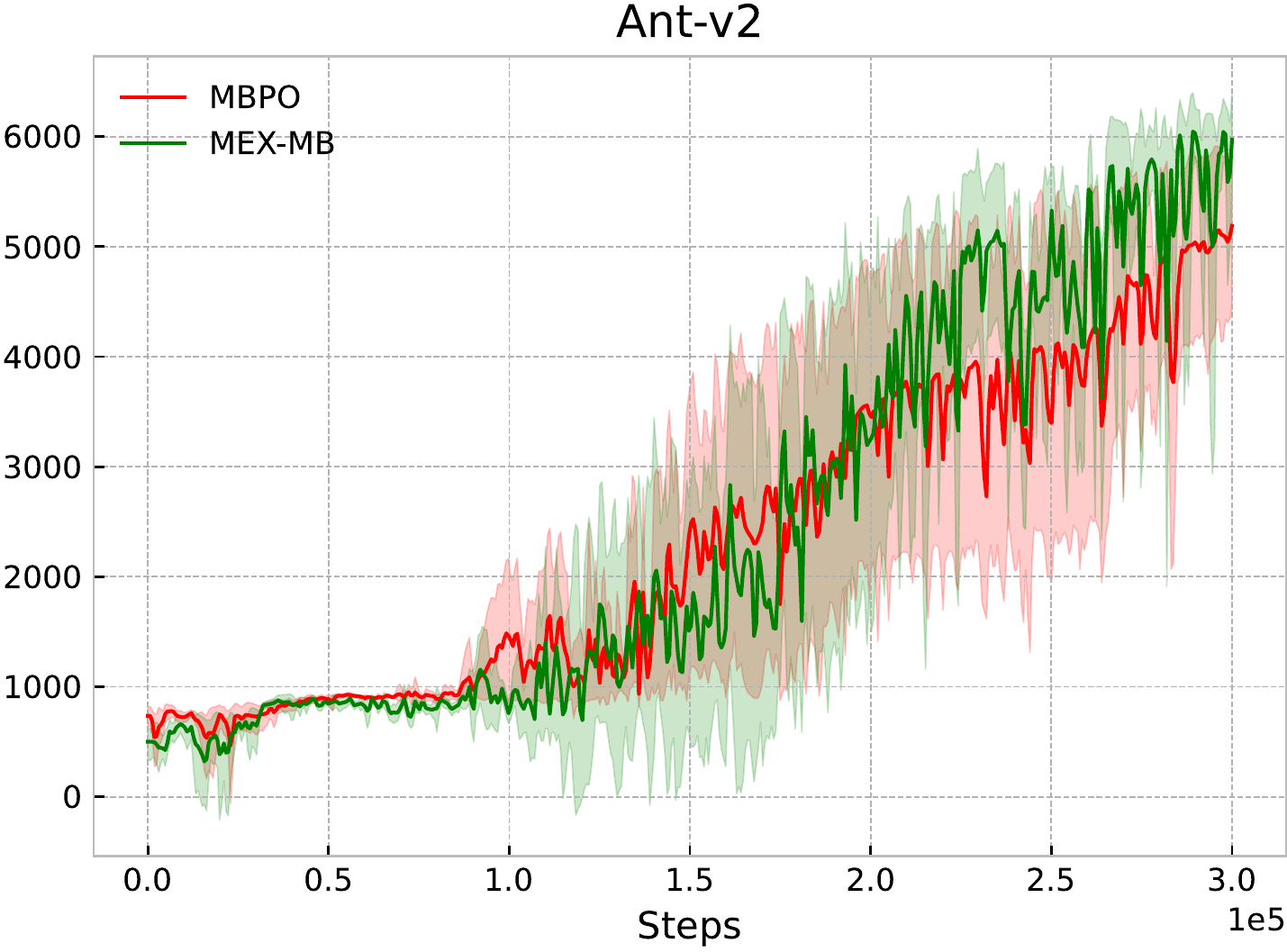}\\
        \end{minipage}
    }
    \setlength{\belowcaptionskip}{-10pt}
    \caption{Model-based \texttt{MEX-MB} in sparse and standard MuJoCo locomotion tasks.}
    \label{fig_overall_mb}
    \end{figure}

\section{Conclusions}

In this paper, we propose a novel online RL algorithm framework \emph{Maximize to Explore} (\texttt{MEX}), aimed at striking a balance between exploration and exploitation in online learning scenarios.
\texttt{MEX} is provably sample-efficient with general function approximations and is easy to implement.
Theoretically, we prove that under mild structural assumptions (low generalized eluder coefficient (GEC)), \texttt{MEX} achieves $\tilde{\mathcal{O}}(\sqrt{K})$-online regret for Markov decision processes. 
We further extend the definition of GEC and the \texttt{MEX} framework to two-player zero-sum Markov games and also prove the $\tilde{\mathcal{O}}(\sqrt{K})$-online regret.
In practice, we adapt \texttt{MEX} to deep RL methods in both model-based and model-free styles and apply them to sparse-reward MuJoCo environments, outperforming baselines significantly. 
We hope that our work can shed light on future research of designing both statistically efficient and practically effective RL algorithms with powerful function approximations.

\bibliographystyle{ims}

\bibliography{ref}

\newpage
\appendix

\section{Proof of Main Theoretical Results}

\subsection{Proof of Theorem \ref{thm:reg}}\label{subsec: proof regret mdp}

\begin{proof}[Proof of Theorem \ref{thm:reg}]
    Consider the following decomposition of the regret,
    \begin{align}
        \mathrm{Regret}(K) &= \sum_{k=1}^K V_{1}^{\ast}(x_1) - V_{1}^{\pi_{f^k}}(x_1) \notag \\
            & = \underbrace{\sum_{k=1}^K V_{1}^{\ast}(x_1) -  V_{1,f^k}(x_1)}_{\displaystyle{\text{Term (i)}}} + \underbrace{\sum_{k=1}^K V_{1,f^k}(x_1) - V_{1}^{\pi_{f^k}}(x_1)}_{\displaystyle{\text{Term (ii)}}}
    \end{align}
    
    \paragraph{Term (i).} Note that by our definition in both Example \ref{exp: model based} and \ref{exp: model free}, we have that $V_1^{\ast} = V_{1,f^{\ast}}$.
    Thus we can rewrite the term (i) as 
    \begin{align}\label{eq: proof regret mdp 1}
        \text{Term (i)} = \sum_{k=1}^K V_{1,f^{\ast}}(x_1) - V_{1,f^k}(x_1).
    \end{align}
    Then by our choice of $f^k$ in \eqref{eq: implicit optimism mdp} and the fact that $f^{\ast}\in\mathcal{H}$, we have that for each $k\in[K]$,
    \begin{align}\label{eq: proof regret mdp 2}
        V_{1,f^{\ast}}(x_1) - \eta\sum_{h=1}^HL_h^{k-1}(f^{\ast})(x_1) \leq  V_{1,f^{k}}(x_1) - \eta\sum_{h=1}^HL_h^{k-1}(f^{k})(x_1)
    \end{align}
    By combining \eqref{eq: proof regret mdp 1} and \eqref{eq: proof regret mdp 2}, we can derive that 
    \begin{align}\label{eq: proof regret mdp 3}
        \text{Term (i)} \leq \eta \sum_{k=1}^K\sum_{h=1}^HL_h^{k-1}(f^{\ast}) - L_h^{k-1}(f^{k})
    \end{align}
    Now by applying Assumption \ref{ass: supervised learning} to \eqref{eq: proof regret mdp 3}, we can further derive that with probability at least $1-\delta$,
    \begin{align}\label{eq: proof regret mdp term i}
        \text{Term (i)} \leq -c_{\mathrm{(i)}}\cdot\eta\sum_{k=1}^K\sum_{s=1}^{k-1}\sum_{h=1}^H\mathbb{E}_{\xi_h\sim \pi_{\mathrm{exp}}(f^s)}[\ell_{f^s}(f^k;\xi_h)] + c_{\mathrm{(i)}}\cdot\eta B K\big(H\log(HK/\delta) + \log(|\mathcal{H}|)\big).
    \end{align}
    where $c_{\mathrm{(i)}}>0$ is some absolute constant (recall the definition of $\lesssim$).

    \paragraph{Term (ii).}
    For term (ii) of \eqref{eq: proof regret mdp 1}, we apply Assumption \ref{ass: gec} and obtain that, for any $\epsilon>0$,
    \begin{align}
        \text{Term (ii)} \leq\inf_{\mu > 0} \left\{\frac{\mu}{2} \sum_{h=1}^H\sum_{k=1}^K\sum_{s = 1}^{k-1} \mathbb{E}_{\xi_h\sim \pi_{\exp}(f^{s})}[
            \ell_{f^s}(f^k;\xi_h)] + \frac{d_{\mathrm{GEC}}(\epsilon)}{2\mu} + \sqrt{d_{\mathrm{GEC}}(\epsilon)HK} + \epsilon HK \right\}.\notag
    \end{align}
    By taking $\mu/2 = c_{\mathrm{(i)}}\cdot\eta$, we can further derive that,
    \begin{align}\label{eq: proof regret mdp term ii}
        \text{Term (ii)} \leq c_{\mathrm{(i)}}\cdot \eta \sum_{h=1}^H\sum_{k=1}^K\sum_{s = 1}^{k-1} \mathbb{E}_{\xi_h\sim \pi_{\exp}(f^{s})}[
            \ell_{f^s}(f^k;\xi_h)] + \frac{d_{\mathrm{GEC}}(\epsilon)}{4c_{\mathrm{(i)}}\eta} + \sqrt{d_{\mathrm{GEC}}(\epsilon)HK} + \epsilon HK. 
    \end{align}

    \paragraph{Combining Term (i) and Term (ii).}
    Now by combining \eqref{eq: proof regret mdp term i} and \eqref{eq: proof regret mdp term ii}, we can obtain that with probability at least $1-\delta$,
    \begin{align}
        \mathrm{Regret}(T) &=  \text{Term (i)}  +  \text{Term (ii)} \notag \\
        &\leq -c_{\mathrm{(i)}}\cdot\eta\sum_{k=1}^K\sum_{s=1}^{k-1}\sum_{h=1}^H\mathbb{E}_{\xi_h\sim \pi_{\mathrm{exp}}(f^s)}[\ell_{f^s}(f^k;\xi_h)] + c_{\text{(i)}} \cdot\eta B K\big(H\log(HK/\delta) + \log(|\mathcal{H}|)\big), \notag\\
        &\qquad + c_{\mathrm{(i)}}\cdot\eta \sum_{h=1}^H\sum_{k=1}^K\sum_{s = 1}^{k-1} \mathbb{E}_{\xi_h\sim \pi_{\exp}(f^{s})}[
            \ell_{f^s}(f^k;\xi_h)] + \frac{d_{\mathrm{GEC}}(\epsilon)}{4c_{\mathrm{(i)}}\eta} + \sqrt{d_{\mathrm{GEC}}(\epsilon)HK} + \epsilon HK \notag\\
        & = c_{\text{(i)}}\cdot \eta B K\big(H\log(HK/\delta) + \log(|\mathcal{H}|)\big) + \frac{d_{\mathrm{GEC}}(\epsilon)}{4c_{\mathrm{(i)}}\eta} + \sqrt{d_{\mathrm{GEC}}(\epsilon)HK} + \epsilon HK. \label{eq: proof regret 3}
    \end{align}
    By taking $\epsilon = 1/\sqrt{HK}$, $\eta = \sqrt{d_{\mathrm{GEC}}(\epsilon)/(H\log(HK/\delta) + \log(|\mathcal{H}|))\cdot B\cdot K)}$, we can derive from \eqref{eq: proof regret 3} that, with probability at least $1-\delta$, it holds that
    \begin{align}
        \mathrm{Regret}(K) 
        \lesssim \sqrt{d_{\mathrm{GEC}}(1/\sqrt{HK})\cdot (H\log(HK/\delta) + \log(|\mathcal{H}|)) \cdot  B\cdot  K}.
    \end{align}
    This finishes the proof of Theorem \ref{thm:reg}.
\end{proof}

\subsection{Proof of Theorem \ref{thm: reg mg}}\label{subsec: proof regret mg}

\begin{proof}[Proof of Theorem \ref{thm: reg mg}]
    Consider the following decomposition of the regret,
    \begin{align}
        \mathrm{Regret}(K) &= \sum_{k=1}^K V_{1}^{\ast}(x_1) - V_{1}^{\mu^k,\dagger}(x_1) \notag \\
            & = \sum_{k=1}^K V_{1}^{\ast}(x_1) - V_{1}^{\mu^k,\nu^k}(x_1) + \sum_{k=1}^KV_{1}^{\mu^k,\nu^k}(x_1) - V_{1}^{\mu^k,\dagger}(x_1) \notag \\
            & = \underbrace{\sum_{k=1}^K V_{1}^{\star} -  V_{1,f^k}}_{\displaystyle{\text{Term (Max.i)}}} + \underbrace{\sum_{k=1}^K V_{1,f^k} - V_{1}^{\mu^k,\nu^k}}_{\displaystyle{\text{Term (Max.ii)}}} + \underbrace{\sum_{k=1}^K V_{1,g^k}^{\mu^k,\dagger} -  V_{1}^{\mu^k,\dagger}}_{\displaystyle{\text{Term (Min.i)}}} + \underbrace{\sum_{k=1}^K V_{1}^{\mu^k,\nu^k} - V_{1,g^k}^{\mu^k,\dagger}}_{\displaystyle{\text{Term (Min.ii)}}},
    \end{align}
    where in the last equality we omit the dependence on $x_1$ for simplicity.

    \paragraph{Term (Max.i).} Note that by our definition in both Example \ref{exp: model free mg} and Example \ref{exp: model based mg}, we have that $V_1^{\ast} = V_{1,f^{\ast}}$.
    Thus we can rewrite the term (Max.i) as 
    \begin{align}\label{eq: proof regret mg max 1}
        \text{Term (Max.i)} = \sum_{k=1}^K V_{1,f^{\ast}}(x_1) - V_{1,f^k}(x_1).
    \end{align}
    Then by our choice of $f^k$ in \eqref{eq: implicit optimism mg max} and the fact that $f^{\ast}\in\mathcal{H}$, we have that for each $k\in[K]$,
    \begin{align}\label{eq: proof regret mg max 2}
        V_{1,f^{\ast}}(x_1) - \eta\sum_{h=1}^HL_h^{k-1}(f^{\ast}) \leq  V_{1,f^{k}}(x_1) - \eta\sum_{h=1}^HL_h^{k-1}(f^{k}).
    \end{align}
    By combining \eqref{eq: proof regret mg max 1} and \eqref{eq: proof regret mg max 2}, we can derive that 
    \begin{align}\label{eq: proof regret mg max 3}
        \text{Term (Max.i)} \leq \eta \sum_{k=1}^K\sum_{h=1}^HL_h^{k-1}(f^{\ast}) - L_h^{k-1}(f^{k}).
    \end{align}
    Now applying Assumption \ref{ass: supervised learning mg} to \eqref{eq: proof regret mg max 3}, we can further derive that with probability at least $1-\delta$, 
    \begin{align}\label{eq: proof regret mg max 4}
        \text{Term (Max.i)} \leq - c_{\mathrm{(max.i)}}\cdot \eta \sum_{k=1}^K\sum_{s=1}^{k-1}\sum_{h=1}^H\mathbb{E}_{\xi_h\sim \boldsymbol{\pi}^k}[\ell_{f^s}(f;\xi_h)] +c_{\mathrm{(max.i)}}\cdot BK\big(H\log(HK/\delta) + \log(|\mathcal{H}|)\big),
    \end{align}
    for some absolute constant $c_{\mathrm{(max.i)}}>0$.
    
    \paragraph{Term (Max.ii).} For term (Max.ii), we apply Assumption \ref{ass: gec mg} and obtain that, for any $\epsilon>0$, 
    \begin{align*}
        \text{Term (Max.ii)} \le \inf_{\zeta > 0} \left\{\frac{\zeta}{2} \sum_{h=1}^H\sum_{k=1}^K\sum_{s = 1}^{k-1} \mathbb{E}_{\xi_h\sim \boldsymbol{\pi}^k}[
        \ell_{f^s}(f^k;\xi_h)] + \frac{d_{\mathrm{TGEC}}(\epsilon)}{2\zeta} + \sqrt{d_{\mathrm{TGEC}}(\epsilon)HK} + \epsilon HK \right\}.
    \end{align*}
    By taking $\zeta/2 = c_{\mathrm{(max.i)}}\cdot\eta$, we can further derive that 
    \begin{align}\label{eq: proof regret mg max 5}
        \text{Term (Max.ii)} \le c_{\mathrm{(max.i)}}\cdot\eta \sum_{h=1}^H\sum_{k=1}^K\sum_{s = 1}^{k-1} \mathbb{E}_{\xi_h\sim \boldsymbol{\pi}^k}[
        \ell_{f^s}(f^k;\xi_h)] + \frac{d_{\mathrm{TGEC}}(\epsilon)}{4c_{\mathrm{(max.i)}}\eta} + \sqrt{d_{\mathrm{TGEC}}(\epsilon)HK} + \epsilon HK .
    \end{align}

    \paragraph{Term (Min.i).} 
    For either model-free or model-based hypothsis, by our definition in Example \ref{exp: model free mg} and Example \ref{exp: model based mg} respectively, we both have that $V_{1}^{\mu^k,\dagger} = V_{1,\star}^{\mu^k,\dagger}$.
    Here $\star = Q^{\mu^k,\dagger}$ for model-free hypothesis and $\star=f^{\ast}$ for model-based hypothesis.
    Thus we can rewrite the term (Min.i) as\footnote{We remark that this notation is well-defined, since we assume that $Q^{\mu_f,\dagger}\in\mathcal{H}$ for any $f\in\mathcal{H}$ in Assumption \ref{ass: realizability}.}.
    \begin{align}\label{eq: proof regret mg min 1}
        \text{Term (Min.i)} = \sum_{k=1}^K V_{1,g^k}^{\mu^k,\dagger}(x_1) - V_{1,\star}^{\mu^k,\dagger}(x_1).
    \end{align}
    Then by our choice of $g^k$ in \eqref{eq: implicit optimism mg min} and the fact that $\star\in\mathcal{H}$ (Assumption \ref{ass: realizability mg}), we have that for each $k\in[K]$,
    \begin{align}\label{eq: proof regret mg min 2}
        -V_{1,\star}^{\mu^k,\dagger}(x_1) - \eta \sum_{h=1}^HL_{h,\mu^k}^{k-1}(\star) \leq -V_{1, g^k}^{\mu^k,\dagger}(x_1) - \eta \sum_{h=1}^HL_{h,\mu^k}^{k-1}(g^k)
    \end{align}
    By combining \eqref{eq: proof regret mg min 1} and \eqref{eq: proof regret mg min 2}, we can derive that 
    \begin{align}\label{eq: proof regret mg min 3}
        \text{Term (Min.i)} \leq \eta \sum_{k=1}^K\sum_{h=1}^HL_{h,\mu^k}^{k-1}(\star) - L_{h,\mu^k}^{k-1}(g^{k})
    \end{align}
    Now applying Assumption \ref{ass: supervised learning mg} to \eqref{eq: proof regret mg min 3}, we can further derive that with probability at least $1-\delta$, 
    \begin{align}\label{eq: proof regret mg min 4}
        \text{Term (Min.i)} \leq -c_{\mathrm{(min.i)}}\cdot\eta \sum_{k=1}^K\sum_{s=1}^{k-1}\sum_{h=1}^H\mathbb{E}_{\xi_h\sim \boldsymbol{\pi}^k}[\ell_{g^s, \mu^k}(g;\xi_h)] + c_{\mathrm{(min.i)}}\cdot\eta BK\big(H\log(HK/\delta) + \log(|\mathcal{H}|)\big).
    \end{align}

    \paragraph{Term (Min.ii).} For term (Min.ii), we apply Assumption \ref{ass: gec mg} and obtain that, for any $\epsilon>0$, 
    \begin{align*}
        \text{Term (Min.ii)} \le \inf_{\zeta > 0} \left\{\frac{\zeta}{2} \sum_{h=1}^H\sum_{k=1}^K\sum_{s = 1}^{k-1} \mathbb{E}_{\xi_h\sim \boldsymbol{\pi}^k}[
        \ell_{g^s, \mu^k}(g^k;\xi_h)] + \frac{d_{\mathrm{TGEC}}(\epsilon)}{2\zeta} + \sqrt{d_{\mathrm{TGEC}}(\epsilon)HK} + \epsilon HK \right\}.
    \end{align*}
    By taking $\zeta/2 = c_{\mathrm{(min.i)}}\cdot\eta$, we can further derive that 
    \begin{align}\label{eq: proof regret mg min 5}
        \!\!\!\!\!\!\text{Term (Max.ii)} \le c_{\mathrm{(min.i)}}\cdot\eta \sum_{h=1}^H\sum_{k=1}^K\sum_{s = 1}^{k-1} \mathbb{E}_{\xi_h\sim \boldsymbol{\pi}^k}[
        \ell_{g^s, \mu^k}(g^k;\xi_h)] + \frac{d_{\mathrm{TGEC}}(\epsilon)}{4c_{\mathrm{(min.i)}}\eta} + \sqrt{d_{\mathrm{TGEC}}(\epsilon)HK} + \epsilon HK .
    \end{align}

    \paragraph{Combining Term (Max.i),  Term (Max.ii), Term (Min.i), and Term (Min.ii).}
    Finally, combining \eqref{eq: proof regret mg max 4}, \eqref{eq: proof regret mg max 5}, \eqref{eq: proof regret mg min 4}, and \eqref{eq: proof regret mg max 5}, taking $\epsilon = 1/\sqrt{HK}$ and
    \begin{align}
        \eta = \sqrt{\frac{d_{\mathrm{TGEC}}(1/\sqrt{HK})}{(H\log(HK/\delta) + \log(|\mathcal{H}|))\cdot B\cdot K}},
    \end{align}
    we can finally derive that with probability at least $1-2\delta$,
    \begin{align*}
        \mathrm{Regret}(K) \lesssim  \sqrt{d_{\mathrm{TGEC}}(1/\sqrt{HK})\cdot(H\log(HK/\delta) + \log(|\mathcal{H}|))\cdot  B\cdot K}.
    \end{align*}
    This finishes the proof of Theorem \ref{thm: reg mg}.
\end{proof}

\section{Examples of Model-based and Model-free Online RL in MDPs}\label{sec: proof examples mdp}

In this section, we specify Corollaries \ref{cor: regret model free mdp} and \ref{cor: regret model based mdp} to various examples of MDPs with low generalized eluder coefficient (GEC \cite{zhong2022posterior}).
Sections~\ref{subsec: example model free mdp} and \ref{subsec: example model based mdp} consider model-free hypothesis and model-based hypothesis, respectively. 
After, we give proof of the generalization guarantees involved in Section \ref{sec: examples mdp}.
Section~\ref{subsec: proof prop supervised learning model free rl} provides proof of Proposition~\ref{prop: supervised learning model free rl} and Section~\ref{subsec: proof prop supervised learning model based rl} provides proof of Proposition~\ref{prop: supervised learning model based rl}.

\subsection{Examples of Model-free Online RL in MDPs}\label{subsec: example model free mdp}

\paragraph{MDPs with low Bellman eluder dimension.}

In this part, we study MDPs with low Bellman eluder (BE) dimension \citep{jin2021bellman}.
To introduce, we define the notion of $\epsilon$-independence between distributions and the notion of distributional eluder dimension.

\begin{definition}[$\epsilon$-independence between distributions]
    Let $\mathcal{G}$ be a function class on the space $\mathcal{X}$, and let $\nu, \mu_1, \cdots, \mu_n$ be probability measures on $\mathcal{X}$. 
    We say $\nu$ is $\epsilon$-independent of $\{\mu_1,\cdots, \mu_n\}$ with respect to $\mathcal{G}$ if there exists a $g \in \mathcal{G}$ such that $\sqrt{\sum_{i=1}^n(\mathbb{E}_{\mu_i}[g])^2} \leq \epsilon$ but $|\mathbb{E}_\nu[g]|>\epsilon$.
\end{definition}

\begin{definition}[Distributional Eluder (DE) dimension]
    Let $\mathcal{G}$ be a function class on space $\mathcal{X}$, and let $\Pi$ be a family of probability measures on $\mathcal{X}$. 
    The distributional eluder dimension $\operatorname{dim}_{\mathrm{DE}}(\mathcal{G}, \Pi, \epsilon)$ is defined as the length of the longest sequence $\{\rho_1, \cdots, \rho_n\} \subset \Pi$ such that there exists $\epsilon^{\prime} \geq \epsilon$ with $\rho_i$ being $\epsilon^{\prime}$-independent of $\{\rho_1, \cdots, \rho_{i-1}\}$ for each $i \in[n]$.
\end{definition}

The Bellman eluder dimension is based upon the notion of distributional eluder dimsntion. 
For a model-free hypothesis class $\mathcal{H}$, we the Bellman operator $\mathcal{T}_h$ defined in Section \ref{sec: pre} becomes,
\begin{align}\label{eq: bellman operator}
    (\mathcal{T}_hf_{h+1})(x,a) = R_h(x,a)+\mathbb{E}_{x'\sim \mathbb{P}_h(\cdot|x,a)}[V_{h+1,f}(x')],
\end{align}
for any $f\in\mathcal{H}$. 
Then we define the $Q$-type/$V$-type Bellman eluder dimension as the following.

\begin{definition}[$Q$-type Bellman eluder (BE) dimension \citep{jin2021bellman, zhong2022posterior}]
    We define $(I-\mathcal{T}_h) \mathcal{H}=\{(x, a) \mapsto(f_h-\mathcal{T}_h f_{h+1})(x, a): f \in \mathcal{H}\}$ as the set of Bellman residuals induced by $\mathcal{H}$ at step $h$, and let $\Pi=\{\Pi_h\}_{h=1}^H$ be a collection of $H$ families of probability measure over $\mathcal{S} \times \mathcal{A}$. 
    The $Q$-type $\epsilon$-Bellman eluder dimension of $\mathcal{H}$ with respect to $\Pi$ is defined as
    \begin{align*}
        \operatorname{dim}_{\mathrm{BE}}(\mathcal{H}, \Pi, \epsilon)=\max _{h \in[H]}\left\{\operatorname{dim}_{\mathrm{DE}}\left(\left(I-\mathcal{T}_h\right) \mathcal{H}, \Pi_h, \epsilon\right)\right\}.
    \end{align*}
\end{definition}

\begin{definition}[$V$-type Bellman eluder (BE) dimension \citep{jin2021bellman, zhong2022posterior}]
    We define $(I-\mathcal{T}_h) V_{\mathcal{H}}=\{x \mapsto(f_h-\mathcal{T}_h f_{h+1})(x, \pi_{h,f}(x)) : f \in \mathcal{H}\}$ as the set of $V$-type Bellman residuals induced by $\mathcal{H}$ at step $h$, and let $\Pi=\{\Pi_h\}_{h=1}^H$ be a collection of $H$ families of probability measure over $\mathcal{S}$. 
    The $V$-type $\epsilon$-Bellman eluder dimension of $\mathcal{H}$ with respect to $\Pi$ is defined as
    \begin{align*}
        \operatorname{dim}_{\mathrm{VBE}}(\mathcal{H}, \Pi, \epsilon)=\max_{h \in[H]}\left\{\operatorname{dim}_{\mathrm{DE}}\left(\left(I-\mathcal{T}_h\right) V_{\mathcal{H}}, \Pi_h, \epsilon\right)\right\}.
    \end{align*}
\end{definition}

For MDPs with low Bellman eluder dimension, we choose the function $l$ in Assumption \ref{ass: l function} as
\begin{align}\label{eq: low BE l}
    l_{f'}\big((f_h,f_{h+1});\mathcal{D}_h\big)=Q_{h,f}(x_h,a_h) - r_h - V_{h+1,f}(x_{h+1}).
\end{align}
and we choose the operator $\mathcal{P}_h = \mathcal{T}_h$ defined in \eqref{eq: bellman operator}.
One can check that such a choice satisfies Assumption \ref{ass: l function}.
By further choosing the exploration policy as $\pi_{\mathrm{exp}}(f) = \pi_f$ for $Q$-type problems and $\pi_{\exp}(f) = \pi_f\circ_h \mathrm{Unif}(\mathcal{A})$ for $V$-type problems\footnote{The policy $\pi_f\circ_h \mathrm{Unif}(\mathcal{A})$ means that when executing the exploration policy to collect data $\cD_h$ at timestep $h$, the agent first executes policy $\pi_f$ for the first $h-1$ steps and then takes an action uniformly sampled from $\cA$ at timestep $h$.}, we can bound the GEC for MDPs with low BE dimension by the following lemma.

\begin{lemma}[GEC for low Bellman eluder dimension, Lemma 3.16 in \cite{zhong2022posterior}]\label{lem: gec low bellman eluder dimension}
    Let the discrepancy $\ell$ function be chosen as \eqref{eq: example model free ell} with $l$ defined in \eqref{eq: low BE l}.
    Define $\Pi_{\mathcal{H}}$ as the distributions induced by following some $f \in \mathcal{H}$ greedily. 
    For $Q$-type problems, by choosing $\pi_{\mathrm{exp}}(f) = \pi_f$, we have that 
    \begin{align*}
        d_{\mathrm{GEC}}(\epsilon) \leq 2 \operatorname{dim}_{\mathrm{BE}}(\mathcal{H}, \Pi_{\mathcal{H}}, \epsilon) H \cdot \log (K),
    \end{align*}
    For $V$-type problems, by choosing $\pi_{\exp}(f) = \pi_f\circ_h \mathrm{Unif}(\mathcal{A})$, we have that 
    \begin{align*}
        d_{\mathrm{GEC}}(\epsilon) \leq 2 \operatorname{dim}_{\mathrm{VBE}}(\mathcal{H}, \Pi_{\mathcal{H}}, \epsilon) |\mathcal{A}| H \cdot \log (K).
    \end{align*}
\end{lemma}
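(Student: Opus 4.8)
The plan is to verify the defining inequality of GEC (Assumption \ref{ass: gec}) for the concrete choices of discrepancy $\ell$ in \eqref{eq: example model free ell} and of $l$ in \eqref{eq: low BE l}. First I would simplify the discrepancy: for $\mathcal{D}_h = (x_h,a_h,r_h,x_{h+1})$, taking the conditional expectation of \eqref{eq: low BE l} gives $\mathbb{E}_{x_{h+1}\sim\mathbb{P}_h(\cdot|x_h,a_h)}[l_{f'}((f_h,f_{h+1});\mathcal{D}_h)] = (f_h - \mathcal{T}_h f_{h+1})(x_h,a_h)$, since $f_h = Q_{h,f}$ and $\mathcal{T}_h f_{h+1}(x_h,a_h) = r_h + \mathbb{E}_{x_{h+1}}[V_{h+1,f}(x_{h+1})]$. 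Hence the discrepancy is exactly the squared Bellman residual $\ell_{f^s}(f^k;\mathcal{D}_h) = \big((f^k_h - \mathcal{T}_h f^k_{h+1})(x_h,a_h)\big)^2$, independent of $f'=f^s$, and the central object is $g^k_h := f^k_h - \mathcal{T}_h f^k_{h+1} \in (I-\mathcal{T}_h)\mathcal{H}$. I would then establish the value-decomposition identity: since $V_{1,f^k}(x_1) = Q_{1,f^k}(x_1,\pi_{f^k}(x_1))$ by greediness, telescoping the Bellman operator along the trajectory generated by $\pi_{f^k}$ in the true MDP yields
$$V_{1,f^k}(x_1) - V_1^{\pi_{f^k}}(x_1) = \sum_{h=1}^H \mathbb{E}_{(x_h,a_h)\sim \pi_{f^k}}\big[g^k_h(x_h,a_h)\big],$$
which rewrites the left-hand side of the GEC inequality purely in terms of on-policy Bellman residuals. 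For $Q$-type problems $\pi_{\exp}(f^s)=\pi_{f^s}$, so $\mathbb{E}_{\xi_h\sim\pi_{\exp}(f^s)}[\ell_{f^s}(f^k;\xi_h)] = \mathbb{E}_{(x_h,a_h)\sim\pi_{f^s}}[g^k_h(x_h,a_h)^2]$, i.e. the in-sample squared errors on the right-hand side of GEC are the squared residuals of the current function $g^k_h$ against the earlier visitation distributions.

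Next, for each fixed $h$ I would apply the standard distributional-eluder pigeonhole argument (the regret-to-eluder lemma, Lemma 41 of \citet{jin2021bellman}) to the functions $\{g^k_h\}\subseteq (I-\mathcal{T}_h)\mathcal{H}$ and the distributions $\{\mu^k_h\}$ (the laws of $(x_h,a_h)$ under $\pi_{f^k}$) in $\Pi_{\mathcal{H}}$. This bounds $\sum_{k}|\mathbb{E}_{\mu^k_h}[g^k_h]|$ by roughly $\sqrt{\operatorname{dim}_{\mathrm{BE}}(\mathcal{H},\Pi_{\mathcal{H}},\epsilon)\,\log K\cdot\sum_{k}\sum_{s<k}(\mathbb{E}_{\mu^s_h}[g^k_h])^2}$, plus lower-order terms of order $\operatorname{dim}_{\mathrm{BE}}\cdot B_l$ and $\epsilon K$. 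Summing over $h\in[H]$ and invoking AM–GM in the form $\sqrt{ab}\le \tfrac{\mu}{2}a+\tfrac{1}{2\mu}b$ to split the square root produces exactly the template $\inf_{\mu>0}\{\tfrac{\mu}{2}(\cdots) + \tfrac{d(\epsilon)}{2\mu} + \sqrt{d(\epsilon)HK} + \epsilon HK\}$ of Assumption \ref{ass: gec}, with $d(\epsilon) = 2\operatorname{dim}_{\mathrm{BE}}(\mathcal{H},\Pi_{\mathcal{H}},\epsilon)\,H\log K$; the factor $H$ comes from the sum over timesteps, while the $\log K$ and the constant $2$ come from the eluder lemma.

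For the $V$-type case, the decomposition produces residuals evaluated at the greedy action, $g^k_h(x_h,\pi_{h,f^k}(x_h))$, which depend on the state alone and lie in $(I-\mathcal{T}_h)V_{\mathcal{H}}$. Because the exploration policy $\pi_{f}\circ_h\mathrm{Unif}(\mathcal{A})$ samples actions uniformly at step $h$, I would pass from the greedy-action distribution appearing in the decomposition to the collected uniform-action distribution by importance weighting, which costs a multiplicative factor $|\mathcal{A}|$. Repeating the eluder argument with $\operatorname{dim}_{\mathrm{VBE}}$ in place of $\operatorname{dim}_{\mathrm{BE}}$ then gives $d(\epsilon)=2\operatorname{dim}_{\mathrm{VBE}}(\mathcal{H},\Pi_{\mathcal{H}},\epsilon)\,|\mathcal{A}|\,H\log K$, as claimed.

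The main obstacle is the pigeonhole/eluder step: controlling the on-policy (test) residuals $\mathbb{E}_{\mu^k_h}[g^k_h]$ by the accumulated in-sample (training) residuals $\sum_{s<k}(\mathbb{E}_{\mu^s_h}[g^k_h])^2$ is precisely where finiteness of the distributional eluder dimension is essential, and getting the bookkeeping right so that the estimate packages cleanly into the $\inf_{\mu}$ GEC template—rather than a looser $\sqrt{dK\beta}$ form—requires both the careful AM–GM split and the uniform boundedness $|g^k_h|\le B_l$ from Assumption \ref{ass: l function}. By comparison the value-decomposition identity and the importance-weighting step are routine.
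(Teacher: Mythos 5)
Your proposal is correct and takes essentially the same route as the proof the paper relies on: the paper does not reprove this lemma but defers entirely to Lemma 3.16 of \citet{zhong2022posterior}, whose argument is exactly your chain of (i) reducing the discrepancy $\ell$ to the squared Bellman residual, (ii) the greedy-policy value decomposition $V_{1,f^k}-V_1^{\pi_{f^k}}=\sum_h \mathbb{E}_{\pi_{f^k}}[(f^k_h-\mathcal{T}_hf^k_{h+1})(x_h,a_h)]$, (iii) the distributional-eluder pigeonhole with the $\log K$ factor followed by the AM--GM split into the $\inf_{\mu}$ GEC template (with Jensen handling the passage from $(\mathbb{E}_{\mu^s_h}[g^k_h])^2$ to $\mathbb{E}_{\mu^s_h}[(g^k_h)^2]$), and (iv) the $|\mathcal{A}|$ importance-weighting over uniform actions for the $V$-type case. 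No gaps to report.
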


\begin{proof}[Proof of Lemma \ref{lem: gec low bellman eluder dimension}]
    See Lemma 3.16 in \cite{zhong2022posterior} for a detailed proof.
\end{proof}

By combining Lemma \ref{lem: gec low bellman eluder dimension} and Corollary \ref{cor: regret model free mdp}, we can obtain that
for $Q$-type low Bellman eluder dimension problem, it holds that with probability at least $1-\delta$,
\begin{align}
    \mathrm{Regret}(T) \lesssim B_l^2\cdot \sqrt{\operatorname{dim}_{\mathrm{BE}}(\mathcal{H}, \Pi_{\mathcal{H}}, 1/\sqrt{HK}) \cdot \log(HK|\cH|/\delta)\cdot H^2K},
\end{align}
and for $V$-type Bellman eluder dimension problem, it holds that with probability at least $1-\delta$,
\begin{align}
    \mathrm{Regret}(T) \lesssim B_l^2\cdot \sqrt{\operatorname{dim}_{\mathrm{VBE}}(\mathcal{H}, \Pi_{\mathcal{H}}, 1/\sqrt{HK}) \cdot |\cA|\cdot \log(HK|\cH|/\delta)\cdot H^2K}.
\end{align}

\paragraph{MDPs of bilinear class.}

In this part, we consider MDPs of bilinear class \citep{du2021bilinear}.

\begin{definition}[Bilinear class \citep{du2021bilinear,zhong2022posterior}]\label{def: bilinear}
    Given an MDP, a model-free hypothesis class $\mathcal{H}$, and a function $l_f: \mathcal{H} \times\mathcal{H}\times(\mathcal{S} \times \mathcal{A} \times \mathbb{R} \times \mathcal{S})  \mapsto \mathbb{R}$, we say the corresponding RL problem is in a bilinear class if there exist functions $W_h: \mathcal{H} \mapsto \mathcal{V}$ and $X_h: \mathcal{H} \mapsto \mathcal{V}$ for some Hilbert space $\mathcal{V}$, such that for all $f,g \in \mathcal{H}$ and $h \in[H]$, we have that
    \begin{align*}
        \left|\mathbb{E}_{\pi_f}[Q_{h,f}(x_h, a_h)-R_h(x_h, a_h)-V_{h+1,f}(x_{h+1})]\right| & \leq\left|\langle W_h(f)-W_h(f^{\ast}), X_h(f)\rangle_{\mathcal{V}}\right|, \\
        \left|\mathbb{E}_{x_h \sim \pi_f, a_h \sim \tilde{\pi}}[l_f(g; \xi_h)]\right| & =\left|\langle W_h(g)-W_h(f^{\ast}), X_h(f)\rangle_{\mathcal{V}}\right|,
    \end{align*}
    where $\tilde{\pi}$ is either $\pi_f$ for $Q$-type problems or $\pi_g$ for $V$-type problems.
    Meanwhile, we make the assumption that $\sup _{f \in \mathcal{H}, h \in[H]}\|W_h(f)\|_2 \leq 1$ and $\sup _{f \in \mathcal{H}, h \in[H]}\|X_h(f)\|_2 \leq 1$.
\end{definition}

For MDPs of bilinear class, we choose the function $l$ as the function introduced in the definition of bilinear class.
By choosing the exploration policy as $\pi_{\mathrm{exp}}(f) = \pi_f$ for $Q$-type problems and $\pi_{\exp}(f) = \pi_f\circ_h \mathrm{Unif}(\mathcal{A})$ for $V$-type problems, we can bound the generalized eluder coefficient for MDPs of bilinear class using the following lemma.
To simplify the notation, we define $\mathcal{X}_h=\{X_h(f): f \in \mathcal{H}\}\subseteq\mathcal{V}$ and $\mathcal{X}=\{\mathcal{X}_h: h \in[H]\}$.

\begin{lemma}[GEC for bilinear class, Lemma 3.22 in \cite{zhong2022posterior}]\label{lem: gec bilinear class}
    Let the discrepancy $\ell$ function be chosen as \eqref{eq: example model free ell} with $l$ defined in Definition \ref{def: bilinear}.
    Define the maximum information gain $\gamma_K(\epsilon, \mathcal{X})$ as 
    \begin{align*}
        \gamma_K(\epsilon, \mathcal{X})=\sum_{h=1}^H\max_{x_1, \cdots, x_K \in \mathcal{X}_h} \log \operatorname{det}\Bigg(\mathcal{I}(\cdot)+\frac{1}{\epsilon} \sum_{s=1}^K x_s \langle x_s,\cdot\rangle_{\mathcal{V}}\Bigg)
    \end{align*}
    with $\mathcal{I}$ being the identity mapping.
    Then for $Q$-type problems, choosing $\pi_{\mathrm{exp}}(f) = \pi_f$, we have that 
    \begin{align*}
        d_{\mathrm{GEC}}(\epsilon) \leq 2 \gamma_K(\epsilon,\mathcal{X}).
    \end{align*}
    For $V$-type problems, by choosing $\pi_{\exp}(f) = \pi_f\circ_h \mathrm{Unif}(\mathcal{A})$, we have that 
    \begin{align*}
        d_{\mathrm{GEC}}(\epsilon) \leq 2|\mathcal{A}|\gamma_K(\epsilon,\mathcal{X}).
    \end{align*}
\end{lemma}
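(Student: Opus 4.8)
The plan is to follow the value-difference argument and reduce the bilinear inner products to a log-determinant potential, in line with the GEC framework. First I would apply the standard performance-difference decomposition to rewrite the left-hand side of Assumption~\ref{ass: gec} as a sum of average Bellman errors: for each $k$,
\[
    V_{1,f^k} - V_1^{\pi_{f^k}} = \sum_{h=1}^H \mathbb{E}_{\pi_{f^k}}\big[Q_{h,f^k}(x_h,a_h) - R_h(x_h,a_h) - V_{h+1,f^k}(x_{h+1})\big].
\]
Invoking the first defining inequality of the bilinear class (Definition~\ref{def: bilinear}), each summand is controlled by $|\langle W_h(f^k) - W_h(f^{\ast}), X_h(f^k)\rangle_{\mathcal{V}}|$. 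Writing $w_h^k := W_h(f^k) - W_h(f^{\ast})$ and $x_h^k := X_h(f^k)$, this reduces the quantity to be bounded to $\sum_{h=1}^H\sum_{k=1}^K |\langle w_h^k, x_h^k\rangle_{\mathcal{V}}|$.

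Second, I would connect the off-diagonal inner products to the discrepancy appearing in the GEC. The second defining inequality of the bilinear class gives $|\langle w_h^k, x_h^s\rangle_{\mathcal{V}}| = |\mathbb{E}_{x_h\sim\pi_{f^s},\, a_h\sim\tilde{\pi}}[l_{f^s}(f^k;\xi_h)]|$; squaring and applying Jensen's inequality against the choice $\ell_{f'}(f;\mathcal{D}_h) = (\mathbb{E}_{x_{h+1}}[l_{f'}((f_h,f_{h+1});\mathcal{D}_h)])^2$ from \eqref{eq: example model free ell} yields, for the $Q$-type policy $\pi_{\exp}(f)=\pi_f$,
\[
    |\langle w_h^k, x_h^s\rangle_{\mathcal{V}}|^2 \le \mathbb{E}_{\xi_h\sim\pi_{\exp}(f^s)}[\ell_{f^s}(f^k;\xi_h)].
\]

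Third, the core is an elliptical-potential argument. I would introduce the regularized Gram operator $\Sigma_{h,k} := \epsilon\,\mathcal{I} + \sum_{s=1}^{k-1} x_h^s\langle x_h^s,\cdot\rangle_{\mathcal{V}}$ and apply Cauchy--Schwarz, $|\langle w_h^k, x_h^k\rangle_{\mathcal{V}}| \le \|w_h^k\|_{\Sigma_{h,k}}\,\|x_h^k\|_{\Sigma_{h,k}^{-1}}$. The weight norm expands as $\|w_h^k\|_{\Sigma_{h,k}}^2 = \epsilon\|w_h^k\|_{\mathcal{V}}^2 + \sum_{s<k}|\langle w_h^k, x_h^s\rangle_{\mathcal{V}}|^2$, which by Step two and $\sup\|W_h\|_{\mathcal{V}}\le 1$ is at most $4\epsilon + \sum_{s<k}\mathbb{E}_{\xi_h\sim\pi_{\exp}(f^s)}[\ell_{f^s}(f^k;\xi_h)]$. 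Summing over $k$, using Cauchy--Schwarz across episodes together with the truncation $\min\{1,\|x_h^k\|_{\Sigma_{h,k}^{-1}}\}$ (whose complementary part contributes the lower-order $\sqrt{d HK}$ and $\epsilon HK$ terms), and bounding $\sum_k \min\{1,\|x_h^k\|_{\Sigma_{h,k}^{-1}}^2\}$ by the maximum information gain $\gamma_K(\epsilon,\mathcal{X})$ via the log-determinant potential lemma, I would obtain the GEC inequality with $d = 2\gamma_K(\epsilon,\mathcal{X})$ after choosing $\mu$ inside the infimum.

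Finally, for the $V$-type case I would repeat the argument while accounting for the mismatch between the Bellman error (measured under $\pi_f$) and the discrepancy (measured under $\pi_f\circ_h\mathrm{Unif}(\mathcal{A})$). The hard part is this change of measure together with the information-gain step: since the uniform policy places mass $1/|\mathcal{A}|$ on each action, a nonnegative-function importance-weighting bound gives $\mathbb{E}_{\pi_f}[\,\cdot\,] \le |\mathcal{A}|\,\mathbb{E}_{\pi_f\circ_h\mathrm{Unif}(\mathcal{A})}[\,\cdot\,]$, inflating the in-sample discrepancy bound of Step two, and hence the GEC, by the factor $|\mathcal{A}|$, which yields $d_{\mathrm{GEC}}(\epsilon)\le 2|\mathcal{A}|\gamma_K(\epsilon,\mathcal{X})$. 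I expect the elliptical-potential-to-information-gain reduction, and its careful bookkeeping in the $V$-type setting, to be the main obstacle.
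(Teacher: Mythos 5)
Your proposal is correct and follows essentially the same route as the proof the paper defers to (Lemma 3.22 of \citet{zhong2022posterior}), which the paper itself reproduces in the two-player setting as Proposition~\ref{prop:mg_bilinear}: value decomposition into average Bellman errors, the two bilinear-class inequalities combined with Jensen's inequality to relate the off-diagonal inner products to $\mathbb{E}[\ell]$, an elliptical-potential argument via Cauchy--Schwarz with the regularized Gram operator $\Sigma_{h,k}$ and the truncation $\min\{1,\|x\|_{\Sigma_{h,k}^{-1}}\}$, and the $|\mathcal{A}|$ importance-weighting factor for the $V$-type change of measure from $\pi_{f}$ to $\pi_f\circ_h\mathrm{Unif}(\mathcal{A})$. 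Beyond constant-level bookkeeping of the $\sqrt{d(\epsilon)HK}$ and $\epsilon HK$ slack terms, there are no gaps.
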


\begin{proof}[Proof of Lemma \ref{lem: gec low bellman eluder dimension}]
    See Lemma 3.22 in \cite{zhong2022posterior} for a detailed proof.
\end{proof}

By combining Lemma \ref{lem: gec bilinear class} and Corollary \ref{cor: regret model free mdp}, we know that
For $Q$-type bilinear class problem, it holds that with probability at least $1-\delta$,
\begin{align}
    \mathrm{Regret}(T) \lesssim \sqrt{\gamma_K(1/\sqrt{HK},\mathcal{X}) \cdot \log(HK|\mathcal{H}|/\delta)\cdot HK},
\end{align}
and for $V$-type bilinear class problem, it holds that with probability at least $1-\delta$,
\begin{align}
    \mathrm{Regret}(T) \lesssim \sqrt{\gamma_K(1/\sqrt{HK},\mathcal{X}) \cdot|\cA|\cdot  \log(HK|\mathcal{H}|/\delta)\cdot HK}.
\end{align}

\subsection{Examples of Model-based Online RL in MDPs}\label{subsec: example model based mdp}

\textbf{MDPs with low witness rank.} 
We consider the example of MDPs with low witness rank \citep{sun2019model, agarwal2022model}. 
To introduce, we define the function class $\mathcal{V} = \{\upsilon:\mathcal{S}\times\mathcal{A}\times\mathcal{S}\mapsto[0,1]\}$.

\begin{definition}[Q-type/V-type witness rank \citep{sun2019model, agarwal2022model}] An MDP is called of witness rank $d$ if for any two models $f,f'\in\mathcal{H}$, there exists mappings $X_h:\mathcal{H}\mapsto\mathbb{R}^d$ and $W_h:\mathcal{H}\mapsto\mathbb{R}^d$ for each timestep $h$ such that,
\begin{align*}
    \max_{\upsilon\in\mathcal{V}}\mathbb{E}_{x_h\sim \pi_f,a_h\sim \tilde{\pi}}\left[\left(\mathbb{E}_{x'\sim \mathbb{P}_{h,f'}(\cdot|x_h,a_h)} - \mathbb{E}_{x'\sim \mathbb{P}_{h,f^{\ast}}(\cdot|x_h,a_h)}\right)[\upsilon(x_h,a_h,x')]\right] &\geq \langle W_h(f'),X_h(f)\rangle,\\
    \kappa_{\mathrm{wit}}\cdot\mathbb{E}_{x_h\sim \pi_f,a_h\sim \tilde{\pi}}\left[\left(\mathbb{E}_{x'\sim \mathbb{P}_{h,f'}(\cdot|x_h,a_h)} - \mathbb{E}_{x'\sim \mathbb{P}_{h,f^{\ast}}(\cdot|x_h,a_h)}\right)[V_{h+1,f'}(x')]\right] &\leq \langle W_h(f'),X_h(f)\rangle,
\end{align*}
where $\tilde{\pi}$ is either $\pi_f$ for $Q$-type problems or $\pi_{f'}$ for $V$-type problems and $\kappa_{\mathrm{wit}}\in(0,1]$ is a constant. 
Also, we let $\sup_{f\in\mathcal{H},h\in[H]}\|W_h(f)\|\leq 1$ and  $\sup_{f\in\mathcal{H},h\in[H]}\|X_h(f)\|\leq 1$.
\end{definition}

By choosing the exploration policy as $\pi_{\mathrm{exp}}(f) = \pi_f$ for $Q$-type problems and $\pi_{\exp}(f) = \pi_f\circ_h \mathrm{Unif}(\mathcal{A})$ for $V$-type problems, we can bound the generalized eluder coefficient by the following lemma.

\begin{lemma}[GEC for low witness rank, Lemma 3.22 in \cite{zhong2022posterior}]\label{lem: gec low witness rank}
    Let the discrepancy function $\ell$ be chosen as \eqref{eq: example model based ell}. For $Q$-type problems, by choosing $\pi_{\mathrm{exp}}(f) = \pi_f$, we have that 
    \begin{align*}
        d_{\mathrm{GEC}}(\epsilon) \leq 4dH\cdot\log(1+K/(\epsilon \kappa^2_{\mathrm{wit}})) / \kappa^2_{\mathrm{wit}}.
    \end{align*}
    For $V$-type problems, by choosing $\pi_{\exp}(f) = \pi_f\circ_h \mathrm{Unif}(\mathcal{A})$, we have that
    \begin{align*}
         d_{\mathrm{GEC}}(\epsilon) \leq 4d|\mathcal{A}|H\cdot\log(1+K/(\epsilon \kappa^2_{\mathrm{wit}})) / \kappa^2_{\mathrm{wit}}.
    \end{align*}
\end{lemma}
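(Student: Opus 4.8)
The plan is to verify the defining inequality of the generalized eluder coefficient in Assumption \ref{ass: gec} directly, exploiting the bilinear structure granted by the witness rank together with a standard elliptical potential (pigeonhole) argument. Throughout, I write $f^k$ for the hypotheses produced along the run, and recall that in the model-based setting $\ell_{f^s}(f^k;\xi_h) = D_{\mathrm{H}}(\mathbb{P}_{h,f^k}(\cdot|x_h,a_h)\|\mathbb{P}_{h,f^{\ast}}(\cdot|x_h,a_h))$ by \eqref{eq: example model based ell}, and that $\pi_{\exp}(f^s) = \pi_{f^s}$ for $Q$-type problems. First I would apply the simulation lemma to rewrite the per-episode gap $V_{1,f^k}(x_1) - V_1^{\pi_{f^k}}(x_1)$ as a telescoping sum over $h\in[H]$ of on-policy one-step model-prediction errors of the form $\mathbb{E}_{x_h\sim\pi_{f^k},a_h\sim\pi_{f^k}}[(\mathbb{P}_{h,f^k}-\mathbb{P}_{h,f^{\ast}})V_{h+1,f^k}]$. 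The second inequality in the witness rank definition then bounds each such term by $\kappa_{\mathrm{wit}}^{-1}\langle W_h(f^k),X_h(f^k)\rangle$, yielding $\sum_{k}V_{1,f^k}-V_1^{\pi_{f^k}}\le \kappa_{\mathrm{wit}}^{-1}\sum_{h=1}^H\sum_{k=1}^K\langle W_h(f^k),X_h(f^k)\rangle$.

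Next I would use the first witness rank inequality to bound the bilinear form evaluated at a \emph{past} roll-in, $\langle W_h(f^k),X_h(f^s)\rangle$ with $s<k$, by the integral-probability-metric $\max_{\upsilon\in\mathcal{V}}\mathbb{E}_{\pi_{f^s}}[(\mathbb{P}_{h,f^k}-\mathbb{P}_{h,f^{\ast}})\upsilon]$, which is itself at most the expected total variation distance and hence, via $\mathrm{TV}\le\sqrt{2\,D_{\mathrm{H}}}$ (with the paper's squared-Hellinger convention) and Jensen, at most $\sqrt{2\,\mathbb{E}_{\xi_h\sim\pi_{\exp}(f^s)}[\ell_{f^s}(f^k;\xi_h)]}$. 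Thus the in-sample Hellinger discrepancy appearing on the right-hand side of Assumption \ref{ass: gec} directly controls the squared off-diagonal bilinear terms $\langle W_h(f^k),X_h(f^s)\rangle^2$.

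With $X_h(f^s)\in\mathbb{R}^d$ and $\|W_h\|,\|X_h\|\le 1$, I would then define the regularized Gram matrices $\Sigma_{k,h}=\lambda\mathbf{I}+\sum_{s<k}X_h(f^s)X_h(f^s)^\top$, write $\langle W_h(f^k),X_h(f^k)\rangle\le \|W_h(f^k)\|_{\Sigma_{k,h}}\|X_h(f^k)\|_{\Sigma_{k,h}^{-1}}$, and bound $\|W_h(f^k)\|_{\Sigma_{k,h}}^2\le \lambda + \sum_{s<k}\langle W_h(f^k),X_h(f^s)\rangle^2$ using the previous step. Summing over $k$ and $h$, applying Cauchy--Schwarz and the standard log-determinant (elliptical potential) lemma to $\sum_k \|X_h(f^k)\|_{\Sigma_{k,h}^{-1}}^2 \lesssim d\log(1+K/\lambda)$, and choosing the regularizer $\lambda\asymp \epsilon\kappa_{\mathrm{wit}}^2$ would produce the claimed bound $d_{\mathrm{GEC}}(\epsilon)\lesssim dH\log(1+K/(\epsilon\kappa_{\mathrm{wit}}^2))/\kappa_{\mathrm{wit}}^2$, once the free weight $\mu$ in Assumption \ref{ass: gec} is matched to the factor $\kappa_{\mathrm{wit}}^{-1}$ and the residual slack terms $\sqrt{d(\epsilon)HK}+\epsilon HK$ (which the assumption already permits) are absorbed.

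The hard part will be the second step: carefully aligning the roll-in and action distributions so that the witness rank's bilinear guarantee, which fixes $\tilde\pi=\pi_{f}$ for $Q$-type, matches the exploration policy $\pi_{\exp}(f^s)$ under which $\mathbb{E}[\ell]$ is measured, and then chaining the witnessed quantity through total variation to the Hellinger distance with the correct constants. For the $V$-type statement the same outline applies, except the witness guarantee uses $\tilde\pi=\pi_{f'}$ while data is collected with $\pi_{f}\circ_h\mathrm{Unif}(\mathcal{A})$; bridging this mismatch requires an importance-weighting change of measure at timestep $h$ whose density ratio is bounded by $|\mathcal{A}|$, which is exactly the source of the extra $|\mathcal{A}|$ factor. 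The cleanest route is to invoke Lemma 3.22 of \citet{zhong2022posterior}, which packages precisely this computation.
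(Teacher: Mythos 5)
Your proposal is correct and takes essentially the same route as the paper: the paper's own proof of this lemma is just a citation to Lemma 3.22 of \citet{zhong2022posterior}, whose argument is precisely your plan --- simulation-lemma decomposition, the witness-rank sandwich (the $\kappa_{\mathrm{wit}}$ inequality on the diagonal terms, the IPM-to-TV-to-Hellinger chain with Jensen on the off-diagonal terms), a regularized elliptical-potential bound with $\lambda\asymp\epsilon\kappa_{\mathrm{wit}}^2$, and the uniform-action change of measure contributing the $|\mathcal{A}|$ factor in the $V$-type case. The same template also appears self-contained in this paper's proof of the Markov-game analogue, Proposition~\ref{prop:mg_bilinear}, so your sketch matches the intended argument in both substance and detail.
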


\begin{proof}[Proof of Lemma \ref{lem: gec low witness rank}]
    See Lemma 3.22 in \cite{zhong2022posterior} for a detailed proof.
\end{proof}

By combining Lemma \ref{lem: gec low witness rank} and Corollary \ref{cor: regret model based mdp}, we know that
For $Q$-type low witness rank problem, it holds that with probability at least $1-\delta$,
\begin{align}
    \mathrm{Regret}(K) \lesssim \sqrt{4dH^2K\cdot\log(H|\mathcal{H}|/\delta)\cdot\log(1+H^{1/2}K^{3/2}/ \kappa^2_{\mathrm{wit}}) / \kappa^2_{\mathrm{wit}}},
\end{align}
and for $V$-type low witness rank problem, it holds that with probability at least $1-\delta$,
\begin{align}
    \mathrm{Regret}(K) \lesssim \sqrt{4d|\mathcal{A}|H^2K\cdot\log(H|\mathcal{H}|/\delta)\cdot\log(1+H^{1/2}K^{3/2}/ \kappa^2_{\mathrm{wit}}) / \kappa^2_{\mathrm{wit}}}.
\end{align}

\subsection{Proof of Proposition \ref{prop: supervised learning model free rl}}\label{subsec: proof prop supervised learning model free rl}

\begin{proof}[Proof of Proposition \ref{prop: supervised learning model free rl}]
    To prove Proposition \ref{prop: supervised learning model free rl}, we define the random variables $X_{h,f}^k$ as 
    \begin{align}\label{eq: proof supervised learning model free rl 0}
        X_{h,f}^k = l_{f^k}((f_{h},f_{h+1});\mathcal{D}_h^k)^2 - l_{f^k}((\mathcal{P}_hf_{h+1},f_{h+1});\mathcal{D}_h^k)^2,
    \end{align}
    for any $f\in\mathcal{H}$, where the operator $\mathcal{P}_h$ is introduced in Assumption \ref{ass: l function}.
    We first show that $X_{h,f}^k$ is an unbiased estimator of the discrepancy function $\ell_{f^k}(f)$.
    Consider that
    \begin{align}
        l_{f^k}((f_{h},f_{h+1});\mathcal{D}_h^k)^2& = \left(l_{f^k}((f_{h},f_{h+1});\mathcal{D}_h^k) - l_{f^k}((\mathcal{P}_hf_{h+1},f_{h+1});\mathcal{D}_h^k) + l_{f^k}((\mathcal{P}_hf_{h+1},f_{h+1});\mathcal{D}_h^k)\right)^2\notag \\
        &= \left(\mathbb{E}_{x_{h+1}^k\sim\mathbb{P}_h(\cdot|x_h^k,a_h^k)}[l_{f^k}((f_{h},f_{h+1});\mathcal{D}_h^k)] + l_{f^k}((\mathcal{P}_hf_{h+1},f_{h+1});\mathcal{D}_h^k) \right)^2 \notag\\
         &= \left(\mathbb{E}_{x_{h+1}^k\sim\mathbb{P}_h(\cdot|x_h^k,a_h^k)}[l_{f^k}((f_{h},f_{h+1});\mathcal{D}_h^k)]\right)^2 + l_{f^k}((\mathcal{P}_hf_{h+1},f_{h+1});\mathcal{D}_h^k)^2 \notag\\
        &\qquad + 2 \mathbb{E}_{x_{h+1}^k\sim\mathbb{P}_h(\cdot|x_h^k,a_h^k)}[l_{f^k}((f_{h},f_{h+1});\mathcal{D}_h^k)]\cdot l_{f^k}((\mathcal{P}_hf_{h+1},f_{h+1});\mathcal{D}_h^k),\label{eq: proof supervised learning model free rl 1}
    \end{align}
    where in the second equality we apply the generalized Bellman completeness condition in Assumption \ref{ass: l function}.
    By the generalized Bellman completeness condition again, we also have that in \eqref{eq: proof supervised learning model free rl 1},
    \begin{align}
        &\mathbb{E}_{x^k_{h+1}\sim \mathbb{P}_h(\cdot|x_h^k,a_h^k)}\left[\mathbb{E}_{x_{h+1}^k\sim\mathbb{P}_h(\cdot|x_h^k,a_h^k)}[l_{f^k}((f_{h},f_{h+1});\mathcal{D}_h^k)]\cdot l_{f^k}((\mathcal{P}_hf_{h+1},f_{h+1});\mathcal{D}_h^k) \right]\notag\\
        &\qquad =\mathbb{E}_{x_{h+1}^k\sim\mathbb{P}_h(\cdot|x_h^k,a_h^k)}[l_{f^k}((f_{h},f_{h+1});\mathcal{D}_h^k)]\cdot \mathbb{E}_{x_{h+1}^k\sim\mathbb{P}_h(\cdot|x_h^k,a_h^k)}[l_{f^k}((\mathcal{P}_hf_{h+1},f_{h+1});\mathcal{D}_h^k) ]\notag\\
        &\qquad =\mathbb{E}_{x_{h+1}^k\sim\mathbb{P}_h(\cdot|x_h^k,a_h^k)}[l_{f^k}((f_{h},f_{h+1});\mathcal{D}_h^k)]\notag\\
        &\qquad\qquad \cdot \mathbb{E}_{x_{h+1}^k\sim\mathbb{P}_h(\cdot|x_h^k,a_h^k)}\left[l_{f^k}((f_h,f_{h+1});\mathcal{D}_h^k) - \mathbb{E}_{x_{h+1}^k\sim\mathbb{P}_h(\cdot|x_h^k,a_h^k)}[l_{f^k}((f_{h},f_{h+1});\mathcal{D}_h^k)]\right] \notag\\
        &\qquad = 0.\label{eq: proof supervised learning model free rl 2}
    \end{align}
    Thus by combining \eqref{eq: proof supervised learning model free rl 1} and \eqref{eq: proof supervised learning model free rl 2}, we can derive that 
    \begin{align}
        &\mathbb{E}_{x^k_{h+1}\sim \mathbb{P}_h(\cdot|x_h^k,a_h^k)}[X_{h,f}^k] = \left(\mathbb{E}_{x_{h+1}^k\sim\mathbb{P}_h(\cdot|x_h^k,a_h^k)}[l_{f^k}((f_{h},f_{h+1});\mathcal{D}_h^k)]\right)^2 = \ell_{f^k}(f;\mathcal{D}_h^k),
    \end{align}
    Now for each timestep $h$, we define a filtration $\{\mathcal{F}_{h,k}\}_{k=1}^K$, with
    \begin{align}\label{eq: filtration model free mdp}
        \mathcal{F}_{h,k} = \sigma\left(\bigcup_{s=1}^{k}\bigcup_{h=1}^H\mathcal{D}_h^s\right),
    \end{align}
    where $\mathcal{D}_h^s = \{x_h^s,a_h^s,r_h^s,x_{h+1}^s\}$. 
    From previous arguments, we can derive that 
    \begin{align}\label{eq: proof supervised learning model free rl 2+}
        \mathbb{E}[X_{h,f}^k|\mathcal{F}_{h,k-1}] = \mathbb{E}\left[\mathbb{E}_{x^k_{h+1}\sim \mathbb{P}_h(\cdot|x_h^k,a_h^k)}[X_{h,f^k}] \middle| \mathcal{F}_{h,k-1}\right] = \mathbb{E}_{\xi_h\sim \pi_{\exp}(f^k)}[ \ell_{f^k}(f;\xi_h)].
    \end{align}
    and that
    \begin{align}\label{eq: proof supervised learning model free rl 2++}
        \mathbb{V}[X_{h,f}^k|\mathcal{F}_{h,k-1}] \leq \mathbb{E}[(X_{h,f}^k)^2|\mathcal{F}_{h,k-1}] \leq 4B_l^2\mathbb{E}[X_{h,f}^k|\mathcal{F}_{h,k-1}] = 4B_l^2\mathbb{E}_{\xi_h\sim \pi_{\exp}(f^k)}[ \ell_{f^k}(f;\xi_h)],
    \end{align}
    where $B_l$ is the upper bound of $l$ defined in Assumption \ref{ass: l function}.
    By applying Lemma \ref{lem: freedman}, \eqref{eq: proof supervised learning model free rl 2+}, and \eqref{eq: proof supervised learning model free rl 2++}, we can obtain that with probability at least $1-\delta$, for any $(h,k)\in[H]\times[K]$, $(f_h,f_{h+1})\in\mathcal{H}_h\times\mathcal{H}_{h+1}$\footnote{Here $l_{f^s}((f_h,f_{h+1});\cD_h^s)$ and $\ell_{f^s}(f;\xi_h)$ depend on $f$ only through $(f_h, f_{h+1})$.},
    \begin{align}\label{eq: proof supervised learning model free rl 3}
        \left|\sum_{s=1}^{k-1}\mathbb{E}_{\xi_h\sim \pi_{\exp}(f^s)}[ \ell_{f^s}(f;\xi_h)] - \sum_{s=1}^{k-1}X_{h,f}^s\right| \lesssim \frac{1}{2}\sum_{s=1}^{k-1}\mathbb{E}_{\xi_h\sim \pi_{\exp}(f^s)}[ \ell_{f^s}(f;\xi_h)] +  8B_l^2\log(HK|\mathcal{H}_h||\mathcal{H}_{h+1}|/\delta).
    \end{align}
    Rearranging terms in \eqref{eq: proof supervised learning model free rl 3}, we can further obtain that 
    \begin{align}\label{eq: proof supervised learning model free rl 4}
        - \sum_{s=1}^{k-1}X_{h,f}^s \lesssim -\frac{1}{2}\sum_{s=1}^{k-1}\mathbb{E}_{\xi_h\sim \pi_{\exp}(f^s)}[ \ell_{f^s}(f;\xi_h)] +  8B_l^2\log(HK|\mathcal{H}_h||\mathcal{H}_{h+1}|/\delta).
    \end{align}
    Meanwhile, by the definition of $X_{h,f}^k$ in \eqref{eq: proof supervised learning model free rl 0} and the loss function $L$ in \eqref{eq: implicit optimism mdp free based L}, we have that 
    \begin{align}\label{eq: proof supervised learning model free rl 5}
        \sum_{s=1}^{k-1}X_{h,f}^s &= \sum_{s=1}^{k-1}l_{f^s}((f_{h},f_{h+1}),\mathcal{D}_h^s)^2 -\sum_{s=1}^{k-1}l_{f^k}((\mathcal{P}_hf_{h+1},f_{h+1}),\mathcal{D}_h^s)^2\notag\\
        &\leq \sum_{s=1}^{k-1} l_{f^s}((f_{h},f_{h+1}),\mathcal{D}_h^s)^2 - \inf_{f^{\prime}_h\in\mathcal{F}} \sum_{s=1}^{k-1} l_{f^s} ((f^{\prime}_h,f_{h+1}),\mathcal{D}_h^s)^2\notag\\
        &=L_h^{k-1}(f).
    \end{align}
    Thus by \eqref{eq: proof supervised learning model free rl 4} and \eqref{eq: proof supervised learning model free rl 5}, we can derive that with probability at least $1-\delta$, for any $f\in\mathcal{H}$, $k\in[K]$,
    \begin{align}\label{eq: proof supervised learning model free rl 6}
        -\sum_{h=1}^HL_h^{k-1}(f) \lesssim -\frac{1}{2}\sum_{h=1}^H\sum_{s=1}^{k-1}\mathbb{E}_{\xi_h\sim \pi_{\exp}(f^s)}[ \ell_{f^s}(f;\xi_h)] +  8HB_l^2\log\left(HK/\delta\right) + 16B_l^2\log(|\mathcal{H}|).
    \end{align}
    Finally, we deal with the term $L_h^{k-1}(f^{\ast})$. 
    To this end, we invoke the following lemma.

    \begin{lemma} \label{lem: f star model free mdp}
        With probability at least $1-\delta$, it holds that for each $k\in[K]$,
        \begin{align*}
            \sum_{h=1}^HL^{k-1}_h(f^{\ast}) \lesssim 8HB_l^2\log\left(HK|\mathcal{H}|/\delta\right) + 16B_l^2\log(|\mathcal{H}|).
        \end{align*}
    \end{lemma}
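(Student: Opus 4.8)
The plan is to bound $L_h^{k-1}(f^{\ast})$ for each $h$ separately and then sum over $h\in[H]$, with a final union bound absorbing the dependence on $h$, $k$, and the comparator. First I would observe that since $f_h^{\ast}\in\mathcal{H}_h$ is itself an admissible choice in the infimum defining \eqref{eq: implicit optimism mdp free based L}, the quantity $L_h^{k-1}(f^{\ast})$ is automatically nonnegative, so only an upper bound is at stake. The engine of that bound is the generalized Bellman completeness condition of Assumption \ref{ass: l function} together with the identity $\mathcal{P}_h f_{h+1}^{\ast}=f_h^{\ast}$. Writing $Z_h^s:=l_{f^s}((f_h^{\ast},f_{h+1}^{\ast});\mathcal{D}_h^s)$ for the ``noise'' term and, for any comparator $f_h'\in\mathcal{H}_h$, $\bar Y_h^s(f_h'):=\mathbb{E}_{x_{h+1}^s\sim\mathbb{P}_h(\cdot|x_h^s,a_h^s)}[l_{f^s}((f_h',f_{h+1}^{\ast});\mathcal{D}_h^s)]$, the completeness identity applied with $f_{h+1}=f_{h+1}^{\ast}$ (so that $l_{f^s}((\mathcal{P}_hf_{h+1}^{\ast},f_{h+1}^{\ast});\mathcal{D}_h^s)=Z_h^s$) yields the clean decomposition
\begin{align*}
    l_{f^s}\big((f_h',f_{h+1}^{\ast});\mathcal{D}_h^s\big) = Z_h^s + \bar Y_h^s(f_h'),
\end{align*}
where, crucially, the noise $Z_h^s$ is the same for every comparator and satisfies $\mathbb{E}_{x_{h+1}^s}[Z_h^s]=0$ (the $f=f^{\ast}$ specialization of the unbiasedness argument in the proof of Proposition \ref{prop: supervised learning model free rl}).

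Squaring and summing this decomposition, I would rewrite
\begin{align*}
    L_h^{k-1}(f^{\ast}) = \sup_{f_h'\in\mathcal{H}_h}\left\{\sum_{s=1}^{k-1}(Z_h^s)^2 - \sum_{s=1}^{k-1}l_{f^s}\big((f_h',f_{h+1}^{\ast});\mathcal{D}_h^s\big)^2\right\} = \sup_{f_h'\in\mathcal{H}_h}\left\{-2\sum_{s=1}^{k-1}Z_h^s\,\bar Y_h^s(f_h') - \sum_{s=1}^{k-1}\bar Y_h^s(f_h')^2\right\}.
\end{align*}
The point is that the cross term is a sum of martingale differences with respect to the filtration $\{\mathcal{F}_{h,k}\}$ of \eqref{eq: filtration model free mdp}: since $\bar Y_h^s(f_h')$ is measurable before $x_{h+1}^s$ is drawn while $\mathbb{E}_{x_{h+1}^s}[Z_h^s]=0$, each summand has conditional mean zero, conditional variance at most $B_l^2\,\bar Y_h^s(f_h')^2$, and magnitude at most $B_l^2$ (using $|Z_h^s|,|\bar Y_h^s(f_h')|\le B_l$ from the boundedness part of Assumption \ref{ass: l function}).

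Next I would apply the Freedman-type bound (Lemma \ref{lem: freedman}) to this martingale, for a fixed comparator $f_h'$, exactly as in \eqref{eq: proof supervised learning model free rl 3}. Combined with the AM–GM inequality to split $\sqrt{B_l^2(\sum_s\bar Y_h^s(f_h')^2)\log(\cdot)}$, this gives, with high probability, $2|\sum_s Z_h^s\bar Y_h^s(f_h')|\le \tfrac12\sum_s\bar Y_h^s(f_h')^2 + C B_l^2\log(\cdot)$. Substituting back, the $\tfrac12\sum_s\bar Y_h^s(f_h')^2$ is dominated by the $-\sum_s\bar Y_h^s(f_h')^2$ already present, so the whole bracket is at most $C B_l^2\log(\cdot)$ uniformly; the self-bounding variance is precisely what lets the concentration fluctuation be reabsorbed rather than left as a free $\sqrt{\cdot}$ term. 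A union bound over the finitely many comparators $f_h'\in\mathcal{H}_h$ (this is where $\log|\mathcal{H}|$ enters, since the infimizer is data-dependent), over $h\in[H]$, and over $k\in[K]$ upgrades the fixed-$f_h'$ statement to the claimed $8HB_l^2\log(HK|\mathcal{H}|/\delta)+16B_l^2\log(|\mathcal{H}|)$ after summing the per-$h$ bounds.

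The main obstacle I anticipate is this self-bounding martingale step: one must verify that the variance proxy of the cross term scales with $\sum_s \bar Y_h^s(f_h')^2$ (not merely with a crude $K B_l^4$ bound), so that Freedman's inequality produces a fluctuation that can be charged against the negative quadratic term and does not accumulate with $k$. This is the same fast-rate mechanism underlying the subtracted infimum in \eqref{eq: implicit optimism mdp free based L}, and the decomposition $l_{f^s}((f_h',f_{h+1}^{\ast});\mathcal{D}_h^s)=Z_h^s+\bar Y_h^s(f_h')$ — valid only because $\mathcal{P}_h f_{h+1}^{\ast}=f_h^{\ast}$ pins the noise term — is exactly what makes it available.
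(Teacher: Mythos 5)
Your proposal is correct and takes essentially the same route as the paper: the paper's proof defines $W_{h,f}^s = l_{f^s}\big((f_h,f^{\ast}_{h+1});\mathcal{D}_h^s\big)^2 - l_{f^s}\big((f^{\ast}_h,f^{\ast}_{h+1});\mathcal{D}_h^s\big)^2$, uses generalized Bellman completeness with $\mathcal{P}_hf^{\ast}_{h+1}=f^{\ast}_h$ to show its conditional mean is $\big(\mathbb{E}_{x_{h+1}}[l_{f^s}((f_h,f^{\ast}_{h+1});\mathcal{D}_h)]\big)^2$ with variance self-bounded by $4B_l^2$ times that mean, and then applies Freedman's inequality, the elementary bound $-x^2+ax\le a^2/4$, and a union bound over $\mathcal{H}_h$, $h\in[H]$, $k\in[K]$ — exactly your $Z_h^s+\bar Y_h^s(f_h')$ decomposition in difference-of-squares form. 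The only cosmetic deviation is that the paper centers the full difference of squares and absorbs the fluctuation into the population quadratic $\sum_s\mathbb{E}\big[\bar Y_h^s(f_h')^2\,\big|\,\mathcal{F}_{s-1}\big]$, whereas you apply Freedman to the cross term alone and absorb into the realized $\sum_s\bar Y_h^s(f_h')^2$, which your within-episode predictability remark (that $\bar Y_h^s$ is measurable before $x_{h+1}^s$ is drawn) renders legitimate.
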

    
    \begin{proof}[Proof of Lemma \ref{lem: f star model free mdp}]
        To prove Lemma \ref{lem: f star model free mdp}, we define the random variables $W_{h,f}^k$ as 
        \begin{align*}
            W_{h,f}^k = l_{f^k}((f_h,f_{h+1}^{\ast});\mathcal{D}_h^k)^2 - l_{f^k}((f_h^{\ast},f_{h+1}^{\ast});\mathcal{D}_h^k)^2.
        \end{align*}
        Using the same argument as 
        \eqref{eq: proof supervised learning model free rl 1} and \eqref{eq: proof supervised learning model free rl 2}, together with the condition $\mathcal{P}_{h}f^{\ast}_{h+1} = f_h^{\ast}$ in Assumption~\ref{ass: l function}, we can show that 
        \begin{align}
            \mathbb{E}_{x_{h+1}^k\sim \mathbb{P}_h(\cdot|x_h^k,a_h^k)}[W_{h,f}^k] = \left( \mathbb{E}_{x_{h+1}^k\sim \mathbb{P}_h(\cdot|x_h^k,a_h^k)}[l_{f^k}((f_h,f_{h+1}^{\ast});\mathcal{D}_h)]\right)^2.
        \end{align}
        Under the filtration $\{\mathcal{F}_{h,k}\}_{k=1}^K$ defined in the proof of Proposition \ref{prop: supervised learning model free rl}, i.e, \eqref{eq: filtration model free mdp}, one can derive that 
        \begin{align}
            \mathbb{E}[W_{h,f}^k|\mathcal{F}_{h,k-1}] &= \mathbb{E}\left[\mathbb{E}_{x^k_{h+1}\sim \mathbb{P}_h(\cdot|x_h^k,a_h^k)}[W_{h,f^k}] \middle| \mathcal{F}_{h,k-1}\right] \notag \\
            &= \mathbb{E}_{\mathcal{D}_h\sim \pi_{\exp}(f^k)}\left[\left( \mathbb{E}_{x_{h+1}\sim \mathbb{P}_h(\cdot|x_h,a_h)}[l_{f^k}((f_h,f_{h+1}^{\ast});\mathcal{D}_h)]\right)^2\right],\label{eq: proof supervised learning model free rl star 2+}
        \end{align}
        and that
        \begin{align}
        \mathbb{V}[W_{h,f}^k|\mathcal{F}_{h,k-1}] &  \leq 4B_l^2\mathbb{E}[X_{h,f}^k|\mathcal{F}_{h,k-1}] \notag \\
        &= 4B_l^2\mathbb{E}_{\mathcal{D}_h\sim \pi_{\exp}(f^k)}\left[\left( \mathbb{E}_{x_{h+1}\sim \mathbb{P}_h(\cdot|x_h,a_h)}[l_{f^k}((f_h,f_{h+1}^{\ast});\mathcal{D}_h)]\right)^2\right].\label{eq: proof supervised learning model free rl star 2++}
    \end{align}
    By applying Lemma \ref{lem: freedman}, \eqref{eq: proof supervised learning model free rl star 2+}, and \eqref{eq: proof supervised learning model free rl star 2++}, we obtain that with probability at least $1-\delta$, for any $(h,k)\in[H]\times[K]$ and $(f_h, f_{h+1})\in\mathcal{H}_h\times\mathcal{H}_{h+1}$, 
    \begin{align*}
        &\left|\sum_{s=1}^{k-1}W_{h,f}^s - \sum_{s=1}^{k-1}\mathbb{E}_{\mathcal{D}_h\sim \pi_{\exp}(f^k)}\left[\left( \mathbb{E}_{x_{h+1}\sim \mathbb{P}_h(\cdot|x_h,a_h)}[l_{f^s}((f_h,f_{h+1}^{\ast});\mathcal{D}_h)]\right)^2\right]\right| \lesssim 4B_l^2\log(HK|\mathcal{H}_h||\mathcal{H}_{h+1}|/\delta)\notag \\
        & \qquad +\sqrt{\log(HK|\mathcal{H}_h||\mathcal{H}_{h+1}|/\delta)\cdot \sum_{s=1}^{k-1}\mathbb{E}_{\mathcal{D}_h\sim \pi_{\exp}(f^s)}\left[\left( \mathbb{E}_{x_{h+1}\sim \mathbb{P}_h(\cdot|x_h,a_h)}[l_{f^k}((f_h,f_{h+1}^{\ast});\mathcal{D}_h)]\right)^2\right]}.
    \end{align*}
    Rearranging terms, we have that with probability at least $1-\delta$, for any $f\in\mathcal{H}$, $(h,k)\in[H]\times[K]$,
    \begin{align*}
        -\sum_{s=1}^{k-1}W_{h,f}^s &\lesssim 4B_l^2\log(HK|\mathcal{H}_h||\mathcal{H}_{h+1}|/\delta) - \sum_{s=1}^{k-1}\mathbb{E}_{\mathcal{D}_h\sim \pi_{\exp}(f^s)}\left[\left( \mathbb{E}_{x_{h+1}\sim \mathbb{P}_h(\cdot|x_h,a_h)}[l_{f^s}((f_h,f_{h+1}^{\ast});\mathcal{D}_h)]\right)^2\right] \\
        &\qquad + \sqrt{\log(HK|\mathcal{H}_h||\mathcal{H}_{h+1}|/\delta)\cdot \sum_{s=1}^{k-1}\mathbb{E}_{\mathcal{D}_h\sim \pi_{\exp}(f^s)}\left[\left( \mathbb{E}_{x_{h+1}\sim \mathbb{P}_h(\cdot|x_h,a_h)}[l_{f^s}((f_h,f_{h+1}^{\ast});\mathcal{D}_h)]\right)^2\right]} \\
        & \lesssim 8B_l^2\log(HK|\mathcal{H}_h||\mathcal{H}_{h+1}|/\delta),
    \end{align*}
    where in the second inequality we use the inequality $-x^2 + ax \leq a^2/4$.
    Thus, with probability at least $1-\delta$, for any $k\in[K]$, it holds that
    \begin{align*}
        \sum_{h=1}^HL_h^{k-1}(f^{\ast}) 
         &= \sum_{h=1}^H\left(\sum_{s=1}^{k-1}  l_{f^k}((f_h^{\ast},f_{h+1}^{\ast});\mathcal{D}_h^s)^2 - \inf_{f_h\in\mathcal{H}_h} \sum_{s=1}^{k-1}l_{f^k}((f_h,f_{h+1}^{\ast});\mathcal{D}_h^s)^2 \right) \\
         & =\sum_{h=1}^H\sup_{f_h\in\mathcal{H}_h}\sum_{s=1}^{k-1}  -W_{h,f}^s  \lesssim 8HB_l^2\log(HK/\delta) + 16B_l^2\log(|\mathcal{H}|).
    \end{align*}
    This finishes the proof of Lemma \ref{lem: f star model free mdp}.
    \end{proof}

    Finally, combining \eqref{eq: proof supervised learning model free rl 6} and Lemma \ref{lem: f star model free mdp}, with probability at least $1-\delta$, for any $f\in\mathcal{H}$, $k\in[K]$,
    \begin{align*}
        \sum_{h=1}^HL_h^{k-1}(f^{\ast}) - L_h^{k-1}(f) \lesssim -\frac{1}{2}\sum_{h=1}^H\sum_{s=1}^{k-1}\mathbb{E}_{\xi_h\sim \pi_{\exp}(f^s)}[ \ell_{f^s}(f;\xi_h)] +  16HB_l^2\log(HK/\delta) + 32B_l^2\log(|\mathcal{H}|).
    \end{align*}
    This finishes the proof of Proposition \ref{prop: supervised learning model free rl}.
\end{proof}

\subsection{Proof of Proposition \ref{prop: supervised learning model based rl}}\label{subsec: proof prop supervised learning model based rl}

\begin{proof}[Proof of Proposition \ref{prop: supervised learning model based rl}]
    For notational simplicity, given $f\in\mathcal{H}$, we denote the random variables $X_{h,f}^k$ as 
    \begin{align}\label{eq: X model-based}
        X_{h,f}^{k} = \log\left(\frac{\mathbb{P}_{h,f^{\ast}}(x_{h+1}^k|x_h^k,a_h^k)}{\mathbb{P}_{h,f}(x_{h+1}^k|x_h^k,a_h^k)}\right).
    \end{align}
    Then by the definition of $L_h^k$ in \eqref{eq: implicit optimism mdp model based L}, we have that,
    \begin{align}
        \sum_{h=1}^HL_h^{k-1}(f^{\ast}) - L_h^{k-1}(f) = -\sum_{h=1}^H\sum_{s=1}^{k-1}X_{h,f}^s.
    \end{align}
    Now we define a filtration $\{\mathcal{F}_{h,k}\}_{k=1}^K$ for each step $h\in[H]$ with 
    \begin{align}
        \mathcal{F}_{h,k} = \sigma\left(\bigcup_{s=1}^k\bigcup_{h=1}^H\mathcal{D}_h^s\right).
    \end{align}
    Then by \eqref{eq: X model-based} we know that $X_{h,f}^k\in\mathcal{F}_{h,k}$ for any $(h,k)\in[H]\times[K]$.
    Therefore, by applying Lemma \ref{lem: concentration}, we have that with probability at least $1-\delta$, for any $(h,k)\in[H]\times[K]$ and $f_h\in\mathcal{H}_h$,
    \begin{align}\label{eq: proof model based supervised concentration}
         -\frac{1}{2}\sum_{s=1}^{k-1}X_{h,f}^s \leq \sum_{s=1}^{k-1}\log\mathbb{E}\left[\exp\left\{-\frac{1}{2}X_{h,f}^s \right\}\middle| \mathcal{F}_{s-1}\right] + \log(H|\mathcal{H}_h|/\delta).
    \end{align}
    Meanwhile, we can calculate that in \eqref{eq: proof model based supervised concentration}, the conditional expectation equals to
    \begin{align}
        \mathbb{E}\left[\exp\left\{-\frac{1}{2}X_{h,f}^s\right\} \middle| \mathcal{F}_{s-1}\right]& = \mathbb{E}\left[\sqrt{\frac{\mathbb{P}_{h,f}(x_{h+1}^s|x_h^s,a_h^s)}{\mathbb{P}_{h,f^{\ast}}(x_{h+1}^s|x_h^s,a_h^s)}} \middle| \mathcal{F}_{s-1}\right] \notag\\
        &= \mathbb{E}_{(x_h^s,a_h^s)\sim \pi_{\mathrm{exp}}(f^s),x_{h+1}^s\sim \mathbb{P}_{h,f^{\ast}}(\cdot|x_h^s,a_h^s)}\left[\sqrt{\frac{\mathbb{P}_{h,f}(x_{h+1}^s|x_h^s,a_h^s)}{\mathbb{P}_{h,f^{\ast}}(x_{h+1}^s|x_h^s,a_h^s)}}\right] \notag\\
        & = \mathbb{E}_{(x_h^s,a_h^s)\sim \pi_{\mathrm{exp}}(f^s)}\left[\int_{\mathcal{S}}\sqrt{\mathbb{P}_{h,f}(x_{h+1}^s|x_h^s,a_h^s)\cdot\mathbb{P}_{h,f^{\ast}}(x_{h+1}^s|x_h^s,a_h^s)}\mathrm{d}x_{h+1}^s\right]\notag\\
        & = 1 - \frac{1}{2}\mathbb{E}_{(x_h^s,a_h^s)\sim \pi_{\mathrm{exp}}(f^s)}\left[\int_{\mathcal{S}}\left(\sqrt{\mathbb{P}_{h,f}(x_{h+1}^s|x_h^s,a_h^s)} - \sqrt{\mathbb{P}_{h,f^{\ast}}(x_{h+1}^s|x_h^s,a_h^s)}\right)^2\mathrm{d}x_{h+1}^s\right]\notag\\
        & = 1 - \mathbb{E}_{(x_h^s,a_h^s)\sim \pi_{\mathrm{exp}}(f^s)}\Big[D_{\mathrm{H}}(\mathbb{P}_{h,f^{\ast}}(\cdot|x_h^s,a_h^s)\|\mathbb{P}_{h,f}(\cdot|x_h^s,a_h^s))\Big],\label{eq: proof model based supervised DH}
    \end{align}
    where the first equality uses the definition of $X_{h,f}^s$ in \eqref{eq: X model-based}, the second equality is due to the fact that $\xi^s_h\sim \pi^s$ and $\pi^s\in\mathcal{F}_{s-1}$, and the last equality uses the definition of Hellinger distance $D_{\mathrm{H}}$.
    Thus by combining \eqref{eq: proof model based supervised concentration} and \eqref{eq: proof model based supervised DH}, we can derive that 
    \begin{align*}
        -\frac{1}{2}\sum_{s=1}^{k-1}X_{h,f}^s &\leq \sum_{s=1}^{k-1} \mathbb{E}\left[\exp\left\{-\frac{1}{2}X_{h,f}^s\right\} \middle| \mathcal{F}_{s-1}\right] -1 + \log(H|\mathcal{H}_h|/\delta) \notag\\
        &  = - \sum_{s=1}^{k-1}\mathbb{E}_{(x_h^s,a_h^s)\sim \pi_{\mathrm{exp}}(f^s)}\left[D_{\mathrm{H}}(\mathbb{P}_{h,f^{\ast}}(\cdot|x_h^s,a_h^s)\|\mathbb{P}_{h,f}(\cdot|x_h^s,a_h^s))\right] + \log(H|\mathcal{H}_h|/\delta), \notag
    \end{align*}
    where in the first inequality we use the fact that $\log(x)\leq x-1$.
    Finally, by plugging in the definition of $X_{h,f}^s$, summing over $h\in[H]$, we have that with probability at least $1-\delta$, for any $f\in\mathcal{H}$, any $k\in[K]$, it holds that
    \begin{align}
        &\sum_{h=1}^HL_h^{k-1}(f^{\ast}) - L_h^{k-1}(f) = -\sum_{h=1}^H\sum_{s=1}^{k-1}X_{h,f}^s\notag\\ 
        & \qquad \leq -2\sum_{h=1}^H\sum_{s=1}^{k-1} \mathbb{E}_{(x_h^s,a_h^s)\sim \pi_{\mathrm{exp}}(f^s)}\left[D_{\mathrm{H}}(\mathbb{P}_{h,f^{\ast}}(\cdot|x_h^s,a_h^s)\|\mathbb{P}_{h,f}(\cdot|x_h^s,a_h^s))\right] + 2H\log(H/\delta) + 2\log(|\mathcal{H}|), \notag\\
        &\qquad = - 2\sum_{h=1}^H\sum_{s=1}^{k-1}\mathbb{E}_{\xi_h\sim \pi_{\mathrm{exp}}(f^s)}[\ell_{f^s}(f;\xi_h)] + 2H\log(H/\delta) + 2\log(|\mathcal{H}|).
    \end{align}
    This finishes the proof of Proposition \ref{prop: supervised learning model based rl}.
\end{proof}

\section{Proofs for Model-free and Model-based Online RL in Two-player Zero-sum MGs}

\subsection{Proof of Proposition \ref{prop: tgec linear mg}}\label{app:linear MG1}
\begin{proof}[Proof of Proposition \ref{prop: tgec linear mg}]
    To begin with, we need to introduce the performance difference lemma in two-player zero-sum MG, which are presented in Lemma 1 and Lemma 2 in \citet{xiong22b}. 
    \begin{lemma}[Value decomposition for the max-player] \label{lem: decomposition max}Let $\mu=\mu_f$ and $\nu$ be an arbitrary policy taken by the min-player. It holds that
\begin{align}
V_{1,f}\left(x_1\right)-V_1^{\mu, \nu}\left(x_1\right) 
& \leq \sum_{h=1}^H \mathbb{E}_{\xi_h\sim(\mu, \nu)}\left[ \mathcal{E}_h(f_h, f_{h+1}; \xi_h)\right]\label{ineq:11}
\end{align}
where max-player Bellman error $\mathcal{E}_h(f_h, f_{h+1}  ; \xi_h)$ is defined as 
\begin{align}
    \mathcal{E}_h(f_h, f_{h+1}; \xi_h) = Q_{h,f}(x_h,a_h,b_h) - r_h - (\mathbb{P}_h V_{h+1,f})(x_h,a_h,b_h), \label{eq:,g_be}
\end{align}
and $\xi_h = (x_h,a_h,b_h,r_h)$. (Actually, this coincides with the NE Bellman error defined in \eqref{eq: ne bellman}.)
\label{lem:mg_perform_1}
\end{lemma}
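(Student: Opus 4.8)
The plan is to establish a one-step recursion for the value gap $V_{h,f}(x) - V_h^{\mu,\nu}(x)$ and then telescope it across the horizon, taking expectations along a trajectory generated under the true model. First I would exploit the defining property of $\mu = \mu_f$: by Example \ref{exp: model free mg}, the NE max-player policy $\mu_{h,f}$ is the argmax in $V_{h,f}(x) = \sup_{\mu_h}\inf_{\nu_h}\mathbb{D}_{(\mu_h,\nu_h)}Q_{h,f}(x)$, so that $V_{h,f}(x) = \inf_{\nu_h'}\mathbb{D}_{(\mu_{h,f},\nu_h')}Q_{h,f}(x)$. Since this is an infimum over all min-player policies, instantiating it at the particular $\nu_h$ supplied by the min-player yields the one-sided bound
\[
    V_{h,f}(x) \le \mathbb{D}_{(\mu_h,\nu_h)}Q_{h,f}(x).
\]

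Second, I would combine this with the Bellman consistency equation for the true value under the real environment, $V_h^{\mu,\nu}(x) = \mathbb{D}_{(\mu_h,\nu_h)}[r_h + \mathbb{P}_h V_{h+1}^{\mu,\nu}](x)$, and then insert and subtract the term $\mathbb{P}_h V_{h+1,f}$. This splits the gap into the averaged NE Bellman error $\mathbb{D}_{(\mu_h,\nu_h)}[\mathcal{E}_h(f_h,f_{h+1};\cdot)](x)$ and a propagation term $\mathbb{D}_{(\mu_h,\nu_h)}\mathbb{P}_h[V_{h+1,f} - V_{h+1}^{\mu,\nu}](x)$ that carries the value gap forward to step $h+1$, giving the recursion
\[
    V_{h,f}(x) - V_h^{\mu,\nu}(x) \le \mathbb{D}_{(\mu_h,\nu_h)}\big[\mathcal{E}_h(f_h,f_{h+1};\cdot)\big](x) + \mathbb{D}_{(\mu_h,\nu_h)}\mathbb{P}_h\big[V_{h+1,f} - V_{h+1}^{\mu,\nu}\big](x).
\]

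Finally, I would iterate this recursion from $h=1$ to $H$ under the terminal condition $V_{H+1,f} = V_{H+1}^{\mu,\nu} = 0$, and take the total expectation over the trajectory $\{\xi_h\}_{h\in[H]}$ generated by executing the joint policy $(\mu,\nu)$ in the true MG. The propagation terms telescope because the expectation of $V_{h+1,f} - V_{h+1}^{\mu,\nu}$ at the next state matches the step-$(h+1)$ gap, leaving exactly $\sum_{h=1}^H \mathbb{E}_{\xi_h\sim(\mu,\nu)}[\mathcal{E}_h(f_h,f_{h+1};\xi_h)]$, which is the claimed inequality \eqref{ineq:11}. The main obstacle is conceptual rather than computational and lives entirely in the first step: the statement is an inequality rather than an identity precisely because $V_{h,f}$ is defined through an infimum over the min-player that we relax to the fixed $\nu_h$, and one must check that this relaxation produces the bound in the correct direction for the max-player's decomposition. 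The remaining work—matching $x'\sim\mathbb{P}_h$ at step $h$ with the marginal of $\xi_{h+1}$ under $(\mu,\nu)$ and confirming the telescoping—is routine bookkeeping.
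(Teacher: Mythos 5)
Your proposal is correct, and it is precisely the standard telescoping value-difference argument behind this lemma: the paper itself does not reprove it but delegates to Lemma 1 of \citet{xiong22b}, whose proof proceeds exactly as you describe — relaxing the infimum defining $V_{h,f}(x)=\inf_{\nu_h'}\mathbb{D}_{(\mu_{h,f},\nu_h')}Q_{h,f}(x)$ at the fixed $\nu_h$ (the sole source of the inequality), inserting $\mathbb{P}_hV_{h+1,f}$ to isolate the NE Bellman residual, and telescoping under the trajectory distribution of $(\mu,\nu)$ with $V_{H+1,f}=V_{H+1}^{\mu,\nu}=0$. You also correctly identify why the max-player decomposition is one-sided while the min-player analogue (Lemma \ref{lem:mg_perform_2}) is an equality, so there is nothing to fix.
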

     \begin{lemma}[Value decomposition for the min-player] Suppose that $\mu=\mu_f$ is taken by the max-player and $g$ is the hypothesis selected by the min-player. Let $\nu$ be the policy taken by the min-player. Then, it holds that
\begin{align}
V_1^{\mu, \nu}\left(x_1\right)-V_{1,g}^{\mu,\dagger}\left(x_1\right)
& =-\sum_{h=1}^H \mathbb{E}_{\xi_h\sim(\mu, \nu)} \left[\mathcal{E}_h^{\mu}(g_h, g_{h+1}; \xi_h)\right] , \label{ineq:12}
\end{align} where the min-player Bellman error $\mathcal{E}_h^\mu(g_h, g_{h+1} ; \xi_h)$ is defined as 
\begin{align}
    \mathcal{E}_h^\mu(g_h, g_{h+1} ; \xi_h) = Q_{h,g}^{\mu,\dagger}(x_h,a_h,b_h) - r_h - (\mathbb{P}_h V_{h+1,g}^{\mu,\dagger})(x_h,a_h,b_h), \label{eq:,g_be 2}
\end{align}
and $\xi_h = (x_h,a_h,b_h,r_h)$. 
\label{lem:mg_perform_2}  \end{lemma}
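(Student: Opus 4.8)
The plan is to establish the stated identity by a telescoping performance-difference argument, carried out one timestep at a time and then summed over $h\in[H]$. Throughout I write $\mathbb{D}_{(\mu_h,\nu_h)}Q(x)=\mathbb{E}_{a\sim\mu_h(\cdot|x),\,b\sim\nu_h(\cdot|x)}[Q(x,a,b)]$, I identify the state-action value $Q_{h,g}^{\mu,\dagger}$ appearing in \eqref{eq:,g_be 2} with the hypothesis itself, $Q_{h,g}=g_h$ (consistent with the min-player loss \eqref{eq: implicit optimism mg model free L 3} and discrepancy \eqref{eq: example model free ell mg 2}), and I recall the Bellman recursion for the true value, $V_h^{\mu,\nu}(x)=\mathbb{D}_{(\mu_h,\nu_h)}[r_h+\mathbb{P}_hV_{h+1}^{\mu,\nu}](x)$, together with the terminal conditions $V_{H+1}^{\mu,\nu}=V_{H+1,g}^{\mu,\dagger}=0$. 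The first step is to record the matching one-step representation of the surrogate $V_{h,g}^{\mu,\dagger}$ and subtract the two.

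The crucial observation, and the reason this is an exact identity rather than the one-sided bound appearing in Lemma \ref{lem:mg_perform_1} for the max-player, is that the min-player's policy $\nu=\nu_{g,\mu}$ is exactly the best response to $\mu$ under the hypothesis $g$. By item 3 of Example \ref{exp: model free mg}, $V_{h,g}^{\mu,\dagger}(x)=\inf_{\nu_h'\in\mathbf{N}_h}\mathbb{D}_{(\mu_h,\nu_h')}Q_{h,g}(x)$, and since $\nu_h$ realizes this infimum we obtain the \emph{equality} $V_{h,g}^{\mu,\dagger}(x)=\mathbb{D}_{(\mu_h,\nu_h)}Q_{h,g}(x)$. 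Had $\nu$ been an arbitrary min-player policy this would only be a ``$\geq$'', which is precisely the gap responsible for the inequality in the max-player decomposition; here it closes.

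With this in hand I would compute the single-step difference $V_h^{\mu,\nu}(x)-V_{h,g}^{\mu,\dagger}(x)=\mathbb{D}_{(\mu_h,\nu_h)}[r_h+\mathbb{P}_hV_{h+1}^{\mu,\nu}-Q_{h,g}](x)$, then add and subtract $\mathbb{P}_hV_{h+1,g}^{\mu,\dagger}$ inside the bracket. This splits the right-hand side into two pieces: first, $-\mathbb{D}_{(\mu_h,\nu_h)}[Q_{h,g}-r_h-\mathbb{P}_hV_{h+1,g}^{\mu,\dagger}](x)=-\mathbb{D}_{(\mu_h,\nu_h)}[\mathcal{E}_h^{\mu}(g_h,g_{h+1};\cdot)](x)$, recognizing exactly the min-player Bellman error of \eqref{eq:,g_be 2}; and second, the smoothed next-step term $\mathbb{D}_{(\mu_h,\nu_h)}\mathbb{P}_h[V_{h+1}^{\mu,\nu}-V_{h+1,g}^{\mu,\dagger}](x)$, which is the expectation of $V_{h+1}^{\mu,\nu}-V_{h+1,g}^{\mu,\dagger}$ evaluated at the state reached one step later under $(\mu_h,\nu_h)$.

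Finally I would take the expectation of this per-state recursion over the state-visitation distribution induced by $(\mu,\nu)$ and telescope in $h$: writing $\Delta_h$ for the expected value gap at level $h$, the second piece above turns $\Delta_{h+1}$ into the residual of $\Delta_h$ by the tower property, so iterating from $h=1$ (where the fixed initial state gives $\Delta_1=V_1^{\mu,\nu}(x_1)-V_{1,g}^{\mu,\dagger}(x_1)$) up to $h=H$ and invoking $\Delta_{H+1}=0$ leaves only $-\sum_{h=1}^H\mathbb{E}_{\xi_h\sim(\mu,\nu)}[\mathcal{E}_h^{\mu}(g_h,g_{h+1};\xi_h)]$, as claimed. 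The only real care needed, and the main (mild) obstacle, is to invoke the best-response equality of the second paragraph consistently at every level so that each step remains an equality and the telescoping does not silently degrade to an inequality; no martingale or concentration input is required, since the identity is purely in expectation over the true transition dynamics.
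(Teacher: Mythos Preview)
Your proposal is correct. The telescoping performance-difference argument you outline, hinging on the key equality $V_{h,g}^{\mu,\dagger}(x)=\mathbb{D}_{(\mu_h,\nu_h)}Q_{h,g}(x)$ obtained because $\nu=\nu_{g,\mu}$ attains the infimum at every step, is exactly the standard route and matches what the paper has in mind: the paper does not spell out a proof here but imports the result directly from Lemma~2 of \citet{xiong22b}, whose argument is precisely this performance-difference telescoping. Your observation that the notation $Q_{h,g}^{\mu,\dagger}$ in \eqref{eq:,g_be 2} must be read as $Q_{h,g}=g_h$ (consistently with \eqref{eq: be bellman}, \eqref{eq: implicit optimism mg model free L 3}, and \eqref{eq: example model free ell mg 2}) is also correct and worth flagging.
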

We note that the value decomposition for the max-player is an inequality because of the property of minimax formulation. 
Note also that the right side of \eqref{ineq:12} is a general version of the right side of \eqref{ineq:11} when choosing $\mu = \mu_f$. 
Now we are ready to prove Proposition \ref{prop: tgec linear mg}.
The lemmas suggest that we only need to upper-bound the term $\sum_{k=1}^K\sum_{h=1}^H |\mathbb{E}_{\boldsymbol{\pi}^k}[ \mathcal{E}_h^{  \mu}(g_h^k, g_{h+1}^k; \xi_h)]| $ for all admissible max-player policy $ \mu$.
To this end, we provide a more general result by the following proposition.
For simplicity, we denote by $\boldsymbol{\pi}^k = (\mu^k,\nu^k)$.

\begin{prop}
    For a $d$-dimensional two-player zero-sum Markov game, we assume that its  expected min-player bellman error can be decomposed as follows  
    \begin{align}\label{eq: bilinear condition 1}
        \mathbb{E}_{\xi_h\sim\boldsymbol{\pi}^s} \left[\mathcal{E}_h^{\mu}(g_h, g_{h+1}; \xi_h)\right] = \langle W_h(g,\mu),X_h(g,\boldsymbol{\pi}^s,\mu)\rangle,
    \end{align}
    for some $W_h(g,\mu), X_h(g,\boldsymbol{\pi},\mu) \in \mathbb{R}^d$, and the discrepancy function $\ell_{g^\prime,\mu}(g;\xi_h)$ can be lower bounded as follows
    \begin{align}
        |\langle W_h(g,\mu), X_h(g^\prime,\boldsymbol{\pi},\mu)\rangle|^2 \le \mathbb{E}_{\xi_h\sim \boldsymbol{\pi}} [\ell_{g^\prime,\mu}(g;\xi_h)], \label{eq:mg:bi:c2}
    \end{align}
    for all the admissible max-player policy $\mu\in\mathbf{M}$. 
    Also, we assume that $\|W_h(\cdot,\cdot)\|_2\le B_W$, $\|X_h(\cdot,\cdot,\cdot)\|_2\le B_X$ for some $B_W,B_X>0$ and for all timestep $h\in[H]$. 
    Then it holds that 
 \begin{align}
    \sum_{k=1}^K\sum_{h=1}^H\big|\mathbb{E}_{\xi_h\sim \boldsymbol{\pi}^k} \left[\mathcal{E}_h^{ \mu}(g_h^k, g_{h+1}^k; \xi_h)\right]\big| &\le \frac{\Tilde{d}(\epsilon)}{4\eta} +\frac{\eta}{2}\sum_{k=1}^K \sum_{h=1}^H \sum_{s=1}^{k-1}\mathbb{E}_{\boldsymbol{\pi}^s} [\ell_{g^s,\mu}(g^k;\xi_h)] +2\min\{HK,2\Tilde{d}(\epsilon)\} +HKB_W\epsilon,   \notag
 \end{align}
 for all admissible max-player policy $\mu\in\mathbf{M}$, $\epsilon\in[0,1]$, $\eta>0$, and $\Tilde{d}(\epsilon):=d\log (1+K B_X^2/(d \epsilon))$.\label{prop:mg_bilinear}
\end{prop}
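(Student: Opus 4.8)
The plan is to treat Proposition~\ref{prop:mg_bilinear} as a bilinear-class elliptical-potential bound, feeding the two structural hypotheses \eqref{eq: bilinear condition 1} and \eqref{eq:mg:bi:c2} into the standard self-normalized (log-determinant) machinery. Write $W_h^k := W_h(g^k,\mu)$ and $X_h^k := X_h(g^k,\boldsymbol{\pi}^k,\mu)$, so that \eqref{eq: bilinear condition 1} with $s=k$ identifies each summand on the left with $|\langle W_h^k, X_h^k\rangle|$. For each timestep $h\in[H]$ I introduce the ridge-regularized Gram matrix $\Sigma_{h,k} = \epsilon I + \sum_{s=1}^{k-1} X_h^s (X_h^s)^\top$, which obeys $\Sigma_{h,k}\succeq \epsilon I$ and is therefore invertible.

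First I would apply Cauchy--Schwarz in the $\Sigma_{h,k}$-geometry, $|\langle W_h^k, X_h^k\rangle|\le \|W_h^k\|_{\Sigma_{h,k}}\,\|X_h^k\|_{\Sigma_{h,k}^{-1}}$, and control the $W$-factor through hypothesis \eqref{eq:mg:bi:c2}: expanding the quadratic form gives $\|W_h^k\|_{\Sigma_{h,k}}^2 = \epsilon\|W_h^k\|_2^2 + \sum_{s=1}^{k-1}\langle W_h^k, X_h^s\rangle^2 \le \epsilon B_W^2 + \sum_{s=1}^{k-1}\mathbb{E}_{\xi_h\sim\boldsymbol{\pi}^s}[\ell_{g^s,\mu}(g^k;\xi_h)]$, where the first term uses $\|W_h^k\|_2\le B_W$ and the second is exactly \eqref{eq:mg:bi:c2} applied with $(g,g^\prime,\boldsymbol{\pi})=(g^k,g^s,\boldsymbol{\pi}^s)$. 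Splitting $\|X_h^k\|_{\Sigma_{h,k}^{-1}}$ into $\min\{1,\|X_h^k\|_{\Sigma_{h,k}^{-1}}\}$ plus its excess, I handle the well-conditioned part by the weighted AM--GM $ab\le \tfrac{\eta}{2}a^2+\tfrac{1}{2\eta}b^2$ with $a=\|W_h^k\|_{\Sigma_{h,k}}$ and $b=\min\{1,\|X_h^k\|_{\Sigma_{h,k}^{-1}}\}$; summing over $(h,k)$ this produces the weighted discrepancy term $\tfrac{\eta}{2}\sum_{h,k}\sum_{s<k}\mathbb{E}[\ell_{g^s,\mu}(g^k)]$, the ridge residue $\tfrac{\eta}{2}\epsilon B_W^2$ per step that I route into the $HKB_W\epsilon$ bias, and a leftover $\tfrac{1}{2\eta}\sum_{h,k}\min\{1,\|X_h^k\|_{\Sigma_{h,k}^{-1}}^2\}$.

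The leftover is then closed by the elliptical potential (log-determinant) lemma: for each $h$, $\sum_{k=1}^K \min\{1,\|X_h^k\|_{\Sigma_{h,k}^{-1}}^2\}\le 2\log(\det\Sigma_{h,K+1}/\det\Sigma_{h,1})\le 2d\log(1+KB_X^2/(d\epsilon)) = 2\widetilde d(\epsilon)$, which feeds the $\widetilde d(\epsilon)/(4\eta)$ term. The ill-conditioned steps, where $\|X_h^k\|_{\Sigma_{h,k}^{-1}}>1$, escape AM--GM and must be truncated: on each such step I bound $|\langle W_h^k, X_h^k\rangle|$ crudely via the boundedness $\|W_h^k\|_2\|X_h^k\|_2\le B_WB_X$ (or the uniform bound on the Bellman error), and count these steps by the same potential inequality together with the trivial count, yielding the $2\min\{HK,2\widetilde d(\epsilon)\}$ term. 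The main obstacle is exactly this bookkeeping: one must (i) route the regularization $\epsilon\|W_h^k\|_2^2$ into the advertised $\eta$-independent $HKB_W\epsilon$ bias rather than an $\eta$-inflated term, (ii) keep the truncation consistent so that only the $\min\{1,\cdot\}$-truncated $X$-factor is ever summed against the potential lemma while the excess is absorbed by boundedness, and (iii) arrange the summation so that the dimension dependence is the horizon-free $\widetilde d(\epsilon)$ demanded by the statement (matching the linear-MG corollary $d_{\mathrm{TGEC}}\lesssim d\log(HK)$), which is what ultimately plugs into the TGEC definition in Assumption~\ref{ass: gec mg}.
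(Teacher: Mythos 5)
Your proposal follows the paper's proof almost step for step: the same ridge Gram matrix $\Sigma_{h,k}=\epsilon I+\sum_{s<k}X_h^s(X_h^s)^\top$, Cauchy--Schwarz in the $\Sigma_{h,k}$-geometry, hypothesis \eqref{eq:mg:bi:c2} to convert $\|W_h^k\|_{\Sigma_{h,k}}^2$ into $\epsilon B_W^2$ plus accumulated discrepancies, the elliptical potential lemma for the truncated $X$-factor, and a boundedness-plus-potential count for the steps with $\|X_h^k\|_{\Sigma_{h,k}^{-1}}>1$. The one substantive deviation is \emph{where} you insert the $\eta$-weighted AM--GM, and it creates exactly the problem you flag in your obstacle (i) but never resolve: applying $ab\le\frac{\eta}{2}a^2+\frac{1}{2\eta}b^2$ pointwise with $a=\|W_h^k\|_{\Sigma_{h,k}}$ makes the ridge residue come out as $\frac{\eta}{2}\epsilon B_W^2$ per $(h,k)$, hence $\frac{\eta}{2}HK\epsilon B_W^2$ in total. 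This is $\eta$-inflated, and since the proposition quantifies over \emph{all} $\eta>0$, for large $\eta$ your bias term strictly dominates the advertised $\eta$-independent $HKB_W\epsilon$; no post hoc rescaling of $\eta$ repairs it, because the product of the coefficients of the $\tilde{d}(\epsilon)/\eta$ term and the $\eta\sum\mathbb{E}[\ell]$ term is invariant under rescaling (your pointwise route yields the pair $(\frac{1}{\eta},\frac{\eta}{2})$ up to the potential constant, which cannot be converted into the claimed $(\frac{1}{4\eta},\frac{\eta}{2})$). A smaller instance of the same issue: bounding the ill-conditioned steps by $B_WB_X$ inflates the truncation term by a factor of $B_WB_X$ (which is of order $H\sqrt{d}$ in the linear MG instantiation), whereas the stated $2\min\{HK,2\tilde{d}(\epsilon)\}$ has no such factor; the paper instead clips the expected Bellman error at $1$ before doing anything else.

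The missing idea is the paper's order of operations. First peel off the ridge \emph{additively}, $\|W_h^k\|_{\Sigma_{h,k}}\le\sqrt{\epsilon}B_W+\bigl[\sum_{s<k}\mathbb{E}_{\xi_h\sim\boldsymbol{\pi}^s}[\ell_{g^s,\mu}(g^k;\xi_h)]\bigr]^{1/2}$, then perform a single \emph{global} Cauchy--Schwarz over $(h,k)$ against $\sum_{h,k}\min\{1,\|X_h^k\|_{\Sigma_{h,k}^{-1}}\}$. This produces two aggregated products: $\sqrt{HKB_W\epsilon\cdot\min\{2\tilde{d}(\epsilon),HK\}}$, which an $\eta$-\emph{free} AM--GM splits into $HKB_W\epsilon+\min\{HK,2\tilde{d}(\epsilon)\}$, and $\bigl[2\tilde{d}(\epsilon)\sum_{h,k,s}\mathbb{E}[\ell]\bigr]^{1/2}$, to which the $\eta$-trade $2ab\le a^2+b^2$ is applied exactly once, giving $\frac{\tilde{d}(\epsilon)}{4\eta}+\frac{\eta}{2}\sum\mathbb{E}[\ell]$. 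In short: separate the $\epsilon$-term from the discrepancy term \emph{before} trading off with $\eta$, and do the $\eta$-trade only on the aggregated product. Two minor remarks in your favor: your squared potential bound $\sum_k\min\{1,\|X_h^k\|^2_{\Sigma_{h,k}^{-1}}\}\le 2\tilde{d}(\epsilon)$ is the standard, correct form (the paper invokes its Lemma with the unsquared minimum), and your worry (iii) about an extra factor $H$ from summing the per-$h$ potentials is legitimate --- the paper's own write-up silently drops that factor when it bounds $\sum_{h,k}\min\{1,\|X_h^k\|_{\Sigma_{h,k}^{-1}}\}$ by $\min\{2\tilde{d}(\epsilon),HK\}$ --- so on those two points your proposal is no worse than, and arguably more careful than, the paper.
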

\begin{proof}
    [Proof of Proposition \ref{prop:mg_bilinear}] We prove this result following a similar procedure as in the proof of Lemma 3.20 in \cite{zhong2022posterior}, where they prove that the low-GEC class contains the bilinear class.
     We denote by 
     $$
     \Sigma_{h,k} = \epsilon I_d + \sum_{s=1}^{k-1} X_h(g^s,\boldsymbol{\pi}^s,\mu)  X_h(g^s,\boldsymbol{\pi}^s,\mu)^\top.
     $$ 
     By Lemma F.3 in \cite{du2021bilinear} and Lemma~\ref{lem:ell}, we first have the following equality,
    \begin{align}
        \sum_{s=1}^k \min\left\{\| X_h(g^s,\boldsymbol{\pi}^s,\mu)\|_{\Sigma_{h,s}^{-1}},1\right\}\le 2\Tilde{d}(\epsilon),\label{eq:mg-ell}
    \end{align} 
    for all $\epsilon\in[0,1]$. 
    Here $\Tilde{d}(\epsilon)$ is defined in Proposition \ref{prop:mg_bilinear}.
    Now, since the reward is bounded by $[0,1]$, we have the following inequalities,
\begin{align}
      &\sum_{k=1}^K\sum_{h=1}^H \big|\mathbb{E}_{\boldsymbol{\pi}^k} \left[\mathcal{E}_h^{ \mu}(g_h^k, g_{h+1}^k; \xi_h)\right]\big|\notag\\
      &\qquad =  \sum_{k=1}^K\sum_{h=1}^H \min\{1,\langle  W_h(g^k,\mu), X_h(g^k,\boldsymbol{\pi}^k,\mu)\rangle\} \textbf{1}\left\{\| X_h(g^k,\boldsymbol{\pi}^k,\mu)\|_{\Sigma_{h,k}^{-1}}\le 1\right\}\notag \\
      &\qquad\qquad +\sum_{k=1}^K\sum_{h=1}^H\min\{1,\langle W_h(g^k,\mu), X_h(g^k,\boldsymbol{\pi}^k,\mu)\rangle\} \textbf{1}\left\{\| X_h(g^k,\boldsymbol{\pi}^k,\mu) \|_{\Sigma^{-1}_{h,k}}>1\right\}\notag\\
      & \qquad\le \sum_{k=1}^K\sum_{h=1}^H
    {  \langle W_h(g^k,\mu), X_h(g^k,\boldsymbol{\pi}^k,\mu)\rangle\textbf{1}\left\{\| X_h(g^k,\boldsymbol{\pi}^k,\mu)\|_{\Sigma_{h,k}^{-1}}\le 1\right\}}+\min\{HK,\Tilde{d}(\epsilon)\}\notag\\
    &\qquad \le \sum_{k=1}^K\sum_{h=1}^H
    \underbrace{ \|W_h(g^k,\mu)\|_{\Sigma_{h,k}}\min\left\{\| X_h(g^k,\boldsymbol{\pi}^k,\mu)\|_{\Sigma_{h,k}^{-1}},1\right\}}_{\text{(A)}_{h,k}}+\min\{HK,\Tilde{d}(\epsilon)\},\label{eq:mg term1}
\end{align}
where the first equality relies on the assumption in Proposition \ref{prop:mg_bilinear}, the second inequality comes from \eqref{eq:mg-ell}, and the last inequality is based on Cauchy Schwarz inequality.
Now we expand term (A$)_{h,k}$ in \eqref{eq:mg term1} as follows.
\begin{align}
   \|W_h(g^k,\mu)\|_{\Sigma_{h,k}}
\le \sqrt{\epsilon}B_W + \left[\sum_{s=1}^{k-1} |\langle W_h(g^k,\mu), X_h(g^s,\boldsymbol{\pi}^s,\mu)\rangle|^2\right]^{1/2}\notag,
\end{align}
where we use the fact that $\|W_h(g^k,\mu)\|_2\le B_W$. Thus we have that 
\begin{align}
    \sum_{k=1}^K\sum_{h=1}^H \text{(A)}_{h,k}&
    \le \sum_{k=1}^K \sum_{h=1}^H\left(\sqrt{\epsilon}B_W + \left[\sum_{s=1}^{k-1} |\langle W_h(g^k,\mu), X_h(g^s,\boldsymbol{\pi}^s,\mu)\rangle|^2\right]^{1/2}\right)\cdot \min\left\{\| X_h(g^k,\boldsymbol{\pi}^k,\mu)\|_{\Sigma_{h,k}^{-1}},1\right\}\notag\\
    &\le \left[\sum_{k=1}^K \sum_{h=1}^H\sqrt{\epsilon}B_W\right]^{1/2}\cdot\left[\sum_{k=1}^K\sum_{h=1}^H \min\left\{\| X_h(g^k,\boldsymbol{\pi}^k,\mu)\|_{\Sigma_{h,k}^{-1}},1\right\}\right]^{1/2} \notag\\
    &\qquad+\left[\sum_{k=1}^K \sum_{h=1}^H\sum_{s=1}^{k-1} |\langle W_h(g^k,\mu), X_h(g^s,\boldsymbol{\pi}^s,\mu)\rangle|^2\right]^{1/2}\cdot\left[\sum_{k=1}^K\sum_{h=1}^H \min\left\{\| X_h(g^k,\boldsymbol{\pi}^k,\mu)\|_{\Sigma_{h,k}^{-1}},1\right\}\right]^{1/2}\notag\\
    &\le \sqrt{HB_WK\epsilon\cdot\min\{2\Tilde{d}(\epsilon),HK\}} +\left[2\Tilde{d}(\epsilon)\sum_{k=1}^K \sum_{h=1}^H\sum_{s=1}^{k-1} |\langle W_h(g^k,\mu), X_h(g^s,\boldsymbol{\pi}^s,\mu)\rangle|^2\right]^{1/2}\notag\\
    &\le \sqrt{HKB_W\epsilon\cdot\min\{2\Tilde{d}(\epsilon),HK\}} +\left[2\Tilde{d}(\epsilon)\sum_{k=1}^K \sum_{h=1}^H \sum_{s=1}^{k-1}\mathbb{E}_{\xi_h\sim \boldsymbol{\pi}^s} [\ell_{g^s,\mu}(g^k;\xi_h)]\right]^{1/2}\notag,
\end{align}
where the second inequality is the result of Cauchy-Schwarz inequality, the third inequality comes from \eqref{eq:mg-ell}, and the last inequality is derived from \eqref{eq:mg:bi:c2}.
Back to the analysis for \eqref{eq:mg term1}, we have that 
\begin{align}
    \sum_{k=1}^K\sum_{h=1}^H \big|\mathbb{E}_{\boldsymbol{\pi}^k} \left[\mathcal{E}_h^{ \mu}(g_h^k,g_{h+1}^k; \xi_h)\right]\big|&\le \sqrt{HKB_W\epsilon\cdot\min\{2\Tilde{d}(\epsilon),HK\}}  \notag \\
    &\qquad +\left[2\Tilde{d}(\epsilon)\sum_{k=1}^K \sum_{h=1}^H \sum_{s=1}^{k-1}\mathbb{E}_{\xi_h\sim \boldsymbol{\pi}^s} [\ell_{g^s,\mu}(g^k;\xi_h)]\right]^{1/2}+\min\{HK,2\Tilde{d}(\epsilon)\}\notag
    \\ & \le \left[2\Tilde{d}(\epsilon)\sum_{k=1}^K \sum_{h=1}^H \sum_{s=1}^{k-1}\mathbb{E}_{\xi_h\sim\boldsymbol{\pi}^s} [\ell_{g^s,\mu}(g^k;\xi_h)]\right]^{1/2}+2\min\{HK,2\Tilde{d}(\epsilon)\}+HKB_W\epsilon\notag\\
    &\le \frac{\Tilde{d}(\epsilon)}{4\eta} +\frac{\eta}{2}\sum_{k=1}^K \sum_{h=1}^H \sum_{s=1}^{k-1}\mathbb{E}_{\xi_h\sim \boldsymbol{\pi}^s} [\ell_{g^s,h}(g^k;\xi_h)] +2\min\{HK,2\Tilde{d}(\epsilon)\} +HKB_W\epsilon,   \notag
\end{align}
where the second inequality comes from the AM-GM inequality and the last inequality uses the basic inequality $2ab \le a^2+b^2$. Here $\eta>0$ can be arbitrarily chosen. Then we finish our proof to Proposition~\ref{prop:mg_bilinear}.
\end{proof}

Back to our proof of Proposition~\ref{prop: tgec linear mg}, we first check the conditions of Proposition \ref{prop:mg_bilinear} for linear two-player zero-sum MGs.  
By Definition \ref{def: linear MG} and the choice of model-free hypothesis class \eqref{eq: hypothesis linear mg}, we know that for any $g\in\cH$ and $\mu\in\mathbf{N}$, it holds that
$$
\begin{aligned}
    Q_{h,g}(x,a,b) - r_h(x,a,b) - (\mathbb{P}_{h}V^{\mu,\dagger}_{h+1,g})(x,a,b) 
     = \phi_h(x,a,b)^\top \left(\theta_{h,g}-\alpha_h - \int_\mathcal{S}\psi_h^\star(x^\prime)V^{\mu,\dagger}_{h+1,g}(x^\prime) \mathrm{d}x^\prime\right),
\end{aligned}
$$
where $\theta_{h,g}$ denotes the parameter of $Q_{h,g}$ and $\alpha_h$ is the reward parameter (see Definition \ref{def: linear MG}). 
Thus we can define $X_h(g,\boldsymbol{\pi},\mu) = \mathbb{E}_{\boldsymbol{\pi}}[\phi_h(x,a,b)]$ and $$W_h(g,\mu) = \theta_{h,g}-\alpha_h - \int_\mathcal{S}\psi_h^\star(x^\prime)V^{\mu,\dagger}_{h+1,g}(x^\prime) \mathrm{d}x^\prime.$$ 
This specifies condition \eqref{eq: bilinear condition 1} of Proposition \ref{prop:mg_bilinear}.
By Jansen inequality and the definition of $\ell_{\mu}$ in \eqref{eq: example model free ell mg 2}, it is obvious that the condition \eqref{eq:mg:bi:c2} of Proposition \ref{prop:mg_bilinear} holds.
By the assumptions of linear two-player zero-sum MGs in Definition \ref{def: linear MG}, we have $B_X \le 1$ and $B_W \le 4H\sqrt{d}$. 
Thus by applying Proposition \ref{prop:mg_bilinear}, we have that 
\begin{align*}
    \sum_{k=1}^KV_1^{\boldsymbol{\pi}^k}(x_1) - V_{1,g^k}^{\mu^k,\dagger}(x_1)&\leq \sum_{k=1}^K\sum_{h=1}^H\big|\mathbb{E}_{\xi_h\sim \boldsymbol{\pi}^k} \left[\mathcal{E}_h^{ \mu}(g_h^k, g_{h+1}^k; \xi_h)\right]\big|\\
    &\leq \frac{\Tilde{d}(\epsilon)}{4\eta} +\frac{\eta}{2}\sum_{k=1}^K \sum_{h=1}^H \sum_{s=1}^{k-1}\mathbb{E}_{\xi_h\sim \boldsymbol{\pi}^s} [\ell_{g^s,\mu}(g^k;\xi_h)] +2\min\{HK,2\Tilde{d}(\epsilon)\} +4\sqrt{d}H^2K\epsilon,
\end{align*}
with $\tilde{d}(\epsilon)=d\log(1+K/d\epsilon)$ and any $\eta>0$.
This proves the second inequality of Assumption \ref{ass: gec mg}.
For the first inequality in Assumption \ref{ass: gec mg}, we take $g^k = f^k$, $\mu = \mu_{f^k}$, and we can then similarly prove that 
\begin{align*}
    \sum_{k=1}^KV_{1,f^k}(x_1) - V_1^{\boldsymbol{\pi}^k}(x_1)\leq
 \frac{\Tilde{d}(\epsilon)}{4\eta} +\frac{\eta}{2}\sum_{k=1}^K \sum_{h=1}^H \sum_{s=1}^{k-1}\mathbb{E}_{\xi_h\sim \boldsymbol{\pi}^s} [\ell_{f^s}(f^k;\xi_h)] +2\min\{HK,2\Tilde{d}(\epsilon)\} +4\sqrt{d}H^2K\epsilon,
\end{align*}
with $\tilde{d}(\epsilon)=d\log(1+K/d\epsilon)$ and any $\eta>0$.
This proves that $d_{\mathrm{TGEC}}(\epsilon) \le \tilde{d}(\epsilon)$.

As for the analysis for covering number, we apply the standard analysis for the covering number of $\mathbb{R}^d$-ball to obtain that 
\begin{align}\log\mathcal{N}(\mathcal{H},\epsilon, \|\cdot\|_{\infty})\le d \log \left(\frac{3}{\epsilon}\right)+d\log\left(\frac{\textbf{Vol}(\mathcal{H})}{\textbf{Vol}(B_d)}\right),\notag
\end{align}
for all $\epsilon\le 1$ and the unit ball $B_d$ in $\mathbb{R}^d$ space. 
Selecting $\epsilon = 1/K$, we finish the proof of Proposition \ref{prop: tgec linear mg}.
\end{proof}

\subsection{Proof of Proposition \ref{prop: tgec linear mixture mg}}
\label{app: linear mixture two-player zero-sum MG}
\begin{proof}[Proof of Proposition \ref{prop: tgec linear mixture mg}]
Similar to the proof of Proposition \ref{prop: tgec linear mg}, we can apply Lemma \ref{lem:mg_perform_1}, Lemma \ref{lem:mg_perform_2}, and Proposition \ref{prop:mg_bilinear} to obtain the upper bound of TGEC for linear mixture two-player zero-sum MGs. 
First we need to check the conditions of Proposition \ref{prop:mg_bilinear}. 
Note that
\begin{align}
    Q_{h,g}^{\mu,\dagger}(x,a,b) - r_h -(\mathbb{P}_{h} V_{h+1,g}^{\mu,\dagger})(x,a,b)& = (\mathbb{P}_{h,g} V_{h+1,g}^{\mu,\dagger})(x,a,b)-(\mathbb{P}_{h} V_{h+1,g}^{\mu,\dagger})(x,a,b)\notag\\
    &= \big(\theta_{h,g} - \theta_{h}^\star\big)^\top \left(\int_\mathcal{S} \phi_h(x,a,b,x^\prime)V_{h+1,g}^{\mu,\dagger}(x^\prime)\mathrm{d}x^\prime\right),
\end{align}
where the first equality comes from the Bellman equation, and the second equality is derived from the definition of linear mixture two-player zero-sum MG (Definition \ref{def: linear mixture MG}). Here $\theta_{h,g}$ denotes the parameter of $\mathbb{P}_{h,g}$. Hence we can define $X_h$ and $W_h$ as
\begin{align}
    X_h(g,\boldsymbol{\pi},\mu) := \mathbb{E}_{\boldsymbol{\pi}}\left[\int_\mathcal{S} \phi_h(x,a,b,x^\prime)V_{h+1,g}^{\mu,\dagger}(x^\prime)\mathrm{d}x^\prime\right],\quad W_h(g,\mu):= \theta_{h,g} - \theta_{h}^\star.
\end{align}
This specifies condition \eqref{eq: bilinear condition 1} of Proposition \ref{prop:mg_bilinear}.
By the assumptions of linear mixture two-player zero-sum MGs in Definition \ref{def: linear mixture MG}, we can obtain that $B_X \le 1$ and $B_W \le 4H\sqrt{d}$.
As for condition \eqref{eq:mg:bi:c2}, different from the proof of Proposition \ref{prop: tgec linear mg}, since we use Hellinger distance as the discrepancy function $\ell$ for the model-based hypothesis, we propose to connect it to the model-free discrepancy function \eqref{eq: example model free ell mg 2}. 
Notice that
\begin{align}
    \left(Q_{h,g}^{\mu,\dagger}(x,a,b) - r_h -(\mathbb{P}_{h} V_{h+1,g}^{\mu,\dagger})(x,a,b)\right)^2 &= \left((\mathbb{P}_{h,g} V_{h+1,g}^{\mu,\dagger})(x,a,b)-(\mathbb{P}_{h} V_{h+1,g}^{\mu,\dagger})(x,a,b)\right)^2\notag\\
    &\le 4\|V_{h+1,g}^{\mu,\dagger}(\cdot)\|_\infty^2 \cdot D_{\text{TV}}(\mathbb{P}_{h,g}(\cdot\mid x,a,b)\|\mathbb{P}_h(\cdot\mid x,a,b))^2 \notag\\
    &\le 2H^2 D_{\text{H}}(\mathbb{P}_{h,g}(\cdot\mid x,a,b)\|\mathbb{P}_h(\cdot\mid x,a,b))^2 \notag \\
    &\le 2H^2 D_{\text{H}}(\mathbb{P}_{h,g}(\cdot\mid x,a,b)\|\mathbb{P}_h(\cdot\mid x,a,b)),
\end{align}
where the second equality comes from Holder inequality and the fact that the TV distance $D_{\text{TV}}(p\|q) = \|p-q\|_1 /2$ for any two distributions $p$ and $q$, the third inequality follows from the fact that  $D_{\text{TV}}(p\|q) \le \sqrt{2}D_{\text{H}}(p\|q)$, and the last inequality follows from the fact that $D_{\mathrm{H}}(p\|q)\leq 1$.
This shows that the model-based discrepancy function defined in \eqref{eq: example model based ell mg} upper-bounds the model-free discrepancy function up to a factor $2H^2$, that is,
\begin{align}
    \mathbb{E}_{\xi_h\sim \boldsymbol{\pi}}[\ell_{g',\mu}(g;\xi_h)] &=  \mathbb{E}_{\xi_h\sim \boldsymbol{\pi}}[D_{\mathrm{H}}(\mathbb{P}_{h,g}(\cdot|x_h,a_h, b_h)\|\mathbb{P}_{h}(\cdot|x_h,a_h, b_h))]\notag\\
    &\ge \frac{1}{2H^2} \mathbb{E}_{\xi_h\sim \boldsymbol{\pi}}\left[\left(Q_{h,g}^{\mu,\dagger}(x_h,a_h,b_h) - r_h -(\mathbb{P}_{h} V_{h+1,g}^{\mu,\dagger})(x_h,a_h,b_H)\right)^2\right]\notag \\
    &= |\langle W_h(g,\mu),X_h(g,\boldsymbol{\pi}, \mu)\rangle|^2.
\end{align}
Thus by applying Proposition \ref{prop:mg_bilinear}, we have that 
\begin{align*}
    \sum_{k=1}^KV_1^{\boldsymbol{\pi}^k} - V_{1,g^k}^{\mu^k,\dagger}&\leq \sum_{k=1}^K\sum_{h=1}^H\big|\mathbb{E}_{\xi_h\sim \boldsymbol{\pi}^k} \left[\mathcal{E}_h^{ \mu}(g_h^k, g_{h+1}^k; \xi_h)\right]\big|\\
    &\leq \frac{\Tilde{d}(\epsilon)}{4\eta} +\frac{\eta}{4H^2}\sum_{k=1}^K \sum_{h=1}^H \sum_{s=1}^{k-1}\mathbb{E}_{\xi_h\sim \boldsymbol{\pi}^s} [\ell_{g^s,\mu}(g^k;\xi_h)] +2\min\{HK,2\Tilde{d}(\epsilon)\} +4\sqrt{d}H^2K\epsilon\\
    & = \frac{\bar{d}(\epsilon)}{4\eta^\prime} +\frac{\eta^\prime}{2}\sum_{k=1}^K \sum_{h=1}^H \sum_{s=1}^{k-1}\mathbb{E}_{\xi_h\sim \boldsymbol{\pi}^s} [\ell_{g^s,\mu}(g^k;\xi_h)] +2\min\{HK,2\bar{d}(\epsilon)\} +4\sqrt{d}H^2K\epsilon,
\end{align*}
with $\bar{d}(\epsilon) = 2H^2\tilde{d}(\epsilon)=2H^2d\log(1+K/d\epsilon)$ and any $\eta>0$ and $\eta^\prime = \eta/(2H^2)$.
This proves the second inequality of Assumption \ref{ass: gec mg}.
For the first inequality in Assumption \ref{ass: gec mg}, we take $g^k = f^k$ and let $\mu = \mu_{f^k}$, and we can then also similarly prove that 
\begin{align*}
    \sum_{k=1}^KV_{1,f^k} - V_1^{\boldsymbol{\pi}^k}\leq
 \frac{\bar{d}(\epsilon)}{4\eta^\prime} +\frac{\eta^\prime}{2}\sum_{k=1}^K \sum_{h=1}^H \sum_{s=1}^{k-1}\mathbb{E}_{\xi_h\sim\boldsymbol{\pi}^s} [\ell_{f^s}(f^k;\xi_h)] +2\min\{HK,2\bar{d}(\epsilon)\} +4\sqrt{d}H^2K\epsilon.
\end{align*}
This proves that $d_{\mathrm{TGEC}}(\epsilon) \le \bar{d}(\epsilon)$.
As for the analysis of the covering number, it suffices to repeat the same as the proof of Proposition \ref{prop: tgec linear mg}.
This finishes the proof of Proposition \ref{prop: tgec linear mixture mg}.
\end{proof}

\subsection{Proof of Proposition \ref{prop: supervised learning model free rl mg}}\label{subsec: proof prop supervised learning model free rl mg}

\begin{proof}[Proof of Proposition \ref{prop: supervised learning model free rl mg}]
    We first prove the \emph{first} inequality of Proposition \ref{prop: supervised learning model free rl mg}.
    To this end, we define the random vairable $X_{h,f}^k$ as 
    \begin{align}\label{eq: proof supervised learning model free rl mg 0}
        X_{h,f}^k &= \left(Q_{h,f}(x_h^k,a_h^k,b_h^k) - r_h^k - V_{h+1, f}(x_{h+1}^k)\right)^2 \notag \\
        &\qquad  - \left(V_{h+1, f}(x_{h+1}^k) - \mathbb{E}_{x_{h+1}\sim\mathbb{P}_h(\cdot|x_h^k,a_h^k,b_h^k)}[V_{h+1,f}(x_{h+1})]\right)^2.
    \end{align}
    After a calculation similar to \eqref{eq: proof supervised learning model free rl 1} and  \eqref{eq: proof supervised learning model free rl 2}, we can derive that 
    \begin{align*}
        \mathbb{E}_{x_{h+1}^k\sim\mathbb{P}_h(\cdot|x_h^k,a_h^k,b_h^k)}[X_{h,f}^k] = \Big(Q_{h,f}(x_h^k,a_h^k,b_h^k) - r_h^k - \mathbb{E}_{x_{h+1}\sim\mathbb{P}_h(\cdot|x_h^k,a_h^k,b_h^k)}[V_{h+1,f}(x_{h+1})]\Big)^2.
    \end{align*}
    Now for each timestep $h$, we define a filtration $\{\mathcal{F}_{h,k}\}_{k=1}^K$ with 
    \begin{align}\label{eq: filtration model free mg}
        \mathcal{F}_{h,k} = \sigma\left(\bigcup_{s=1}^{k}\bigcup_{h=1}^H\mathcal{D}_h^s\right),
    \end{align}
    where $\mathcal{D}_h^s = \{x_h^s,a_h^s,b_h^s,r_h^s,x_{h+1}^s\}$. 
    With previous arguments, we can derive that 
    \begin{align}\label{eq: proof supervised learning model free rl mg 2+}
        \mathbb{E}[X_{h,f}^k|\mathcal{F}_{h,k-1}] = \mathbb{E}\left[\mathbb{E}_{x^k_{h+1}\sim \mathbb{P}_h(\cdot|x_h^k,a_h^k, b_h^k)}[X_{h,f^k}]|\mathcal{F}_{h,k-1}\right] = \mathbb{E}_{\xi_h\sim \boldsymbol{\pi}^k}[ \ell_{f^k}(f;\xi_h)],
    \end{align}
    and that 
    \begin{align}\label{eq: proof supervised learning model free rl mg 2++}
        \mathbb{V}[X_{h,f}^k|\mathcal{F}_{h,k-1}] \leq \mathbb{E}[(X_{h,f}^k)^2|\mathcal{F}_{h,k-1}] \leq 4B_f^2\mathbb{E}[X_{h,f}^k|\mathcal{F}_{h,k-1}] = 4B_f^2\mathbb{E}_{\xi_h\sim \boldsymbol{\pi}^k}[ \ell_{f^k}(f;\xi_h)],
    \end{align}
    where $B$ is the upper bound of hypothesis in $\mathcal{H}$ by Assumption \ref{ass: completeness mg}.
    By applying Lemma \ref{lem: freedman}, \eqref{eq: proof supervised learning model free rl mg 2+}, and \eqref{eq: proof supervised learning model free rl mg 2++}, we can obtain that with probability at least $1-\delta$, for any $(h,k)\in[H]\times[K]$ and $(f_h, f_{h+1})\in\mathcal{H}_h\times\mathcal{H}_{h+1}$,
    \begin{align}\label{eq: proof supervised learning model free rl mg 3}
        \left|\sum_{s=1}^{k-1}\mathbb{E}_{\xi_h\sim \boldsymbol{\pi}^s}[ \ell_{f^s}(f;\xi_h)] - \sum_{s=1}^{k-1}X_{h,f}^s\right| \lesssim \frac{1}{2}\sum_{s=1}^{k-1}\mathbb{E}_{\xi_h\sim \boldsymbol{\pi}^s}[ \ell_{f^s}(f;\xi_h)] +  8B_f^2\log(HK|\mathcal{H}_h||\mathcal{H}_{h+1}|/\delta).
    \end{align}
    Rearranging terms in \eqref{eq: proof supervised learning model free rl mg 3}, we can further obtain that 
    \begin{align}\label{eq: proof supervised learning model free rl mg 4}
        - \sum_{s=1}^{k-1}X_{h,f}^s \lesssim -\frac{1}{2}\sum_{s=1}^{k-1}\mathbb{E}_{\xi_h\sim \boldsymbol{\pi}^s}[ \ell_{f^s}(f;\xi_h)] +  8B_f^2\log(HK|\mathcal{H}_h||\mathcal{H}_{h+1}|/\delta).
    \end{align}
    Meanwhile, by the definition of $X_{h,f}$ in \eqref{eq: proof supervised learning model free rl mg 0} and the loss function $L$ in \eqref{eq: implicit optimism mg model free L 1}, we have that 
    \begin{align}\label{eq: proof supervised learning model free rl mg 5}
        &\sum_{s=1}^{k-1}X_{h,f}^s \notag \\
        &\quad = \sum_{s=1}^{k-1}\left(Q_{h,f}(x_h^s,a_h^s,b_h^s) - r_h^s - V_{h+1, f}(x_{h+1}^s)\right)^2  - \sum_{s=1}^{k-1}\left(V_{h+1, f}(x_{h+1}^s) - \mathbb{E}_{x_{h+1}\sim\mathbb{P}_h(\cdot|x_h^s,a_h^s,b_h^s)}[V_{h+1,f}(x_{h+1})]\right)^2\notag\\
        &\quad= \sum_{s=1}^{k-1}\left(Q_{h,f}(x_h^s,a_h^s,b_h^s) - r_h^s - V_{h+1, f}(x_{h+1}^s)\right)^2 - \sum_{s=1}^{k-1}\left(\mathcal{T}_hf(x_h^s,a_h^s,b_h^s) - r_h^s - V_{h+1, f}(x_{h+1}^s)\right)^2\notag\\
        &\quad \leq \sum_{s=1}^{k-1} \left(Q_{h,f}(x_h^s,a_h^s,b_h^s) - r_h^s - V_{h+1, f}(x_{h+1}^s)\right)^2 - \inf_{f_h'\in\mathcal{H}_h}\sum_{s=1}^{k-1} \left(Q_{h,f'}(x_h^s,a_h^s,b_h^s) - r_h^s - V_{h+1, f}(x_{h+1}^s)\right)^2\notag\\
        &\quad =L_h^{k-1}(f).
    \end{align}
    where the last inequality follows from the completeness assumption (Assumption \ref{ass: completeness mg}).
    Combining \eqref{eq: proof supervised learning model free rl mg 4} and \eqref{eq: proof supervised learning model free rl mg 5}, we can derive that with probability at least $1-\delta$, for any $f\in\mathcal{H}$, $k\in[K]$,
    \begin{align}\label{eq: proof supervised learning model free rl mg 6}
        -\sum_{h=1}^HL_h^{k-1}(f) \lesssim -\frac{1}{2}\sum_{h=1}^H\sum_{s=1}^{k-1}\mathbb{E}_{\xi_h\sim \boldsymbol{\pi}^s}[ \ell_{f^s}(f;\xi_h)] +  8HB_f^2\log(HK/\delta) + 16B_f^2\log(|\mathcal{H}|).
    \end{align}
    Finally, we deal with the term $L_h^k(f^{\ast})$. 
    To this end, we invoke the following lemma.

    \begin{lemma} \label{lem: f star model free mg}
        With probability at least $1-\delta$, it holds that for each $k\in[K]$,
        \begin{align*}
            \sum_{h=1}^HL^{k-1}_h(f^{\ast}) \lesssim 8HB_f^2\log(HK/\delta) + 16B_f^2\log(|\mathcal{H}|).
        \end{align*}
    \end{lemma}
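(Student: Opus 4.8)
The plan is to follow the argument of Lemma \ref{lem: f star model free mdp} essentially verbatim, replacing the generalized Bellman residual $l$ by the squared NE Bellman residual and exploiting that the realizable $f^\ast$ is an \emph{exact} fixed point of the NE Bellman operator. Concretely, for a fixed $f_h'\in\mathcal{H}_h$ I would define the random variable
\begin{align*}
W_{h,f_h'}^k = \big(Q_{h,f_h'}(x_h^k,a_h^k,b_h^k) - r_h^k - V_{h+1,f^\ast}(x_{h+1}^k)\big)^2 - \big(Q_{h,f^\ast}(x_h^k,a_h^k,b_h^k) - r_h^k - V_{h+1,f^\ast}(x_{h+1}^k)\big)^2,
\end{align*}
so that taking the supremum over $f_h'\in\mathcal{H}_h$ of $\sum_{s=1}^{k-1}(-W_{h,f_h'}^s)$ reproduces exactly the loss $L_h^{k-1}(f^\ast)$ from \eqref{eq: implicit optimism mg model free L 2}, with $V_{h+1,f^\ast}$ held fixed. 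The first step is to compute the conditional mean: writing the difference of squares as $(u-v)(u+v-2w)$ with $u,v,w$ the three bracketed quantities and taking $\mathbb{E}_{x_{h+1}^k\sim\mathbb{P}_h(\cdot|x_h^k,a_h^k,b_h^k)}$, the NE Bellman equation $Q_{h,f^\ast}=\mathcal{T}^{\mathrm{NE}}_hQ_{h+1,f^\ast}$ (valid since $f^\ast=Q^\ast$ by Assumption \ref{ass: realizability mg}) forces $v=\mathbb{E}[w]$, giving the cancellation $\mathbb{E}_{x_{h+1}^k}[W_{h,f_h'}^k]=(Q_{h,f_h'}(x_h^k,a_h^k,b_h^k)-r_h^k-\mathbb{E}_{x_{h+1}}[V_{h+1,f^\ast}(x_{h+1})])^2\ge 0$, a perfect square.

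Next I would set up the same filtration $\{\mathcal{F}_{h,k}\}$ as in \eqref{eq: filtration model free mg} and record, exactly as in \eqref{eq: proof supervised learning model free rl mg 2+} and \eqref{eq: proof supervised learning model free rl mg 2++}, that the conditional mean of $W_{h,f_h'}^k$ equals the per-step population NE Bellman residual of $(f_h',f_{h+1}^\ast)$ and that its conditional variance is controlled by $4B_f^2$ times this mean, using the boundedness $Q_{h,\cdot}\in[0,B_f]$ from Assumption \ref{ass: completeness mg}. Applying Freedman's inequality (Lemma \ref{lem: freedman}) uniformly over $(h,k)\in[H]\times[K]$ and $f_h'\in\mathcal{H}_h$, and then using the elementary inequality $-x^2+ax\le a^2/4$ to absorb the square-root variance term against the negative population mean, yields the per-coordinate estimate $-\sum_{s=1}^{k-1}W_{h,f_h'}^s\lesssim B_f^2\log(HK|\mathcal{H}_h||\mathcal{H}_{h+1}|/\delta)$ with probability at least $1-\delta$, uniformly in $f_h'$.

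Finally, taking the supremum over $f_h'\in\mathcal{H}_h$ identifies $L_h^{k-1}(f^\ast)$ with $\sup_{f_h'}\sum_{s=1}^{k-1}(-W_{h,f_h'}^s)$, and summing over $h\in[H]$ together with $\sum_h\log(|\mathcal{H}_h||\mathcal{H}_{h+1}|)\lesssim\log|\mathcal{H}|$ produces the claimed bound $\sum_{h=1}^HL_h^{k-1}(f^\ast)\lesssim HB_f^2\log(HK/\delta)+B_f^2\log|\mathcal{H}|$. I expect no serious obstacle, as the argument is a direct transcription of Lemma \ref{lem: f star model free mdp}; the only genuinely new point to verify carefully is that the realizable hypothesis is an exact zero of the NE Bellman residual, so that the cross term vanishes and $\mathbb{E}[W_{h,f_h'}^k]$ is a nonnegative square — this is precisely where the NE Bellman equation and the model-free realizability of Assumption \ref{ass: realizability mg} enter. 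A secondary bookkeeping point is confirming the variance-to-mean constant with $\mathcal{T}^{\mathrm{NE}}_h$ in place of the single-agent operator, which is routine given the boundedness assumption.
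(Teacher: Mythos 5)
Your proposal is correct and coincides with the paper's own proof essentially step for step: you use the same random variable $W_{h,f}^k$ (difference of squared temporal-difference errors with $V_{h+1,f^{\ast}}$ held fixed), the same cancellation via the NE Bellman equation $Q_{h,f^{\ast}} = r_h + \mathbb{E}_{x'}[V_{h+1,f^{\ast}}(x')]$ (justified by realizability, $f^{\ast}=Q^{\ast}$) making the conditional mean the nonnegative square $(Q_{h,f}-Q_{h,f^{\ast}})^2$, the same filtration and Freedman-plus-union-bound argument with the inequality $-x^2+ax\le a^2/4$, and the same identification $L_h^{k-1}(f^{\ast})=\sup_{f_h'\in\mathcal{H}_h}\sum_{s=1}^{k-1}(-W_{h,f_h'}^s)$ before summing over $h$. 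No gaps; the minor constant in the variance-to-mean bound is absorbed by $\lesssim$ exactly as in the paper.
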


    \begin{proof}[Proof of Lemma \ref{lem: f star model free mg}]
        To prove Lemma \ref{lem: f star model free mg}, we define the random variable $W_{h,f}$ as 
        \begin{align*}
            W_{h,f}^k = \left(Q_{h,f}(x_h^k,a_h^k,b_h^k) - r_h^k - V_{h+1, f^{\ast}}(x_{h+1}^k)\right)^2  -\left(Q_{h,f^{\ast}}(x_h^k,a_h^k,b_h^k) - r_h^k - V_{h+1, f^{\ast}}(x_{h+1}^k)\right)^2.
        \end{align*}
        Using the Bellman equation for $Q_{f^{\ast}}$, i.e., 
        \begin{align*}
            Q_{h,f^{\ast}}(x_h^k,a_h^k,b_h^k) = r_h^k + \mathbb{E}_{x_{h+1}\sim\mathbb{P}_h(\cdot|x_h^k,a_h^k,b_h^k)}[V_{h+1, f^{\ast}}(x_{h+1})]
        \end{align*} 
        we can calculate that 
            \begin{align}
            \mathbb{E}_{x_{h+1}^k\sim \mathbb{P}_h(\cdot|x_h^k,a_h^k, b_h^k)}[W_{h,f}^k] = \left(Q_{h,f}(x_h^k,a_h^k,b_h^k) - Q_{h, f^{\ast}}(x_{h}^k, a_h^k, b_h^k)\right)^2.
            \end{align}
        Under the filtration $\{\mathcal{F}_{h,k}\}_{k=1}^K$ defined in the proof of Proposition \ref{prop: supervised learning model free rl mg}, i.e, \eqref{eq: filtration model free mg}, one can derive that 
        \begin{align}
            \mathbb{E}[W_{h,f}^k|\mathcal{F}_{h,k-1}] &= \mathbb{E}\left[\mathbb{E}_{x^k_{h+1}\sim \mathbb{P}_h(\cdot|x_h^k,a_h^k)}[W_{h,f^k}]|\mathcal{F}_{h,k-1}\right] \notag \\
            &= \mathbb{E}_{\xi_h\sim \boldsymbol{\pi}^k}\left[\left(Q_{h,f}(x_h,a_h,b_h) - Q_{h, f^{\ast}}(x_{h}, a_h, b_h)\right)^2\right],\label{eq: proof supervised learning model free rl star mg 2+}
        \end{align}
        where  $\xi_h = (x_h,a_h,b_h,r_h,x_{h+1})$, and that
        \begin{align}
        \mathbb{V}[W_{h,f}^k|\mathcal{F}_{h,k-1}]  \leq 4B_f^2\mathbb{E}[X_{h,f}^k|\mathcal{F}_{h,k-1}] = 4B_f^2\mathbb{E}_{\xi_h\sim \boldsymbol{\pi}^k}\left[\left(Q_{h,f}(x_h,a_h,b_h) - Q_{h, f^{\ast}}(x_{h}, a_h, b_h)\right)^2\right].\label{eq: proof supervised learning model free rl star mg 2++}
    \end{align}
    By applying Lemma \ref{lem: freedman}, \eqref{eq: proof supervised learning model free rl star mg 2+}, and \eqref{eq: proof supervised learning model free rl star mg 2++}, we can obtain that with probability at least $1-\delta$, for any $f\in\mathcal{H}$, $(h,k)\in[H]\times[K]$,
    \begin{align*}
        &\left|\sum_{s=1}^{k-1}W_{h,f}^s - \sum_{s=1}^{k-1}\mathbb{E}_{\xi_h\sim \boldsymbol{\pi}^k}\left[\left(Q_{h,f}(x_h,a_h,b_h) - Q_{h, f^{\ast}}(x_{h}, a_h, b_h)\right)^2\right]\right| \lesssim 4B_f^2\log(HK|\mathcal{H}_h||\mathcal{H}_{h+1}|/\delta)\notag \\
        & \qquad +\sqrt{\log(HK|\mathcal{H}_h||\mathcal{H}_{h+1}|/\delta)\cdot \sum_{s=1}^{k-1}\mathbb{E}_{\xi_h\sim \boldsymbol{\pi}^s}\left[\left(Q_{h,f}(x_h,a_h,b_h) - Q_{h, f^{\ast}}(x_{h}, a_h, b_h)\right)^2\right]}.
    \end{align*}
    Rearranging terms, we have that with probability at least $1-\delta$, for any $(f_h,f_{h+1})\in\mathcal{H}\times\mathcal{H}_{h+1}$, $(h,k)\in[H]\times[K]$,
    \begin{align*}
        -\sum_{s=1}^{k-1}W_{h,f}^s &\lesssim 4B_f^2\log(HK|\mathcal{H}_h||\mathcal{H}_{h+1}|/\delta) - \sum_{s=1}^{k-1}\mathbb{E}_{\xi_h\sim \boldsymbol{\pi}^s}\left[\left(Q_{h,f}(x_h,a_h,b_h) - Q_{h, f^{\ast}}(x_{h}, a_h, b_h)\right)^2\right] \\
        &\qquad + \sqrt{\log(HK|\mathcal{H}_h||\mathcal{H}_{h+1}|/\delta)\cdot \sum_{s=1}^{k-1}\mathbb{E}_{\xi_h\sim \boldsymbol{\pi}^s}\left[\left(Q_{h,f}(x_h,a_h,b_h) - Q_{h, f^{\ast}}(x_{h}, a_h, b_h)\right)^2\right]} \\
        & \lesssim 8B_f^2\log(HK|\mathcal{H}_h||\mathcal{H}_{h+1}|/\delta),
    \end{align*}
    where in the second inequality we use the fact that $-x^2 + ax \leq a^2/4$.
    Thus, with probability at least $1-\delta$, for any $k\in[K]$, it holds that
    \begin{align*}
        \sum_{h=1}^HL_h^{k-1}(f^{\ast}) 
         &= \sum_{h=1}^H\left(\sum_{s=1}^{k-1}  \left(Q_{h,f^{\ast}}(x_h^s,a_h^s,b_h^s) - r_h^s - V_{h+1, f^{\ast}}(x_{h+1}^s)\right)^2 \right. \\
         &\qquad \left. - \inf_{f_h\in\mathcal{H}_h} \sum_{s=1}^{k-1} \left(Q_{h,f}(x_h^s,a_h^s,b_h^s) - r_h^s - V_{h+1, f^{\ast}}(x_{h+1}^s)\right)^2 \right) \\
         & =\sum_{h=1}^H\sup_{f_h\in\mathcal{H}_h}\sum_{s=1}^{k-1}  -W_{h,f}^s  \lesssim 8HB_f^2\log(HK/\delta) + 16B_f^2\log(|\cH|).
    \end{align*}
    This finishes the proof of Lemma \ref{lem: f star model free mg}.
    \end{proof}

    Finally, combining \eqref{eq: proof supervised learning model free rl mg 6} and Lemma \ref{lem: f star model free mg}, we have, with probability at least $1-\delta$, for any $f\in\mathcal{H}$, $k\in[K]$,
    \begin{align*}
        \sum_{h=1}^HL_h^{k-1}(f^{\ast}) - L_h^{k-1}(f) \lesssim -\frac{1}{2}\sum_{h=1}^H\sum_{s=1}^{k-1}\mathbb{E}_{\xi_h\sim \boldsymbol{\pi}s}[ \ell_{f^s}(f;\xi_h)] +  16HB_f^2\log(HK/\delta) + 32 B_f^2\log(|\cH|).
    \end{align*}
    This finishes the proof of the \emph{first} inequality in Proposition \ref{prop: supervised learning model free rl mg}.
    In the following, we prove the \emph{second} inequality in Proposition \ref{prop: supervised learning model free rl mg}.
    To this end, we define the following random variable, for any $f, g\in\cH$ and policy $\mu_f$,
    \begin{align}
        X_{h,g, \mu_f}^k &= \left(Q_{h,g}(x_h^k,a_h^k,b_h^k) - r_h^k - V_{h+1, g}^{\mu_f,\dagger}(x_{h+1}^k)\right)^2 \notag \\
        &\qquad  - \left(V_{h+1, g}^{\mu_f,\dagger}(x_{h+1}^k) - \mathbb{E}_{x_{h+1}\sim\mathbb{P}_h(\cdot|x_h^k,a_h^k,b_h^k)}[V_{h+1,g}^{\mu_f,\dagger}(x_{h+1})]\right)^2.\label{eq: proof supervised learning model free rl mg min 0}
    \end{align}
    After a calculation similar to \eqref{eq: proof supervised learning model free rl 1} and \eqref{eq: proof supervised learning model free rl 2}, we can derive that 
    \begin{align*}
        \mathbb{E}_{x_{h+1}^k\sim\mathbb{P}_h(\cdot|x_h^k,a_h^k,b_h^k)}[X_{h,g, \mu_f}^k] = \Big(Q_{h,g}(x_h^k,a_h^k,b_h^k) - r_h^k - \mathbb{E}_{x_{h+1}\sim\mathbb{P}_h(\cdot|x_h^k,a_h^k,b_h^k)}[V_{h+1,g}^{\mu_f,\dagger}(x_{h+1})]\Big)^2.
    \end{align*}
    Following the same argument as in the previous proof of the \emph{first} inequality of Proposition \ref{prop: supervised learning model free rl mg} (see \eqref{eq: proof supervised learning model free rl mg 2+} and \eqref{eq: proof supervised learning model free rl mg 2++}), using the definition of $\ell_{\mu}$ in \eqref{eq: example model free ell mg 2}, we can derive that, under filtration defined in \eqref{eq: filtration model free mg}, 
    \begin{align}\label{eq: proof supervised learning model free rl mg min 2}
        \mathbb{E}[X_{h,f}^k|\mathcal{F}_{h,k-1}] = \mathbb{E}_{\xi_h\sim \boldsymbol{\pi}^k}[ \ell_{g^k, \mu_f}(g;\xi_h)],\qquad \mathbb{V}[X_{h,f}^k|\mathcal{F}_{h,k-1}] \leq 4B_f^2 \mathbb{E}_{\xi_h\sim \boldsymbol{\pi}^k}[ \ell_{g^k, \mu_f}(g;\xi_h)].
    \end{align}
    Using \eqref{eq: proof supervised learning model free rl mg min 2} and Lemma \ref{lem: freedman}, we can obtain that with probability at least $1-\delta$, for any $(h,k)\in[H]\times[K]$ and $(g_h,g_{h+1}, f_{h+1})\in\mathcal{H}_h\times\mathcal{H}_{h+1}\times \mathcal{H}_{h+1}$\footnote{Note that $\ell_{g^s, \mu_f}(g;\xi_h)$ and $V_{h+1,g}^{\mu_f,\dagger}$ depend on $f$ only through $f_{h+1}$.},
    \begin{align}\label{eq: proof supervised learning model free rl mg min 3}
        \left|\sum_{s=1}^{k-1}\mathbb{E}_{\xi_h\sim \boldsymbol{\pi}^s}[ \ell_{g^s, \mu_f}(g;\xi_h)] - \sum_{s=1}^{k-1}X_{h,g, \mu_f}^s\right| \lesssim \frac{1}{2}\sum_{s=1}^{k-1}\mathbb{E}_{\xi_h\sim \boldsymbol{\pi}^s}[ \ell_{g^s,\mu_f}(g;\xi_h)] +  16B_f^2\log(HK|\mathcal{H}_h|^2|\mathcal{H}_{h+1}|/\delta).
    \end{align}
    Rearranging terms in \eqref{eq: proof supervised learning model free rl mg min 3}, we can further obtain that 
    \begin{align}\label{eq: proof supervised learning model free rl mg min 4}
        - \sum_{s=1}^{k-1}X_{h,g,\mu_f}^s \lesssim -\frac{1}{2}\sum_{s=1}^{k-1}\mathbb{E}_{\xi_h\sim \boldsymbol{\pi}^s}[ \ell_{g^s,\mu_f}(g;\xi_h)] +  16B_f^2\log(HK|\mathcal{H}_h|^2|\mathcal{H}_{h+1}|/\delta).
    \end{align}
    Meanwhile, by the definition of $X_{h,f}$ in \eqref{eq: proof supervised learning model free rl mg min 0} and the loss function $L$ in \eqref{eq: implicit optimism mg model free L 1}, we have that 
    \begin{align}\label{eq: proof supervised learning model free rl mg min 5}
        &\sum_{s=1}^{k-1}X_{h,g, \mu_f}^s \notag \\
        &\quad = \sum_{s=1}^{k-1}\left(Q_{h,g}(x_h^s,a_h^s,b_h^s) - r_h^s - V_{h+1, g}^{\mu_f,\dagger}(x_{h+1}^s)\right)^2  - \sum_{s=1}^{k-1}\left(V_{h+1, g}^{\mu_f,\dagger}(x_{h+1}^s) - \mathbb{E}_{x_{h+1}\sim\mathbb{P}_h(\cdot|x_h^s,a_h^s,b_h^s)}[V_{h+1,g}^{\mu_f,\dagger}(x_{h+1})]\right)^2\notag\\
        &\quad= \sum_{s=1}^{k-1}\left(Q_{h,g}(x_h^s,a_h^s,b_h^s) - r_h^s - V_{h+1, g}^{\mu_f,\dagger}(x_{h+1}^s)\right)^2 - \sum_{s=1}^{k-1}\left(\mathcal{T}_h^{\mu_f}g(x_h^s,a_h^s,b_h^s) - r_h^s - V_{h+1, g}^{\mu_f,\dagger}(x_{h+1}^s)\right)^2\notag\\
        &\quad \leq \sum_{s=1}^{k-1} \left(Q_{h,g}(x_h^s,a_h^s,b_h^s) - r_h^s - V_{h+1, g}^{\mu_f,\dagger}(x_{h+1}^s)\right)^2 - \inf_{f_h'\in\mathcal{H}_h}\sum_{s=1}^{k-1} \left(Q_{h,f'}(x_h^s,a_h^s,b_h^s) - r_h^s - V_{h+1, g}^{\mu_f,\dagger}(x_{h+1}^s)\right)^2\notag\\
        &\quad =L_{h,\mu_f}^{k-1}(f).
    \end{align}
    where the last inequality follows from the completeness assumption (Assumption \ref{ass: completeness mg}).
    Combining \eqref{eq: proof supervised learning model free rl mg min 4} and \eqref{eq: proof supervised learning model free rl mg min 5}, we can derive that with probability at least $1-\delta$, for any $f,g\in\mathcal{H}$, $k\in[K]$,
    \begin{align}\label{eq: proof supervised learning model free rl mg min 6}
        -\sum_{h=1}^HL_{h,\mu_f}^{k-1}(f) \lesssim -\frac{1}{2}\sum_{h=1}^H\sum_{s=1}^{k-1}\mathbb{E}_{\xi_h\sim \boldsymbol{\pi}^s}[ \ell_{g^s,\mu_f}(g;\xi_h)] +  16HB_f^2\log(HK/\delta) + 48B_f^2\log(|\cH|).
    \end{align}
    Especially, we take $f = f^k$, we can obtain that with probability at least $1-\delta$, for any $g\in\mathcal{H}$, $k\in[K]$,
    \begin{align}\label{eq: proof supervised learning model free rl mg min 7}
        -\sum_{h=1}^HL_{h,\mu^k}^{k-1}(f) \lesssim -\frac{1}{2}\sum_{h=1}^H\sum_{s=1}^{k-1}\mathbb{E}_{\xi_h\sim \boldsymbol{\pi}^s}[ \ell_{g^s,\mu^k}(g;\xi_h)] +  16HB_f^2\log(HK|\mathcal{H}|/\delta)+48B_f^2\log(|\cH|).
    \end{align}
    Finally, we deal with the term $L_h^k(f^{\ast})$. 
    To this end, we invoke the following lemma.

    \begin{lemma} \label{lem: f star model free mg min}
        With probability at least $1-\delta$, it holds that for each $k\in[K]$,
        \begin{align*}
            \sum_{h=1}^HL^{k-1}_{h,\mu^k}(Q^{\mu^k,\dagger}) \lesssim 16HB_f^2\log(HK/\delta) + 48B_f^2\log(|\cH|).
        \end{align*}
    \end{lemma}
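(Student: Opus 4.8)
The plan is to mirror the proof of Lemma \ref{lem: f star model free mg} for the max-player, with the role of the true optimal value $f^{\ast}$ now played by the best-response $Q$-function $Q^{\mu^k,\dagger}$. The structural fact that makes this substitution work is the min-player best-response Bellman equation $Q_h^{\mu,\dagger} = \mathcal{T}_h^{\mu}Q_{h+1}^{\mu,\dagger}$ (see \eqref{eq: be bellman}), which, since $V_{h+1,\star}^{\mu,\dagger}(\cdot) = \inf_{\nu_{h+1}}\mathbb{D}_{(\mu_{h+1},\nu_{h+1})}Q_{h+1}^{\mu,\dagger}(\cdot)$ for $\star = Q^{\mu,\dagger}$ by Example \ref{exp: model free mg}, reads
\begin{align*}
    Q_h^{\mu,\dagger}(x,a,b) = r_h(x,a,b) + \mathbb{E}_{x'\sim\mathbb{P}_h(\cdot|x,a,b)}\big[V_{h+1,\star}^{\mu,\dagger}(x')\big], \qquad \star = Q^{\mu,\dagger}.
\end{align*}
This is the exact analogue of the Bellman identity for $Q_{f^{\ast}}$ used in Lemma \ref{lem: f star model free mg}. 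First I would fix a candidate max-player policy of the form $\mu = \mu_{f'}$ for some $f'\in\mathcal{H}$, so that $\star = Q^{\mu,\dagger}\in\mathcal{H}$ by the realizability Assumption \ref{ass: realizability mg}, and for any $f_h\in\mathcal{H}_h$ define
\begin{align*}
    W_{h,f,\mu}^k = \big(Q_{h,f}(x_h^k,a_h^k,b_h^k) - r_h^k - V_{h+1,\star}^{\mu,\dagger}(x_{h+1}^k)\big)^2 - \big(Q_{h,\star}(x_h^k,a_h^k,b_h^k) - r_h^k - V_{h+1,\star}^{\mu,\dagger}(x_{h+1}^k)\big)^2.
\end{align*}
Substituting the Bellman identity for $Q_{h,\star}$ and running the same completion-of-squares computation as in \eqref{eq: proof supervised learning model free rl 1}--\eqref{eq: proof supervised learning model free rl 2}, the cross term vanishes in conditional expectation and one obtains
\begin{align*}
    \mathbb{E}_{x_{h+1}^k\sim\mathbb{P}_h(\cdot|x_h^k,a_h^k,b_h^k)}[W_{h,f,\mu}^k] = \big(Q_{h,f}(x_h^k,a_h^k,b_h^k) - Q_{h,\star}(x_h^k,a_h^k,b_h^k)\big)^2 \ge 0.
\end{align*}

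Next, working under the filtration \eqref{eq: filtration model free mg}, I would bound the conditional variance by $\mathbb{V}[W_{h,f,\mu}^k\mid\mathcal{F}_{h,k-1}] \le 4B_f^2\,\mathbb{E}[W_{h,f,\mu}^k\mid\mathcal{F}_{h,k-1}]$ using the boundedness part of Assumption \ref{ass: completeness mg}, and apply Freedman's inequality (Lemma \ref{lem: freedman}) together with a union bound over $f_h\in\mathcal{H}_h$, over $f'\in\mathcal{H}$ (to cover every admissible $\mu = \mu_{f'}$, in particular $\mu = \mu^k = \mu_{f^k}$), and over $(h,k)$. The elementary inequality $-x^2 + ax \le a^2/4$ then cancels the nonnegative mean term and yields, with probability at least $1-\delta$, a bound $-\sum_{s=1}^{k-1}W_{h,f,\mu}^s \lesssim B_f^2\log(HK|\mathcal{H}|/\delta)$ holding simultaneously for every $f_h$, every admissible $\mu$, and every $(h,k)$. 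Finally, the algebraic identity $\sup_{f_h\in\mathcal{H}_h}\sum_{s=1}^{k-1}(-W_{h,f,\mu^k}^s) = L_{h,\mu^k}^{k-1}(Q^{\mu^k,\dagger})$ follows directly from the definition of the loss \eqref{eq: implicit optimism mg model free L 3} (exactly as in the last display of Lemma \ref{lem: f star model free mg}), and summing over $h\in[H]$ delivers the claimed bound.

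The main obstacle I anticipate is the data-dependence of the max-player policy $\mu^k = \mu_{f^k}$: unlike the max-player case, where the comparator $f^{\ast}$ and the loss are fixed, here both the loss function $L_{h,\mu^k}^{k-1}$ and the target $\star = Q^{\mu^k,\dagger}$ vary with the random learned hypothesis $f^k$. To obtain a bound valid uniformly in $k$, the concentration must be established simultaneously over the entire finite family $\{\mu_{f'} : f'\in\mathcal{H}\}$ of admissible max-player policies, which is precisely what forces the extra $\log|\mathcal{H}|$ factor (and hence the larger constant $48$ versus $16$ in Lemma \ref{lem: f star model free mg}). The key place where realizability enters is in guaranteeing that $\star = Q^{\mu_{f'},\dagger}\in\mathcal{H}$ for every such $\mu_{f'}$ (Assumption \ref{ass: realizability mg}), so that the comparator is a legitimate hypothesis and the Bellman identity above may be applied to it; this must be invoked when setting up $W_{h,f,\mu}^k$.
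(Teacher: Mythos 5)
Your proposal is correct and follows essentially the same route as the paper's proof: the same comparator random variable $W_{h,g,\mu}^k$ built from the best-response Bellman equation for $Q^{\mu,\dagger}$, the same conditional mean/variance computation with Freedman's inequality, the inequality $-x^2+ax\le a^2/4$, and the identification of $\sup_{g_h}\sum_s(-W^s_{h,g,\mu^k})$ with $L^{k-1}_{h,\mu^k}(Q^{\mu^k,\dagger})$ via realizability. Your explicit union bound over all $f'\in\mathcal{H}$ to handle the data-dependent policy $\mu^k=\mu_{f^k}$ is exactly the mechanism the paper uses (there phrased as a union bound over hypothesis components $(f_h,f_{h+1})$), and your observation that this is the source of the extra $\log|\mathcal{H}|$ term matches the paper's accounting.
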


    \begin{proof}[Proof of Lemma \ref{lem: f star model free mg min}]
        To prove Lemma \ref{lem: f star model free mg min}, we define the following random variable, 
        \begin{align*}
            W_{h,g,\mu_f}^k = \left(Q_{h,g}(x_h^k,a_h^k,b_h^k) - r_h^k - V_{h+1}^{\mu_f,\dagger}(x_{h+1}^k)\right)^2  -\left(Q_{h}^{\mu_f,\dagger}(x_h^k,a_h^k,b_h^k) - r_h^k - V_{h+1}^{\mu_f,\dagger}(x_{h+1}^k)\right)^2,
        \end{align*}
        for any $f, g\in\cH$. Using the Bellman equation for $Q^{\mu_f,\dagger}$, i.e., 
        \begin{align*}
            Q^{\mu_f,\dagger}_h(x_h^k,a_h^k,b_h^k) = r_h^k + \mathbb{E}_{x_{h+1}\sim\mathbb{P}_h(\cdot|x_h^k,a_h^k,b_h^k)}[V_{h+1}^{\mu_f,\dagger}(x_{h+1})]
        \end{align*} 
        we can calculate that 
            \begin{align}
            \mathbb{E}_{x_{h+1}^k\sim \mathbb{P}_h(\cdot|x_h^k,a_h^k, b_h^k)}[W_{h,g,\mu_f}^k] = \left(Q_{h,g}(x_h^k,a_h^k,b_h^k) - Q_{h}^{\mu_f,\dagger}(x_{h}^k, a_h^k, b_h^k)\right)^2.
            \end{align}
        Under the filtration $\{\mathcal{F}_{h,k}\}_{k=1}^K$ defined in the proof of Proposition \ref{prop: supervised learning model free rl mg}, i.e, \eqref{eq: filtration model free mg}, we can derive that 
        \begin{align}
            \mathbb{E}[W_{h,g,\mu_f}^k|\mathcal{F}_{h,k-1}] &= \mathbb{E}_{\xi_h\sim \boldsymbol{\pi}^k}\left[\left(Q_{h,g}(x_h,a_h,b_h) - Q_{h}^{\mu_f,\dagger}(x_{h}, a_h, b_h)\right)^2\right],\label{eq: proof supervised learning model free rl star mg min 2+} \\
            \mathbb{V}[W_{h,g,\mu_f}^k|\mathcal{F}_{h,k-1}] &\leq 4B_f^2\mathbb{E}_{\xi_h\sim \boldsymbol{\pi}^k}\left[\left(Q_{h,g}(x_h,a_h,b_h) - Q_{h}^{\mu_f,\dagger}(x_{h}, a_h, b_h)\right)^2\right].\label{eq: proof supervised learning model free rl star mg min 2++}
        \end{align}
        Using Lemma \ref{lem: freedman}, \eqref{eq: proof supervised learning model free rl star mg min 2+}, \eqref{eq: proof supervised learning model free rl star mg min 2++}, we have that, with probability at least $1-\delta$, for any $(h,k)\in[H]\times[K]$, $(g_h,g_{h+1}, f_h, f_{h+1})\in\mathcal{H}_h\times\mathcal{H}_{h+1}\times\mathcal{H}_h\times \mathcal{H}_{h+1}$,
        \begin{align*}
            &\left|\sum_{s=1}^{k-1}W_{h,g,\mu_f}^s - \sum_{s=1}^{k-1}\mathbb{E}_{\xi_h\sim \boldsymbol{\pi}^k}\left[\left(Q_{h,g}(x_h,a_h,b_h) - Q_{h}^{\mu_f,\dagger}(x_{h}, a_h, b_h)\right)^2\right]\right| \lesssim 8B_f^2\log(HK|\mathcal{H}_h|^2|\mathcal{H}_{h+1}|^2/\delta)\notag \\
            & \qquad +\sqrt{\log(HK|\mathcal{H}_h|^2|\mathcal{H}_{h+1}|^2/\delta)\cdot \sum_{s=1}^{k-1}\mathbb{E}_{\xi_h\sim \boldsymbol{\pi}^s}\left[\left(Q_{h,g}(x_h,a_h,b_h) - Q_{h}^{\mu_f,\dagger}(x_{h}, a_h, b_h)\right)^2\right]}.
        \end{align*}
        Rearranging terms, we have that with probability at least $1-\delta$, 
        \begin{align*}
            -\sum_{s=1}^{k-1}W_{h,g,\mu_f}^s &\lesssim 8B_f^2\log(HK|\mathcal{H}_h|^2|\mathcal{H}_{h+1}|^2/\delta) - \sum_{s=1}^{k-1}\mathbb{E}_{\xi_h\sim \boldsymbol{\pi}^s}\left[\left(Q_{h,g}(x_h,a_h,b_h) - Q_{h}^{\mu_f,\dagger}(x_{h}, a_h, b_h)\right)^2\right] \\
            &\qquad + \sqrt{\log(HK|\mathcal{H}_h|^2|\mathcal{H}_{h+1}|^2/\delta)\cdot \sum_{s=1}^{k-1}\mathbb{E}_{\xi_h\sim \boldsymbol{\pi}^s}\left[\left(Q_{h,g}(x_h,a_h,b_h) - Q_{h}^{\mu_f,\dagger}(x_{h}, a_h, b_h)\right)^2\right]} \\
            & \lesssim 16B_f^2\log(HK|\mathcal{H}_h|^2|\mathcal{H}_{h+1}|^2/\delta),
        \end{align*}
        where in the second inequality we use the fact that $-x^2 + ax \leq a^2/4$.
        Now we take $f = f^k$, which gives that with probability at least $1-\delta$, for any $k\in[K]$, it holds that
        \begin{align*}
            \sum_{h=1}^HL_{h,\mu^k}^{k-1}(Q^{\mu^k,\dagger}) 
            &= \sum_{h=1}^H\left(\sum_{s=1}^{k-1}  \left(\underbrace{Q_{h,Q^{\mu^k,\dagger}}(x_h^s,a_h^s,b_h^s)}_{ = Q^{\mu^k,\dagger}_h(x_h^s,a_h^s,b_h^s)} - r_h^s - \underbrace{V_{h+1, Q^{\mu^k,\dagger}}^{\mu^k,\dagger}(x_{h+1}^s)}_{= V^{\mu^k,\dagger}_{h+1}(x_{h+1}^s)}\right)^2 \right. \\
            &\qquad \left. - \inf_{g_h\in\mathcal{H}_h} \sum_{s=1}^{k-1} \left(Q_{h,g}(x_h^s,a_h^s,b_h^s) - r_h^s -  \underbrace{V_{h+1, Q^{\mu^k,\dagger}}^{\mu^k,\dagger}(x_{h+1}^s)}_{= V^{\mu^k,\dagger}_{h+1}(x_{h+1}^s)}\right)^2 \right) \\
            & =\sum_{h=1}^H\sup_{g_h\in\mathcal{H}_h}\sum_{s=1}^{k-1}  -W_{h,g,\mu^k}^s  \lesssim 16HB_f^2\log(HK/\delta) + 64B_f^2\log(|\mathcal{H}|).
        \end{align*}
        This finishes the proof of Lemma \ref{lem: f star model free mg min}.
    \end{proof}
    Finally, combining \eqref{eq: proof supervised learning model free rl mg min 7} and Lemma \ref{lem: f star model free mg min}, we have, with probability at least $1-\delta$, for any $g\in\mathcal{H}$, $k\in[K]$,
    \begin{align*}
        \sum_{h=1}^HL_{h,\mu^k}^{k-1}(Q^{\mu^k,\dagger}) - L_{h,\mu^k}^{k-1}(g) \lesssim -\frac{1}{2}\sum_{h=1}^H\sum_{s=1}^{k-1}\mathbb{E}_{\xi_h\sim \boldsymbol{\pi}s}[ \ell_{g^s,\mu^k}(g;\xi_h)] +  32HB_f^2\log(HK|\mathcal{H}|/\delta) + 112B_f^2\log(|\cH|).
    \end{align*}
    This finishes the proof of the \emph{second} inequality in Proposition \ref{prop: supervised learning model free rl mg} and completes the proof of Proposition~\ref{prop: supervised learning model free rl mg}.
\end{proof}

\section{Technical Lemmas}

\begin{lemma}[Martingale exponential inequality]\label{lem: concentration}
    For a sequence of real-valued random variables $\{X_t\}_{t\leq T}$ adapted to a filtration $\{\mathcal{F}_t\}_{t\leq T}$ , the following holds with probability at least $1-\delta$, for any $t\in[T]$,
    \begin{align*}
        -\sum_{s=1}^t X_s \leq \sum_{s=1}^t\log\mathbb{E}[\exp(-X_s)|\mathcal{F}_{s-1}] + \log(1/\delta).
    \end{align*}
\end{lemma}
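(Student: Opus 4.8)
The plan is to prove this via the standard exponential-martingale construction combined with a maximal inequality, which is what yields the uniform-in-$t$ guarantee at only a $\log(1/\delta)$ cost (a union bound over $t\in[T]$ would instead inflate the deviation term to $\log(T/\delta)$). First I would introduce the process
\[
    M_t = \exp\left(-\sum_{s=1}^t X_s - \sum_{s=1}^t \log\mathbb{E}[\exp(-X_s)\mid\mathcal{F}_{s-1}]\right) = \prod_{s=1}^t \frac{\exp(-X_s)}{\mathbb{E}[\exp(-X_s)\mid\mathcal{F}_{s-1}]},
\]
with the convention $M_0 = 1$, and observe that the target event $\{-\sum_{s=1}^t X_s \leq \sum_{s=1}^t \log\mathbb{E}[\exp(-X_s)\mid\mathcal{F}_{s-1}] + \log(1/\delta)\}$ coincides exactly with $\{M_t \leq 1/\delta\}$. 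Hence it suffices to control the running supremum $\sup_{t\leq T} M_t$.

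The key step is to verify that $\{M_t\}_{t\leq T}$ is a nonnegative martingale adapted to $\{\mathcal{F}_t\}$. Nonnegativity is immediate from the definition. For the martingale property I would use that $M_{t-1}$ and $\mathbb{E}[\exp(-X_t)\mid\mathcal{F}_{t-1}]$ are $\mathcal{F}_{t-1}$-measurable to compute
\[
    \mathbb{E}[M_t \mid \mathcal{F}_{t-1}] = M_{t-1}\cdot\frac{\mathbb{E}[\exp(-X_t)\mid\mathcal{F}_{t-1}]}{\mathbb{E}[\exp(-X_t)\mid\mathcal{F}_{t-1}]} = M_{t-1},
\]
so that $\mathbb{E}[M_t] = \mathbb{E}[M_0] = 1$ for every $t$. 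Here I implicitly use that each conditional moment generating function $\mathbb{E}[\exp(-X_s)\mid\mathcal{F}_{s-1}]$ is finite almost surely, which is in any case required for the $\log$ terms in the statement to be well-defined.

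Finally I would invoke Ville's maximal inequality for nonnegative supermartingales (a martingale being in particular a supermartingale): for any $\lambda > 0$,
\[
    \mathbb{P}\left(\sup_{t\leq T} M_t \geq \lambda\right) \leq \frac{\mathbb{E}[M_0]}{\lambda} = \frac{1}{\lambda}.
\]
Choosing $\lambda = 1/\delta$ gives $\mathbb{P}(\sup_{t\leq T} M_t \geq 1/\delta) \leq \delta$, i.e., with probability at least $1-\delta$ we have $M_t \leq 1/\delta$ simultaneously for all $t\in[T]$; taking logarithms and rearranging then recovers the claimed bound for every $t$.

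There is no deep obstacle here, as this is essentially a textbook martingale concentration estimate; the only point demanding care is obtaining the inequality \emph{uniformly} over all $t\in[T]$ rather than at a single fixed $t$. A plain Markov bound applied at fixed $t$ would give the estimate only pointwise, and stitching it across $t$ by a union bound would cost an extra $\log T$ factor. Applying the maximal inequality directly to $M_t$ is precisely what eliminates this factor and delivers the clean $\log(1/\delta)$ term appearing in the statement.
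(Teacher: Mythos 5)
Your proof is correct and is essentially the same argument as the paper's, which simply defers to Theorem 13.2 of \citet{zhang2022mathematical} — the standard exponential-martingale construction with Ville's maximal inequality that you reproduce here. Your additional remarks (the martingale property, the implicit finiteness of the conditional moment generating functions, and why the maximal inequality rather than a union bound yields the uniform-in-$t$ statement at cost $\log(1/\delta)$) are all accurate.
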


\begin{proof}[Proof of Lemma \ref{lem: concentration}]
    See e.g., Theorem 13.2 of \citet{zhang2022mathematical} for a detailed proof. 
\end{proof}

\begin{lemma}[Freedman's inequality]\label{lem: freedman}
    Let $\{X_t\}_{t \leq T}$ be a real-valued martingale difference sequence adapted to filtration $\{\mathcal{F}_t\}_{t\leq T}$.
    If $|X_t| \leq R$ almost surely, then for any $\eta \in(0, 1/R)$ it holds that with probability at least $1-\delta$,
    \begin{align*}
        \sum_{t=1}^T X_t \leq \mathcal{O}\left(\eta \sum_{t=1}^T \mathbb{E}[X_t^2|\mathcal{F}_{t-1}]+\frac{\log(1/\delta)}{\eta}\right).
    \end{align*}
\end{lemma}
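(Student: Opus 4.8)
The plan is to prove Freedman's inequality by the classical exponential-supermartingale (Chernoff) method, exploiting the boundedness $|X_t| \le R$ together with the martingale difference property to control the conditional moment generating function by the conditional second moment. First I would fix $\eta \in (0, 1/R)$ and record the elementary inequality $e^y \le 1 + y + y^2$, which holds for all $y \le 1$; since $\eta X_t \le \eta R < 1$ almost surely, this applies pointwise with $y = \eta X_t$, giving $\exp(\eta X_t) \le 1 + \eta X_t + \eta^2 X_t^2$ almost surely.

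Next I would take conditional expectations given $\mathcal{F}_{t-1}$. Using $\mathbb{E}[X_t \mid \mathcal{F}_{t-1}] = 0$ (the martingale difference property) together with the bound above, one obtains $\mathbb{E}[\exp(\eta X_t) \mid \mathcal{F}_{t-1}] \le 1 + \eta^2 \mathbb{E}[X_t^2 \mid \mathcal{F}_{t-1}] \le \exp(\eta^2 \mathbb{E}[X_t^2 \mid \mathcal{F}_{t-1}])$, where the last step uses $1 + u \le e^u$. This is the crux of the argument: the linear term vanishes and the exponential moment is paid for entirely by the predictable quadratic variation.

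Then I would define the process $M_t = \exp\bigl(\eta \sum_{s=1}^t X_s - \eta^2 \sum_{s=1}^t \mathbb{E}[X_s^2 \mid \mathcal{F}_{s-1}]\bigr)$ with $M_0 = 1$, and show it is a supermartingale: since the compensator term is $\mathcal{F}_{t-1}$-measurable, the previous display yields $\mathbb{E}[M_t \mid \mathcal{F}_{t-1}] = M_{t-1}\, \exp(-\eta^2 \mathbb{E}[X_t^2 \mid \mathcal{F}_{t-1}])\, \mathbb{E}[\exp(\eta X_t) \mid \mathcal{F}_{t-1}] \le M_{t-1}$. Hence $\mathbb{E}[M_T] \le M_0 = 1$. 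A single application of Markov's inequality gives $\mathbb{P}(M_T \ge 1/\delta) \le \delta\, \mathbb{E}[M_T] \le \delta$, so with probability at least $1 - \delta$ we have $\eta \sum_{s=1}^T X_s - \eta^2 \sum_{s=1}^T \mathbb{E}[X_s^2 \mid \mathcal{F}_{s-1}] \le \log(1/\delta)$; dividing by $\eta$ and rearranging produces $\sum_{s=1}^T X_s \le \eta \sum_{s=1}^T \mathbb{E}[X_s^2 \mid \mathcal{F}_{s-1}] + \log(1/\delta)/\eta$, which is exactly the claimed bound (with absolute constant $1$ inside the $\mathcal{O}(\cdot)$).

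I do not anticipate a genuine obstacle, as this is a classical concentration inequality; the only points needing care are verifying that $e^y \le 1 + y + y^2$ indeed holds on $y \le 1$ (so that the restriction $\eta < 1/R$ is precisely what makes $y = \eta X_t$ admissible, via $\eta X_t \le \eta R < 1$) and confirming that the supermartingale step uses the measurability of the compensator correctly. If a version uniform over all $t \in [T]$ were desired, I would replace the terminal Markov step with Ville's maximal inequality for nonnegative supermartingales; for the stated endpoint bound, the terminal Markov bound suffices.
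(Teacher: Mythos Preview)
Your argument is correct and complete: the inequality $e^y \le 1 + y + y^2$ for $y \le 1$, the vanishing of the linear term via the martingale-difference property, the supermartingale construction, and the terminal Markov/Chernoff step are all sound, and they yield the bound with constant $1$ inside the $\mathcal{O}(\cdot)$. The paper itself does not give a proof of this lemma but simply cites \cite{freedman1975tail}; your exponential-supermartingale proof is exactly the classical route and in fact supplies more detail than the paper does.
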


\begin{proof}[Proof of Lemma \ref{lem: freedman}]
    See \cite{freedman1975tail} for a detailed proof.
\end{proof}

\begin{lemma}[Elliptical potential]
  \label{lem:ell}
  Let $\{x_s\}_{s\in[K]}$ be a sequence of vectors with $x_s\in\mathcal{V}$ for some Hilbert space $\cV$. 
  Let $\Lambda_0$ be a positive definite matrix and define $\Lambda_k = \Lambda_0 + \sum_{s=1}^kx_sx_s^\top$.
  Then it holds that 
  \begin{align*}
      \sum_{s=1}^K\min\left\{1, \|x_s\|_{\Lambda_{s}^{-1}}\right\}\leq 2\log\left(\frac{\det(\Lambda_{K+1})}{\det(\Lambda_1)}\right).
  \end{align*}
\end{lemma}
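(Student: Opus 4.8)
The plan is to reduce the statement to the classical determinant-telescoping argument, which is a finite-dimensional fact even though it is phrased over a Hilbert space $\mathcal{V}$. First I would dispose of the infinite-dimensional phrasing: with $\Lambda_0 = \lambda\,\mathcal{I}$ a positive multiple of the identity (as in the application of Proposition~\ref{prop:mg_bilinear}, where $\Lambda_0 = \epsilon I_d$), every operator $\Lambda_s$ is a finite-rank perturbation of $\Lambda_0$ supported on the finite-dimensional subspace $W := \mathrm{span}\{x_1,\dots,x_K\}$. Restricting all operators to $W$ leaves each quadratic form $\|x_s\|_{\Lambda_s^{-1}}$ unchanged and renders every $\det(\Lambda_s)$ a genuine finite-dimensional determinant (the orthogonal complement of $W$ contributes equal factors of $\lambda$ to numerator and denominator, which cancel). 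This turns the claim into a finite-dimensional inequality. Throughout I read the displayed norm as the squared quadratic form $\|x_s\|_{\Lambda_s^{-1}}^2 = x_s^\top \Lambda_s^{-1} x_s$, which is the standard elliptical-potential convention and is precisely what is needed for the stated constant and determinant ratio to hold.

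The core is the rank-one update identity. Writing $\Lambda_{s+1} = \Lambda_s + x_s x_s^\top$ and setting $z_s := \|x_s\|_{\Lambda_s^{-1}}^2$, the matrix determinant lemma gives $\det(\Lambda_{s+1}) = \det(\Lambda_s)\,(1 + z_s)$. Telescoping this identity over $s = 1,\dots,K$ yields $\det(\Lambda_{K+1})/\det(\Lambda_1) = \prod_{s=1}^K (1 + z_s)$, and hence $\sum_{s=1}^K \log(1 + z_s) = \log\!\big(\det(\Lambda_{K+1})/\det(\Lambda_1)\big)$, which is exactly the right-hand side up to the factor $2$. (If instead one adopts the convention that $\Lambda_s$ already incorporates $x_s$, the same telescoping goes through after relating $\|x_s\|_{\Lambda_s^{-1}}^2 = z_s/(1+z_s)$ to $z_s := \|x_s\|_{\Lambda_{s-1}^{-1}}^2$ via Sherman–Morrison; either bookkeeping produces the log-determinant ratio.)

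It then remains to bound each summand on the left by $\log(1+z_s)$ up to the constant $2$, for which I would use the elementary inequality $\min\{1, z\} \le 2\log(1 + z)$ for all $z \ge 0$, verified in two regimes: for $z \le 1$ one has $\log(1+z) \ge z - z^2/2 \ge z/2$, so $2\log(1+z) \ge z = \min\{1,z\}$; for $z > 1$ one has $2\log(1+z) > 2\log 2 > 1 = \min\{1,z\}$. Summing over $s \in [K]$ and inserting the telescoped identity gives $\sum_{s=1}^K \min\{1, z_s\} \le 2\sum_{s=1}^K \log(1+z_s) = 2\log\!\big(\det(\Lambda_{K+1})/\det(\Lambda_1)\big)$, which is the claim. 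The main obstacle is not a deep step but the convention bookkeeping: one must fix whether the displayed norm is squared (it must be, since the unsquared form is false in the small-$z_s$ regime, where $\sqrt{z_s} > 2\log(1+z_s)$) and which Gram matrix the norm is taken against, so that the factor $2$ exactly absorbs the gap between $\min\{1,z\}$ and $\log(1+z)$ and the indices of $\det(\Lambda_{K+1})/\det(\Lambda_1)$ line up with the telescope; the Hilbert-space generality is entirely handled by the reduction to $W$ in the first step.
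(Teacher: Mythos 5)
Your proposal is correct and is essentially the canonical proof of Lemma 11 in \citet{abbasi2011improved}, which is precisely what the paper cites in lieu of giving its own argument: rank-one determinant telescoping via the matrix determinant lemma, the elementary bound $\min\{1,z\}\le 2\log(1+z)$, and a restriction to $\mathrm{span}\{x_1,\dots,x_K\}$ to handle the Hilbert-space phrasing. Your bookkeeping remarks are also substantively right rather than pedantic: as displayed the lemma has an index slip ($\Lambda_{K+1}$ is undefined for a sequence of length $K$; the ratio should be $\det(\Lambda_K)/\det(\Lambda_0)$) and the norm must be read as squared, since the unsquared version is genuinely false (e.g.\ taking $x_s=\epsilon v$ with $K\epsilon^2$ fixed and $\epsilon\to 0$ makes the left side diverge while the log-determinant ratio stays bounded), exactly as your small-$z$ comparison $\sqrt{z}>2\log(1+z)$ suggests.
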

\begin{proof}[Proof of Lemma \ref{lem:ell}]
  See Lemma 11 of \citet{abbasi2011improved} for a detailed proof.
\end{proof}

\section{Experiment Settings}\label{app_exp_setting}

Our experiments utilize 8 NVIDIA GeForce 1080Ti GPUs and 4 NVIDIA A6000 GPUs. Each result is averaged over five random seeds.

\subsection{Implementation Details of \texttt{MEX-MF}}\label{subsec: details mex mf}
Below, we describe the detailed implementation of the model-free algorithm \texttt{MEX-MF}.
 We select $\eta'$ to be $1e-3$ for sparse-reward tasks and $5e-4$ for standard gym tasks since dense reward tasks require less exploration. Other parameters are kept the same with the baseline~\cite {fujimoto2018addressing} across all domains and are summarized as in Table~\ref{tab:mf_param}.

\subsection{Implementation Details of \texttt{MEX-MB}}\label{subsec: details mex mb}
When employing the model-based algorithm \texttt{MEX-MB}, we configured the parameter $\eta'$ as $1e-4$ for the \texttt{Hopper-v2} and \texttt{hopper-vel} tasks, and $1e-3$ for all other tasks. The hyper-parameters are kept the same with the MBPO baseline~\cite {janner2019trust} across all domains and are summarized as in Table~\ref{tab:mb_param}.

\begin{table}[H]
  \centering
  \begin{minipage}{0.45\textwidth}
    \centering
    \begin{tabular}{ll}
      \toprule
      Hyperparameter   & Value \\
      \midrule
      \hspace{0.3cm} Optimizer & Adam \\
      \hspace{0.3cm} Critic learning rate & 3e-4 \\
      \hspace{0.3cm} Actor learning rate & 3e-4 \\
      \hspace{0.3cm} Mini-batch size & 256 \\
      \hspace{0.3cm} Discount factor & 0.99 \\
      \hspace{0.3cm} Target update rate & 5e-3 \\
      \hspace{0.3cm} Policy noise & 0.2 \\
      \hspace{0.3cm} Policy noise clipping & (-0.5, 0.5) \\
      \hspace{0.3cm} TD3+BC parameter $\alpha$ & 2.5\\
      \midrule
      Architecture & Value \\
      \midrule
      \hspace{0.3cm} Critic hidden dim & 256 \\
      \hspace{0.3cm} Critic hidden layers & 2 \\
      \hspace{0.3cm} Critic activation function & ReLU \\
      \hspace{0.3cm} Actor hidden dim & 256 \\
      \hspace{0.3cm} Actor hidden layers & 2 \\
      \hspace{0.3cm} Actor activation function & ReLU \\
      \bottomrule
    \end{tabular}
    \caption{Hyper-parameters sheet of \texttt{MEX-MF}.}
    \label{tab:mf_param}
  \end{minipage}\hfill
  \begin{minipage}{0.45\textwidth}
    \centering
    \begin{tabular}{ll}
      \toprule
      Hyperparameter   & Value \\
      \midrule
      \hspace{0.3cm} Optimizer & Adam \\
      \hspace{0.3cm} Critic learning rate & 3e-4 \\
      \hspace{0.3cm} Actor learning rate & 3e-4 \\
      \hspace{0.3cm} Model learning rate & 1e-3 \\
      \hspace{0.3cm} Mini-batch size & 256 \\
      \hspace{0.3cm} Discount factor & 0.99 \\
      \hspace{0.3cm} Target update rate & 5e-3 \\
      \hspace{0.3cm} SAC updates per step & 40 \\
      \midrule
      Architecture & Value \\
      \midrule
      \hspace{0.3cm} Critic hidden layers & 3 \\
      \hspace{0.3cm} Critic activation function & ReLU \\
      \hspace{0.3cm} Actor hidden layers & 2 \\
      \hspace{0.3cm} Actor activation function & ReLU \\
      \hspace{0.3cm} Model hidden dim & 200 \\
         \hspace{0.3cm} Model hidden layers & 4 \\
  \hspace{0.3cm} Model activation function & SiLU \\
  \bottomrule
\end{tabular}
\caption{Hyper-parameters sheet of \texttt{MEX-MB}.}
\label{tab:mb_param}
\end{minipage}\hfill
\end{table}

\subsection{Tabular Experiments}
We also conduct experiments in tabular MDPs. Specifically, we evaluate \texttt{MEX-MB} and \texttt{MnM} \citep{eysenbach2022mismatched} in a 10x10 gridworld with stochastic dynamics and sparse reward functions. As illustrated in Figure \ref{fig:gridenv}, the stochastic gridworld environment is associated with a navigation task to reach the red star from the initial upper left cell position. The action space contains four discrete actions, corresponding to moving to the four adjacent cells. The transition noise moves the agent to neighbor states with equal probability. The black region represents the obstacle that the agent cannot enter. The agent receives a $+0.001$ reward at every timestep and a $+10$ when reaching the goal state. Each episode has $200$ timesteps. The performance results are shown in Figure \ref{fig:gridworld}.

\begin{figure}[h]
\begin{minipage}[t]{0.4\textwidth}
  \centering
  \includegraphics[width=0.95\linewidth]{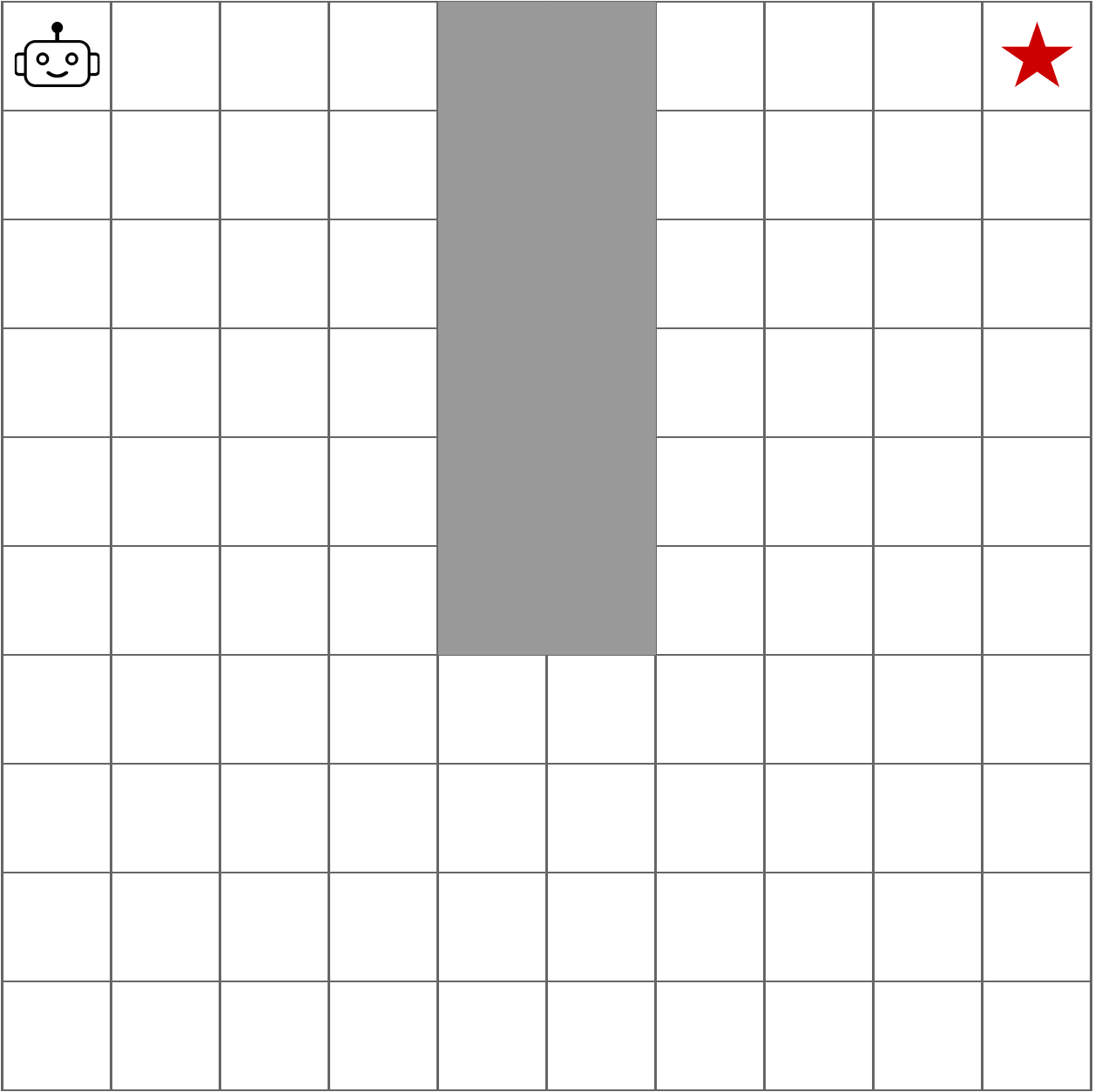}
  \caption{Illustration of the stochastic griworld environment \citep{eysenbach2022mismatched}.}
  \label{fig:gridenv}
\end{minipage}\hfill
\begin{minipage}[t]{0.55\textwidth}
  \centering
  \includegraphics[width=\linewidth]{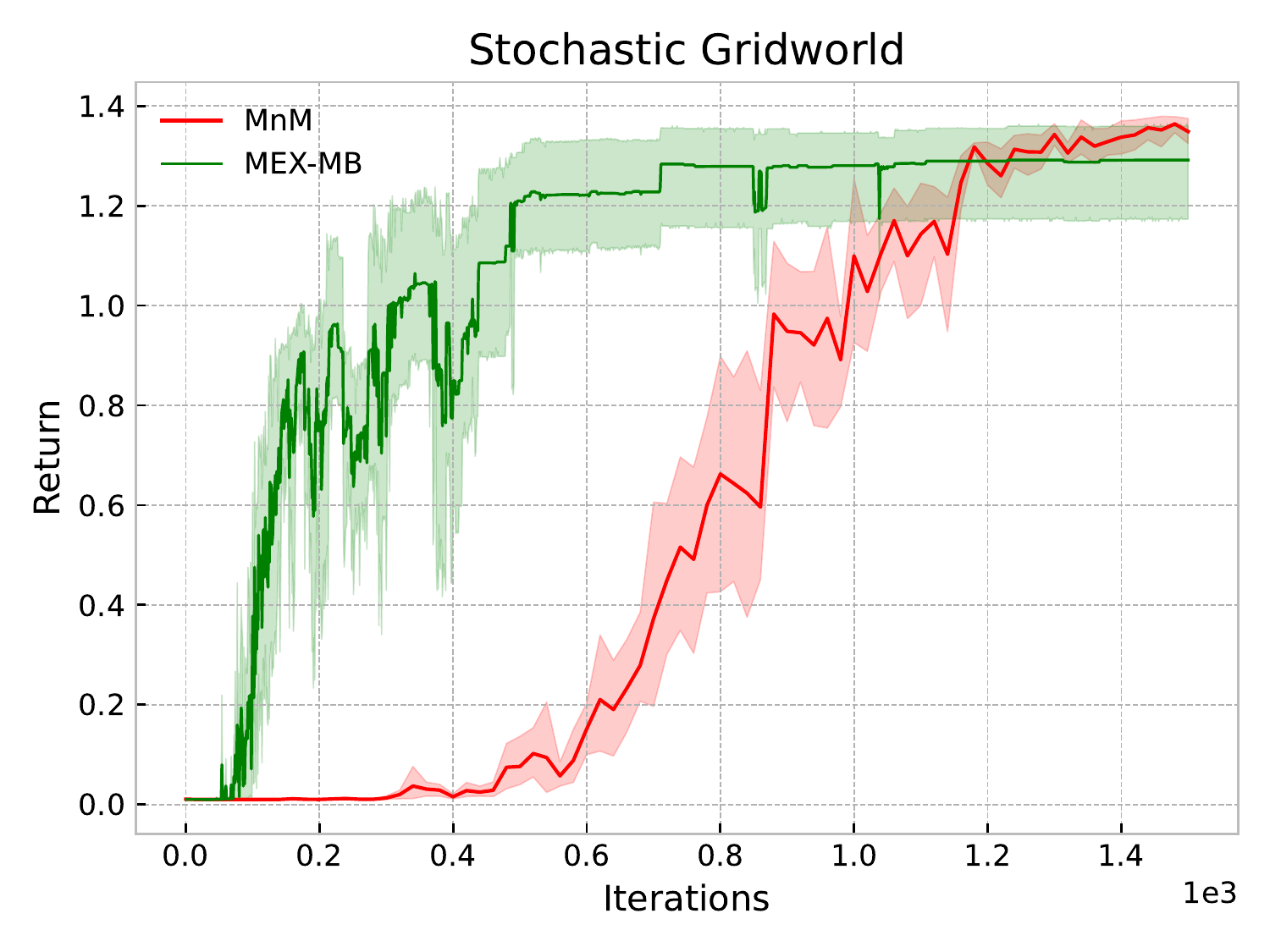}
  \caption{Model-based \texttt{MEX-MB} in the stochastic gridworld environment.}
  \label{fig:gridworld}
\end{minipage}
\end{figure}

\end{document}